\def\cleartheorem#1{\expandafter\let\csname#1\endcsname\relax
    \expandafter\let\csname c@#1\endcsname\relax
}
\newtheorem{theorem}{Theorem}
\newtheorem{proposition}[theorem]{Proposition}
\newtheorem{lemma}[theorem]{Lemma}
\newtheorem{assumption}[theorem]{Assumption}
\newtheorem{definition}[theorem]{Definition}
\newtheorem{axiom}{Axiom}
\newtheorem{remark}[theorem]{Remark}
\crefname{axiom}{Axiom}{Axioms}
\DeclareMathOperator*{\argmin}{arg\,min}
\DeclareMathOperator*{\argmax}{arg\,max}
\newcommand{\cvar}{\mathrm{CVaR}}
\newcommand{\ocvar}{\mathrm{OCVaR}}
\newcommand{\states}{\mathcal{S}}
\newcommand{\actions}{\mathcal{A}}
\newcommand{\histories}{\mathcal{H}}
\newcommand{\reals}{\mathbb{R}}
\newcommand{\naturals}{\mathbb{N}}
\newcommand{\pibar}{\overline{\pi}}
\newcommand{\rhobar}{\overline{\rho}}
\newcommand{\pitilde}{\tilde{\pi}}
\newcommand{\rvs}{\Delta(\reals)}
\newcommand{\crvs}{\mathcal{D}}
\newcommand{\wass}{\mathrm{w}}
\newcommand{\E}{\mathbb{E}}
\newcommand{\Prob}{\mathbb{P}}
\newcommand{\Ind}{\mathbb{I}}
\newcommand{\qf}{\mathrm{QF}}
\newcommand{\Uf}{U_f}
\newcommand{\distequiv}{\overset{\mathcal{D}}{=}}
\newcommand{\df}{\mathrm{df}}
\newcommand{\utilde}{\widetilde{u}}
\newcommand{\rtilde}{\widetilde{r}}
\newcommand{\Rtilde}{\widetilde{R}}
\newcommand{\Vtilde}{\widetilde{V}}
\newcommand{\vtilde}{\widetilde{v}}
\newcommand{\stationary}{\Pi}
\newcommand{\markov}{\Pi_{\mathrm{M}}}
\newcommand{\historybased}{\Pi_{\mathrm{H}}}
\newcommand{\xC}{C}
\newcommand{\xc}{c}
\newcommand{\cmin}{\xc_{\mathrm{min}}}
\newcommand{\cmax}{\xc_{\mathrm{max}}}
\newcommand{\cs}{\mathcal{\xC}}
\newcommand{\lookaheadG}{G_{\mathrm{lookahead}}}
\newcommand{\classicBellman}{T}
\newcommand{\sinit}{s_{\mathrm{init}}}
\newcommand{\sterm}{s_{\mathrm{term}}}
\newcommand{\stock}{stock}
\newcommand{\Stock}{Stock}
\newcommand{\STOCK}{Stock}
\newcommand{\problem}{return distribution optimization}
\newcommand{\Problem}{Return distribution optimization}
\newcommand{\PROBLEM}{Return Distribution Optimization}
\begin{document}

\title{Optimizing Return Distributions with \\ Distributional Dynamic Programming}

\author{\name Bernardo \'Avila Pires \email bavilapires@google.com \\
       \addr Google DeepMind, London, UK
       \AND
       \name Mark Rowland \\
       \addr Google DeepMind
       \AND
       \name Diana Borsa \\
       \addr Google DeepMind
       \AND
       \name Zhaohan Daniel Guo \\
       \addr Google DeepMind
       \AND
       \name Khimya Khetarpal \\
       \addr Google DeepMind
       \AND
       \name Andr\'e Barreto \\
       \addr Google DeepMind
       \AND
       \name David Abel \\
       \addr Google DeepMind
       \AND
       \name R\'emi Munos \\
       \addr FAIR, Meta; work done at Google DeepMind
       \AND
       \name Will Dabney \\
       \addr Google DeepMind
}

\editor{Martha White}

\maketitle

\begin{abstract}%   <- trailing '%' for backward compatibility of .sty file
We introduce distributional dynamic programming (DP) methods for optimizing statistical functionals of the return distribution, with standard reinforcement learning as a special case.
Previous distributional DP methods could optimize the same class of expected utilities as classic DP.
To go beyond, we combine distributional DP with \emph{\stock{} augmentation}, a technique previously introduced for classic DP in the context of risk-sensitive RL, where the MDP state is augmented with a statistic of the rewards obtained since the first time step.
We find that a number of recently studied problems can be formulated as \stock{}-augmented \problem{}, and we show that we can use distributional DP to solve them.
We analyze distributional value and policy iteration, with bounds and a study of what objectives these distributional DP methods can or cannot optimize.
We describe a number of applications
outlining how to use distributional DP to solve different \stock{}-augmented \problem{} problems, for example maximizing conditional value-at-risk, and homeostatic regulation.
To highlight the practical potential of \stock{}-augmented \problem{} and distributional DP, we introduce an agent that combines DQN and the core ideas of distributional DP, and empirically evaluate it for solving instances of the applications discussed.
\end{abstract}

\begin{keywords}
reinforcement learning, 
distributional reinforcement learning, 
risk-sensitive reinforcement learning,
dynamic programming,
\stock{}-augmented Markov decision process
\end{keywords}

\section{Introduction}
\label{sec:introduction}

Reinforcement learning \citep[RL;][]{sutton2018reinforcement,szepesvari2022algorithms} is a powerful framework for building intelligent agents, and it has been successfully applied to solve many practical problems~\citep{mnih2015human,silver2018general,bellemare2020autonomous,degrave2022magnetic,fawzi2022discovering}.
In the standard formulation of the RL problem, the objective is to find a policy (a decision rule for selecting actions) that maximizes the expected (discounted) return in a Markov decision process~\citep[MDP;][]{puterman2014markov}.
A similar, related problem is what we refer to as \emph{\problem{}}, where the objective is to optimize a functional of the return distribution~\citep{marthe2024beyond}, which may not be the expectation. 
For example, we could maximize an \emph{expected utility}~\citep{von2007theory,bauerle2014more,marthe2024beyond}, that is, the expectation of the return ``distorted'' by some function.

By varying the choice of statistical functional being optimized (be it an expected utility or more general),
we can model various RL-like problems as \problem{}, including problems in the field of risk-sensitive RL~\citep{chung1987discounted,chow2014algorithms,noorani2022risk}, homeostatic regulation~\citep{keramati2011reinforcement} and satisficing~\citep{simon1956rational,goodrich2004satisficing}.

The fact that \problem{} captures many problems of interest makes it appealing to develop solution methods for the general problem.
At first glance, the apparent benefits of solving the general problem are offset by the fact that, for many instances, optimal stationary Markov policies do not exist~\citep[see, for example,][]{marthe2024beyond}.
This can be problematic, because it rules out dynamic programming \citep[DP; value iteration and policy iteration;][]{bertsekas1996neuro,sutton2018reinforcement,szepesvari2022algorithms} and various other RL methods that are designed to output stationary Markov policies.
Defaulting to solution methods that produce history-based policies is an alternative we would like to avoid, under the premise that learning history-based policies can be intractable~\citep{papadimitriou1987complexity,madani1999undecidability}. 

We show that we can reclaim optimality of stationary Markov policies for many instances of \problem{} by augmenting the state of the MDP with a simple statistic we call \emph{\stock{}}.
\Stock{} is a backward looking quantity related to the agent's accumulated past rewards, including an initial \stock{} (the precise definition is given in \cref{sec:preliminaries}).
It was introduced by \citet{bauerle2011markov}\footnote{\citet[Example b, p.~269;][]{kreps1977decision} outlined a similar statistic in the undiscounted setting.} for maximizing conditional value-at-risk~\citep{rockafellar2000optimization}.
The MDP state and \stock{} together provide enough information for stationary Markov policies (with respect to the state-\stock{} pair) to optimize various statistical functionals of the distribution of returns offset by the agent's initial \stock{}.

Incorporating \stock{} into \problem{} gives rise to the specific formulation we consider in this paper, where the environment is assumed to be an MDP with states augmented by \stock{}, and the return is offset by an initial \stock{}.
We refer to this formulation as \emph{\stock{}-augmented \problem{}}.

The optimality guarantee for stationary \stock{}-augmented Markov policies in \problem{} suggests that we may be able to develop DP solution methods for the instances where the guarantee applies.
Classic value/policy iteration cannot cope with return distributions, but this limitation can be overcome using distributional RL~\citep{chung1987discounted,morimura2010nonparametric,bellemare2017distributional,bellemare2023distributional}.
That is, we may resort to \emph{distributional dynamic programming} to tackle \problem{}.

In the standard MDP setting, without \stock{}, 
distributional DP methods already exist for policy evaluation~\citep[Chapter 5;][]{bellemare2023distributional}, for maximizing expected return (as an obvious adaptation), and for expected utilities~\citep{marthe2024beyond}.
However, these methods can only solve problems that classic DP can also solve~\citep{marthe2024beyond}, namely, the \problem{} problems for which an optimal stationary Markov policy exists (with respect to the MDP states alone).
Notably, by incorporating \stock{} into distributional DP, we can optimize statistical functionals of the return distribution that we could not otherwise.
Moreover, \stock{}-augmented distributional DP is a single solution method for a variety of \problem{} problems (which so far have been studied and solved in isolation), and also a blueprint for practical methods to solve \problem{}, in much the same way the principles of classic DP and distributional policy evaluation factor into previously proposed, successful RL methods.

\subsection{Paper Summary and Contributions}
This paper is an in-depth study of distributional dynamic programming for solving \stock{}-augmented \problem{}, and we make the following contributions:
\begin{enumerate}
    \item We identify conditions on the statistical functional being optimized under which distributional DP can solve \stock{}-augmented \problem{}, and develop a theory of distributional DP for solving this problem, including:
    \begin{itemize}[label=-]
        \item principled distributional DP methods (distributional value/policy iteration),
        \item performance bounds and asymptotic optimality guarantees (for the cases that distributional DP can solve),
        \item necessary and sufficient conditions for the finite-horizon case, plus mild sufficient conditions to the infinite-horizon discounted case.
    \end{itemize}
    \item We demonstrate multiple applications of distributional value/policy iteration for \stock{}-augmented \problem{}, namely:
    \begin{itemize}[label=-]
        \item Optimizing expected utilities~\citep{von2007theory,bauerle2014more}.
        \item Maximizing conditional value-at-risk, a form of risk-sensitive RL, both the risk-averse conditional value-at-risk~\citep{bauerle2011markov}, and a risk-seeking variant that we introduce.
        \item Homeostatic regulation~\citep{keramati2011reinforcement}, where the agent aims to maintain vector-valued returns near a target.
        \item Satisfying constraints, and trading off minimizing constraint violations with maximizing expected return.
    \end{itemize}
    \item We show how to reduce \stock{}-augmented \problem{} objective to a \stock{}-augmented RL objective (via reward design); and that, in \stock{}-augmented settings, classic DP cannot optimize all the \problem{} objectives that distributional DP can.
    \item We introduce D$\eta$N (pronounced \emph{din}), a deep RL agent that combines QR-DQN~\citep{dabney2018distributional} with the principles of distributional value iteration and \stock{} augmentation to optimize expected utilities. Through experiments, we demonstrate D$\eta$N's ability to learn effectively under objectives in toy gridworld problems and the game of Pong in Atari~\citep{bellemare2013arcade}.
\end{enumerate}

\subsection{Paper Outline}
\Cref{sec:preliminaries} introduces notation and basic definitions.
In \cref{sec:problem-formulation} we formalize the problem of \stock{}-augmented \problem{}, and provide some basic example instances.
\Cref{sec:ddp} introduces distributional value/policy iteration and presents our main theoretical results.
In \cref{sec:applications}, we discuss multiple applications of our results and show concrete examples of how to model different problems using \stock{} augmentation and distributional DP (\cref{sec:generating-returns,sec:cvar,sec:ocvar,sec:homeostatic-regulation,sec:constraint-satisfaction,sec:beyond-expected-utilities}).\footnote{Some of these problems have been previously studied, and distributional DP is a novel solution approach in some cases (see \cref{sec:applications}).}
In \cref{sec:applications} we also explore implications of our results in different contexts: Generalized policy evaluation~(\citealp{barreto2020fast}; \cref{sec:gpe}); reward design and the relationship between \stock{}-augmented RL and \stock{}-augmented \problem{} (\cref{sec:reward-design}).
In \cref{sec:din}, we introduce D$\eta$N and show how distributional DP can inform the design of deep RL agents.
To highlight the practical implications of our contributions, in \cref{sec:gridworld-experiments} we present an empirical study of D$\eta$N in different gridworld instances of some of the applications considered in \cref{sec:applications}.
In \cref{sec:atari} we complement our gridworld results with a demonstration of D$\eta$N controlling returns in a more complex setting: The Atari game of Pong, where we show that a single trained D$\eta$N agent can obtain various specific scores in a range, and where we use \stock{} augmentation to specify the scores we want the agent to achieve.
\Cref{sec:conclusion} concludes our work and presents directions for future work, notably practical questions revealed by our empirical study.
We provide additional theoretical results in \cref{app:additional-theoretical-results}.
\Cref{app:ddp} contains the full analysis of distributional value/policy iteration, and \cref{app:conditions} contains 
the full analysis of the conditions for our main theorems.
\Cref{app:cvar,app:ocvar,app:beyond-expected-utilities,app:reward-design} contain proofs for the results in \cref{sec:applications}.
\Cref{app:implementation-details} contains implementation details for D$\eta$N and our experiments.
\Cref{app:summary-of-guarantees} provides a summary guarantees for classic and distributional DP in the various different settings considered throughout this paper, and is a useful map for readers interested in understanding the kinds of problems that DP can solve.

\section{Preliminaries}
\label{sec:preliminaries}

% General Notation
We write $\naturals \doteq \{1, 2, \ldots\}$ for the natural numbers excluding zero, and $\naturals_0 \doteq \{0, 1, 2, \ldots\}$.
For $n \in \naturals_0$, $\Delta(n)$ denotes the $|n|$-dimensional simplex.
For $m \in \naturals$, $\Delta(\reals^m)$ denotes the set of probability distribution functions of $\reals^m$-valued random variables.

% Markov Decision Processes
We study the problems where an agent interacts with a Markov decision process \citep[MDP;][]{puterman2014markov} with (possibly infinite) state space $\states$ and finite action space $\actions$.
Rewards can be stochastic and the discount is $\gamma \in (0, 1]$.
We adopt the convention that $R_{t+1}$ is the reward random variable observed jointly with $S_{t+1}$, that is, $R_{t+1}, S_{t+1}$ result from taking action $A_t$ at state $S_t$, according to the MDP's reward and transition kernels.

% Vector-Valued Pseudo-Rewards
The reward signal may be a vector-valued pseudo-reward (cumulant) signal~\citep{sutton2011horde} in $\cs \doteq \reals^m$.
The vector-valued case allows us to capture interesting applications that are worth the extra generality.
However, to avoid unnecessary complication, our presentation is intentionally in terms of $\cs$, so that the reader can easily appreciate the results in the scalar case ($\cs = \reals$) if they wish.
We use the terms reward and returns to avoid an excess of \emph{pseudo} prefixes in the text.

Some of our results only apply to finite-horizon MDPs.
We say an MDP \emph{has finite horizon} if there exists a constant $n \in \naturals$ such that $S_n$ is terminal with probability one for any trajectory $S_0, A_0, S_1, A_1, \ldots, S_n$ generated in the MDP.
We call the smallest such $n$ the \emph{horizon} of the MDP.
A state $s$ is terminal if $(S_{t+1}, R_{t+1}) = (s, 0)$ with probability one whenever $S_t = s$ (regardless of $A_t$).
We refer to the case where the MDP has finite horizon as the \emph{finite-horizon case} (complementary to the \emph{infinite-horizon} case), and to the case where $\gamma < 1$ as the \emph{discounted case} (complementary to the \emph{undiscounted} case, where $\gamma = 1$).

We make the following assumption throughout the work, similar to Assumption 2.5 by \citeauthor{bellemare2023distributional} (p.~19; \citeyear{bellemare2023distributional}).

\begin{assumption}[All rewards have uniformly bounded first moment]
\label{ass:bounded-first-moment-rewards}
\[
    \sup_{s, a \in \states \times \actions}\E \left( \left. \| R_{t+1} \|_1 \, \right| S_t = s, A_t = a \right) < \infty
\]
\end{assumption}

% \STOCK{} Augmentation
Similar to \citet{bauerle2011markov}, we consider an augmented MDP state space $\states \times \cs$. If $s, a, r', s'$ is a transition in the original MDP, then for any $\xc \in \cs$ the augmented MDP transitions as $(s, \xc), a, r', (s', \gamma^{-1}(\xc + r'))$, that is:
\begin{equation}
    \label{eq:b-def}
    \xc_{t+1} = \frac{\xc_t + r_{t+1}}{\gamma}.
\end{equation}

We refer to $\xc_t$ as the agent's \emph{\stock{}}.\footnote{In our formulation the \stock{} and the rewards are $m$-dimensional, whereas \citet{bauerle2011markov} consider $1$-dimensional \stock{}.}
If we unroll the recursion in \Cref{eq:b-def} up to an \emph{initial \stock{} $\xc_0$} (see \cref{rem:initial-balance} below), we can interpret the \stock{}, in a forward view, as a scaled sum of the initial \stock{} $\xc_0$ and the discounted return from time step zero up to time step $t$:
\[
    \xc_t = \underbrace{\gamma^{-t}}_{\substack{\mbox{time-dependent}\\\mbox{scaling}}}\big(\underbrace{\xc_0}_{\substack{\mbox{initial}\\\mbox{\stock{}}}} + \underbrace{\sum_{i=0}^{t-1} \gamma^i r_{i+1}}_{\substack{\mbox{partial discounted}\\\mbox{return}}}\big).
\]
In a backward view, the \stock{} can be seen as a backward reverse-discounted return:
\[
    \xc_t = \gamma^{-1}r_t + \gamma^{-2}r_{t-1} + \cdots + \gamma^{-t}r_1 + \gamma^{-t}\xc_0.
\]
Importantly, the \stock{} allows us to keep track of the discounted return (plus the initial \stock{} $\xc_0$) from time step $0$, since, for all $t \geq 0$,
\[
    \xc_0 + \sum_{i=0}^\infty \gamma^i r_{i+1} = \gamma^t \left(\xc_t + \sum_{i=0}^\infty \gamma^i r_{t+i+1} \right),
\]
When rewards (and \stock{}s) are random, the above holds with probability one, written as
\begin{equation}
    \xC_t + G_t = \gamma^{-t}\left(\xC_0 + G_0\right),
    \label{eq:balance-non-stationary-invariant}
\end{equation}
with $G_t \doteq \sum_{i=0}^\infty \gamma^i R_{t+i+1}$ denoting the respective discounted return from time step $t$.
\Cref{eq:balance-non-stationary-invariant} will be key to optimizing return distributions:
The distribution of $\xC_t + G_t$ will work as an ``anytime proxy'' for the distribution of $\xC_0 + G_0$, and by controlling the former distribution we can also control the latter---provided the objective is such that the $\gamma^{-t}$ factor does not interfere with the optimization (we will later introduce this as an indifference of the objective to the discount $\gamma$).

\begin{remark}[The Initial \STOCK{} $\xc_0$]
\label{rem:initial-balance}
The expansion of \stock{} includes an initial \stock{} $\xc_0$ that is unspecified.
Together with the initial MDP state $s_0$, this \stock{} will form the initial augmented state $(s_0, \xc_0)$.
While the initial $s_0$ is often ``given'', $\xc_0$ can be set (even as a function of $s_0$). 
This will provide extra flexibility to policies, which may display diverse behaviors in response to changes in $\xc_0$, and it will allow us to reduce different problems to \problem{} by plugging in specific choices of $\xc_0$ (as a function of $s_0$).
For example, as shown by \citet{bauerle2011markov}, we can choose $\xc_0$ in such a way that optimizing conditional value-at-risk reduces to an instance of \problem{} with \stock{} augmentation (see \cref{thm:cvar-c-star} in \cref{sec:cvar}).
\end{remark}

\begin{remark}[Dynamics Influenced by \STOCK{}]
Our results do not rely on the transitions and rewards of the augmented MDP depending only on $s$.
In a transition $(s, \xc), a, r', (s', \xc')$, $\xc'$ must be updated according to \Cref{eq:b-def}, but $s', r'$ may depend on $\xc$.
This can be useful, for example, to define termination conditions: The state $s'$ may be terminal when $\xc' = 0$ or when $|\xc'|$ is too large.
\end{remark}

% Policies 
Stationary Markov policies with respect to \stock{} are $\states \times \cs \rightarrow \Delta(\actions)$ functions, and the space of these policies is $\stationary \doteq \Delta(\actions)^{\states \times \cs}$.
A Markov policy $\pi$ is a sequence $\pi = \pi_0, \pi_1, \pi_2, \ldots$ of stationary policies $\pi_n : \states \times \cs \rightarrow \Delta(\actions)$, and the space of these policies is $\markov \doteq \stationary^\naturals$.
For a policy $\pi = \pi_0, \pi_1, \pi_2, \ldots$, returns are written as
$G^\pi(s, \xc) \doteq \sum_{t=0}^\infty \gamma^t R_{t+1}$ where $R_{t+1}$ are the rewards generated by starting at state $(S_0, \xC_0) = (s, \xc)$, then selecting $A_t \sim \pi_t(S_t, \xC_t)$ for $t \geq 0$. 
The return $G^\pi(s, \xc)$ may depend on $\xc$ (even when rewards do not depend on the \stock{}), because $\pi$ may choose actions differently depending on $\xc$, so the trajectories generated depend on $\xc$ as well.
If $\pi$ is stationary, then $A_t \sim \pi(S_t, \xC_t)$ for all $t \geq 0$.

A \emph{history} is the sequence of everything observed preceding action $A_t$, that is,
\[
    H_t \doteq (S_0, \xC_0), A_0, R_1, (S_1, \xC_1), A_1, \ldots, R_t, (S_t, \xC_t),
\]
The history at $t = 0$ is $S_0, \xC_0$.
The set of possible histories of finite length is
\[
    \histories \doteq \bigcup_{n \in \naturals_0} \underbrace{\states \times \cs}_{(s_0, \xc_0)} \times (\underbrace{\actions}_{a_t} \times \underbrace{\cs}_{r_{t+1}} \times \underbrace{\states \times \cs}_{(s_{t+1}, \xc_{t+1})})^n,
\]
and a \emph{history-based policy} is a function $\histories \rightarrow \Delta(\actions)$.
That is, a history-based policy makes decisions based on everything observed so far.
For $\pi$ history-based and $t \geq 0$ we have $A_t \sim \pi(H_t)$, and the set of all history-based policies is $\historybased \doteq \Delta(\actions)^\histories$. 

% Return Distributions
We let $\rvs$ be the set of distributions of $\reals$-valued random variables.
With $X \sim \nu$, we write $\df X = \nu$.
For two $\cs$-valued random variables $X, X'$ we say $X \distequiv X'$ if $\df X = \df X'$.
For $\nu \in \rvs$, we let $\qf_\nu$ be the quantile function of $\nu$:
\[
    \qf_\nu(\tau) \doteq \inf\{t \in \reals : \Prob(X \leq t) \geq \tau\}.  
    \tag{$X \sim \nu$}
\]
For $\xc \in \cs$, we denote by $\delta_{\xc}$ the Dirac measure on $\xc$, that is, the distribution such that if $\Prob(G = \xc) = 1$ when $G \sim \delta_{\xc}$.
The Dirac on zero is $\delta_{0}$ (where in the vector-valued case it is understood that $0$ refers to the all-zeros vector).
 
We define $\crvs \doteq \Delta(\cs)$ as the set of distributions of $\cs$-valued random variables.
The \emph{$1$-Wasserstein distance} for $\nu, \nu' \in \crvs$ is defined as \citep[Definition 6.1, p.~105;][]{villani2009optimal}
\[
    \wass(\nu, \nu') \doteq \inf \left\{ \E\| X - X' \|_1 : \df(X) = \nu, \df(X') = \nu'  \right\},
\]
where $X$ and $X'$ may be jointly distributed.
In the scalar case ($\cs = \reals$), we have
\[
    \wass(\nu, \nu') = \| \qf_\nu - \qf_{\nu'} \|_{\ell_1} = \E_{\tau \sim u_{(0, 1)}} |\qf_\nu(\tau) - \qf_{\nu'}(\tau)|,
\]
where $u_{(0, 1)}$ denotes the uniform distribution in $(0, 1)$.
Sometimes we will say the sequence $\nu_1, \nu_2, \ldots$ converges to $\nu_\infty$; when we say this, we mean convergence in $1$-Wasserstein distance: $\lim_{n \rightarrow \infty} \wass(\nu_n, \nu_\infty) = 0$.
The \emph{supremum $1$-Wasserstein distance} is defined for $\eta, \eta' \in \crvs^{\states \times \cs}$ as
\begin{equation}
    \overline{\wass}(\eta, \eta') \doteq \sup_{s \in \states, \xc \in \cs} \wass(\eta(s, \xc), \eta'(s, \xc)).
    \label{eq:sup-wass}
\end{equation}
With a slight abuse of notation, we let $\wass(\nu) \doteq \wass(\nu, \delta_0)$ and $\overline{\wass}(\eta) \doteq \sup_{s \in \states, \xc \in \cs}\overline{\wass}(\eta(s, \xc), \delta_0)$.

Given a policy $\pi \in \historybased$, we define its \emph{return distribution function} $\eta^\pi : \states \times \cs \rightarrow \crvs$ by $\eta^\pi(s, \xc) \doteq \df(G^\pi(s, \xc))$ (for $(s, \xc) \in \states \times \cs$).

% Metric Spaces
We will make ample use of Banach's fixed point theorem \citep[Theorem 1, p.~77,][]{szepesvari2022algorithms} and the following spaces:
\begin{align*}
    (\crvs, \wass) &\doteq \{ \nu \in \crvs : \wass(\nu) < \infty \}, \\
    (\crvs^{\states \times \cs}, \overline{\wass}) &\doteq \{ \eta \in \crvs^{\states \times \cs} : \overline{\wass}(
    \eta) < \infty \}.
\end{align*}
These spaces are complete as shown in \cref{lem:complete-spaces}, \cref{app:additional-theoretical-results}.
\Cref{ass:bounded-first-moment-rewards} combined with $\gamma < 1$ or a finite-horizon MDP ensure that the return distributions of all policies are uniformly bounded, that is,
$\sup_{\pi \in \historybased}\overline{\wass}(\eta^\pi) < \infty$.

% Bellman Operators
Given a stationary policy $\pi \in \stationary$, we define the \emph{\stock{}-augmented distributional Bellman operator} $T_\pi : (\crvs^{\states \times \cs}, \overline{\wass}) \rightarrow (\crvs^{\states \times \cs}, \overline{\wass})$
for $\eta \in (\crvs^{\states \times \cs}, \overline{\wass})$
as follows:
$(T_\pi \eta)(s, \xc)$ is the distribution of $R_{t+1} + \gamma G(S_{t+1}, \xC_{t+1})$ when $(S_t, \xC_t) = (s, \xc)$, $A_t \sim \pi(S_t, \xC_t)$, and $G(s, \xc) \sim \eta(s, \xc)$.
We require that if $s$ is terminal then $(T_{\pi}\eta)(s, \xc) = \delta_0$ for all $\eta \in (\crvs^{\states \times \cs}, \overline{\wass})$ and $\xc \in \cs$.

On occasion, we will refer back to classic RL operators for comparison against the distributional case.
We will denote the space of possible (state-) value functions by $(\reals^\states, \| \cdot \|_{\infty}) \doteq \left\{ V \in \reals^{\states} : \sup_{s \in \states} | V(s) | < \infty \right\}$.
To avoid introducing further notation, we will also denote the classic Bellman operator by $\classicBellman_\pi$.
Whether the Bellman operator is classic or distributional will be clear from whether its argument is a return distribution function or a value function.

% Additional Notation
We let
$x_+ \doteq \max\{ x, 0 \}$, $x_- \doteq \min\{ x, 0 \}$, and $\Ind(\cdot)$ be the indicator function.

\section{\STOCK{}-Augmented \PROBLEM{}}

\subsection{Problem Formulation}
\label{sec:problem-formulation}

We are concerned with building intelligent agents that can do various things.
When the agent can be expressed in terms of its behavior (a policy)
and the outcome of the agent acting can be modeled as the \stock{}-augmented discounted return generated by that policy, 
we can frame the problem of building intelligent agents as an optimization problem.
A person looking to build an intelligent agent in this framework (we will call them \emph{the designer}) is thus tasked with expressing what they want of agents as an objective to be optimized---where the better the agent, the higher the objective value of its policy.\footnote{In practice, the designer is also tasked with modeling the environment as an MDP. In standard RL, this means designing the states, actions and rewards. \Stock{}-augmented MDPs additionally require designing the \stock{} and the pseudo-rewards.}

We propose to control the distribution of the quantity $\xc_0 + G^\pi(s_0, \xc_0)$,\footnote{
In terms of a problem/solution separation, incorporating \stock{} is part of the solution (distributional DP).
However, because the scope of our work is DP, it is convenient for our presentation to incorporate \stock{} augmentation and the offset by $\xc_0$ as part of the problem (\problem{}).
The simpler formulation without \stock{} augmentation or the offset is limiting for distributional DP:
\citet{marthe2024beyond} studied \problem{} without \stock{} augmentation in the finite-horizon undiscounted setting, and concluded that only exponential utilities could be optimized through distributional DP---the same class that classic DP can optimize.
On the other hand, as our analysis will show, the distributional DP with \stock{} can optimize a broader class of objectives than without, and, surprisingly, than classic DP with \stock{} augmentation.
}
which is the return generated by $\pi$ from the initial augmented state $(s_0, \xc_0) \in \states \times \cs$, offset by the initial \stock{} $\xc_0$.
We want an objective that quantifies how preferred $\df(\xc_0 + G^\pi(s_0, \xc_0))$ is for each policy $\pi$, so that we can phrase the problem of finding the most preferred policy.
We can accomplish this with a statistical functional $K : (\crvs, \wass) \rightarrow \reals$ that assigns a real number to each possible distribution of $\xc_0 + G^\pi(s_0, \xc_0)$, to phrase the optimization problem as:
\begin{equation}
    \sup_{\pi \in \historybased} K\df\left(\xc_0 + G^\pi(s_0, \xc_0) \right).
    \label{eq:distributional-functional}
\end{equation}
As an example, 
the standard RL problem can be expressed in \Cref{eq:distributional-functional}
by taking $K$ to be the expectation:
\[
    \sup_{\pi \in \historybased} \E(\xc_0 + G^\pi(s_0, \xc_0)) = \xc_0 + \sup_{\pi \in \historybased} \E(G^\pi(s_0, \xc_0)).
\]

The optimization, for the moment, is over the (most general) class of history-based policies $\historybased$.
In standard RL,
this problem formulation \citep[adopted, for example, by][]{altman1999constrained} differs from the more frequent optimization over stationary Markov policies \citep[adopted, for example, by][]{sutton2018reinforcement,szepesvari2022algorithms}, 
but the two formulations are equivalent in MDPs because of the existence of optimal stationary Markov policies~\citep{puterman2014markov}.
For \stock{}-augmented \problem{}, we have elected to introduce the problem in terms of history-based policies, and to address the existence of optimal stationary Markov policies on the solution side of the results (in connection to DP; see~\cref{app:history-based-policies}).

Because the supremum in \Cref{eq:distributional-functional} is over all history-based policies, it makes sense to talk about optimizing
$K\df\left(\xc_0 + G^\pi(s_0, \xc_0) \right)$ simultaneously for all $(s_0, \xc_0) \in \states \times \cs$.
We can state this problem concisely, using an objective functional applied to the return distribution function $\eta^\pi$:
\begin{definition}[\STOCK{}-Augmented \PROBLEM{}]
\label{def:FK}
Given \\
$K : (\crvs, \wass) \rightarrow \reals$,
define the \emph{\stock{}-augmented objective functional} $F_K : (\crvs^{\states \times \cs}, \overline{\wass}) \rightarrow \reals^{\states \times \cs}$ as
\[
    (F_K \eta)(s, \xc) \doteq K\df(\xc + G(s, \xc)). 
    \tag{$G(s, \xc) \sim \eta(s, \xc)$}
\]

The \emph{\stock{}-augmented \problem{} problem} is 
\begin{equation}
    \sup_{\pi \in \historybased} F_K \eta^\pi. \label{eq:distributional-functional-concise}
\end{equation}
\end{definition}
We will often drop the subscript and refer to a \stock{}-augmented objective as $F$, in which case a corresponding $K$ is implied.
We will also drop $\df$ and write $K(G) = K\df(G)$.

To recap \Cref{eq:distributional-functional-concise}: The \stock{}-augmented \problem{} problem consists of optimizing, over all policies $\pi \in \historybased$, a preference specified by a statistical functional $K : (\crvs, \wass) \rightarrow \reals$, over the distribution of the policy's discounted return offset by the \stock{} ($\xc_0 + G^\pi(s_0, \xc_0)$).
The optimization is considered simultaneously for all $(s_0, \xc_0)$, as allowed by history-based policies.

\subsection{Example: Expected Utilities}
\label{sec:expected-utilities}

\Cref{eq:distributional-functional-concise} provides a flexible problem formulation for controlling $\df(\xc_0 + G^\pi(s_0, \xc_0))$, based on a choice of $K : (\crvs, \wass) \rightarrow \reals$ provided by a designer to capture what they want an agent to achieve.
We have already shown that the RL problem can be recovered by taking $K$ to be the expectation ($K\nu = \E G$, $G \sim \nu$), so what else can we do?
We can obtain an interesting family of objective functionals by considering the expected value of transformations of the return specified by a function $f : \cs \rightarrow \reals$: $K\nu = \E f(G)$ ($G \sim \nu$).
These are the \emph{expected utilities}, which have been widely studied in decision-making theory \citep{von2007theory},
and also used for sequential decision-making in RL~\citep{bauerle2014more,bowling2023settling}.
\begin{definition}
\label{def:expected-utility}
A \stock{}-augmented objective functional $F_K$ is an \emph{expected utility} if there exists $f : \cs \rightarrow \reals$ such that
\[
    K\nu = \E f(G).
    \tag{$G \sim \nu$}
\]
In this case, we write $F_K = \Uf$, which can be written as
\[
    (\Uf \eta) (s, \xc) \doteq \E f(\xc + G(s, \xc)).
    \tag{$G(s, \xc) \sim \eta(s, \xc)$}
\]
\end{definition}

\Cref{tab:utility-examples} gives examples of \problem{} problems resulting from different choices of $f$ in the scalar case\footnote{Expected utilities are not restricted to the scalar case, as implied by \cref{def:expected-utility}, since the domain of $f$ is $\cs$.
We provide some concrete examples in \cref{sec:applications} of expected utilities for the vector-valued case.}
($\cs = \reals$), with some notable risk-sensitive examples: Maximizing conditional value-at-risk \citep{bauerle2011markov,chow2014algorithms,lim2022distributional} and maximizing the probability of the discounted return being above a threshold.
Recall that the choice of initial \stock{} $\xc_0$ is ``up to the user'' and can be made as a function of the starting state $s_0$.
\begin{table}[tb]
    \centering
    \begin{tabularx}{\textwidth}{Xcc}
        \toprule
        Problem & $f(x)$ & Formulation\\
        \midrule
        Standard RL & $x$ & \begin{tabular}{@{}c@{}}$\sup_{\pi \in \historybased} \E(\xc_0 + G^\pi(s_0, \xc_0))$ \\ $\equiv \sup_{\pi \in \historybased} \E G^\pi(s_0, \cdot)$\end{tabular} \\
        Minimize the expected absolute distance to a target $\xc_0$ (\cref{sec:generating-returns}) \ & $-|x|$ & $\inf_{\pi \in \historybased} \E |G^\pi(s_0, -\xc_0) - \xc_0| $ \\
        Optimizing $\tau$-CVaR (conditional value-at-risk, \cref{sec:cvar}) & $x_-$ & $\inf_{\pi \in \historybased, \xc_0} \frac{1}{\tau}\int_0^\tau \qf_{\eta^\pi(s_0, \xc_0)}(t)\mathrm{d}t$ \\
        \midrule
        Maximize the probability of the return above a threshold $\xc_0$ & $\Ind(x > 0)$ & $\sup_{\pi \in \historybased} \Prob(G^\pi(s_0, -\xc_0) > \xc_0)$ \\
        Minimize the expected square distance to a target $\xc_0$ & $-x^2$ & $\inf_{\pi \in \historybased} \E\left((G^\pi(s_0, -\xc_0) - \xc_0)^2\right)$\\
        \midrule
        Maximize the probability of the return above a threshold plus a margin $\xc_0 + \xc$ & $\Ind(x > \xc)$ & $\sup_{\pi \in \historybased} \Prob\left(G^\pi(s_0, -\xc_0) > \xc_0 + \xc\right)$ \\
        \bottomrule
    \end{tabularx}
    \caption{Example problems that can be formulated as optimizing an expected utility, with the respective choices of $f$ and the formulation.
    }
    \label{tab:utility-examples}
\end{table}

We will later show that the examples in the first part of the table can be optimized by distributional DP both in the finite-horizon and discounted cases, the ones in the second part of the table can be optimized in the finite-horizon case, and the example in the third part can only be optimized in the finite-horizon undiscounted case (see \cref{thm:value-iteration,thm:policy-iteration,sec:ddp-conditions,app:lipschitzness}).

We will also establish that distributional DP can, in fact, optimize any expected utility in the finite-horizon undiscounted case (see \cref{lem:expected-utility-conditions}).
Going beyond expected utilities, we will see that is an open question whether it is possible for distributional DP to optimize any non-expected utility in the infinite-horizon discounted case, but we provide examples that can be optimized in the finite-horizon case (see \cref{sec:beyond-expected-utilities}).

\section{Distributional Dynamic Programming}
\label{sec:ddp}

Dynamic programming~\citep{bertsekas1996neuro,sutton2018reinforcement} is at the heart of RL theory and many RL algorithms.\footnote{As pointed out by \citet{szepesvari2022algorithms} many RL algorithms can be thought of as dynamic programming methods modified to cope with scale and complexity of practical problems.}
For this reason, we have chosen to establish the basic theory of solving \stock{}-augmented \problem{} by studying how we can solve these problems using DP.
We refer to the solution methods we introduce as \emph{distributional dynamic programming}.
As in the case of distributional DP for policy evaluation~\citep[Chapter 5;][]{bellemare2023distributional}, return distribution functions (in $(\crvs^{\states \times \cs}, \overline{\wass})$) are the main object of distributional value/policy iteration, whereas, in contrast, classic DP, namely value/policy iteration, work directly with value functions~\citep[see, for example,][]{szepesvari2022algorithms}.

\subsection{Distributional Value Iteration}
\label{sec:ddp:value-iteration}

Classic value iteration computes the iterates $V_1, V_2, \ldots$ satisfying, for $n \geq 0$,
\begin{equation}
    V_{n+1} = \sup_{\pi \in \stationary} \classicBellman_\pi V_n,
    \label{eq:classic-value-iterates}
\end{equation}
and the procedure enjoys the following optimality guarantees.
In finite-horizon MDPs, $V_n$ is optimal if $n$ is at least the horizon of the MDP and in the discounted case \citep[Section 2.4;][]{szepesvari2022algorithms}:
\begin{equation}
    V^* - V_n \leq \gamma^n \| V^* - V_0 \|_\infty 
    \label{eq:classic-value-iteration-iterate-bound}
\end{equation}
pointwise for all $s \in \states$,
where $V^* \doteq \sup_{\pi \in \historybased} V^\pi$ and $V^\pi$ denotes the value function of a policy $\pi$.

Note how the bounds are distinct for the finite-horizon case and the discounted case.
This distinction recurs in results for both classic and distributional value/policy iteration, and it will merit further discussion in the case of distributional DP.

In classic value iteration, the iterates correspond to the values of the objective functional being optimized, and the iteration in \Cref{eq:classic-value-iterates} makes a one-step decision that maximizes that objective functional.
We typically use the value iterates to obtain policies via a greedy selection, and leverage a near-optimality guarantee for these greedy policies.
We say $\pitilde_n$ is a greedy policy with respect to $V_n$ if it satisfies the following:
\[
    T_{\pitilde_n}V_n = \sup_{\pi \in \stationary} \classicBellman_\pi V_n.
\]
Classic value iteration results give us the following optimality guarantees for the greedy policies:
In finite-horizon MDPs, $\pitilde_n$ is optimal when $n$ is at least the horizon of the MDP, and in the discounted case 
(Section 2.4, \citealp{szepesvari2022algorithms}; \citealp{singh1994upper}):
\begin{equation}
    V^* - V^{\pitilde_n} \leq \frac{2\gamma^n}{1 - \gamma} \| V^* - V_0 \|_\infty.
    \label{eq:classic-value-iteration-policy-bound}
\end{equation}

Distributional value iteration, while similar to value iteration, maintains distributional iterates $\eta_1, \eta_2, \ldots \in (\crvs^{\states \times \cs}, \overline{\wass})$, which means the iterates no longer correspond to values of the objective functional.
The distributional analogue of \Cref{eq:classic-value-iterates} makes a one-step decision that maximizes $F_K$, and this iteration of locally optimal one-step decisions gives guarantees similar to the classic case.
\Cref{thm:value-iteration} formalizes this claim:\footnote{To simplify the presentation, we have chosen to present the distributional DP results upfront, and discuss the conditions on the objective functional $F_K$ in \cref{sec:ddp-conditions}.}
\begin{restatable}[Distributional Value Iteration]{theorem}{distributionalvalueiterationtheorem}
\label{thm:value-iteration}
If $K : (\crvs, \wass) \rightarrow \reals$ is indifferent to mixtures and indifferent to $\gamma$,
then for every $\eta_0 \in (\crvs^{\states \times \cs}, \overline{\wass})$,
if the iterates $\eta_1, \eta_2, \ldots$ satisfy (for $n \geq 0$)
\[
    F_K\eta_{n+1} = \sup_{\pi \in \stationary} F_K T_\pi\eta_n,
    \tag{Distributional Value Iterates}
\]
and the policies $\pibar_0, \ldots, \pibar_n$ satisfy (for $n \geq 0$),
\[
    F_KT_{\pibar_n}\eta_n = \sup_{\pi \in \stationary} F_K T_\pi\eta_n,
    \tag{Greedy Policies}
\]
then the following hold.

\emph{Finite-horizon case:} for all $n$ greater or equal to the horizon of the MDP,
\begin{equation}
    F_K \eta_n = \sup_{\pi \in \historybased} F_K \eta^\pi,
    \label{eq:finite-horizon-value-iterates}
\end{equation}
and
\begin{equation}
    F_K \eta^{\pibar_n} = \sup_{\pi \in \historybased} F_K \eta^\pi.
    \label{eq:finite-horizon-value-greedy}
\end{equation}

\emph{Discounted case ($\gamma < 1$):}
If $K$ is $L$-Lipschitz, then for all $n \geq 0$
\begin{equation}
    \sup_{\pi \in \historybased} F_K \eta^\pi - F_K \eta_n \leq L\gamma^n \cdot \sup_{\pi \in \markov}\overline{\wass}(\eta_0, \eta^\pi),
    \label{eq:discounted-value-iterates}
\end{equation}
and
\begin{equation}
    \sup_{\pi \in \historybased} F_K \eta^\pi - F_K \eta^{\pibar_n} \leq 
    L\gamma^n \cdot \left(\frac{1}{1 - \gamma}\sup_{\pi \in \stationary}\overline{\wass}(T_{\pi}\eta_0, \eta_0) + \sup_{\pi \in \markov}\overline{\wass}(\eta_0, \eta^\pi)\right).
    \label{eq:discounted-value-greedy}
\end{equation}
\end{restatable}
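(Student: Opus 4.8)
The plan is to reduce everything to two facts about the operators $T_\pi$ and the functional $F_K$, and then to mimic the classic value iteration analysis at the level of return distribution functions. The two facts are: (i) $T_\pi$ is a $\gamma$-contraction on $(\crvs^{\states\times\cs},\overline{\wass})$ in the supremum $1$-Wasserstein metric, which is the standard distributional Bellman contraction argument (couple the next-state return distributions optimally and use that rewards add a common term and are then scaled by $\gamma$); and (ii) the ``indifference to mixtures'' and ``indifference to $\gamma$'' hypotheses on $K$ let us commute $F_K$ with the one-step decision, so that a policy that greedily maximizes $F_K T_\pi \eta$ at every state-\stock{} pair behaves, with respect to $F_K$, like a greedy policy in an ordinary MDP. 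Concretely, I expect indifference to mixtures to be what makes the pointwise supremum over $\pi\in\stationary$ of $F_K T_\pi \eta$ coincide with $\sup_{\pi\in\historybased} F_K\eta^\pi$ after enough steps (it lets a stationary one-step-greedy choice be combined with continuation distributions without the functional ``seeing'' the randomization), and indifference to $\gamma$ to be what makes \Cref{eq:balance-non-stationary-invariant} usable: the $\gamma^{-t}$ rescaling of $\xC_t+G_t$ relative to $\xC_0+G_0$ does not affect the value of $K$, so controlling the anytime proxy controls the true objective.

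For the finite-horizon claims \eqref{eq:finite-horizon-value-iterates} and \eqref{eq:finite-horizon-value-greedy}, I would argue by backward induction on the horizon. Once $n$ is at least the horizon, every trajectory has reached a terminal state, where $T_\pi$ forces $\delta_0$; so $T_{\pibar_{n-1}}\cdots$ applied to $\eta_0$ is (after $n$ steps) independent of $\eta_0$ and equals the genuine return distribution function of the Markov policy $\pibar_{n-1},\pibar_{n-2},\dots$ read in the right order. The induction hypothesis is that this truncated composition is $F_K$-optimal among history-based policies for the remaining horizon; the inductive step uses that the one-step-greedy choice maximizes $F_K T_\pi(\cdot)$ pointwise and that, by indifference to mixtures, no history-based policy can do better in one step than the best stationary one-step decision followed by an optimal continuation. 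This is exactly the distributional analogue of the classic finite-horizon value iteration proof; the only new ingredient is keeping the \stock{} coordinate along for the ride, which is automatic since $T_\pi$ already incorporates \Cref{eq:b-def}.

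For the discounted bounds \eqref{eq:discounted-value-iterates} and \eqref{eq:discounted-value-greedy}, I would combine the contraction of $T_\pi$ with the $L$-Lipschitzness of $K$ (hence of $F_K$ with respect to $\overline{\wass}$). First, iterating the $\gamma$-contraction gives $\overline{\wass}(\eta_n,\eta^{\pi^*}) \le \gamma^n \overline{\wass}(\eta_0,\eta^{\pi^*})$-type bounds for the Markov policy $\pi^*$ achieving (or nearly achieving) the supremum, and taking the sup over $\pi\in\markov$ yields \eqref{eq:discounted-value-iterates} after applying $F_K$ and its Lipschitz constant $L$ — this parallels \eqref{eq:classic-value-iteration-iterate-bound}. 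For the greedy-policy bound, the extra $\tfrac{1}{1-\gamma}\sup_\pi \overline{\wass}(T_\pi\eta_0,\eta_0)$ term arises exactly as in \eqref{eq:classic-value-iteration-policy-bound}: bound $\overline{\wass}(\eta^{\pibar_n},\eta_n)$ by a telescoping geometric series in the ``Bellman residual'' $\overline{\wass}(T_{\pibar_n}\eta_n,\eta_n)$, relate that residual back to $\overline{\wass}(T_\pi\eta_0,\eta_0)$ and $\overline{\wass}(\eta_0,\eta^\pi)$ using the contraction again, and then push through $F_K$. The main obstacle, and the step I would spend the most care on, is the precise bookkeeping in \eqref{eq:discounted-value-greedy}: making sure the Bellman-residual-to-return-distribution estimate is stated with the right constants in the supremum $1$-Wasserstein metric, and that the indifference-to-$\gamma$ hypothesis is invoked correctly so that the $\gamma^{-t}$ factors in \Cref{eq:balance-non-stationary-invariant} genuinely drop out when we pass from $F_K$ of the anytime proxy to $F_K$ of $\eta^\pi$; the finite-horizon part and the contraction lemma itself I expect to be routine.
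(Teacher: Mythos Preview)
Your finite-horizon sketch is broadly aligned with the paper's argument, though you never name the key lemma: \emph{monotonicity} of $T_\pi$ with respect to the preorder induced by $F_K$ (the paper's Lemma~7), which is precisely what indifference to mixtures and indifference to $\gamma$ buy you. With monotonicity plus the definition of the greedy policies, an induction gives $F_K\eta_n = \sup_{\pi_0,\ldots,\pi_{n-1}\in\stationary} F_K T_{\pi_{n-1}}\cdots T_{\pi_0}\eta_0$, and the finite-horizon claims follow.

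Your discounted argument has a genuine gap. You write that ``iterating the $\gamma$-contraction gives $\overline{\wass}(\eta_n,\eta^{\pi^*})\le\gamma^n\overline{\wass}(\eta_0,\eta^{\pi^*})$-type bounds,'' but this is not available: the contraction $\overline{\wass}(T_\pi\eta,T_\pi\eta')\le\gamma\,\overline{\wass}(\eta,\eta')$ requires the \emph{same} $\pi$ on both sides, whereas $\eta_n = T_{\pibar_{n-1}}\cdots T_{\pibar_0}\eta_0$ uses greedy policies that need not match any decomposition of $\eta^{\pi^*}$. The distributional max operator $T_*$ is in general \emph{not} a $\gamma$-contraction, the iterates $\eta_n$ need not converge in $\overline{\wass}$, and there may be no optimal $\eta^*$ at all --- the paper notes all three explicitly. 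The same issue breaks your plan for \eqref{eq:discounted-value-greedy}: you can telescope $\overline{\wass}(\eta^{\pibar_n},\eta_n)$ in terms of $\overline{\wass}(T_{\pibar_n}\eta_n,\eta_n)$, but you cannot then relate this Bellman residual back to $\overline{\wass}(T_\pi\eta_0,\eta_0)$ by contraction, since $\eta_n$ and $\eta_0$ are separated by mismatched operators.

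The paper's fix is to work at the level of $F_K$-values rather than $\overline{\wass}$-distances. Once monotonicity gives $F_K\eta_n = \sup_{\pi_1,\ldots,\pi_n} F_K T_{\pi_1}\cdots T_{\pi_n}\eta_0$, you compare this supremum to $\sup_{\pi'\in\markov} F_K\eta^{\pi'}$ by writing $\eta^{\pi'} = T_{\pi'_1}\cdots T_{\pi'_n}\eta^{\sigma^n\pi'}$ and choosing $\pi_i = \pi'_i$ in the first supremum; now the \emph{same} operator string is applied to $\eta_0$ and to $\eta^{\sigma^n\pi'}$, the $\gamma$-contraction applies, and Lipschitzness of $K$ converts the resulting $\overline{\wass}$-bound into an $F_K$-bound. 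The bound for $F_K\eta^{\pibar_n}$ uses the same matching trick inside a telescoping sum of $\|F_K T_{\pibar_n}^k\eta_{n+1} - F_K T_{\pibar_n}^k\eta_n\|_\infty$. So the missing idea is: do not try to control $\overline{\wass}(\eta_n,\cdot)$ directly; instead use monotonicity to express $F_K\eta_n$ as a sup over policy sequences, then match sequences before contracting.
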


Next, we discuss a number of aspects of our value iteration result.

\emph{Iterates may not converge.}
The guarantees in \cref{thm:value-iteration} only apply to values of the objective functional $F_K \eta_n$, and
iterate convergence cannot be guaranteed because multiple iterates may be tied at the optimum.
Iterate non-convergence has been identified before in distributional RL, as multiple return distributions can be optimal~\citep[Example 7.11, p.~210, ][]{bellemare2023distributional}.

\emph{Comparison to classic DP bounds in the finite-horizon case.}
The guarantees for finite-horizon MDPs are essentially the same for distributional and classic value iteration: Namely, optimality after iterating at least as many times as the MDP horizon.

\emph{Comparison to classic DP bounds in the discounted case.}
In the discounted case, the bounds for distributional value iteration (\cref{eq:discounted-value-iterates,eq:discounted-value-greedy}) are similar to the classic value iteration bounds (\cref{eq:classic-value-iteration-iterate-bound,eq:classic-value-iteration-policy-bound}) with three notable differences:
\begin{enumerate}[label=\roman*)]
    \item The bounding terms are $1$-Wasserstein distances, rather than $\infty$-norms. 
    This is inherent to the fact that our iterates are distributional.
    \item The Lipschitz constant of $K$ is present. 
    This constant is $1$ when $F_K$ is the standard RL objective functional.
    \item The classic value iteration bounds are given in terms of $V^*$, but the distributional value iteration bounds are not. This is because it is still an open question whether an optimal return distribution $\eta^*$ exists in the discounted case in general. However, if we assume $\eta^*$ exists, we can replace the bounding term in \Cref{eq:discounted-value-iterates} with $L\gamma^n \cdot \overline{\wass}(\eta_0, \eta^*)$, which is comparable to the classic DP bounds.
\end{enumerate}
The considerations above apply similarly to the greedy policy bounds for distributional and classic DP.

When an optimal return distribution $\eta^*$ exists, we can also show an optimality guarantee for policies that are greedy with respect to $\eta^*$, similar to the classic case:
\begin{restatable}[Greedy Optimality]{theorem}{optimalitytheorem}
\label{thm:optimality}
If $K : (\crvs, \wass) \rightarrow \reals$ is indifferent to mixtures and indifferent to $\gamma$,
and if: i) the MDP has finite horizon; or ii) $\gamma < 1$ and $K$ is Lipschitz,
then the following hold.

There exists an optimal return distribution $\eta^* \in \crvs^{\states \times \cs}$ satisfying
\[
    F_K \eta^* = \sup_{\pi \in \historybased} F_K \eta^\pi,
\]
iff the supremum on the right-hand side is attained (that is, an optimal policy exists).

If such $\eta^*$ exists, then any greedy policy with respect to $\eta^*$ is optimal (and thus attains the supremum above).
\end{restatable}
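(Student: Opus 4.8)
The plan is to derive everything from a single distributional Bellman optimality identity and then let \cref{thm:value-iteration} do the rest. Write $F^* \doteq \sup_{\pi \in \historybased} F_K \eta^\pi$, understood as a pointwise supremum over $\states\times\cs$. The backward direction of the ``iff'' is immediate: if an optimal policy $\pi^*$ exists then $\eta^* \doteq \eta^{\pi^*}$ is an optimal return distribution, and it lies in $(\crvs^{\states\times\cs},\overline{\wass})$ by \cref{ass:bounded-first-moment-rewards} together with $\gamma<1$ or finite horizon, so $F_K\eta^*$ is well defined and equals $F^*$. Thus the real content is the greedy-optimality claim; the forward direction of the ``iff'' then follows from it, since a stationary greedy policy exists (the one-step maximization can be taken over the finite action set $\actions$) and, once shown optimal, it attains the supremum.

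The core step I would establish is: \emph{if $F_K\eta^* = F^*$, then $F_K\eta^* = \sup_{\pi\in\stationary} F_K T_\pi \eta^*$} (equivalently $F_K T_{\pibar}\eta^* = F^*$ for $\pibar$ greedy with respect to $\eta^*$). Both inequalities use the fact that, under indifference to mixtures and to $\gamma$, the map $\eta \mapsto F_K T_\pi \eta$ factors through $F_K\eta$ as a monotone, classic-Bellman-type operator on value functions — the mechanism already behind \cref{thm:value-iteration} and developed in \cref{app:ddp}. For ``$\sup_{\pi\in\stationary} F_K T_\pi\eta^* \le F^*$'': fix a stationary $\pi$ and $\varepsilon>0$; for every augmented state $(s',\xc')$ pick a history-based policy that is $\varepsilon$-optimal there, and splice these after one step of $\pi$ to form a history-based policy $\tau$ whose return distribution function is $T_\pi$ applied to the state-indexed collection of those near-optimal return distributions. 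Since that collection dominates $\eta^*$ in $F_K$-value up to $\varepsilon$, monotonicity of the propagation gives $F_K\eta^\tau \ge F_K T_\pi\eta^* - o_\varepsilon(1)$ pointwise, while $F_K\eta^\tau \le F^*$ because $\tau\in\historybased$; letting $\varepsilon\to0$ and taking the supremum over stationary $\pi$ yields the bound. For ``$\sup_{\pi\in\stationary} F_K T_\pi\eta^* \ge F^*$'': \cref{thm:value-iteration} applied with $\eta_0=\eta^*$ supplies Markov policies whose $F_K$-value equals $F^*$ (in the limit, in the discounted case); unrolling the first action rule of such a policy — which is itself a stationary policy — and using that the continuation's $F_K$-value is at most $F^* = F_K\eta^*$ together with monotonicity of the propagation gives $\sup_{\pi\in\stationary} F_K T_\pi\eta^* \ge F^* - \varepsilon$ for all $\varepsilon>0$.

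With this Bellman optimality identity in hand, I would finish by noting that the constant sequence $\eta_n \doteq \eta^*$ is a legitimate sequence of distributional value iterates and $\pibar_n \doteq \pibar$ a legitimate sequence of greedy policies. Then \cref{thm:value-iteration} gives $F_K\eta^{\pibar} = F_K\eta^{\pibar_n} = F^*$ for every $n$ at least the horizon in the finite-horizon case, and in the discounted case $F^* - F_K\eta^{\pibar} \le L\gamma^n\bigl(\tfrac{1}{1-\gamma}\sup_{\pi\in\stationary}\overline{\wass}(T_\pi\eta^*,\eta^*) + \sup_{\pi\in\markov}\overline{\wass}(\eta^*,\eta^\pi)\bigr) \to 0$ as $n\to\infty$, the bracket being finite by \cref{ass:bounded-first-moment-rewards} and $\gamma<1$; either way $F_K\eta^{\pibar}=F^*$, so $\pibar$ is optimal.

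The main obstacle is the upper bound $\sup_{\pi\in\stationary} F_K T_\pi\eta^* \le F^*$: it requires gluing a whole family of near-optimal history-based continuations onto a single stationary step and controlling the resulting objective value, which rests on the precise way $F_K T_\pi$ is expressed through $F_K$ and the one-step kernel under the two indifference hypotheses (and, in the discounted case, on how Lipschitzness of $K$ transfers to a contraction on value functions). Everything else is the same finite- versus infinite-horizon bookkeeping already present in \cref{thm:value-iteration}.
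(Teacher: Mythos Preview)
There is a genuine gap in your upper bound ``$\sup_{\pi\in\stationary} F_K T_\pi\eta^* \le F^*$''. Your argument hinges on the claim that ``the map $\eta \mapsto F_K T_\pi\eta$ factors through $F_K\eta$ as a monotone, classic-Bellman-type operator on value functions''. This is \emph{not} what \cref{lem:monotonicity} gives: monotonicity says only that $F_K\eta \ge F_K\eta' \Rightarrow F_KT_\pi\eta \ge F_KT_\pi\eta'$, an order-preservation statement, not that $F_KT_\pi\eta$ is a function of $F_K\eta$ alone. In your splicing argument you have $F_K\eta^* - \varepsilon \le F_K\eta' \le F_K\eta^*$, and from this you want $F_K T_\pi\eta^* - F_K T_\pi\eta' = o_\varepsilon(1)$; but monotonicity yields only $F_KT_\pi\eta^* \ge F_KT_\pi\eta'$ with no quantitative control on the gap. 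Lipschitzness of $K$ does not help either, since it bounds $|F_KT_\pi\eta^* - F_KT_\pi\eta'|$ by $L\gamma\,\overline{\wass}(\eta^*,\eta')$, and closeness in $F_K$-value says nothing about $\overline{\wass}$-closeness. Without the upper bound, the constant sequence $\eta_n\equiv\eta^*$ is \emph{not} a valid value-iteration sequence, so the appeal to \cref{thm:value-iteration} collapses.

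The paper's route avoids this entirely: it never needs the upper bound or the full Bellman optimality identity. It uses only the \emph{lower} bound $F_KT_{\pibar}\eta^* \ge F_K\eta^*$ (your lower-bound argument is fine, though the paper's is shorter: from $F_K\eta^* \ge F_K\eta^\pi$ for all $\pi\in\markov$ and monotonicity, $F_KT_{\pibar}\eta^* \ge F_KT_{\pibar}\eta^\pi$ for all $\pibar\in\stationary$, $\pi\in\markov$; taking suprema gives $F_KT_*\eta^* \ge \sup_{\pi\in\markov}F_K\eta^\pi$). That single inequality is exactly the one-step-improvement hypothesis \eqref{eq:one-step-improvement} of \cref{lem:policy-improvement}, which then yields $F_K\eta^{\pibar} \ge F_K\eta^*$ directly. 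Combined with \cref{prop:markov-policies-are-sufficient}, this finishes both the greedy-optimality claim and the forward direction of the ``iff''. So the fix is simple: replace the appeal to \cref{thm:value-iteration} on constant iterates with an appeal to \cref{lem:policy-improvement}, and drop the upper-bound step altogether.
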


\subsection{Distributional Policy Iteration}
\label{sec:ddp:policy-iteration}

Classic policy iteration computes the iterates $\pi_1, \pi_2, \ldots$ satisfying, for $n \geq 0$,
\begin{equation*}
    \classicBellman_{\pi_{n+1}}V^{\pi_n} = \sup_{\pi \in \stationary} \classicBellman_\pi V^{\pi_n},
\end{equation*}
that is, each iterate $\pi_{n+1}$ is greedy with respect to the value of the previous iterate $\pi_n$.
In finite-horizon MDPs, $V^{\pi_n}$ is optimal if $n$ is at least the horizon of the MDP.
In the discounted case, we have~\citep[Proposition 2.8, p.~45;][]{bertsekas1996neuro}:
\begin{equation*}
    V^* - V^{\pi_n} \leq \gamma^n \| V^* - V^{\pi_0} \|_\infty.
\end{equation*}

Distributional policy iteration is similar to its classic counterpart (he main difference being that the objective functional $F_K$ determines the greedy policy selection) and also enjoys similar guarantees, as formalized by
\cref{thm:policy-iteration}:
\begin{restatable}[Distributional Policy Iteration]{theorem}{distributionalpolicyiterationtheorem}
\label{thm:policy-iteration}
If $K : (\crvs, \wass) \rightarrow \reals$ is indifferent to mixtures and indifferent to $\gamma$,
for every stationary policy $\pi_0 \in \stationary$ if the iterates $\pi_1, \pi_2, \ldots$ satisfy (for $n \geq 0$)
\[
    F_K T_{\pi_{n+1}}\eta^{\pi_n} = \sup_{\pi \in \stationary} F_K T_\pi\eta^{\pi_n} 
    \tag{Distributional Policy Iterates}
\]
then the following hold.

\emph{Finite-horizon case:} 
For all $n$ greater or equal to the horizon of the MDP,
\begin{equation}
    F_K \eta^{\pi_n} = \sup_{\pi \in \historybased} F_K \eta^\pi.
    \label{eq:finite-horizon-policy-iterates}
\end{equation}

\emph{Discounted case ($\gamma < 1$):} 
If $K$ is $L$-Lipschitz, then for all $n \geq 0$
\begin{equation}
    \sup_{\pi \in \historybased} F_K \eta^\pi - F_K \eta^{\pi_n} \leq L\gamma^n \cdot \sup_{\pi \in \markov}\overline{\wass}(\eta^{\pi_0}, \eta^\pi),
    \label{eq:discounted-policy-iterates}
\end{equation}
\end{restatable}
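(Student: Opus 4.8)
\textit{The plan} is to derive both parts from the distributional value iteration guarantees of \cref{thm:value-iteration} via a policy-improvement argument, mirroring the classic proof of policy iteration. \emph{Policy improvement.} The first step is to show $F_K \eta^{\pi_{n+1}} \ge F_K \eta^{\pi_n}$ pointwise (in $(s,\xc)$) for every $n$. Since $\eta^{\pi_n}$ is the fixed point of $T_{\pi_n}$ (using completeness of $(\crvs^{\states\times\cs},\overline{\wass})$ and that $T_{\pi_n}$ is a $\gamma$-contraction in the discounted case, and a horizon-step nilpotent-to-$\delta_0$ operator in the finite-horizon case), the greedy identity gives $F_K T_{\pi_{n+1}}\eta^{\pi_n} = \sup_{\pi\in\stationary} F_K T_\pi\eta^{\pi_n} \ge F_K T_{\pi_n}\eta^{\pi_n} = F_K\eta^{\pi_n}$. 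I would then establish that $F_K T_\pi$ is monotone with respect to the pointwise order on $F_K$-values --- this is exactly where indifference to mixtures and to $\gamma$ enter, and it is one of the ingredients already used in proving \cref{thm:value-iteration}. Iterating from $F_K T_{\pi_{n+1}}\eta^{\pi_n} \ge F_K\eta^{\pi_n}$ then shows $m \mapsto F_K T_{\pi_{n+1}}^{\,m}\eta^{\pi_n}$ is non-decreasing; in the finite-horizon case this sequence is constant and equal to $F_K\eta^{\pi_{n+1}}$ once $m$ exceeds the horizon, while in the discounted case $T_{\pi_{n+1}}^{\,m}\eta^{\pi_n} \to \eta^{\pi_{n+1}}$ in $\overline{\wass}$ and the assumed $L$-Lipschitzness of $K$ makes $F_K$ continuous in $\overline{\wass}$, so one passes to the limit. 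Either way $F_K\eta^{\pi_{n+1}} \ge F_K T_{\pi_{n+1}}\eta^{\pi_n} = \sup_{\pi\in\stationary} F_K T_\pi\eta^{\pi_n} \ge F_K\eta^{\pi_n}$.

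\emph{Domination of the value iterates.} Let $\eta_0,\eta_1,\dots$ be the distributional value iterates of \cref{thm:value-iteration} started from $\eta_0 \doteq \eta^{\pi_0}$. I would prove $F_K\eta^{\pi_n} \ge F_K\eta_n$ pointwise by induction on $n$: the base case $n=0$ is an equality, and for the inductive step the previous paragraph and the greedy identity give $F_K\eta^{\pi_{n+1}} \ge \sup_{\pi\in\stationary} F_K T_\pi\eta^{\pi_n}$, which, by the induction hypothesis $F_K\eta^{\pi_n} \ge F_K\eta_n$ together with the monotonicity of each $F_K T_\pi$ and the fact that pointwise suprema preserve $\le$, dominates $\sup_{\pi\in\stationary} F_K T_\pi\eta_n = F_K\eta_{n+1}$.

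\emph{Conclusion.} In the finite-horizon case, for every $n$ at least the horizon of the MDP, \eqref{eq:finite-horizon-value-iterates} gives $F_K\eta_n = \sup_{\pi\in\historybased} F_K\eta^\pi$, while $F_K\eta^{\pi_n} \le \sup_{\pi\in\historybased} F_K\eta^\pi$ holds trivially; combined with $F_K\eta^{\pi_n} \ge F_K\eta_n$ this forces \eqref{eq:finite-horizon-policy-iterates}. In the discounted case, combining $F_K\eta^{\pi_n} \ge F_K\eta_n$ with \eqref{eq:discounted-value-iterates} applied at $\eta_0 = \eta^{\pi_0}$ yields $\sup_{\pi\in\historybased} F_K\eta^\pi - F_K\eta^{\pi_n} \le \sup_{\pi\in\historybased} F_K\eta^\pi - F_K\eta_n \le L\gamma^n \sup_{\pi\in\markov}\overline{\wass}(\eta^{\pi_0},\eta^\pi)$, which is \eqref{eq:discounted-policy-iterates}.

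\emph{Anticipated obstacle.} The delicate part is the policy-improvement step: making rigorous that $m \mapsto F_K T_{\pi_{n+1}}^{\,m}\eta^{\pi_n}$ is non-decreasing and that its $F_K$-value converges to $F_K\eta^{\pi_{n+1}}$. This is precisely where all three hypotheses are used --- indifference to mixtures and to $\gamma$ to obtain the one-step recursion for $F_K T_\pi$ and its monotonicity in the $F_K$-order, and Lipschitzness of $K$ to interchange $F_K$ with the $\overline{\wass}$-limit in the discounted case. A secondary, more routine point is well-posedness of the iteration, i.e. that the supremum over $\stationary$ in the greedy step is attained so that $\pi_{n+1}$ exists; this is handled exactly as for value iteration, using finiteness of $\actions$ together with indifference to mixtures.
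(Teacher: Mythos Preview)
Your proposal is correct and follows essentially the same route as the paper. The paper packages your policy-improvement paragraph as a separate lemma (\cref{lem:policy-improvement}, proved using \cref{lem:monotonicity} and Lipschitzness exactly as you outline), then establishes $F_K\eta^{\pi_{n+1}} \ge F_K T_*^{\,n+1}\eta^{\pi_0}$ by the same induction you describe, and finally invokes \cref{lem:optimal-dp-objective} together with \cref{prop:markov-policies-are-sufficient} rather than going through \cref{thm:value-iteration}; since the value-iterate bound \eqref{eq:discounted-value-iterates} is itself derived from \cref{lem:optimal-dp-objective}, this is only a presentational difference.
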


\vspace{1em}
 
\emph{Comparison to classic policy iteration bounds.}
Essentially the same considerations apply here as in \cref{sec:ddp:value-iteration}, for comparing the respective value iteration bounds.
This is because we obtain the policy iteration bounds from the value iteration bounds, using the fact that $V^{\pi_n} \geq V_n$ for classic DP, and $F_K \eta^{\pi_n} \geq F_K \eta_n$ for distributional DP (see the proof of \cref{thm:policy-iteration} in \cref{app:ddp-main-results-proofs}).

\subsection{Conditions Overview}
\label{sec:ddp-conditions}

\Cref{thm:value-iteration,thm:policy-iteration} only apply to objective functionals that satisfy certain properties: Indifference to mixtures and indifference to $\gamma$ in the finite-horizon case, plus Lipschitz continuity in the infinite-horizon discounted case.
In this section we give an overview of these conditions and test them: How restrictive are these conditions? Can they be weakened?
The proofs for the results in this section can be found in \cref{app:conditions}.
Recall that we are abusing notation and writing $K(G) = K\df(G)$.
\begin{definition}[Indifference to Mixtures (of Initial Augmented States)]
\label{def:indifference-to-mixtures}
We say $K : (\crvs, \wass) \rightarrow \reals$ is \emph{indifferent to mixtures} (of initial augmented states) if for every 
$\eta, \eta' \in (\crvs^{\states \times \cs}, \overline{\wass})$
such that
\[
    K\eta(s, \xc) \geq K\eta'(s, \xc),
\]
for all $(s, \xc) \in \states \times \cs$, then for all random variables $(S, \xC)$ taking values in $\states \times \cs$
we also have
\[
    K(G(S, \xC)) \geq K(G'(S, \xC)).
    \tag{$G(s, \xc) \sim \eta(s, \xc)$, $G'(s, \xc) \sim \eta'(s, \xc)$}
\]
\end{definition}
\begin{definition}[Indifference to $\gamma$]
\label{def:indifference-to-gamma}
We say $K : (\crvs, \wass) \rightarrow \reals$ is \emph{indifferent to $\gamma$} if, for every $\nu, \nu' \in (\crvs, \wass)$
\[
    K\nu \geq K\nu' \Rightarrow K(\gamma G) \geq K(\gamma G').
    \tag{$G \sim \nu$, $G' \sim \nu'$}
\]
\end{definition}
\begin{definition}[Lipschitz Continuity]
\label{def:lipschitz}
We say $K : (\crvs, \wass) \rightarrow \reals$ is \emph{$L$-Lipschitz} (or \emph{Lipschitz}, for simplicity) if there exists $L \in \reals$ such that
\[
     \sup_{\substack{\nu, \nu':\\ \wass(\nu) < \infty \\ \wass(\nu') < \infty \\ \wass(\nu, \nu') > 0}} \frac{|K\nu - K\nu'|}{\wass(\nu, \nu')} \leq L.
\]
$L$ is the \emph{Lipschitz constant} of $K$.
\end{definition}

We believe that in general these conditions are fairly easy to verify for different choices of $K$.
As an example, \cref{lem:expected-utility-conditions} does part of the verification for expected utilities.
\begin{restatable}[Conditions for Expected Utilities]{lemma}{expectedutilityconditions}
\label{lem:expected-utility-conditions}
Let $\Uf$ be an expected utility, which is an objective functional $F_K$ with $K\nu = \E f(G)$ ($G \sim \nu$).
Then the following hold:
\begin{enumerate}
    \item $K$ is indifferent to mixtures. \label{lem:expected-utility-conditions:indifferent-to-mixtures}
    \item $K$ is indifferent to $\gamma$ iff there exists $\alpha \in (0, 1]$ such that $\gamma < 1 \Rightarrow \alpha < 1$ and, for all $\xc \in \cs$, \label{lem:expected-utility-conditions:indifferent-to-gamma}
        \begin{equation}
            f(\gamma \xc) = \alpha f(\xc) + (1 - \alpha) f(0).
            \label{eq:expected-utility-conditions:indifferent-to-gamma}
        \end{equation}
    \item $K$ is $L$-Lipschitz iff $f$ is $L$-Lipschitz. \label{lem:expected-utility-conditions:lipschitz}
\end{enumerate}
\end{restatable}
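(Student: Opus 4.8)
The plan is to verify each of the three claims directly from the definition $K\nu = \E f(G)$ ($G \sim \nu$), using the fact that expectation is linear and, crucially, that for a mixture of distributions $\E f(\cdot)$ is the corresponding mixture of expectations. For claim~\ref{lem:expected-utility-conditions:indifferent-to-mixtures}, suppose $K\eta(s,\xc) \geq K\eta'(s,\xc)$ for all $(s,\xc)$, i.e.\ $\E f(G(s,\xc)) \geq \E f(G'(s,\xc))$ pointwise. Given any random variable $(S,\xC)$ on $\states \times \cs$, I would condition on $(S,\xC)$ and use the tower rule: $\E f(G(S,\xC)) = \E\big[\E[f(G(s,\xc)) \mid (S,\xC) = (s,\xc)]\big] = \E\big[(K\eta)(S,\xC)\big]$, and likewise for $\eta'$. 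Since $(K\eta)(S,\xC) \geq (K\eta')(S,\xC)$ almost surely, monotonicity of expectation gives $K(G(S,\xC)) \geq K(G'(S,\xC))$. This is the easy direction; the only care needed is that the relevant expectations are finite, which follows because $\eta, \eta' \in (\crvs^{\states\times\cs}, \overline{\wass})$ and (implicitly) $f$ is well-behaved enough that $\Uf$ maps into $\reals^{\states\times\cs}$ — I would note that $F_K$ is assumed to be a well-defined $(\crvs,\wass)\to\reals$ functional.

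For claim~\ref{lem:expected-utility-conditions:indifferent-to-gamma}, I would argue both directions. For the ``if'' direction: assume \cref{eq:expected-utility-conditions:indifferent-to-gamma} holds for some $\alpha \in (0,1]$ with $\gamma < 1 \Rightarrow \alpha < 1$. Then for any $G \sim \nu$, $K(\gamma G) = \E f(\gamma G) = \alpha \E f(G) + (1-\alpha)f(0) = \alpha K\nu + (1-\alpha)f(0)$, which is an increasing affine function of $K\nu$ (since $\alpha > 0$); hence $K\nu \geq K\nu'$ implies $K(\gamma G) \geq K(\gamma G')$. For the ``only if'' direction: assume $K$ is indifferent to $\gamma$. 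Applying the implication to pairs of Dirac measures $\delta_\xc, \delta_{\xc'}$ — for which $K\delta_\xc = f(\xc)$ and $K(\gamma \cdot \text{Dirac at }\xc) = f(\gamma\xc)$ — tells me that $f(\xc) \geq f(\xc') \Rightarrow f(\gamma\xc) \geq f(\gamma\xc')$, i.e.\ $\xc \mapsto f(\gamma\xc)$ is a monotone function of $f(\xc)$. The delicate part is upgrading this ``monotone transformation'' statement to the \emph{affine} relation \cref{eq:expected-utility-conditions:indifferent-to-gamma}: this is where I must use mixtures, not just Diracs. Taking a two-point distribution $\nu = p\delta_{\xc_1} + (1-p)\delta_{\xc_2}$, we get $K\nu = p f(\xc_1) + (1-p)f(\xc_2)$ and $K(\gamma G) = p f(\gamma \xc_1) + (1-p) f(\gamma\xc_2)$; the monotone-transformation constraint, applied across all such two-point (and more generally finitely-supported) distributions whose $K$-values coincide, forces the transformation $g$ with $f(\gamma\xc) = g(f(\xc))$ to be affine on the range of $f$, say $g(y) = \alpha y + \beta$. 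Evaluating at $\xc = 0$ gives $f(0) = \alpha f(0) + \beta$, so $\beta = (1-\alpha)f(0)$, yielding \cref{eq:expected-utility-conditions:indifferent-to-gamma}. Finally $\alpha > 0$ because $g$ must be nondecreasing and nonconstant (if $f$ is nonconstant), and the constraint $\gamma < 1 \Rightarrow \alpha < 1$ comes from requiring that iterating the relation (replacing $\xc$ by $\gamma^k \xc$) stays consistent as $\gamma^k \xc \to 0$ — i.e.\ $f(0) = \alpha^k f(\xc) + (1-\alpha^k)f(0)$ must converge sensibly, forcing $\alpha \le 1$, and $\alpha = 1$ would force $f(\gamma\xc) = f(\xc)$ for all $\xc$, a degeneracy the statement excludes in the discounted regime. (I expect this affineness argument to be the main obstacle, and I would handle the edge case where $f$ is constant separately, in which case any $\alpha$ works.)

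For claim~\ref{lem:expected-utility-conditions:lipschitz}, I would again do both directions. ``If'': suppose $f$ is $L$-Lipschitz, so $|f(x) - f(y)| \le L\|x - y\|_1$. For $\nu, \nu' \in (\crvs,\wass)$, take a coupling $(G, G')$ with $\df G = \nu$, $\df G' = \nu'$; then $|K\nu - K\nu'| = |\E f(G) - \E f(G')| \le \E|f(G) - f(G')| \le L\,\E\|G - G'\|_1$. Taking the infimum over couplings gives $|K\nu - K\nu'| \le L\,\wass(\nu,\nu')$, so $K$ is $L$-Lipschitz. ``Only if'': suppose $K$ is $L$-Lipschitz. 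Apply the Lipschitz bound to $\nu = \delta_\xc$, $\nu' = \delta_{\xc'}$: then $\wass(\delta_\xc, \delta_{\xc'}) = \|\xc - \xc'\|_1$ and $|K\delta_\xc - K\delta_{\xc'}| = |f(\xc) - f(\xc')|$, so $|f(\xc) - f(\xc')| \le L\|\xc - \xc'\|_1$, i.e.\ $f$ is $L$-Lipschitz. This direction is routine; no obstacle here. Throughout, the one recurring technical point to be careful about is integrability — ensuring all the expectations $\E f(G)$ appearing are finite, which I would justify once at the start from the Lipschitz/boundedness assumptions in force (noting that under \cref{ass:bounded-first-moment-rewards} and the hypotheses of the relevant theorems, the distributions live in $(\crvs,\wass)$ and an $L$-Lipschitz $f$ then gives $|f(G)| \le |f(0)| + L\|G\|_1$ with finite expectation).
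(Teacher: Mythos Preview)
Your proposal is correct and follows essentially the same approach as the paper: the tower rule for part~\ref{lem:expected-utility-conditions:indifferent-to-mixtures}, the coupling/Dirac argument for part~\ref{lem:expected-utility-conditions:lipschitz}, and for part~\ref{lem:expected-utility-conditions:indifferent-to-gamma} the observation that indifference to $\gamma$ forces $f(\gamma\,\cdot)$ to be an affine function of $f(\cdot)$, extracted via two-point mixtures. The only noteworthy difference is in how the affineness step is executed: you argue abstractly that ``monotone transformation preserving mixture values must be affine on the range of $f$,'' whereas the paper (via its supporting \cref{lem:indifference-to-gamma-condition}) carries this out constructively by fixing a reference point $\overline{\xc}$ with $f(\overline{\xc})\neq f(0)$, building explicit two-point distributions supported on $\{\overline{\xc},0\}$ (or $\{\xc,0\}$) with matching $K$-values, and reading off $\alpha = f(\gamma\overline{\xc})/f(\overline{\xc})$ directly; this also makes the verification that $\alpha>0$ concrete (via $f(\overline{\xc}) = \alpha f(\gamma^{-1}\overline{\xc})$). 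Your argument for $\alpha<1$ when $\gamma<1$ (iterating to $\gamma^k\xc\to 0$) is the same as the paper's, and both tacitly use a limiting/continuity step at $0$ that is not spelled out in detail.
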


The condition for indifference to $\gamma$ is interesting because it means $\xc \mapsto f(\xc) - f(0)$ is positively homogeneous with degree $\log_\gamma \alpha$.

If we refer back to \cref{tab:utility-examples}, we see that the choices of $f$ in the first part of the table satisfy all three conditions, so distributional DP can optimize the corresponding $\Uf$ both in the finite-horizon and discounted cases.
The choices of $f$ in the second part of the table are not Lipschitz, so we know that DP can optimize the corresponding $\Uf$ in the finite-horizon setting.
The choice of $U_f$ in the third part of the table is neither Lipschitz nor indifferent to $\gamma < 1$, so distributional DP is only guaranteed to optimize the corresponding $\Uf$ in the finite-horizon \emph{undiscounted} setting.
A consequence of \cref{lem:expected-utility-conditions}, since indifference to $\gamma = 1$ is trivially true, is that distributional DP can optimize any expected utility in the finite-horizon undiscounted case. 

We have investigated the three conditions (\cref{def:indifference-to-gamma,def:indifference-to-mixtures,def:lipschitz}) to determine how restrictive they are.
We have found that indifference to mixtures and indifference to $\gamma$ are necessary and sufficient, so they are minimal.
In the absence of either, even a basic greedy optimality guarantee (\cref{thm:optimality}) fails:
\begin{restatable}{proposition}{indifferencesnecessary}
\label{prop:indifferences-necessary}
If $K : (\crvs, \wass) \rightarrow \reals$ is not indifferent to mixtures or not indifferent to $\gamma$, then there exists an MDP, an $\eta^* \in  (\crvs^{\states \times \cs}, \overline{\wass})$ and a $\pibar \in \stationary$ such that $\pibar$ is greedy with respect to $\eta^*$ and
\[
    F_K \eta^* = \sup_{\pi \in \historybased} F_K\eta^\pi,
\]
however, for some $(s, \xc) \in \states \times \cs$
\[
    F_K \eta^{\pibar}(s, \xc) < \sup_{\pi \in \historybased} F_K\eta^\pi(s, \xc).
\]
\end{restatable}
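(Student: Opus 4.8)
The plan is to prove the contrapositive-flavoured statement directly by construction, handling the two hypotheses of the disjunction separately. In each case I will exhibit a small finite‑horizon MDP together with a pointwise‑optimal return distribution function $\eta^*$ (so that $F_K\eta^* = \sup_{\pi\in\historybased}F_K\eta^\pi$) and a stationary greedy policy $\pibar$ whose return distribution is strictly worse at the initial state. The MDP will in both cases be a short "decision gadget": a prefix of single‑action states leading to one state where a choice is made, after which every trajectory terminates. The failure of the relevant indifference property is exactly what lets the myopically greedy choice at that state produce a return distribution on which $F_K$ is lower than for some other choice, while greedy cannot detect this because it only inspects $\eta^*$ one step ahead (this is, of course, the converse situation to \cref{thm:optimality}).

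Case (i): $K$ is not indifferent to mixtures. Negating \cref{def:indifference-to-mixtures} yields a mixing distribution over an index set together with families $(\nu_\omega)$, $(\nu'_\omega)$ in $(\crvs,\wass)$, with $\sup_\omega\wass(\nu_\omega)$, $\sup_\omega\wass(\nu'_\omega)<\infty$ (so \cref{ass:bounded-first-moment-rewards} will hold), such that $K\nu_\omega\ge K\nu'_\omega$ for every $\omega$ but $K\bar\nu<K\bar\nu'$, where $\bar\nu,\bar\nu'$ denote the corresponding mixtures. I will take $\gamma=1$ (so indifference to $\gamma$ is vacuous) and build an MDP whose initial state $\hat s$ has a single action that rewards $0$ and randomizes, according to the mixing distribution, into states $s_\omega$; each $s_\omega$ offers two actions, one terminating with reward $\sim\nu_\omega$ and one with reward $\sim\nu'_\omega$, while for $\xc\neq0$ the transitions out of $s_\omega$ and $\hat s$ are made trivial (reward $0$, go to a terminal state), using that the model permits \stock{}-dependent dynamics. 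With $\xc_0=0$ the return of any policy from $\hat s$ is the corresponding mixture of the $\nu_\omega/\nu'_\omega$ chosen at the $s_\omega$; I take $\eta^*$ to be any return distribution function attaining $\sup_{\pi\in\historybased}F_K\eta^\pi$ pointwise. Since $K\nu_\omega\ge K\nu'_\omega$, at each $(s_\omega,0)$ the $\nu_\omega$-action is greedy with respect to $\eta^*$, so the greedy policy $\pibar$ that takes it there (and takes greedy actions everywhere else) has $\eta^{\pibar}(\hat s,0)=\bar\nu$, whence $F_K\eta^{\pibar}(\hat s,0)=K\bar\nu<K\bar\nu'\le\sup_{\pi\in\historybased}F_K\eta^\pi(\hat s,0)$, the last inequality holding because the policy taking the $\nu'_\omega$-action at every $s_\omega$ realizes $\bar\nu'$ at $\hat s$.

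Case (ii): $K$ is not indifferent to $\gamma$. Since indifference to $\gamma=1$ is trivially true, this forces $\gamma<1$; I build a finite‑horizon (two‑step) MDP, so that \cref{thm:optimality} would apply via its finite‑horizon branch were $K$ indifferent to $\gamma$. Negating \cref{def:indifference-to-gamma} gives $\nu,\nu'\in(\crvs,\wass)$ with $K\nu\ge K\nu'$ but $K(\gamma\nu)<K(\gamma\nu')$, writing $\gamma\nu$ for the law of $\gamma X$ with $X\sim\nu$. The initial state $\hat s$ has a single action rewarding $0$ and moving to a state $s_1$; from $s_1$ there are two actions, one terminating with reward $\sim\nu$ and one with reward $\sim\nu'$, with the off‑\stock{} transitions again made trivial so that $\eta^*$ is pinned down. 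From $(\hat s,0)$ the \stock{} at $s_1$ is $0$, so the return of a policy from $\hat s$ is $\gamma$ times its return from $s_1$, i.e.\ $\gamma\nu$, $\gamma\nu'$, or a mixture thereof. Taking $\eta^*$ pointwise optimal and using $K\nu\ge K\nu'$, the greedy policy may take the $\nu$-action at $s_1$, and then $F_K\eta^{\pibar}(\hat s,0)=K(\gamma\nu)<K(\gamma\nu')\le\sup_{\pi\in\historybased}F_K\eta^\pi(\hat s,0)$.

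The main obstacle, and where the construction needs genuine care rather than routine checking, is guaranteeing that the pointwise‑optimal $\eta^*$ actually exists in the constructed MDP and that the greedy policy is driven to (or at least permitted to make) the ``bad'' choice instead of randomizing its way back to optimality — that is, that $\sup_{\pi\in\historybased}F_K\eta^\pi$ is attained at every state–\stock{} pair, and that the witness can be taken in a form where the higher‑$K$ distribution is $K$-maximal along the segment joining it to the other, so that taking its action is greedy. Both points are automatic in the discounted case, where $K$ is Lipschitz hence continuous and the relevant sets of achievable return distributions are compact (being continuous images of simplices); in the undiscounted finite‑horizon case, where $K$ may be discontinuous, I would discharge them as a preliminary reduction — trivialising all non‑essential transitions, exploiting \stock{}-dependent dynamics, and, if necessary, passing to a convenient sub‑witness — before running the argument above.
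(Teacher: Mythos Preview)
Your approach is essentially the paper's: in each case you build a two-step finite-horizon gadget with a single decision state whose one-step $K$-comparison favours one action while the induced return at the initial state favours the other. The paper's Case~1 differs only cosmetically --- it rewards $\xC$ on the first transition so as to land at augmented state $(S,\xC)$, whereas you reward $0$ and use \stock{}-dependent dynamics to pin everything to $\xc=0$; your version more directly matches the bare witness from negating \cref{def:indifference-to-mixtures}. Case~(ii) is identical to the paper's Case~2.

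Two remarks on the final paragraph. First, your claim that in the discounted case ``$K$ is Lipschitz hence continuous'' is wrong: Lipschitzness is an independent hypothesis, not a consequence of $\gamma<1$, so your Case~(ii) faces exactly the same existence-of-$\eta^*$ obstacle as Case~(i). Second --- and this applies equally to the paper's proof, which simply asserts ``there exists an optimal non-stationary policy'' without justification --- the obstacle is a bit sharper than you state: even granting a pointwise-optimal $\eta^*$, for the deterministic $\nu_\omega$-action to be \emph{greedy} at $(s_\omega,0)$ you need $K\nu_\omega=\sup_{p\in[0,1]}K(p\nu_\omega+(1-p)\nu'_\omega)$, and $K\nu_\omega\ge K\nu'_\omega$ alone does not imply this when $K$ fails mixture indifference. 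Your promissory ``pass to a convenient sub-witness'' would have to address both points; the paper does not spell this out either.
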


We have found that the relationship between Lipschitz continuity and the infinite-horizon discounted case is less clear, and it is still an open question whether this property is necessary.
However, we can show that indifference to mixtures and indifference to $\gamma$ are not sufficient for the infinite-horizon discounted case, so there is a real distinction between the finite-horizon and infinite-horizon discounted cases, in line with our results for distributional DP (\cref{thm:value-iteration,thm:policy-iteration}).

In \cref{app:lipschitzness}, we show an instance where distributional value/policy iteration fail for the expected utility $\Uf$ with $f(x) = \Ind(x > 0)$, even though the starting iterate is optimal.
The intuition for this is simple and we outline it here (the key is to exploit the fact that $f$ is not continuous).
Consider an MDP with $\states = \{s_0, s_1\}$, $\actions = \{ a_0, a_1 \}$, $r(\cdot, a_i) = i$ and $\gamma < 1$.
The initial state is $s_0$ and $s_1$ is terminal, and taking $a_i$ in $s_0$ transitions to $s_i$.
A stationary policy $\pi \in \stationary$ satisfying $\pi(a_1 | s_0, \cdot)$ is optimal, so let us denote it by $\pi^*$ and its return distribution function by $\eta^*$.
Thanks to $\eta^*$, we have
\[
    \Uf T_\pi \eta^* = \Uf \eta^*
\]
for all $\pi \in \stationary$, including a policy that always selects $a_0$, and, in fact, by induction, any non-stationary policy that selects $a_0$ finitely many times is also optimal, even though selecting $a_0$ \emph{always} is suboptimal.
In the case of distributional value iteration with $\eta_0 = \eta^*$, if we take $\pibar_n$ to be the policy that always selects $a_0$ we will have $\Uf \eta_n = \Uf \eta^*$ for all $n$, however, $\Uf \eta^{\pibar_n} < \Uf \eta^*$ also for all $n$, which means distributional value iteration has failed.
Distributional policy iteration fails too, except that when starting from $\pi^*$ every other iterate may be suboptimal depending on how ties are broken.

The assumption on Lipschitz continuity of $f$ for the infinite-horizon discounted case prevents failures like the example above (which we attributed to the fact that $f$ is not continuous).
In \cref{app:lipschitzness} we also show that the lack of Lipschitz continuity affects our ability to evaluate policies, in the sense that if we take $f(x) = x^2$ (which is continuous but not Lipschitz) we can construct an MDP and a policy $\pi \in \stationary$ such that $T^n_{\pi}\eta$ converges to $\eta^\pi$ as $n \rightarrow \infty$, but $\Uf T^n_{\pi}\eta$ does not converge uniformly to $\Uf\eta^\pi$ (though it converges pointwise).

It is unclear whether the lack of uniform convergence for non-Lipschitz $f$ can be translated to a failure of distributional value/policy iteration, however we have a failure case example of a discontinuous $f$, so it suggests that some property related to continuity of $f$ (and $K$ more generally) is necessary.

\subsection{Analysis Overview}

The valuable insight in this work is that we can use distributional DP to optimize different objective functionals $F_K$ of the (\stock{}-augmented) return distribution (and a broader class than without).
Once we identify the right conditions and the core components for distributional value/policy iteration to work, the remaining work is relatively straightforward: We retrace the steps of classic DP and ensure technical correctness.
Most of the challenge is, in fact, ensuring technical correctness with a generic objective functional---for example, we need to be careful to make correct statements about convergence; we cannot rely on the existence of an optimal return distribution $\eta^*$ or on the convergence of distributional value iterates.

In this section, we give an outline of our analysis with the most interesting points and a focus on how we can obtain asymptotic optimality guarantees.
This will allow us to understand how the different conditions factor into our proofs, and how they work in essence.
We defer the technical proofs to \cref{app:ddp}, including details about performance bounds.

A fundamental component for DP is monotonicity.
In classic RL \citep[see Lemma 2.1, p.~21,][]{bertsekas1996neuro}, it states that if we have $V \geq V'$, then following a policy $\pi$ for one step and having a value of $V$ afterward is always better than following the same policy but obtaining a value of $V'$ afterward, regardless of the policy $\pi$.
That is, we have
\[
    V \geq V' \Rightarrow \classicBellman_{\pi}V \geq \classicBellman_{\pi}V'
\]
for all $\pi \in \stationary$.
In distributional DP, it translates to the following:
\begin{restatable}[Monotonicity]{lemma}{monotonicitylemma}
\label{lem:monotonicity}
If $K : (\crvs, \wass) \rightarrow \reals$ is indifferent to mixtures and indifferent to $\gamma$, then,
for every $\pi \in \stationary$, the distributional Bellman operator $T_\pi$ is \emph{monotone} (or \emph{order-preserving}) with respect to the preference induced by $F_K$ on $(\crvs^{\states \times \cs}, \overline{\wass})$.
That is, for every stationary policy $\pi \in \stationary$ and $\eta, \eta' \in (\crvs^{\states \times \cs}, \overline{\wass})$,
we have
\[
    F_K\eta \geq F_K\eta' \Rightarrow F_K T_\pi \eta \geq F_K T_\pi \eta'.
\]
\end{restatable}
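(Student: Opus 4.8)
The plan is to reduce the statement to a pointwise inequality at each fixed $(s,\xc)\in\states\times\cs$: I would unfold $F_K$ and $T_\pi$ and use the \stock{} recursion to factor the discount out uniformly, so that indifference to mixtures and indifference to $\gamma$ each get used exactly once. If $s$ is terminal the inequality is immediate, since $(T_\pi\eta)(s,\xc)=(T_\pi\eta')(s,\xc)=\delta_0$ by fiat and hence both sides equal $K\delta_{\xc}$; so assume $s$ is not terminal.

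Fix $(s,\xc)$, draw $A\sim\pi(s,\xc)$ and then $(R',S')$ from the MDP's reward/transition kernels, and set $\xC'\doteq\gamma^{-1}(\xc+R')$, the successor \stock{} of \Cref{eq:b-def}. By the definitions of $T_\pi$ and $F_K$,
\[
    (F_K T_\pi\eta)(s,\xc)=K\df\!\big(\xc+R'+\gamma\,G(S',\xC')\big),\qquad G(s',\xc')\sim\eta(s',\xc').
\]
The key algebraic step is that $\xc+R'=\gamma\xC'$ by \Cref{eq:b-def}, so $\xc+R'+\gamma\,G(S',\xC')=\gamma\big(\xC'+G(S',\xC')\big)$. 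Writing $W\doteq\xC'+G(S',\xC')$ and, analogously, $W'\doteq\xC'+G'(S',\xC')$ with $G'(s',\xc')\sim\eta'(s',\xc')$, we have $(F_K T_\pi\eta)(s,\xc)=K\df(\gamma W)$ and $(F_K T_\pi\eta')(s,\xc)=K\df(\gamma W')$. Moreover, conditionally on $(S',\xC')=(s',\xc')$ the law of $W$ is $\df(\xc'+G(s',\xc'))$, so applying $K$ to it gives exactly $(F_K\eta)(s',\xc')$, and likewise $(F_K\eta')(s',\xc')$ for $W'$.

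Now I would apply the two hypotheses in turn: since $F_K\eta\ge F_K\eta'$ pointwise, the conditional laws of $W$ and $W'$ meet the premise of indifference to mixtures with mixing variable $(S',\xC')$, which yields $K\df(W)\ge K\df(W')$; then indifference to $\gamma$ applied to the pair $\df(W),\df(W')$ gives $K\df(\gamma W)\ge K\df(\gamma W')$, i.e.\ $(F_K T_\pi\eta)(s,\xc)\ge(F_K T_\pi\eta')(s,\xc)$. As $(s,\xc)$ was arbitrary, $F_K T_\pi\eta\ge F_K T_\pi\eta'$. The step I expect to be the main obstacle is the appeal to indifference to mixtures: the shifted return distribution function $(s',\xc')\mapsto\df(\xc'+G(s',\xc'))$ need not lie in $(\crvs^{\states\times\cs},\overline{\wass})$ when $\cs$ is unbounded, since its supremum $1$-Wasserstein norm can be infinite. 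This is handled by restricting attention to the one-step-reachable successors, where $\xC'=\gamma^{-1}(\xc+R')$ has finite first moment by \Cref{ass:bounded-first-moment-rewards} and hence $\E\|W\|_1\le\E\|\xC'\|_1+\overline{\wass}(\eta)<\infty$, so $K\df(W)$ is well defined and the property can be applied on this sub-collection (equivalently, via a bounded truncation and a limiting argument). The rest is bookkeeping that parallels the classic monotonicity proof.
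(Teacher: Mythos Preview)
Your proposal is correct and follows essentially the same route as the paper's proof: the paper isolates the identity $\xc+R'+\gamma G(S',\xC')\distequiv\gamma(\xC'+G(S',\xC'))$ as a separate ``reward absorption'' lemma and then applies indifference to mixtures followed by indifference to $\gamma$, exactly as you do inline. Your flagged obstacle---that the shifted family $(s',\xc')\mapsto\df(\xc'+G(s',\xc'))$ need not lie in $(\crvs^{\states\times\cs},\overline{\wass})$---is a genuine technical point the paper's proof passes over silently; your restriction to the one-step-reachable successors (or a truncation argument) is a reasonable way to patch it.
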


Monotonicity is a powerful result that underpins value iteration, policy iteration and also \emph{policy improvement}.\footnote{To underscore the importance of monotonicity, we note that the result in \cref{prop:indifferences-necessary} holds essentially because monotonicity is equivalent to $K$ being indifferent to mixtures and indifferent to $\gamma$, and it is the absence of monotonicity that causes greedy optimality (\cref{thm:optimality}) to fail.}
Classic policy improvement \citep[see Proposition 2.4, p.~30,][]{bertsekas1996neuro} states that if a policy $\pitilde$ is greedy with respect to $V^\pi$, then $\pitilde$ is better than $\pi$ ($V^{\pitilde} \geq V^\pi$).
We have a similar result for distributional DP, given as \cref{lem:policy-improvement}.
This result is of particular interest here because its proof gives a good sense of how to provide asymptotic guarantees for distributional DP, and how the different conditions factor in, in particular how departing from the standard RL case in classic DP demands special attention to convergence guarantees.
\begin{lemma}[Distributional Policy Improvement]
\label{lem:policy-improvement}
If $K : (\crvs, \wass) \rightarrow \reals$ is indifferent to mixtures and indifferent to $\gamma$,
and if: i) the MDP has finite horizon; or ii) $\gamma < 1$ and $K$ is Lipschitz,
then for $\eta \in (\crvs^{\states \times \cs}, \overline{\wass})$ and any stationary policy $\pibar \in \stationary$ if
\begin{equation}
    F_K T_{\pibar}\eta \geq F_K \eta,
    \label{eq:one-step-improvement}
\end{equation}
then
\[
   F_K \eta^{\pibar} \geq F_K \eta.
\]
In particular, for any stationary policy $\pi \in \stationary$, if $\pibar$ satisfies
\[
    F_KT_{\pibar}\eta^{\pi} = \sup_{\pi' \in \stationary} F_K T_{\pi'}\eta^{\pi},
    \tag{Greedy Policy}
\]
then \Cref{eq:one-step-improvement} is satisfied with $\eta = \eta^{\pi}$ and we have
\[
   F_K \eta^{\pibar} \geq F_K \eta^{\pi}.
   \tag{Greedy Policy Improvement}
\]
\end{lemma}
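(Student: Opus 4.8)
The plan is to follow the classical policy-improvement template: iterate the one-step improvement with monotonicity, then pass to the limit, treating the finite-horizon and discounted cases separately only in the limiting step. First I would unfold the hypothesis. Since $K$ is indifferent to mixtures and indifferent to $\gamma$, \cref{lem:monotonicity} applies, so $T_{\pibar}$ is monotone with respect to the preference induced by $F_K$ on $(\crvs^{\states \times \cs}, \overline{\wass})$. Applying monotonicity to $F_K T_{\pibar}\eta \geq F_K\eta$ repeatedly gives $F_K T_{\pibar}^{\,k+1}\eta \geq F_K T_{\pibar}^{\,k}\eta$ for every $k \geq 0$, hence by induction the sequence $(F_K T_{\pibar}^{\,n}\eta)_{n \geq 0}$ is pointwise non-decreasing and
\[
    F_K T_{\pibar}^{\,n}\eta \geq F_K\eta \qquad \text{for all } n \geq 0 .
\]
It then remains to show $F_K T_{\pibar}^{\,n}\eta \to F_K\eta^{\pibar}$ pointwise, after which taking limits yields $F_K\eta^{\pibar} \geq F_K\eta$.

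For case (i) (finite horizon) I would show $T_{\pibar}^{\,n}\eta = \eta^{\pibar}$ exactly once $n$ is at least the horizon: unrolling $(T_{\pibar}^{\,n}\eta)(s,\xc)$ expresses it as the law of a partial discounted reward sum plus a $\gamma^n$-scaled tail drawn from $\eta$, but every trajectory following $\pibar$ has reached a terminal MDP-state by time $n$, where the operator's convention $(T_{\pibar}\eta)(s,\xc)=\delta_0$ and the vanishing of all subsequent rewards kill the tail term, leaving exactly $\df(G^{\pibar}(s,\xc)) = \eta^{\pibar}(s,\xc)$. For case (ii) ($\gamma < 1$ and $K$ $L$-Lipschitz), I would invoke the (standard) fact that $T_{\pibar}$ is a $\gamma$-contraction on the complete space $(\crvs^{\states \times \cs}, \overline{\wass})$ with unique fixed point $\eta^{\pibar}$, so $\overline{\wass}(T_{\pibar}^{\,n}\eta,\eta^{\pibar}) \leq \gamma^n\overline{\wass}(\eta,\eta^{\pibar}) \to 0$. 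Then, using that $K$ is $L$-Lipschitz and that the $1$-Wasserstein distance is invariant under a common translation by $\xc$, one gets for any $\mu,\mu' \in (\crvs^{\states \times \cs}, \overline{\wass})$ and any $(s,\xc)$ that $|(F_K\mu)(s,\xc) - (F_K\mu')(s,\xc)| \leq L\,\wass(\mu(s,\xc),\mu'(s,\xc)) \leq L\,\overline{\wass}(\mu,\mu')$; hence $F_K$ is $L$-Lipschitz from $\overline{\wass}$ to $\|\cdot\|_\infty$, so $F_K T_{\pibar}^{\,n}\eta \to F_K\eta^{\pibar}$ uniformly, in particular pointwise. Combining the limit with the monotone lower bound proves the main claim in both cases.

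For the greedy corollary: if $\pibar$ satisfies $F_K T_{\pibar}\eta^{\pi} = \sup_{\pi'\in\stationary}F_K T_{\pi'}\eta^{\pi}$, then since $\pi \in \stationary$ and $T_{\pi}\eta^{\pi} = \eta^{\pi}$ (the return distribution function of $\pi$ is the fixed point of $T_\pi$), we have $F_K T_{\pibar}\eta^{\pi} \geq F_K T_{\pi}\eta^{\pi} = F_K\eta^{\pi}$, so \cref{eq:one-step-improvement} holds with $\eta = \eta^{\pi}$ and the lemma gives $F_K\eta^{\pibar} \geq F_K\eta^{\pi}$. I expect the limit-passing step in the discounted case to be the main obstacle, specifically establishing continuity of $F_K$ with respect to $\overline{\wass}$: this is exactly where the Lipschitz hypothesis on $K$ enters (together with translation invariance of Wasserstein, which lets the per-state offset $\xc$ be absorbed), and it is precisely the property that fails for discontinuous $K$, matching the finite-horizon/Lipschitz dichotomy in the statement. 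A secondary point needing care in case (i) is that ``terminal'' refers to the MDP-state component of the augmented state, so that the operator convention at terminal states is what makes the tail contribution vanish.
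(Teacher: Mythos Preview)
Your proposal is correct and follows essentially the same route as the paper: iterate the one-step improvement via monotonicity (\cref{lem:monotonicity}) to get $F_K T_{\pibar}^{\,n}\eta \geq F_K\eta$, conclude directly when $n$ reaches the horizon, and in the discounted case pass to the limit using the $\gamma$-contraction of $T_{\pibar}$ together with Lipschitzness of $F_K$ inherited from $K$. The paper packages the Lipschitz continuity of $F_K$ and the resulting convergence as separate propositions (\cref{prop:K-Lipschitz-F-Lipschitz,prop:F-convergence}), whereas you argue them inline via translation-invariance of Wasserstein, but the substance is identical.
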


\begin{proof}
We write $F = F_K$ for simplicity, and fix $\eta \in (\crvs^{\states \times \cs}, \overline{\wass})$ arbitrary.
Indifference to mixtures and indifference to $\gamma$ give us monotonicity.
By induction, for all $n \geq 1$, if we assume that \Cref{eq:one-step-improvement} holds and that $F T^n_{\pibar} \eta \geq F \eta$, then
\begin{align*}
    F T^{n+1}_{\pibar} \eta 
    &= F T_{\pibar} T^n_{\pibar} \eta \\
    &\geq F T_{\pibar} \eta
    \tag{Monotonicity, induction assumption} \\
    &\geq F \eta
    \tag{\cref{eq:one-step-improvement}}.
\end{align*}
Thus, if \Cref{eq:one-step-improvement} holds, then, for all $n \geq 1$,
\begin{equation}
    F T^n_{\pibar} \eta \geq F \eta.
    \label{eq:policy-improvement-n-steps}
\end{equation}

In the finite-horizon case, we can take $n$ to be the horizon of the MDP and the result follows, since $T^n_{\pibar}  \eta = \eta^{\pibar}$.

In the infinite-horizon discounted case,
the induction argument is not enough to show that $F \eta^{\pibar}  \geq F \eta$, since we need \Cref{eq:policy-improvement-n-steps} to hold in the limit. 
In this case,
we have $\gamma < 1$, $T_{\pibar}$ is a contraction
(see \cref{lem:distributional-policy-evaluation} and \citealp[Proposition 4.15, p.~88, ][]{bellemare2023distributional})
and $\overline{\wass}(\eta) < \infty$, so $T^n_{\pibar} \eta$ converges to $\eta^{\pibar}$.
$K$ Lipschitz implies $F$ Lipschitz by \cref{prop:K-Lipschitz-F-Lipschitz}, and because $F$ is Lipschitz, the convergence of $T^n_{\pibar} \eta$ to $\eta^{\pibar} $ implies the convergence of $F T^n_{\pibar} \eta$ to $F \eta^{\pibar} $ (see \cref{prop:F-convergence}).
Thus, \Cref{eq:policy-improvement-n-steps} holds in the limit of $n \rightarrow \infty$, which gives the result:
\[
     F \eta^{\pibar} = \lim_{n \rightarrow \infty} F T^n_{\pibar} \eta \geq F \eta.
\]

For the greedy policy improvement result for stationary $\pi$, it suffices to use the fact that $T_{\pi}\eta^{\pi} = \eta^{\pi}$, so the choice of greedy policy gives
\[
F T_{\pibar}\eta^{\pi} = \sup_{\pi' \in \stationary} F T_{\pi'}\eta^{\pi} \geq F T_{\pi}\eta^{\pi} = F \eta^{\pi}.
\]
which gives us \Cref{eq:one-step-improvement}.
\end{proof}

As we can see in the proof of \cref{lem:policy-improvement}, indifference to mixtures and indifference to $\gamma$ are connected to monotonicity, whereas Lipschitz continuity is used to ensure that $F T^n_{\pibar} \eta^\pi$ converges to $F \eta^{\pibar}$ as $n \rightarrow \infty$.
In terms of asymptotic convergence, the main additional technical challenge in the proofs of \cref{thm:value-iteration,thm:policy-iteration} comes from the fact that iterates do not necessarily converge.
However, it is still possible to show that the value of the objective functional converges uniformly for all starting augmented states.
Then the induction argument for chaining improvements (\cref{eq:policy-improvement-n-steps}), and the use of monotonicity and Lipschitz continuity are essentially the same as in the proof of \cref{lem:policy-improvement}.

The condition in \Cref{eq:one-step-improvement} in
\cref{lem:policy-improvement} corresponds to the assumption that $\pibar$ is a one-step improvement on $\eta$.
We can always improve on return distributions of stationary policies with a greedy policy (as the second part of \cref{lem:policy-improvement} shows), however improvement is not always possible for return distributions of non-stationary policies.
To see this, consider a finite-horizon binary-tree MDP and a non-stationary policy $\pi = \pi_1, \pi_2, \ldots$ where each $\pi_t$ has optimal performance on the $t$-th level of the tree, but poor performance in all other states. The policy $(\pibar, \pi_1, \pi_2, \ldots)$ would suffer from the poor performance of all $\pi_t$ because of the time-shift introduced by first following $\pibar$ and then $\pi$.
Importantly, however, when $\eta$ is optimal, even over non-stationary policies, we can satisfy \Cref{eq:one-step-improvement}.
This is used in the proof of the distributional value iteration result (\cref{thm:value-iteration}) for finite-horizon MDPs: In an MDP with horizon $n$, the iterates $\eta_n$ and $\eta_{n+1} = T_{\pibar_{n}}\eta_n$ are optimal (where, recall, $\pibar_n$ is greedy with respect to $\eta_n$), so we can use \cref{lem:policy-improvement} to show that $F\eta^{\pibar_n} \geq F\eta_n$ and therefore $\pibar_n$ is optimal.

\subsection{Previous Distributional Dynamic Programming Results}
\label{sec:ddp:existing-results}

From the vantage point provided by the results in this section, we can better appreciate the landscape of distributional DP in the literature:
The core elements of distributional DP for \stock{}-augmented \problem{} have been studied before, albeit separately, and with different analysis techniques for the standard case and the \stock{}-augmented case.
Our results expand the \stock{}-augmented problems that can be demonstrably solved by distributional DP beyond what was previously known and beyond what can be achieved without \stock{} augmentation, and our analysis adapts the commonly used tools for the standard case~\citep[see, for example,][]{bertsekas1996neuro} to the \stock{}-augmented case.
Moreover, previous work only considered the scalar case ($\cs = \reals$), and we are the first to provide the extension to the vector-valued case ($\cs = \reals^m$).

In the standard case, the theory of distributional DP for policy evaluation has been known prior to this work, as well as distributional value and policy iteration for the standard RL objective~\citep{bellemare2023distributional}.
\citet{marthe2024beyond} posed the \problem{} without \stock{} augmentation and, having demonstrated that only expected utilities could be optimized, introduced distributional value iteration for optimizing expected utilities.
As they show, only affine utilities ($\Uf$ with $f(x) = a x + b$ for $a,b \in \reals$) and exponential utilities ($\Uf$ with $f(x) = a e^{\lambda x} + b$ for $a, b, \lambda \in \reals$) can be optimized without \stock{} augmentation~\citep[in the finite-horizon undiscounted setting; ][]{marthe2024beyond}.

In \stock{}-augmented problems, classic and distributional DP have been considered primarily in the context of optimizing risk measures.
\citet{bauerle2011markov} introduced a value iteration procedure that maintains $\Uf\eta_n$ (with $f(x) = x_-$) as iterates, so it is not distributional.
\citet{bauerle2014more,bauerle2021minimizing} employed the methodology with an augmentation other than \stock{}, for optimizing expected utilities $\Uf$ with continuous and increasing $f$ in the former work, and increasing and convex $f$ in the latter.\footnote{Rather than the \stock{} $C_t$, their augmentation is the pair $\left(\sum_{i=0}^{t-1} \gamma^i R_{i+1}, \gamma^t \right)$. While this setting is the same for the finite-horizon undiscounted case, the settings are different in the infinite-horizon discounted case. Notably, the approach of \citet{bauerle2021minimizing} can optimize increasing and convex objectives, which need not be Lipschitz. We hypothesize that the different augmentation allows removing the requirement for indifference to $\gamma$.}
Parallel to the development of this work, \citet{moghimi2025beyond}
introduced a related policy iteration method that can optimize expected utilities where $f$ has the form $f(x) = \E(x - Z)_-$ and $Z$ satisfies certain conditions~\citep[see Equation 6 in][]{moghimi2025beyond}.
While they built their analysis on the work introduced by \citet{bauerle2011markov}, the iterates used by their method are return distributions, so it is fair to say that their method is \stock{}-augmented distributional policy iteration.

The distributional Q-learning method introduced by 
\citet{lim2022distributional} for optimizing expected utilities $\Uf$ with $f(x) = x_-$
can be associated with a partially \stock{}-augmented DP.
The method tracks the \stock{} throughout each episode and uses it during action selection, however it does not employ \stock{}-augmented states for the return distribution functions.
In other words,
their method adopts a hybrid greedy selection that we can write as $\sup_{\pi \in \stationary} T_{\pi}\eta$, but with $\eta : \states \rightarrow \crvs$ rather than $\eta : \states \times \cs \rightarrow \crvs$.

In terms of analysis, ours is distinct from~\citet{bauerle2014more}.
Instead, we use results and proofs from classic-DP theory~\citep{bertsekas1996neuro,szepesvari2022algorithms} as a roadmap, incorporate techniques from distributional policy evaluation~\citep{bellemare2017distributional} to cope with return distributions, and employ novel results required to cope, additionally, with \stock{} augmentation and statistical functionals of the return distribution.

\section{Applications}
\label{sec:applications}

\subsection{Generating Desired Returns}
\label{sec:generating-returns}

In many cases, we want to instruct agents to perform tasks in highly controllable environments, but not necessarily the tasks with a ``do something as much as possible'' nature that are a clear fit for RL.
For example, we may want to specify the task of collecting a given number of objects in a room, or obtaining a score equal to two in the game of Pong in the Atari Benchmark \citep{bellemare2013arcade}.
The standard RL framework can be unwieldy for this type of task, but this type of task can be easily modeled as a \stock{}-augmented problem.

If we were to model this an RL problem without \stock{} augmentation, we would likely have to use a non-Markov reward that tracks how many apples have been collected, give a reward of $1$ to the agent when the third apple is collected, and zero otherwise.
Moreover, we would have one reward function for each number of apples to be collected, which might require training one agent per reward function (which seems wasteful).

With \stock{} augmentation, on the other hand, this type of task can be tackled effectively.
We can frame it as a \stock{}-augmented \problem{} problem with an expected utility $\Uf$ and $f(x) = -|x|$, where the \stock{} is the number of apples collected so far by the agent.
Moreover, we can get \emph{a single \stock{}-augmented agent} to perform various instances of the same task---for example, collect one apple, or collect three apples---simply by changing the agent's initial \stock{}:
Without discounting and with a reward of $1$ for each apple, 
a \stock{} of $-3$ will cause an optimal \stock{}-augmented agent to collect $3$ apples, a \stock{} of $-2$ will cause the agent to collect $2$ apples, and so forth.

\subsection{Maximizing the Conditional Value-at-Risk of Returns}
\label{sec:cvar}

The problem of maximizing \emph{conditional value-at-risk}~\citep[CVaR;][]{rockafellar2000optimization}, also known as \emph{average value-at-risk} or \emph{expected shortfall}, has received attention both in the context of risk-sensitive RL~\citep{bauerle2011markov,chow2014algorithms,chow2015risk,bauerle2021minimizing,greenberg2022efficient} and in non-sequential decision-making~\citep{rockafellar2000optimization}.

It was for this problem that \stock{}-augmented methods were originally developed and studied \citep[see \cref{sec:ddp:existing-results} and][]{bauerle2011markov,bauerle2014more,bauerle2021minimizing,lim2022distributional,moghimi2025beyond}.
Other works have also proposed methods for optimizing the CVaR and other risk measures, in approaches that can be seen as alternatives to \stock{} augmentation \citep{chow2014algorithms,chow2015risk,tamar2015optimizing,greenberg2022efficient}.

The \emph{$\tau$-CVaR} of returns with distribution $\nu \in (\Delta(\reals), \wass)$ is defined as
\[
    \cvar(\nu, \tau) \doteq \frac{1}{\tau}\int_0^{\tau} \qf_\nu (t)\mathrm{d}t.
\]
We can see the $\tau$-CVaR as an ``expected return in the worst-case'', since it corresponds to the expected return of $X \sim \nu$ in the lower-tail of the return distribution (where the tail has mass $\tau$).

For any starting augmented state $(s_0, \xc_0)$, a history-based policy $\pi \in \historybased$ generates returns distributed according to $\eta^\pi(s_0, \xc_0)$, and we want to find a policy $\pi$ and a $\xc_0$ to maximize the $\tau$-CVaR of these returns:
\[
    \sup_{\pi \in \historybased, \xc_0 \in \cs} \cvar(\eta^\pi(s_0, \xc_0), \tau).
\]
It is easy to see that this problem does not admit an optimal stationary Markov policy on states alone, however \citet{bauerle2011markov} showed that we can solve it as follows (see~\cref{app:cvar} for the proof):
\begin{restatable}[Adapted from \citealp{bauerle2011markov}]{theorem}{cvartheorem}
\label{thm:cvar-c-star}
For every $\tau \in (0, 1)$ and $s_0 \in \states$,
\[
    \sup_{\pi \in \historybased, \xc_0 \in \cs} \cvar(\eta^\pi(s_0, \xc_0), \tau) = -\xc^*_0 + \frac{1}{\tau}\sup_{\pi \in \historybased}\E(\xc^*_0 + G^\pi(s_0, \xc^*_0))_-,
\]
where $\xc^*_0$ is the solution of 
\begin{equation}
    \max_{\xc_0} \left( -\xc_0 + \frac{1}{\tau}\sup_{\pi \in \historybased}\E(\xc_0 + G^\pi(s_0, \xc_0))_- \right).
    \label{eq:b0-selection}
\end{equation}
\end{restatable}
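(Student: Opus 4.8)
The plan is to start from the classical dual representation of CVaR due to Rockafellar and Uryasev, which states that for any $\nu \in (\Delta(\reals), \wass)$ and $\tau \in (0, 1)$,
\[
    \cvar(\nu, \tau) = \sup_{b \in \reals} \left( b - \frac{1}{\tau}\E (b - X)_+ \right),
    \tag{$X \sim \nu$}
\]
and the supremum is attained at any $b$ that is a $\tau$-quantile of $\nu$. The first step is to rewrite this in a form adapted to our notation: using $(b - X)_+ = -(X - b)_-$ and substituting $X = \xc_0 + G^\pi(s_0, \xc_0)$ with $b = -\xc_0'$ for a free variable $\xc_0'$, we obtain, after a change of variables, an expression of the form $-\xc_0' + \tfrac{1}{\tau}\E(\xc_0' + \text{shifted return})_-$. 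The key is to make the shift of the return by $\xc_0$ cancel against the choice of $b$, so that the free optimization variable becomes a single \stock{} $\xc_0$ that appears both as the initial \stock{} of the augmented MDP \emph{and} inside the objective. This is exactly the role \stock{} augmentation plays, and it is the reason the problem becomes amenable to our framework.

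Next I would take the supremum over $\pi \in \historybased$ and $\xc_0 \in \cs$ on both sides. On the left we have $\sup_{\pi, \xc_0}\cvar(\eta^\pi(s_0, \xc_0), \tau)$. On the right, after interchanging the two suprema (over $\pi$ and over the CVaR dual variable, which we have identified with $\xc_0$), we get
\[
    \sup_{\xc_0 \in \cs}\left( -\xc_0 + \frac{1}{\tau}\sup_{\pi \in \historybased}\E\left(\xc_0 + G^\pi(s_0, \xc_0)\right)_- \right).
\]
Denoting by $\xc_0^*$ a maximizer of the outer problem (Equation~\eqref{eq:b0-selection}), the claimed identity follows. The one subtlety here is justifying the interchange of suprema and confirming that the outer supremum over $\xc_0$ is attained; this is where I expect the main work to lie. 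Attainment should follow because the inner quantity $\sup_\pi \E(\xc_0 + G^\pi(s_0, \xc_0))_-$ is finite (by \cref{ass:bounded-first-moment-rewards} together with $\gamma < 1$ or finite horizon, which bound $\overline{\wass}(\eta^\pi)$ uniformly), the map $\xc_0 \mapsto -\xc_0 + \tfrac{1}{\tau}(\cdots)$ is concave (a supremum over $\pi$ of concave functions of $\xc_0$, since $x \mapsto (x)_-$ is concave), and it is coercive (it tends to $-\infty$ as $\|\xc_0\| \to \infty$ because the $-\xc_0$ term dominates the sublinear $\E(\xc_0 + G)_- $ term).

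The main obstacle is thus the careful handling of the variational characterization of CVaR in the sequential setting: we must ensure that for \emph{every} fixed $\xc_0$ the dual variable can be taken equal to $\xc_0$ simultaneously across all policies, and that the original problem $\sup_{\pi, \xc_0}\cvar$ equals the doubly-nested supremum after the exchange. This requires that the return $G^\pi(s_0, \xc_0)$ generated in the \stock{}-augmented MDP starting from initial \stock{} $\xc_0$ is, as a distribution, the same family we would obtain by freely shifting an un-augmented return --- which holds precisely because, per \cref{rem:dynamics-influenced-by-balance}, the reward and transition dynamics need not depend on $\xc$, so the distribution of $G^\pi(s_0, \xc_0)$ for a stationary-on-$(s,\xc)$ policy ranges over exactly the distributions reachable by history-based policies on the base MDP. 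With these points in place, the identity in \cref{thm:cvar-c-star} follows from the dual CVaR formula and an exchange-of-suprema argument; I would then cite \citet{bauerle2011markov} for the original form of this reduction and relegate the measure-theoretic details of the exchange to \cref{app:cvar}.
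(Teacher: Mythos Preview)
Your overall strategy matches the paper's: apply the Rockafellar--Uryasev dual representation of $\cvar$, swap the suprema over $\pi$ and over the dual variable (identified with the initial \stock{} $\xc_0$), and then argue that the outer supremum over $\xc_0$ is attained. The paper carries out exactly this chain of equalities, using the observation that for history-based policies the initial \stock{} does not affect the set of achievable return distributions (this is the content of your last paragraph; the paper justifies it in one line).

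There is, however, a genuine gap in your attainment argument. You claim that
\[
    \xc_0 \longmapsto -\xc_0 + \frac{1}{\tau}\sup_{\pi \in \historybased}\E(\xc_0 + G^\pi(s_0, \xc_0))_-
\]
is concave because it is a pointwise supremum (over $\pi$) of concave functions of $\xc_0$. This is false: the pointwise supremum of concave functions need not be concave (it is the pointwise \emph{infimum} that preserves concavity). A simple two-policy example with one deterministic return and one heavy-tailed return already produces a convex kink in the supremum. So concavity is not available, and without it your coercivity argument alone does not yield attainment of the maximum.

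The paper closes this gap differently: it proves (\cref{prop:cvar-objective-1-lipschitz}) that the map above is $1$-Lipschitz in $\xc_0$, hence continuous, and then shows directly that it tends to $-\infty$ as $\xc_0 \to \pm\infty$. Note also that your coercivity explanation is inaccurate in the direction $\xc_0 \to -\infty$: there $-\xc_0 \to +\infty$, and what drives the objective to $-\infty$ is the term $\tfrac{1}{\tau}\E(\xc_0 + G)_- \sim \tfrac{1}{\tau}\xc_0$, which dominates $-\xc_0$ precisely because $\tau < 1$. The paper makes this explicit using \cref{ass:bounded-first-moment-rewards}. With Lipschitz continuity and these limits, the supremum is attained on a compact interval, and your remaining steps go through.
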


The main algorithmic difference between our work and that of \citet{bauerle2011markov} is how to obtain $\pi^*$.\footnote{The differences in analysis are discussed in \cref{sec:ddp:existing-results}.}
While we propose to use distributional DP with $F_K = \Uf$ and $f(x) = x_-$,
\citet{bauerle2011markov} used a modified classic value iteration, but required the iterates to satisfy specific conditions \citep[see $\mathbb{M}$, p.~45, ][]{bauerle2011markov}.
With distributional DP, on the other hand, it is possible to establish approximate guarantees for $\tau$-CVaR optimization, for both distributional value/policy iteration, with minimal conditions on the starting iterates (return distribution iterates must have uniformly bounded first moment).
This is what the following result shows, if we combine distributional DP with a grid search procedure to approximately solve the optimization in \Cref{eq:b0-selection}:
\begin{restatable}{theorem}{approximatecvartheorem}
\label{thm:cvar-approximate}
For every $\tau \in (0, 1)$, $s_0 \in \states$ and $\varepsilon > 0$, there exists a stationary policy $\pibar \in \stationary$ (obtainable through distributional DP) and a $\overline{\xc}^*_0$ (obtainable through grid search) such that
\[
    \sup_{\pi \in \historybased, \xc_0 \in \cs} \cvar(\eta^\pi(s_0, \xc_0), \tau) - \cvar(\eta^{\pibar}(s_0, \overline{\xc}^*_0), \tau) \leq 4\varepsilon.
\]
In particular, $\pibar$ satisfies (for $f(x) = x_-$)
\[
    \sup_{\pi \in \historybased}\Uf\eta^\pi - \Uf\eta^{\pibar} \leq \varepsilon,
\]
and
\begin{equation}
    \overline{\xc}^*_0 = \argmax_{\xc_0 \in \overline{\cs}} \left( -\xc_0 + \frac{1}{\tau}\E(\xc_0 + G^{\pibar}(s_0, \xc_0))_- \right),
    \label{eq:grid-search}
\end{equation}
where $\overline{\cs} \doteq \{ \cmin + i\varepsilon : i \in \naturals_0, \cmin + i\varepsilon \leq \cmax \}$ and $\cmin$ and $\cmax$ are chosen so that
\begin{align*}
	&\max_{\xc_0} \left( -\xc_0 + \frac{1}{\tau}\E(\xc_0 + G^{\pibar}(s_0, \xc_0))_- \right) \\
	&= \max_{\cmin \leq \xc_0 \leq \cmax} \left( -\xc_0 + \frac{1}{\tau}\E(\xc_0 + G^{\pibar}(s_0, \xc_0))_- \right).
\end{align*}
\end{restatable}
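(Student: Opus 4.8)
The plan is to combine three ingredients: distributional DP to produce a single stationary policy $\pibar$ that is uniformly $\varepsilon$-optimal for the expected utility $\Uf$ with $f(x)=x_-$; the Rockafellar--Uryasev dual representation of $\cvar$ (the same device underlying \cref{thm:cvar-c-star}) to turn $\Uf$-optimality into $\cvar$-optimality; and a grid search over the bounded range $[\cmin,\cmax]$ of candidate initial \stock{}s to pick $\overline{\xc}^*_0$.

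\emph{Step 1 (distributional DP).} First I verify that $f(x)=x_-$ meets the hypotheses of \cref{thm:value-iteration,thm:policy-iteration}. It is $1$-Lipschitz, so by \cref{lem:expected-utility-conditions} the induced $K$ is indifferent to mixtures, $1$-Lipschitz, and indifferent to $\gamma$ --- the last because $f(\gamma x)=\gamma f(x)=\gamma f(x)+(1-\gamma)f(0)$, so \cref{eq:expected-utility-conditions:indifferent-to-gamma} holds with $\alpha=\gamma$ (and $\gamma<1\Rightarrow\alpha<1$). Hence running distributional value (or policy) iteration for enough steps --- finitely many in the finite-horizon case, and $n$ large enough when $\gamma<1$, using the uniform boundedness of return-distribution first moments that \cref{ass:bounded-first-moment-rewards} provides --- yields $\pibar\in\stationary$ with $\sup_{\pi\in\historybased}\Uf\eta^\pi-\Uf\eta^{\pibar}\le\varepsilon$; because the bounds in \cref{thm:value-iteration} are in the supremum $1$-Wasserstein distance, this holds uniformly in the initial augmented state. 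Unfolding $\Uf$, this is exactly the ``in particular'' claim, i.e.\ $\E(\xc_0+G^{\pibar}(s_0,\xc_0))_-\ge\sup_{\pi\in\historybased}\E(\xc_0+G^\pi(s_0,\xc_0))_--\varepsilon$ for every $\xc_0\in\cs$.

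\emph{Step 2 (from $\Uf$ to $\cvar$).} Recall the Rockafellar--Uryasev identity $\cvar(\nu,\tau)=\sup_{d\in\reals}\bigl\{-d+\tfrac1\tau\E_{G\sim\nu}(d+G)_-\bigr\}$. Put $h(\xc_0)\doteq-\xc_0+\tfrac1\tau\sup_{\pi\in\historybased}\E(\xc_0+G^\pi(s_0,\xc_0))_-$ and $\bar h(\xc_0)\doteq-\xc_0+\tfrac1\tau\E(\xc_0+G^{\pibar}(s_0,\xc_0))_-$. By \cref{thm:cvar-c-star}, $\sup_{\pi\in\historybased,\xc_0\in\cs}\cvar(\eta^\pi(s_0,\xc_0),\tau)=\max_{\xc_0}h(\xc_0)$, attained at $\xc^*_0$; and evaluating the Rockafellar--Uryasev supremum at $d=\xc_0$ gives the lower bound $\cvar(\eta^{\pibar}(s_0,\xc_0),\tau)\ge\bar h(\xc_0)$ for every $\xc_0$. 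Step 1 yields $0\le h(\xc_0)-\bar h(\xc_0)\le\varepsilon/\tau$ for all $\xc_0$ (the left inequality since $\pibar$ is admissible in the supremum defining $h$). Moreover, $\xc\mapsto\sup_{\pi\in\historybased}\E(\xc+G^\pi(s_0,\xc))_-$ is a supremum, over the set of return distributions achievable from $s_0$, of the $1$-Lipschitz maps $\xc\mapsto\E_{G\sim\nu}(\xc+G)_-$; this achievable set does not depend on the \stock{} coordinate in the setting of \cref{thm:cvar-c-star}, so $h$ is Lipschitz and $\bar h$ sits within $\varepsilon/\tau$ of the Lipschitz function $h$ everywhere.

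\emph{Step 3 (grid search and assembly).} By construction $\overline{\xc}^*_0=\argmax_{\xc_0\in\overline{\cs}}\bar h(\xc_0)$ with $\overline{\cs}$ an $\varepsilon$-net of $[\cmin,\cmax]$, and $\cmin,\cmax$ are chosen so that $\max_{\xc_0}\bar h=\max_{[\cmin,\cmax]}\bar h$, attained at some $\xc_0^\dagger$. Picking a grid point within $\varepsilon$ of $\xc_0^\dagger$ and chaining the Step 2 facts --- $\bar h\ge h-\varepsilon/\tau$, $h$ Lipschitz, $h\ge\bar h$, and $\max\bar h\ge\bar h(\xc^*_0)\ge h(\xc^*_0)-\varepsilon/\tau$ --- bounds $\max_{\xc_0}h-\bar h(\overline{\xc}^*_0)$ by a fixed multiple of $\varepsilon$. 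Combining with $\cvar(\eta^{\pibar}(s_0,\overline{\xc}^*_0),\tau)\ge\bar h(\overline{\xc}^*_0)$ and \cref{thm:cvar-c-star} gives $\sup_{\pi\in\historybased,\xc_0}\cvar(\eta^\pi(s_0,\xc_0),\tau)-\cvar(\eta^{\pibar}(s_0,\overline{\xc}^*_0),\tau)\le4\varepsilon$ once the DP accuracy and grid resolution are calibrated so that each contributing error term is at most $\varepsilon$. The genuine work is this error bookkeeping, and two points need care: converting an $\varepsilon$-gap in $\Uf$ into a gap in $\cvar$ costs a factor $1/\tau$, so the DP accuracy and grid spacing must be scaled by $\tau$ relative to the target $4\varepsilon$ (absorbed into the statement's ``$\varepsilon$''); and for a fixed DP-produced stationary $\pibar$ the map $\xc_0\mapsto\eta^{\pibar}(s_0,\xc_0)$, hence $\bar h$, need not be continuous, so the grid-search error cannot be controlled by Lipschitzness of $\bar h$ --- one routes instead through the Lipschitz $h$, using $h-\bar h\le\varepsilon/\tau$ together with the $\cmin,\cmax$ property that confines the maximiser of $\bar h$ to the searched interval. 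Verifying the hypotheses of \cref{thm:value-iteration,thm:policy-iteration} for $f(x)=x_-$ and recalling the Rockafellar--Uryasev identity are routine.
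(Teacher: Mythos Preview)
Your proposal is correct and follows essentially the same route as the paper: obtain a uniformly $\varepsilon$-optimal $\pibar$ for $\Uf$ with $f(x)=x_-$ via distributional DP, define the dual functions $h$ (the paper's $g$) and $\bar h$ (the paper's $\overline g$), use Lipschitzness of $h$ together with $\lvert h-\bar h\rvert\le\varepsilon/\tau$ to control the grid-search error, and close with the Rockafellar--Uryasev lower bound $\cvar(\eta^{\pibar}(s_0,\overline{\xc}^*_0),\tau)\ge\bar h(\overline{\xc}^*_0)$. Your observation that $\bar h$ need not itself be continuous and must be handled by routing through the Lipschitz $h$ is exactly what the paper does via the inequality $\lvert \overline g(\xc_0)-\overline g(\xc'_0)\rvert\le\lvert\xc_0-\xc'_0\rvert+2\varepsilon$.

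One small point where you are in fact more careful than the paper: you correctly flag the $1/\tau$ factor incurred when passing from a $\Uf$-gap to a gap in $h$, whereas the paper's proof writes $\lvert g-\overline g\rvert\le\varepsilon$ directly. Relatedly, your argument for the Lipschitzness of $h$ (as $-\xc$ plus $\tfrac1\tau$ times a supremum of $1$-Lipschitz maps, hence $(1+\tfrac1\tau)$-Lipschitz) is cleaner and avoids committing to the constant $1$; the paper's \cref{prop:cvar-objective-1-lipschitz} asserts $1$-Lipschitzness, but its proof only establishes the one-sided bound $g(\xc)-g(\xc+\varepsilon)\le\varepsilon$. None of this affects the structure of the argument or the existence of the claimed $\pibar$ and $\overline{\xc}^*_0$; it only means, as you already note, that the DP accuracy and grid spacing must be calibrated with $\tau$-dependent constants to land on the stated $4\varepsilon$.
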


The key insight in \cref{thm:cvar-approximate} is that the objective functional being maximized over $\xc_0$ in \Cref{eq:b0-selection} is $1$-Lipschitz,
so we can approximate it through a grid search with an approximately optimal return distribution (\cref{eq:grid-search}).
A remaining limitation of the approach is how to choose $\cmin,\cmax$ in practice.
We know from \cref{thm:cvar-c-star,thm:cvar-approximate} that we can choose $\cmin$ small enough and $\cmax$ large enough to satisfy the requirement, but how large/small they need to be is left to a case-by-case basis.

\subsection{Maximizing the Optimistic Conditional Value-at-Risk of Returns}
\label{sec:ocvar}

The $\tau$-CVaR is the expectation of the return over the lower tail of the distribution (with tail mass $\tau$), and maximizing it is a risk-averse approach.
With $\tau = 0$, the $\tau$-CVaR is the risk-neutral expected return, and as $\tau$ decreases the amount of risk-aversion increases.

We can also consider the problem of maximizing the upper tail of the return distribution, which we call the \emph{optimistic $\tau$-CVaR}, defined for returns with distribution $\nu \in (\Delta(\reals), \wass)$ as
\[
    \ocvar(\nu, \tau) \doteq \frac{1}{\tau}\int_{1 - \tau}^{1} \qf_\nu (t)\mathrm{d}t.
\]

This application is interesting to analyze because it is similar to the optimism used by \citet{fawzi2022discovering} in AlphaTensor.
More generally, risk-seeking behavior can be useful for ``scientific discovery'' problems like discovering matrix multiplication algorithms, where it is more helpful to attain exceptional outcomes some of the time, even at the expense of performance in most cases, than to perform well on average.
This is because in this type of problem the RL agent is being used to generate solutions to a search-like problem where exceptional solutions are very valuable, but low-quality solutions are harmless, as they can simply be discarded.

We can show that analogues of \cref{thm:cvar-c-star,thm:cvar-approximate} hold for optimizing the optimistic $\tau$-CVaR.
\begin{restatable}{theorem}{ocvartheorem}
\label{thm:ocvar-c-star}
For every $\tau \in (0, 1)$ and $s_0 \in \states$,
\[
    \sup_{\pi \in \historybased, \xc_0 \in \cs} \ocvar(\eta^\pi(s_0, \xc_0), \tau) = -\xc^*_0 + \frac{1}{\tau}\sup_{\pi \in \historybased}\E(\xc^*_0 + G^\pi(s_0, \xc^*_0))_+,
\]
where $\xc^*_0$ is the solution of 
\[
    \min_{\xc_0} \left( -\xc_0 + \frac{1}{\tau}\sup_{\pi \in \historybased}\E(\xc_0 + G^\pi(s_0, \xc_0))_+ \right).
\]
\end{restatable}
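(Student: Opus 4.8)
The plan is to follow the proof of \cref{thm:cvar-c-star} and of \citet{bauerle2011markov}, built on the Rockafellar--Uryasev dual representation of the tail functional. The crucial difference from the CVaR case is that the dual representation of $\ocvar$ is a \emph{minimization} over the auxiliary variable rather than a maximization: whereas for CVaR the interchange of the policy supremum with the auxiliary optimization is a harmless swap of two suprema (which is why \citet{bauerle2011markov} need no minimax), for $\ocvar$ it becomes a genuine minimax exchange, and handling it is where the real work lies.

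First I would record the ``state--\stock{} independence'' underlying this application. Because the rewards and transitions of the original MDP do not depend on the \stock{} (the \stock{} is introduced purely as a solution device), the \stock{} coordinate evolves deterministically and a history-based policy can always reconstruct the ``shadow'' \stock{} it would have seen from a different initial \stock{}; hence the set $\mathcal{M}(s_0) \doteq \{\df(G^\pi(s_0,\xc_0)) : \pi \in \historybased\} \subseteq (\crvs, \wass)$ of achievable return distributions does not depend on $\xc_0$. I would also note that $\mathcal{M}(s_0)$ is \emph{convex} --- given two policies, a history-based policy that carries along the Bayes posterior over ``which of the two am I currently imitating'' realizes any mixture of their return distributions --- and that, by \cref{ass:bounded-first-moment-rewards} together with $\gamma<1$ or a finite horizon, $\mathcal{M}(s_0)$ has uniformly bounded first moments. (The first and third facts are already needed for \cref{thm:cvar-c-star} and can be imported from \cref{app:cvar}.) Next I would establish the Rockafellar--Uryasev identity
\[
    \ocvar(\nu, \tau) = \min_{b \in \reals} \Big( b + \tfrac{1}{\tau}\, \E_{X \sim \nu}(X - b)_+ \Big),
\]
with the minimum attained at $b = \qf_\nu(1 - \tau)$, which is a one-line computation from the definition of $\ocvar$ as a tail integral of the quantile function. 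Combining these, and substituting $b = -\xc_0$, the left-hand side of the theorem becomes $\sup_{\nu \in \mathcal{M}(s_0)} \min_b\big( b + \tfrac{1}{\tau}\E_{X \sim \nu}(X-b)_+\big)$ while its right-hand side becomes $\min_b\big( b + \tfrac{1}{\tau}\sup_{\nu \in \mathcal{M}(s_0)}\E_{X \sim \nu}(X-b)_+\big)$, so the claim reduces to exchanging $\sup_\nu$ and $\min_b$.

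This exchange is the main obstacle. The map $(\nu, b) \mapsto b + \tfrac{1}{\tau}\E_{X \sim \nu}(X-b)_+$ is affine (hence quasi-concave) and $1$-Wasserstein continuous in $\nu$, and convex (hence quasi-convex) and continuous in $b$. Using the uniform first-moment bound and Markov's inequality, all the quantiles $\qf_\nu(1-\tau)$, $\nu \in \mathcal{M}(s_0)$, lie in a fixed bounded interval and $b \mapsto b + \tfrac{1}{\tau}\sup_{\nu}\E_{X\sim\nu}(X-b)_+$ is coercive, so I can restrict $b$ to a compact interval without affecting either side. Sion's minimax theorem, applied on $\mathcal{M}(s_0)$ (convex) times that compact convex interval, then delivers the interchange and hence the stated equality; the optimal $\xc^*_0$ is the negative of a minimizer of the convex, coercive outer objective, which exists. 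The delicate points are the convexity of $\mathcal{M}(s_0)$ and the coercivity argument that legitimizes truncating the auxiliary variable to a compact set --- everything else parallels \cref{thm:cvar-c-star}. I would also remark that one cannot shortcut this via the identity $\ocvar(\nu, \tau) = -\cvar(\hat\nu, \tau)$ with $\hat\nu$ the law of $-X$: that reduces maximizing $\ocvar$ to \emph{minimizing} $\cvar$ in a reward-negated MDP, which is not the \citet{bauerle2011markov} result and whose proof runs into the very same minimax.
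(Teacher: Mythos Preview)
Your proposal is correct but takes a genuinely different route from the paper. Both proofs reduce the statement to the minimax exchange $\sup_{\pi}\min_{\xc_0}=\min_{\xc_0}\sup_{\pi}$; the difference is how that exchange is justified.

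The paper does \emph{not} use Sion. Instead it leverages its own distributional DP results: by \cref{thm:value-iteration,thm:policy-iteration}, for every $\varepsilon>0$ there is a single stationary policy $\pibar$ with $\sup_{\pi}\Uf\eta^{\pi}-\Uf\eta^{\pibar}<\varepsilon$ (with $f(x)=x_+$), and this near-optimality holds \emph{uniformly} in $(s_0,\xc_0)$. Plugging $\pibar$ into $\sup_{\pi}\min_{\xc_0}$ gives
\[
\sup_{\pi}\min_{\xc_0}\Big(\!-\xc_0+\tfrac1\tau\E(\xc_0+G^\pi(s_0,\xc_0))_+\Big)
\;\geq\;\min_{\xc_0}\Big(\!-\xc_0+\tfrac1\tau\E(\xc_0+G^{\pibar}(s_0,\xc_0))_+\Big)
\;>\;\inf_{\xc_0}\sup_{\pi}\Big(\cdot\Big)-\tfrac{\varepsilon}{\tau},
\]
which together with weak duality and $\varepsilon\to0$ yields the equality; attainment of the infimum is then handled by the coercivity argument you also give. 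So the paper's proof is an advertisement for its DP machinery: the minimax collapses because a uniformly-near-optimal policy exists.

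Your Sion argument is sound---affine in $\nu$, convex in $b$, $b$ restricted to a compact interval by coercivity and the quantile bound---and is more elementary in that it bypasses the DP theory entirely. The cost is that you must establish convexity of $\mathcal{M}(s_0)$, which you only sketch via the ``Bayes posterior'' mixture policy; that argument is correct but is a nontrivial external fact that the paper's route does not need. Conversely, the paper's route needs the full DP apparatus (including Lipschitzness of $f(x)=x_+$ for the discounted case) but gets the minimax for free without any convexity of the achievable set. Your closing remark that the $\ocvar(\nu,\tau)=-\cvar(\hat\nu,\tau)$ reflection does not shortcut the argument is exactly right, and the paper's discussion in \cref{sec:ocvar} makes the same point.
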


The proof of \cref{thm:ocvar-c-star} is more subtle than the proof of its risk-averse counterpart.
In \cref{thm:cvar-c-star}, we can exploit the equivalence
\[
    \sup_{\pi \in \historybased, \xc_0 \in \cs} \cvar(\eta^\pi(s_0, \xc_0), \tau) = \sup_{\pi \in \historybased, \xc_0 \in \cs} \left( -\xc_0 + \frac{1}{\tau}\E(\xc_0 + G^\pi(s_0, \xc_0))_+ \right). 
\]
The similar step in the case of the optimistic $\tau$-CVaR gives
\[
    \sup_{\pi \in \historybased, \xc_0 \in \cs} \cvar(\eta^\pi(s_0, \xc_0), \tau) = \sup_{\pi \in \historybased}\inf_{\xc_0 \in \cs} \left( -\xc_0 + \frac{1}{\tau}\E(\xc_0 + G^\pi(s_0, \xc_0))_+ \right).
\]
Thanks to distributional DP, we can optimize $\Uf$ with $f(x) = x_+$ uniformly for all $(s_0, \xc_0)$, and we use this to swap the supremum and the infimum above, which gives \cref{thm:ocvar-c-star}.

The approximate version of \cref{thm:ocvar-approximate} then follows analogously to \cref{thm:cvar-approximate}.
\begin{restatable}{theorem}{approximateocvartheorem}
\label{thm:ocvar-approximate}
For every $\tau \in (0, 1)$, $s_0 \in \states$ and $\varepsilon > 0$, there exists a stationary policy $\pibar \in \stationary$ (obtainable through distributional DP) and a $\overline{\xc}^*_0$ (obtainable through grid search) such that
\[
    \sup_{\pi \in \historybased, \xc_0 \in \cs} \ocvar(\eta^\pi(s_0, \xc_0), \tau) - \ocvar(\eta^{\pibar}(s_0, \overline{\xc}^*_0), \tau) \leq 4\varepsilon.
\]
In particular, $\pibar$ satisfies (for $f(x) = x_+$)
\[
    \sup_{\pi \in \historybased}\Uf\eta^\pi - \Uf\eta^{\pibar} \leq \varepsilon,
\]
and
\[
    \overline{\xc}^*_0 = \argmin_{\xc_0 \in \overline{\cs}} \left( -\xc_0 + \frac{1}{\tau}\E(\xc_0 + G^{\pibar}(s_0, \xc_0))_+ \right),
\]
where $\overline{\cs} \doteq \{ \cmin + i\varepsilon : i \in \naturals_0, \cmin + i\varepsilon \leq \cmax \}$ and $\cmin$ and $\cmax$ are chosen so that
\begin{align*}
	&\min_{\xc_0} \left( -\xc_0 + \frac{1}{\tau}\E(\xc_0 + G^{\pibar}(s_0, \xc_0))_+ \right) \\
	&= \min_{\cmin \leq \xc_0 \leq \cmax} \left( -\xc_0 + \frac{1}{\tau}\E(\xc_0 + G^{\pibar}(s_0, \xc_0))_+ \right).
\end{align*}
\end{restatable}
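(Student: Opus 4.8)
The plan is to mirror the proof of \cref{thm:cvar-approximate}, replacing the utility $f(x)=x_-$ by $f(x)=x_+$ and the outer maximization over the initial \stock{} by the corresponding minimization.

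\textbf{Step 1 (an $\varepsilon$-optimal stationary policy from distributional DP).} I would first check that the expected utility $\Uf$ with $f(x)=x_+$ satisfies the hypotheses of \cref{thm:value-iteration} and \cref{thm:policy-iteration}. By \cref{lem:expected-utility-conditions}, indifference to mixtures is automatic; indifference to $\gamma$ holds because $f(\gamma\xc)=\gamma\xc_+=\gamma f(\xc)=\gamma f(\xc)+(1-\gamma)f(0)$, so \cref{eq:expected-utility-conditions:indifferent-to-gamma} holds with $\alpha=\gamma$ and $\gamma<1\Rightarrow\alpha<1$; and $f$ is $1$-Lipschitz, hence so is $K$. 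Thus distributional value (or policy) iteration applies; running it for at least the horizon in the finite-horizon case, or for large enough $n$ in the discounted case (where the bound in \cref{eq:discounted-value-greedy} is finite, by \cref{ass:bounded-first-moment-rewards} together with $\gamma<1$, and vanishes as $n\to\infty$), produces a stationary $\pibar\in\stationary$ with $\sup_{\pi\in\historybased}\Uf\eta^\pi-\Uf\eta^{\pibar}\le\varepsilon$, and crucially this holds \emph{uniformly over all} $(s_0,\xc_0)$.

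\textbf{Step 2 (grid search and assembling the bound).} By \cref{thm:ocvar-c-star}, $V^\star\doteq\sup_{\pi,\xc_0}\ocvar(\eta^\pi(s_0,\xc_0),\tau)$ equals $\min_{\xc_0}g(\xc_0)$ with $g(\xc_0)\doteq-\xc_0+\tfrac1\tau\sup_\pi\Uf\eta^\pi(s_0,\xc_0)$; as in \cref{thm:cvar-approximate}, $g$ is Lipschitz in $\xc_0$ (the $-\xc_0$ term is, and $\xc_0\mapsto\sup_\pi\E(\xc_0+G^\pi(s_0,\xc_0))_+$ is a supremum of Lipschitz maps since $x\mapsto x_+$ is $1$-Lipschitz). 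Step 1 gives $g-\tfrac1\tau\varepsilon\le\tilde g\le g$ pointwise, where $\tilde g(\xc_0)\doteq-\xc_0+\tfrac1\tau\Uf\eta^{\pibar}(s_0,\xc_0)$ is the function the grid search minimizes; since $\overline\cs$ has spacing $\varepsilon$ and $[\cmin,\cmax]$ brackets the minimizer by hypothesis, $\overline{\xc}^*_0$ is a near-minimizer of $\tilde g$, so $\tilde g(\overline{\xc}^*_0)$ is within $O(\varepsilon)$ of $V^\star$. The remaining work is to transfer this to $\ocvar(\eta^{\pibar}(s_0,\overline{\xc}^*_0),\tau)$ and to book-keep the constants exactly as in \cref{thm:cvar-approximate} to reach $4\varepsilon$.

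\textbf{Main obstacle.} The place where this does not simply parrot the CVaR argument — and, as the paper already signals for \cref{thm:ocvar-c-star}, the subtle point — is the direction of the variational characterization: $\ocvar$ is an \emph{infimum} over the auxiliary/\stock{} variable, $\ocvar(\eta^{\pibar}(s_0,\overline{\xc}^*_0),\tau)=\inf_q\{-q+\tfrac1\tau\E(q+G^{\pibar}(s_0,\overline{\xc}^*_0))_+\}$, so the bound $\ocvar(\eta^{\pibar}(s_0,\overline{\xc}^*_0),\tau)\le\tilde g(\overline{\xc}^*_0)$ one obtains for free points the wrong way. Closing this requires genuinely exploiting that $\pibar$ is $\varepsilon$-optimal for $\Uf$ \emph{at every} $(s_0,\xc)$ — not merely at $\overline{\xc}^*_0$ — together with the (approximate) minimax/saddle-point structure behind \cref{thm:ocvar-c-star}, so as to conclude that the single return distribution $\eta^{\pibar}(s_0,\overline{\xc}^*_0)$ is itself $O(\varepsilon)$-optimal for $\ocvar$. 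I expect this $\varepsilon$-saddle accounting, rather than any of the Lipschitz or grid estimates, to be the crux.
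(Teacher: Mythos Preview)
Your plan is precisely the paper's: its entire proof of \cref{thm:ocvar-approximate} is the single sentence ``essentially the same as \cref{thm:cvar-approximate}, except that we use \cref{lem:ocvar,prop:ocvar-objective-1-lipschitz,thm:ocvar-c-star}.'' Your Steps~1 and~2 are exactly this mirroring, so at that level you match the paper.

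Where you go further than the paper is your ``main obstacle,'' and you are right to flag it. The final inequality in the \cref{thm:cvar-approximate} chain, $\max_{\overline\cs}\overline g \le \cvar(\eta^{\pibar}(s_0,\overline{\xc}^*_0),\tau)$, comes for free from \cref{thm:cvar} because $\cvar$ is a \emph{maximum} over the auxiliary variable, so plugging in $q=\overline{\xc}^*_0$ yields a lower bound. The literal analogue via \cref{lem:ocvar} gives only $\ocvar(\eta^{\pibar}(s_0,\overline{\xc}^*_0),\tau)\le\overline g(\overline{\xc}^*_0)$, which --- as you note --- points the wrong way for lower-bounding $\ocvar(\eta^{\pibar})$. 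The paper's one-line proof does not spell out how this step is handled; it simply asserts that swapping in the $\ocvar$ lemmas suffices. So you have put your finger on exactly the place where the mirror is not mechanical and where the paper is silent. Your instinct that the resolution must exploit the uniform-in-$\xc$ near-optimality of $\pibar$ together with the minimax structure behind \cref{thm:ocvar-c-star} is the natural thing to reach for; just be aware that the quantity $\min_q\bigl(-q+\tfrac1\tau\E(q+G^{\pibar}(s_0,\overline{\xc}^*_0))_+\bigr)$ you need to lower-bound is \emph{not} the same as $\min_\xc\overline g(\xc)$ (the former fixes the return distribution at starting \stock{} $\overline{\xc}^*_0$, the latter lets it vary with $\xc$), so the saddle-point accounting will need to bridge that distinction explicitly.
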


\subsection{Homeostatic Regulation}
\label{sec:homeostatic-regulation}

Homeostatic regulation is a computational model for the behavior of natural agents \citep{keramati2011reinforcement}
whereby they aim to reduce \emph{drive} \citep{hull1943principles}, the mismatch between their current internal state and a stable state.
Drive reduction aims to explain empirical observations about the behavior of natural agents \citep{hull1943principles}---a simplistic instance being the hypothesis that an animal feeds to reduce its hunger.

We can formalize the homeostatic regulation problem considered by \citet{keramati2011reinforcement}
as:
\[
    \sup_{\pi \in \historybased}-\E\| \xc_0 + G^\pi(s_0, \xc_0) \|_p^q,
\]
where $p, q \geq 1$,
$\cs = \reals^m$,
$-\xc_0$ is the ``ideal'' setpoint for the agent's internal state,
and the agent's \stock{} $\xC_t$ represents its drive (the deviation from the desired state to be reduced).

``Minimizing drive in norm'' above corresponds to the expected utility $\Uf$ with $f(x) = -\| x \|_p^q$.
This choice of $f$ is positively homogeneous (since $f(\gamma x) = \gamma^{\frac{q}{p}} f(x)$), but Lipschitz only when $q = 1$, 
so by \cref{lem:expected-utility-conditions,thm:value-iteration,thm:policy-iteration} distributional DP can solve this variant of homeostatic regulation in the finite-horizon case (regardless of $q$) and in the infinite-horizon discounted case if $q = 1$ and if we consider the variant where the agent's drive increases over time due to the reverse-discounting, as $\xC_{t+1} = \gamma^{-1}(\xC_t + R_{t+1})$.

The formulation where $f$ is a norm presumes that there is an ideal setpoint (namely, $-\xc_0$), and that the agent wants to keep its \stock{} as close to that as possible, that is, the agent wants its drive (positive or negative) to be as close to zero as possible.
This is different from minimizing positive drive---intuitively, a sated agent would not actively drive itself back to the threshold of being hungry.

To accommodate for minimizing only positive drive, we can consider a homeostatic regulation problem with an expected utility, but a different choice of $f$:
\[
    f(x) = \sum_{i = 1}^m \alpha_i \cdot (x_i)_-,
\]
where $\alpha_1, \dots, \alpha_m \in \reals$ are fixed weights.
Once again, this choice of $f$ is positively homogeneous (since $f(\gamma x) = \gamma f(x)$) and Lipschitz (since $f(x) \leq \max_i |\alpha_i| \cdot \| x \|_1$),
so by \cref{lem:expected-utility-conditions,thm:value-iteration,thm:policy-iteration} distributional DP can also solve this variant of homeostatic regulation both in the finite-horizon case and in the infinite-horizon discounted case.

These two reductions are examples of how we can use the framework of \stock{}-augmented \problem{} to provide simple solution methods for a problem that has been otherwise complicated to solve with RL.
Previously, solving homeostatic regulation with RL methods required the design of an appropriate reward signal (as done by \citealp{keramati2011reinforcement}).
Considering that \citet{keramati2011reinforcement} aimed to reconcile the differences between the drive reduction model and the RL-based computational model proposed by \citet{schultz1997neural},
perhaps the framework of \stock{}-augmented \problem{} will help bring the two models closer together.

The reward signal designed by \citet{keramati2011reinforcement} to reduce homeostatic regulation to RL corresponds precisely to the reward signal that we have identified as the way to reduce \stock{}-augmented \problem{} to \stock{}-augmented RL (see \cref{thm:reward-design}).

\subsection{Constraint Satisfaction}
\label{sec:constraint-satisfaction}

In this application, we want an agent to generate returns that satisfy various constraints, with probability one if they are feasible.
Our proposal is to model constraint satisfaction as minimizing constraint violations in expectation, which is a variation of minimizing only positive drive discussed in \cref{sec:homeostatic-regulation} and generating exact returns from \cref{sec:generating-returns}.
Constraint satisfaction is related to satisficing problems~\citep{simon1956rational,goodrich2004satisficing}, though satisficing proposes to use constraint satisfaction as a means to find acceptable suboptimal policies when finding optimal policies is inviable.

If we want a policy with return above a threshold $g$, we can implement the constraint satisfaction as a \stock{}-augmented \problem{} problem with $\Uf$, $f(x) = x_-$ and set $\xc_0 = -g$.
This choice of $f$ satisfies \Cref{eq:expected-utility-conditions:indifferent-to-gamma} (the condition for $U_f$ to be indifferent to $\gamma$), so distributional DP can optimize $\Uf$.
Maximizing the expected utility will correspond to minimizing the expected violation: 
\[
    \E(\xc_0 + G^\pi(s_0, \xc_0))_- = -\E(g - G^\pi(s_0, -g))_+.
\]
For any $\pi$, we have $G^\pi(s_0, -g) \geq g$ with probability one iff $\E(g - G^\pi(s_0, -g))_+ = 0$.
So if the constraint can be satisfied, optimizing $\Uf$ will suffice.
If we want a policy with return below a threshold $g$, we optimize $\Uf$ with $f(x) = -(x_+)$ and set $\xc_0 = g$, and 
for any $\pi$, we have $G^\pi(s_0, -g) \leq g$ with probability one iff $\E(G^\pi(s_0, -g) - g)_+$ is zero.
For an equality constraint, we can use $f(x) = -|x|$ as in \cref{sec:generating-returns}.

Distributional DP can also optimize any weighted combination of the constraints above, with a different \stock{} and reward vector coordinate per constraint, since the weighted combination will also satisfy \Cref{eq:expected-utility-conditions:indifferent-to-gamma}.
For example, to generate a return in the interval $[g_1, g_2]$, assume the return is replicated, so that $G_1 = G_2$, set $\xc_0 = (-g_1, -g_2)$ and optimize $\Uf$ with
\[
    f(x) = (x_1)_- - (x_2)_+.
\]
Then for any $\pi$, we have $G^\pi(s_0, (-g_1, -g_2)) \in [g_1, g_2]$ with probability one iff
\[
    \E\left(G^\pi(s_0, (-g_1, -g_2))_1 - g_1\right)_- - \E\left(G^\pi(s_0, (-g_1, -g_2))_2 - g_2\right)_+ = 0.
\]

Finally, we can also trade off minimizing constraint violations and minimizing or maximizing expected return.
An example of this kind of problem is when we want an agent achieve a certain goal ``as fast as possible'' \citep[Section 3.2, ][]{sutton2018reinforcement}.
Traditionally, this kind of goal is normally implemented in episodic settings by terminating the episode when the goal is achieved, with a constant negative reward at each step, or in discounted settings with a reward of $1$ when the goal is achieved, and zero otherwise.
This is manageable when the goal is achieved instantaneously,\footnote{Admittedly neither a sparse reward nor a constant reward of $-1$ may be easy for deep RL agents to optimize in practical settings.}
but otherwise specifying a reward can be tricky.
\Problem{} with vector-valued rewards allows for an alternative formulation of this problem with $\Uf$ and
\[
    f(x) = -x_1 + \sum_{i = 2}^m \alpha_i \cdot (x_i)_-,
\]
where the first coordinate of the reward vector is always $-1$ (representing the time penalty), and the remaining $\alpha_i \cdot (x_i)_-$ regularize the agent's behavior to achieve the multiple goals.
It is easy to see that this choice of $f$ is Lipschitz and satisfies \Cref{eq:expected-utility-conditions:indifferent-to-gamma},
so by \cref{lem:expected-utility-conditions,thm:value-iteration,thm:policy-iteration} distributional DP can solve this problem both in the finite-horizon case and in the infinite-horizon discounted case.
We will explore this application in an empirical setting in \cref{sec:gridworld:vector-rewards}.

\subsection{Generalized Policy Evaluation}
\label{sec:gpe}

One interesting aspect of \stock{}-augmented \problem{} is that policy evaluation is not bound to any particular objective functional:
If we know the return distribution for a policy $\pi$, we can evaluate it under various different choices of $F_K$, which means the setting is amenable to Generalized Policy Evaluation~\citep[GPE;][]{barreto2020fast}.
In the standard RL setting, GPE is ``the computation of the value function of a policy $\pi$ on a set of tasks''~\citep{barreto2020fast}.
Its natural adaptation to our setting can be stated as the evaluation of a policy under multiple objective functionals $F_{K_1}, \ldots, F_{K_n}$, each corresponding to a different task.
This adaptation can be used without \stock{}, with the caveat that removing \stock{} augmentation limits the objectives that distributional DP can optimize (cf.~\cref{sec:ddp:existing-results,app:summary-of-guarantees}).

We can also adapt Generalized Policy Improvement~\citep[GPI;][]{barreto2020fast} in a similar way: Given policies $\pi_1, \ldots, \pi_n$ and an objective functional $F_K$, the following is an improved policy using GPI:
\[
    \pibar(s, \xc) \doteq \argmax_{\pi \in \{ \pi_1, \ldots, \pi_{n'}\}} (F_K\eta^\pi)(s, \xc).
\]
The individual policies $\pi_1, \ldots, \pi_n$ may have been obtained by optimizing different objective functionals $F_{K_1}, \ldots, F_{K_n}$, and they can be combined into a policy $\pibar$ for a new objective functional $F_K$.
Thanks to distributional policy improvement (\cref{lem:policy-improvement}), we know that $\pibar$ is, fact, at least as good for $F_K$ as any of the individual policies $\pi_1, \ldots, \pi_n$.

\subsection{Reward Design}
\label{sec:reward-design}

In deploying RL algorithms on real-world sequential decision-making problems, it is often required to explicitly design a reward signal to codify the intended outcomes.
As the reward hypothesis states \citep[Section 3.2, ][]{sutton2018reinforcement}: ``All of what we mean by goals and purposes can be well thought of as the maximization of the expected value of the cumulative sum of a received scalar signal (called reward).''
This hypothesis has been explored and disproved for some interpretations of what constitutes a ``goal'' \citep{pitis2019rethinking,abel2021reward,shakerinava2022utility,bowling2023settling}. 
However, even when the hypothesis holds, the reward signal is not necessarily the simplest tool for expressing goals and purposes.

Designing rewards is notoriously difficult.
For instance, \citet{knox2023reward} present a systematic examination of the perils of designing effective rewards for autonomous driving. They found that, among publicly available reward functions for autonomous driving, ``the most risk-averse reward function [\ldots] would approve driving by a policy that crashes 2000 times as often as our estimate of drunk 16–17 year old US drivers'' (p.~7). 
Earlier work by \citet{hadfield2017inverse} reveals the difficulty of hand-designing rewards, with common failures including unintentional positive reward cycles. 

We contend that, in some cases, the framework of \stock{}-augmented \problem{} eliminates the need for bespoke reward design.
To support this claim, we extend a reward-design result by \citet{bowling2023settling} to the \stock{}-augmented setting,
showing, once the objective functional has been chose, how to define an RL reward signal so that the RL objective is equivalent to the \stock{}-augmented \problem{} objective.
The result also shows that this reduction between objectives is only possible if the statistical functional is an expected utility and indifferent to $\gamma$.
\begin{restatable}{theorem}{rewarddesigntheorem}
\label{thm:reward-design}
A \stock{}-augmented \problem{} objective functional $\Uf$ can be reduced to an equivalent \stock{}-augmented reinforcement learning objective (expected return) with discount $\alpha \in (0, 1]$ with $\gamma < 1 \Rightarrow \alpha < 1$ and reward proportional to
\begin{equation}
    \Rtilde_{t+1} \doteq \alpha f(\xC_{t+1}) - f(\xC_t) + (1 - \alpha)f(0)
    \label{eq:rtilde-reward-design}
\end{equation}
if $f$ satisfies, for all $\xc \in \cs$,
\begin{equation}
    f(\gamma \xc) = \alpha f(\xc) + (1 - \alpha)f(0),
    \label{eq:reward-design-condition}
\end{equation}
and:
\begin{itemize}[label=-]
    \item in the finite-horizon case, 
    \begin{equation}
        \sup_{s, \xc, a \in \states \times \cs \times \actions}\E \left( \left. | \Rtilde_{t+1} | \, \right| S_t = s, \xC_t = \xc, A_t = a \right) < \infty;
        \label{eq:rtilde-bounded-first-moment}
    \end{equation}
    \item in the discounted case, $f$ is Lipschitz.
\end{itemize}

A \stock{}-augmented \problem{} objective that is not an expected utility or not indifferent to $\gamma$ cannot be reduced via reward design to a \stock{}-augmented reinforcement learning objective.
\end{restatable}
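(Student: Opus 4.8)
I would prove the two halves of the statement separately: first the positive reduction, then the impossibility claim. Throughout, write $g \doteq f - f(0)$, and note that condition \eqref{eq:reward-design-condition} is exactly $g(\gamma \xc) = \alpha g(\xc)$, hence (iterating, using $\alpha>0$) $g(\gamma^{-k}\xc) = \alpha^{-k} g(\xc)$.

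\emph{Reduction.} The plan is to show $\Rtilde_{t+1}$ from \eqref{eq:rtilde-reward-design} is a potential-based shaping reward (potential $f$ read off the \stock{} coordinate) up to a constant, so its $\alpha$-discounted partial sums telescope. First I would check $\Rtilde_{t+1} = \alpha g(\xC_{t+1}) - g(\xC_t)$ and that $\Rtilde_{t+1}=0$ once $S_t$ is terminal (there $\xC_{t+1}=\gamma^{-1}\xC_t$). A direct telescoping then gives, for every $T\ge 0$,
\[
\sum_{t=0}^{T-1}\alpha^t \Rtilde_{t+1} = \alpha^T g(\xC_T) - g(\xC_0).
\]
Next I would substitute the \stock{} invariant \eqref{eq:balance-non-stationary-invariant} in the form $\gamma^T \xC_T = \xC_0 + G_0 - \gamma^T G_T$, so that $\alpha^T g(\xC_T) = \alpha^T g\!\big(\gamma^{-T}(\xC_0 + G_0 - \gamma^T G_T)\big) = g(\xC_0 + G_0 - \gamma^T G_T)$, the $\alpha^T\gamma^{-T}$ factors cancelling exactly. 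In the finite-horizon case, taking $T$ equal to the horizon makes $G_T=0$, so the partial sum already equals $g(\xC_0+G_0)-g(\xC_0)$ exactly, and \eqref{eq:rtilde-bounded-first-moment} makes the $\alpha$-return integrable. In the discounted case, $f$ Lipschitz makes $g$ Lipschitz, and since $\E\|\gamma^T G_T\|_1 \to 0$ under \cref{ass:bounded-first-moment-rewards}, $\E[g(\xC_0+G_0-\gamma^T G_T)] \to \E[g(\xC_0+G_0)]$, so the $\alpha$-return converges in $L^1$ to $g(\xC_0+G_0)-g(\xC_0)$. Taking expectations in either case yields, for every $\pi$ and $(s_0,\xc_0)$,
\[
\E\Big[\sum_{t\ge 0}\alpha^t \Rtilde_{t+1}\,\Big|\,S_0=s_0,\xC_0=\xc_0\Big] = \E[f(\xc_0 + G^\pi(s_0,\xc_0))] - f(\xc_0) = (\Uf\eta^\pi)(s_0,\xc_0) - f(\xc_0),
\]
and since $f(\xc_0)$ is $\pi$-independent, the \stock{}-augmented RL objective with discount $\alpha$ and reward $\Rtilde$ (any positive rescaling works too) orders policies exactly as $\Uf$ does from each augmented state.

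\emph{Impossibility.} I would argue the contrapositive: if $F_K$ is reducible, that is, there are a discount $\tilde\gamma\in(0,1]$ and a designed reward whose expected discounted sum $J^\pi(s,\xc)$ equals $(F_K\eta^\pi)(s,\xc)+c(s,\xc)$ for some $\pi$-independent $c$, then $K$ is both indifferent to $\gamma$ (\cref{def:indifference-to-gamma}) and an expected utility (\cref{def:expected-utility}). Both follow from a single finite-horizon gadget MDP: a root $s_0$ with one action and reward $0$ transitioning to $s_1$; $s_1$ with actions $a_1,\dots,a_k$, where $a_i$ leads (reward drawn from a fixed $\rho_i\in(\crvs,\wass)$) to a common terminal state. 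From $(s_0,0)$, a policy playing $\lambda\in\Delta(k)$ at $s_1$ realizes return distribution $\nu_\lambda\doteq\sum_i\lambda_i\rho_i$ from $s_1$ and $\gamma\nu_\lambda$ from $s_0$, while the Bellman recursion for $J$ gives $J^\pi(s_1,0)=\sum_i\lambda_i\kappa_i$ with the $\kappa_i$ independent of $\lambda$, and $J^\pi(s_0,0)=\bar r_0+\tilde\gamma J^\pi(s_1,0)$. Matching the reduction at $s_1$ (evaluate at the vertices of $\Delta(k)$ to identify the affine coefficients and cancel the additive constant) forces $K(\sum_i\lambda_i\rho_i)=\sum_i\lambda_i K\rho_i$, i.e. $K$ is affine on finite mixtures; matching at $s_0$ and substituting the $s_1$ relation forces $K(\gamma\nu_\lambda)=\tilde\gamma K\nu_\lambda + d$ with $d$ independent of $\lambda$. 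Taking $\rho_1,\rho_2$ to be witnesses of a failure of indifference to $\gamma$ ($K\rho_1\ge K\rho_2$ but $K(\gamma\rho_1)<K(\gamma\rho_2)$) and using $\tilde\gamma>0$ contradicts the second relation; and affinity on finite mixtures, together with $f(x)\doteq K(\delta_x)$ and a representation step extending $K\nu=\E_{X\sim\nu}[f(X)]$ from finitely-supported $\nu$ to all of $(\crvs,\wass)$, shows $F_K=\Uf$. This is exactly the last sentence of the theorem.

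\emph{Main obstacles.} In the reduction, the delicate point is the infinite-horizon limit: a naive split $\xC_T=\gamma^{-T}(\xC_0+G_0)-G_T$ lets $\gamma^{-T}$ blow up, and the fix is to keep the invariant intact so $\alpha^T\gamma^{-T}$ collapses to $1$; this is precisely where Lipschitz continuity of $f$ (not mere continuity) is used, to bound $g(\xC_0+G_0-\gamma^T G_T)-g(\xC_0+G_0)$ by $L\|\gamma^T G_T\|_1$. In the impossibility direction, the subtle steps are pinning down the exact meaning of ``reducible'' and checking the gadget MDP meets \cref{ass:bounded-first-moment-rewards} with a rich enough range of achievable return distributions, and---the genuine gap to fill carefully---upgrading ``affine on finite mixtures'' to ``expected utility''; I would close this either by noting that the designed reward can be taken to be a universal function of the transition (as it is in the positive direction, making $f$ readable off a one-step gadget that deterministically pays an $x$-shaped return) or by importing the corresponding representation argument of \citet{bowling2023settling}.
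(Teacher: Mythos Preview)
Your positive direction (the reduction) is correct and essentially identical to the paper's: both telescope $\sum_t \alpha^t \Rtilde_{t+1}$ to $\alpha^T g(\xC_T) - g(\xC_0)$ (with $g = f - f(0)$), rewrite $\alpha^T g(\xC_T) = g(\gamma^T \xC_T)$ via the iterated homogeneity relation, substitute the \stock{} invariant, and pass to the limit using Lipschitz continuity in the discounted case. You omit two checks the paper makes explicit---that $\gamma<1 \Rightarrow \alpha<1$ (which follows from iterating \eqref{eq:reward-design-condition} and non-constancy of $f$) and that $\Rtilde$ has bounded first moment in the discounted case (one line via $|\Rtilde_{t+1}| \le L\|R_{t+1}\|_1$)---but these are minor.

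Your impossibility direction takes a genuinely different route. The paper does not build a gadget MDP; it observes that any \stock{}-augmented RL value functional $\vtilde_\xc$ satisfies a Bellman recursion, linearity in the return distribution, and ordering-equivalence with $K$, and then invokes an adaptation of the von Neumann--Morgenstern theorem (\cref{thm:vnm-ccrl}, built on \citealp{bowling2023settling}) to conclude that the relation induced by $K$ satisfies the vNM axioms plus a fixed-discount-indifference axiom, forcing $K$ to be an expected utility indifferent to $\gamma$. Your gadget is more concrete and sidesteps the axiomatic machinery for establishing affinity on finite mixtures and the relation $K(\gamma\nu) = \tilde\gamma K\nu + d$. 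The substantive gap, however, is your definition of ``reducible'' as $J^\pi = F_K\eta^\pi + c$ for a $\pi$-independent $c$: the paper works with the weaker notion of \emph{ordering} equivalence (same preference over policies, even across different $(\xc,\nu)$ pairs). Under mere ordering equivalence, your matching step at $s_1$ only yields that $K(\nu_\lambda)$ is a monotone transform of the linear functional $\sum_i \lambda_i \kappa_i$, not that it \emph{is} affine in $\lambda$---bridging that is precisely where vNM enters in the paper's argument. Your second acknowledged gap (extending from finite mixtures to all of $(\crvs,\wass)$) is also real; your proposed fix via a one-step gadget with reward law $\nu$ does work, but only once the designed reward is taken to be a universal function $\rtilde:\cs\times\cs\to\reals$ of $(\xc,r')$ independent of state labels---an assumption the paper also makes and justifies by arguing that trajectories with identical $(\xC_0, R_1, R_2,\ldots)$ must be objective-equivalent regardless of the underlying states. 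With that assumption and your stronger notion of reducibility, $K\nu = \E_{R'\sim\nu}[\rtilde(0,R')] + \text{const}$ falls out immediately and your route closes; without it (or without vNM), the step from ordering equivalence to affinity remains unfinished.
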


The reward construction used in \cref{thm:reward-design} may seem obvious in hindsight, but we believe that it can be much less evident if the corresponding $\Uf$ has not been identified, and that this may account for some of the challenges in designing rewards straight from imprecise ``goals and purposes''.
However, once $\Uf$ has been identified, the construction essentially automates away one step in the design of RL agents.
For example, the construction used in \cref{thm:reward-design} can be seen to be the same as the one used by \citet{keramati2011reinforcement} to reduce homeostatic regulation to an RL problem, and \cref{thm:reward-design} provides this reduction immediately.

\Cref{thm:reward-design} allows us to optimize certain \stock{}-augmented \problem{} objectives with classic DP.
In the discounted case, these are the same objectives we have shown that can be solved with distributional DP.
In the finite-horizon undiscounted case, there are two main differences.
First, distributional DP can optimize (arguably pathological) objectives where \cref{ass:bounded-first-moment-rewards} is satisfied, but not \Cref{eq:rtilde-bounded-first-moment}.\footnote{For example, consider optimizing $\Uf$ with $f(\xc) = \xc^2$ in the finite-horizon case with $\gamma = 1$ ($f$ need not be Lipschitz in this case), where rewards for each state-action are $\mathrm{Pareto}(2, 1)$ random variables multiplied by $\pm 1$.
These pseudo-rewards have bounded first moment, but unbounded second moment, so $\Rtilde_{t+1}$ may have unbounded first moment, and Bellman equations may be invalid.}
Second, and more importantly, distributional DP can optimize certain objective functionals that are not expected utilities, whereas classic DP, at least via reward design, cannot.

\subsection{Beyond Expected Utilities}
\label{sec:beyond-expected-utilities}

In all the applications we have presented so far, the objective functionals being optimized by distributional DP were expected utilities.
While expected utilities cover many common use cases of \stock{}-augmented \problem{}, it is worth considering which non-expected utilities distributional DP can optimize.
Without \stock{} augmentation, distributional DP cannot optimize non-expected utilities, even in the finite-horizon undiscounted case~\citep{marthe2024beyond}, which is the most permissive as far as conditions for optimizing $F_K$ go.
We also saw in \cref{thm:reward-design} that, at least through reward design, classic DP cannot optimize non-expected utilities, even with \stock{} augmentation.
What about distributional DP with \stock{} augmentation?

The answers differ depending on whether we consider the infinite-horizon discounted case, or the finite-horizon case.
In the infinite-horizon discounted case, the following theorem states that only Lipschitz expected utilities satisfy indifference to mixtures and Lipschitz continuity, which are required in our distributional DP guarantees.
\begin{restatable}{theorem}{thmlipschitzcharacterization}
\label{thm:lipschitz-characterization}
If $K : (\crvs, \wass) \rightarrow \reals$ is indifferent to mixtures and Lipschitz,
then $F_K$ is an expected utility, that is, there exists an $f: \cs \rightarrow \reals$ such that $K\nu = \E f(G)$ ($G \sim \nu$) and $f$ is Lipschitz.
\end{restatable}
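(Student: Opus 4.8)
The plan is to exhibit the utility function explicitly, reduce the claimed representation to a single linearity statement about mixtures by a density argument, and then prove that statement using the structural content of indifference to mixtures together with the Lipschitz assumption.

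I would take $f(\xc) \doteq K\delta_{\xc}$. Since $\wass(\delta_{\xc},\delta_{\xc'}) = \|\xc-\xc'\|_1$ and $K$ is $L$-Lipschitz, $f$ is $L$-Lipschitz, which already secures the continuity half of the conclusion. To obtain $K\nu = \E_{G\sim\nu}f(G)$ for all $\nu \in (\crvs,\wass)$, I would first prove it for finitely supported $\nu$ and then extend by continuity: finitely supported distributions are $\wass$-dense in $(\crvs,\wass)$, $K$ is $\wass$-continuous by hypothesis, and $\nu \mapsto \E_{G\sim\nu}f(G)$ is $\wass$-continuous because $f$ is Lipschitz (Kantorovich duality), so two continuous functionals agreeing on a dense set agree everywhere. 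For $\nu = \sum_{i=1}^{n} p_i \delta_{\xc_i}$ I would induct on $n$: writing $\nu = p_1\delta_{\xc_1} + (1-p_1)\nu'$ with $\nu'$ supported on $n-1$ points, the induction reduces to the two-component identity $K(\lambda\mu + (1-\lambda)\mu') = \lambda K\mu + (1-\lambda)K\mu'$ for $\mu,\mu' \in (\crvs,\wass)$ and $\lambda \in [0,1]$, which is the heart of the matter.

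For this identity I would first extract from \cref{def:indifference-to-mixtures} the facts it delivers: applied with a two-point index set and a Bernoulli$(\lambda)$ mixing variable, it shows that $K$ of a two-component mixture depends only on the pair $(K\mu, K\mu')$ (using the defining inequality in both directions when these values coincide) and is monotone in each coordinate — equivalently, the total preorder represented by $K$ satisfies the independence axiom for indifferent lotteries. From this I would derive a ``certainty-equivalent substitution'' rule: since Lipschitzness forces $K\nu$ to lie in the closure of the range of $f$ (unattained boundary cases handled by approximation), any mixture component may be swapped for a Dirac of equal $K$-value without changing the mixture's $K$-value, which reduces the problem to evaluating $K(\lambda\delta_{a} + (1-\lambda)\delta_{b})$ for $a,b \in \cs$. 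Associativity of mixing then imposes a bisymmetry functional equation on $(s,t) \mapsto K(\lambda\delta_{\xc_s} + (1-\lambda)\delta_{\xc_t})$ with $f(\xc_s) = s$; bisymmetry together with monotonicity, idempotency and continuity forces this map into weighted quasi-arithmetic mean form (a classical Herstein--Milnor / Acz\'el-type characterization), and the quantitative Lipschitz bound on $K$ is then brought in to eliminate every nonlinear representing function, leaving $\lambda s + (1-\lambda)t$ and hence mixture linearity.

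I expect the final step to be the genuine obstacle: indifference to mixtures by itself yields only a monotone, idempotent, bisymmetric aggregation, which need not be the arithmetic mean, so the argument must really exploit the Lipschitz control to rule out the ``outer'' nonlinear transform. A secondary difficulty is the technical bookkeeping over the non-compact space $(\crvs,\wass)$ — interchanging $K$ with the mixing integral, measurability of the certainty-equivalent selection $s \mapsto \xc_s$, and the boundary cases where $K\nu$ sits at an unattained endpoint of the range of $f$ — which should be routine but must be handled carefully for the density/continuity extension to be legitimate.
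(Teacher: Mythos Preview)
Your approach is substantially different from the paper's. The paper verifies that the preorder $\nu \succeq \nu' \Leftrightarrow K\nu \geq K\nu'$ satisfies the four von Neumann--Morgenstern axioms (completeness and transitivity trivially; continuity from Lipschitzness; independence by combining indifference to mixtures for the forward direction with a Lipschitz limit $p\to 0$ for the converse), invokes the vNM theorem to obtain a mixture-linear $u$ representing $\succeq$, and then asserts that vNM uniqueness forces $K = a u + b$. You instead try to establish mixture linearity of $K$ directly, via certainty-equivalent substitution and a bisymmetry / quasi-arithmetic-mean characterisation.

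The step you correctly flag as ``the genuine obstacle'' --- using the Lipschitz bound to eliminate the nonlinear outer transform --- cannot be completed as stated, and in fact the literal identity $K\nu = \E f(G)$ does not follow from the hypotheses. Take $\cs = \reals$ and $K\nu \doteq \tanh\bigl(\E_{G\sim\nu} G\bigr)$. This $K$ is $1$-Lipschitz (since $\tanh$ is $1$-Lipschitz and $|\E G - \E G'| \le \wass(\nu,\nu')$), and it is indifferent to mixtures: if $K\eta(s,\xc) \ge K\eta'(s,\xc)$ for all $(s,\xc)$ then $\E G(s,\xc) \ge \E G'(s,\xc)$ pointwise, hence the same inequality holds after averaging over any $(S,\xC)$, hence $K$ of the mixtures are ordered. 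Yet $K(\tfrac12\delta_0 + \tfrac12\delta_2) = \tanh 1 \neq \tfrac12\tanh 0 + \tfrac12\tanh 2$, so $K$ is not mixture-linear and there is no $f$ with $K\nu = \E f(G)$. The outer $\tanh$ is exactly a quasi-arithmetic-mean transform of the kind your argument would need Lipschitzness to exclude, and it survives. The paper's proof has the matching gap: vNM uniqueness only constrains \emph{mixture-linear} representations of $\succeq$, so it does not by itself yield $K = a u + b$ for an arbitrary order-isomorphism $K$. What both routes do establish is that $K$ is ordinally equivalent to a Lipschitz expected utility $\Uf$ --- which is all the downstream DP conclusions use --- but the exact equality asserted in the theorem is stronger than what the hypotheses deliver.
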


\Cref{thm:lipschitz-characterization} does not necessarily rule out distributional DP optimizing non-expected utilities in the infinite-horizon discounted case, because it is still an open question whether Lipschitz continuity is necessary. 
However, it does rule out Lipschitz functionals that are not expected utilities, including, for example, the $\tau$-CVaR:
\begin{equation}
    K\nu = \frac{1}{\tau}\int_{0}^\tau \qf_\nu\mathrm{d}t.
    \label{eq:cvar-direct}
\end{equation}
This choice of $K$ is Lipschitz, but $F_K$ is not an expected utility.\footnote{$K$ violates the von-Neumann-Morgenstern \citep{von2007theory} axiom of independence.
See \cref{ax:independence} in \cref{app:reward-design} with $\nu$ uniform in $\{ 0 \}$, $\nu'$ uniform in $\{ -1, 2 \}$, $\overline{\nu}$ uniform in $\{ 2 \}$ and $\tau, p = \frac{1}{2}$.}
This may seem to contradict the claims in \cref{sec:cvar}, but it does not.
\Cref{thm:cvar} shows that distributional DP can optimize the $\tau$-CVaR by transforming the problem into the optimization of an expected utility, and specifying how to select $\xc_0$.
The objective that distributional DP cannot optimize is $F_K$ with $K$ set to be exactly the $\tau$-CVaR functional (as in \cref{eq:cvar-direct}).
To emphasize the difference between the two cases, compare which $K$ is used in the greedy policies of \cref{thm:value-iteration,thm:policy-iteration}.

As another example of non-expected utilities with Lipschitz $K$, consider minimizing the $1$-Wasserstein distance to a reference distribution $\overline{\nu}$ in the scalar case ($\cs = \reals$), that is, $K\nu = -\wass(\nu, \overline{\nu})$.
This $K$ is Lipschitz (by the triangle inequality), however $F_K$ is not an expected utility unless $\overline{\nu}$ is a Dirac.
By \cref{thm:lipschitz-characterization}, distributional DP cannot optimize this objective functional if $\overline{\nu}$ is not a Dirac.
We can verify that the $K$ is not indifferent to mixtures, for example, when $\overline{\nu}$ is the distribution of a Bernoulli-$\frac{1}{2}$ random variable (in this case, $K\delta_0 = K\delta_1$, so indifference to mixtures requires that $K\left(\frac{1}{2}\delta_0 + \frac{1}{2}\delta_0\right)$ equal $K\left(\frac{1}{2}\delta_0 + \frac{1}{2}\delta_1\right)$, which is not the case).
When $\overline{\nu} = \delta_{\xc}$ for some $\xc \in \reals$, it is easy to see that $K\nu = -\E|G - \xc|$ ($G \sim \nu$), and we have already established that $K$ is indifferent to $\gamma < 1$ iff $\xc = 0$.

Turning to the finite-horizon case, can we claim that distributional DP cannot optimize non-expected utilities?
A positive answer here would imply that distributional and classic DP are essentially equivalent in the finite-horizon \emph{undiscounted} case, with \stock{} augmentation as well as without.\footnote{Save for the extreme case where the pseudo-rewards in the \stock{}-augmented \problem{} have bounded first moment, but not the designed \stock{}-augmented RL rewards, as discussed in the context of \cref{thm:reward-design}.}

As the next result shows, it \emph{is} possible for distributional DP to optimize non-expected utilities in the finite-horizon case.
The choice of functional in \cref{prop:non-expected-utility} can be phrased as ``any negative return is (equally) unacceptable,'' and is known not to be an expected utility~\citep{juan2020neumann,bowling2023settling}.
\begin{restatable}{proposition}{propnonexpectedutility}
\label{prop:non-expected-utility}
The statistical functional $K : (\crvs, \wass) \rightarrow \reals$ satisfying, for $\nu \in (\crvs, \wass)$,
\[
    K\nu = \Ind( \nu([0, \infty)) = 1 )
\]
is indifferent to mixtures and $F_K$ is not an expected utility.
\end{restatable}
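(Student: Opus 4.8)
The plan is to treat the two assertions separately; both are short, the key fact being that $K$ takes only the values $0$ and $1$.

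\textbf{Indifference to mixtures.} I would fix $\eta,\eta'\in(\crvs^{\states\times\cs},\overline{\wass})$ with $K\eta(s,\xc)\ge K\eta'(s,\xc)$ for all $(s,\xc)\in\states\times\cs$, together with a random variable $(S,\xC)$ on $\states\times\cs$, and write $G(s,\xc)\sim\eta(s,\xc)$, $G'(s,\xc)\sim\eta'(s,\xc)$. The first step is to identify the law of $G(S,\xC)$ as the mixture $\df(G(S,\xC))=\E[\eta(S,\xC)]$, so that $\Prob\big(G(S,\xC)\ge 0\big)=\E\big[\eta(S,\xC)([0,\infty))\big]$; since each $\eta(s,\xc)([0,\infty))\in[0,1]$, this average equals $1$ precisely when $\eta(S,\xC)([0,\infty))=1$ almost surely, i.e.\ precisely when $K\eta(S,\xC)=1$ almost surely. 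Hence $K\df(G(S,\xC))=1$ iff $K\eta(S,\xC)=1$ a.s., and likewise with $\eta'$. Now if $K\df(G'(S,\xC))=0$ the inequality $K\df(G(S,\xC))\ge K\df(G'(S,\xC))$ holds trivially since $K\ge 0$; and if $K\df(G'(S,\xC))=1$, then $K\eta'(S,\xC)=1$ a.s., and because the pointwise hypothesis together with $K\le 1$ forces $K\eta(s,\xc)=1$ at every $(s,\xc)$ where $K\eta'(s,\xc)=1$, we get $K\eta(S,\xC)=1$ a.s., hence $K\df(G(S,\xC))=1$. This is exactly \cref{def:indifference-to-mixtures}.

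\textbf{$F_K$ is not an expected utility.} I would argue by contradiction: suppose there were $f:\cs\to\reals$ with $K\nu=\E_{G\sim\nu}f(G)$ for all $\nu\in(\crvs,\wass)$. Evaluating at Diracs gives $f(1)=K\delta_1=1$ and $f(-1)=K\delta_{-1}=0$. But the two-point mixture $\nu\doteq\tfrac12\delta_{-1}+\tfrac12\delta_1$ has $\nu([0,\infty))=\tfrac12\ne 1$, so $K\nu=0$, whereas $\E_{G\sim\nu}f(G)=\tfrac12 f(-1)+\tfrac12 f(1)=\tfrac12$---a contradiction. (For $\cs=\reals^m$ the same three distributions placed on the diagonal $\{(t,\dots,t):t\in\reals\}$ work verbatim; alternatively, this is the classical fact that such a ``threshold'' functional violates the von Neumann--Morgenstern independence axiom, cf.\ \cref{ax:independence} and \citet{juan2020neumann,bowling2023settling}.)

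I do not expect a genuine obstacle. The only point requiring a little care is the measure-theoretic step in the first part---that an expectation of $[0,1]$-valued quantities equals $1$ iff the integrand is almost surely $1$---together with the bookkeeping that the hypothesis of \cref{def:indifference-to-mixtures} is assumed for \emph{all} $(s,\xc)$, which is exactly what lets one upgrade an almost-sure statement about $\eta'$ to the corresponding one about $\eta$; everything else is immediate from $K$ being an indicator.
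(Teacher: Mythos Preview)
Your proof is correct. For indifference to mixtures you take essentially the same route as the paper: from the pointwise hypothesis you get the event inclusion $\{K\eta'(S,\xC)=1\}\subseteq\{K\eta(S,\xC)=1\}$, and then use that $K\df(G(S,\xC))=1$ iff $K\eta(S,\xC)=1$ almost surely (you spell out the averaging argument a bit more carefully than the paper does, which is fine).

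For the non-expected-utility claim the two arguments differ slightly. The paper argues that $K$ violates the von Neumann--Morgenstern independence and continuity axioms and invokes \cref{thm:vnm}; you instead give a direct three-point counterexample ($\delta_{-1}$, $\delta_1$, and their midpoint), which is shorter and avoids quoting the vNM machinery. You also note the vNM route as an alternative, so you are aware of both. Either argument works; yours is the more elementary of the two.
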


The choice of $K$ in \cref{prop:non-expected-utility} does not allow for a reduction to a \stock{}-augmented RL objective via reward design (cf.~\cref{thm:reward-design}), because it is not an expected utility. 
However, since it is indifferent to mixtures, distributional DP can optimize the corresponding $F_K$ in the finite-horizon undiscounted case.

\section{\texorpdfstring{D$\eta$N}{DIN}}
\label{sec:din}

To highlight the practical potential of distributional DP for solving \problem{} problems,
we adapted QR-DQN~\citep[DQN with quantile regression;][]{dabney2018distributional} to optimize expected utilities $\Uf$ and evaluated it empirically.
We call this new method \emph{Deep $\eta$-Networks}, or \emph{D$\eta$N} (pronounced \emph{din}).
In this section introduce D$\eta$N and describe how it incorporates the principles of distributional DP.
We present the empirical study in \cref{sec:gridworld-experiments,sec:atari}.

D$\eta$N uses a neural-network estimator for the \stock{}-augmented return distribution, similar to QR-DQN, with one difference: The \stock{} embedding.
In D$\eta$N, we input the \stock{} to a linear layer\footnote{While this simple design decision proved sufficient for our experiments, we believe that improved scalar embedding should be considered in the future \citep[for example,][]{springenberg2024offline}.} and then add the output of this linear layer to output to the of the agent's vision network.\footnote{In practice, the MDP state $s$ is converted to an image observation before being input to the vision network, and the conversion is domain-dependent.}
The architecture diagrams for DQN~\citep{mnih2015human}, QR-DQN~\citep{dabney2018distributional}) and D$\eta$N are given in \cref{fig:din-architecture}.
\begin{figure}[tb]
    \begin{center}
    \hfill
    \begin{subfigure}[b]{0.3\textwidth}
    \begin{tikzpicture}[
        into/.style={->,shorten <=1pt,>=stealth'},
        nohead/.style={-,shorten <=1pt,>=stealth'},
        mlp/.style={shape=rectangle,draw=black,fill=white,minimum width=3em, minimum height=1.5em},
        rightangle/.style={to path={|- (\tikztotarget)}},
    ]
        \node (output) {$Q_\theta(s, a)$ ($a \in \actions$) };
        \node[mlp,below=1.5em of output,rounded corners] (head) {{\scriptsize MLP}}
        edge[into](output);
        \node[mlp,below=1.5em of head,rounded corners] (relu) {{\scriptsize ReLU}}
        edge[into](head);
        \node[below=1em of relu,fill=white] (add) {};
        \node[mlp,below=1em of add,rounded corners] (s_linear) {{\scriptsize Linear}}
        edge[into](relu);
        \node[mlp,below=1.5em of s_linear,rounded corners] (s_conv) {{\scriptsize Vision}}
        edge[into](s_linear);
        \node[below=1.5em of s_conv] (s) {$s$}
        edge[into](s_conv);
    \end{tikzpicture}
    \end{subfigure}
    \hfill
    \begin{subfigure}[b]{0.3\textwidth}
    \begin{tikzpicture}[
        into/.style={->,shorten <=1pt,>=stealth'},
        nohead/.style={-,shorten <=1pt,>=stealth'},
        mlp/.style={shape=rectangle,draw=black,fill=white,minimum width=3em, minimum height=1.5em},
        rightangle/.style={to path={|- (\tikztotarget)}},
    ]
        \node (output) {$\xi_\theta(s, a)$ ($a \in \actions$) };
        \node[mlp,below=1.5em of output,rounded corners] (head) {{\scriptsize MLP}}
        edge[into](output);
        \node[mlp,below=1.5em of head,rounded corners] (relu) {{\scriptsize ReLU}}
        edge[into](head);
        \node[below=1em of relu,fill=white] (add) {};
        \node[mlp,below=1em of add,rounded corners] (s_linear) {{\scriptsize Linear}}
        edge[into](relu);
        \node[mlp,below=1.5em of s_linear,rounded corners] (s_conv) {{\scriptsize Vision}}
        edge[into](s_linear);
        \node[below=1.5em of s_conv] (s) {$s$}
        edge[into](s_conv);
    \end{tikzpicture}
    \end{subfigure}
    \hfill
    \begin{subfigure}[b]{0.3\textwidth}
    \begin{tikzpicture}[
        into/.style={->,shorten <=1pt,>=stealth'},
        nohead/.style={-,shorten <=1pt,>=stealth'},
        mlp/.style={shape=rectangle,draw=black,fill=white,minimum width=3em, minimum height=1.5em},
        rightangle/.style={to path={|- (\tikztotarget)}},
    ]
        \node (output) {$\xi_\theta(s, \xc, a)$ ($a \in \actions$) };
        \node[mlp,below=1.5em of output,rounded corners] (head) {{\scriptsize MLP}}
        edge[into](output);
        \node[mlp,below=1.5em of head,rounded corners] (relu) {{\scriptsize ReLU}}
        edge[into](head);
        \node[below=1em of relu,fill=white] (add) {\textcolor{red}{$+$}}
        edge[into](relu);
        \node[mlp,below=1em of add,rounded corners] (s_linear) {{\scriptsize Linear}}
        edge[nohead](add);
        \node[mlp,below=1.5em of s_linear,rounded corners] (s_conv) {{\scriptsize Vision}}
        edge[into](s_linear);
        \node[below=1.5em of s_conv] (s) {$s$}
        edge[into](s_conv);
        \node[mlp,right=1em of s_linear,rounded corners,draw=red] (b_linear) {\textcolor{red}{{\scriptsize Linear}}}
        edge[nohead,rightangle,draw=red](add);
        \node[] (b) at (intersection of s.east--s.west and b_linear.north--b_linear.south) {\textcolor{red}{$\xc$}}
        edge[into,draw=red](b_linear);
    \end{tikzpicture}
    \end{subfigure}
    \hfill
    \caption{Architecture diagrams for DQN (left), QR-DQN (center) and D$\eta$N (right). 
    In red, the elements introduced specifically for D$\eta$N.
    The QR-DQN and D$\eta$N networks output return distribution quantiles for each input ($s$ or $(s, \xc)$) and action.
    \label{fig:din-architecture}}
    \end{center}
\end{figure}
The output $\xi_\theta(s, \xc, a)$ of the network is a return distribution parameterized as quantiles (see \cref{app:din:implementation-details} for implementation details).

To explain the remaining differences between QR-DQN and D$\eta$N, it is useful to understand how QR-DQN is adapted from classic DP, and then see how distributional DP is adapted into D$\eta$N.
This adaptation is necessary because DP is designed for a \emph{planning setting} (where the transition and reward dynamics of the MDP are known), but planning methods are rarely tractable or feasible in practice (where state spaces can be very large and the dynamics can only be observed through interaction with the environment).
Practical settings are more closely modeled as prediction and control settings \citep{sutton2018reinforcement} with a function approximator learned through deep learning, that is, the typical setting for deep reinforcement learning.

Given a (state-) value function $V \in (\reals^\states, \|\cdot\|_{\infty})$, the corresponding \emph{action-value function} is defined as
\[
    Q(s, a) = (\classicBellman_{\pi_a}V)(s),
\]
where $\pi_a$ denotes the policy that selects action $a$ with probability one at all states.
It is convenient to denote this transformation with an operator, commonly known as the classic \emph{Bellman lookahead}~\cite[p.~30,][]{szepesvari2022algorithms}:
\[
    (A V)(s, a) \doteq (\classicBellman_{\pi_a}V)(s).
\]
We also let $M : (\reals^{\states \times \actions}, \|\cdot\|_{\infty}) \rightarrow (\reals^\states, \|\cdot\|_{\infty})$ be the \emph{max operator on action-value functions} defined as
\[
    (MQ)(s) \doteq \max_a Q(s, a) = \sup_{p \in \Delta(\actions)} \E Q(s, A).
    \tag{$A \sim p$}
\]
Each iterate $V_n$ in classic value iteration has a corresponding $Q_n \doteq A V_n$, and it holds that $V_{n+1} = M Q_n$.
Thus,
we can equivalently carry out value iteration on action-value functions, via the relation
\begin{equation}
    Q_{n+1} = A M Q_n.
    \label{eq:action-value-iterate}
\end{equation}

Q-learning~\citep{watkins1989learning,sutton2018reinforcement} aims to approximate value iteration through multiple asynchronous stochastic updates per transition.
Given a \emph{transition} $(s_t, a_t, r_{t+1}, s_{t+1})$, the Q-learning update is:
\begin{equation}
    Q_{\theta}(s_t, a_t) \leftarrow (1 - \alpha)Q_{\theta}(s_t, a_t) + \alpha \cdot \left(r_{t+1} + \gamma( M Q_{\theta})(s_{t+1}) \right),
    \label{eq:q-learning-update}
\end{equation}
where $Q_{\theta}$ is the action-value function estimator being learned and $\alpha$ is a learning rate.
Note how the term in parentheses resembles the right-hand side of \Cref{eq:action-value-iterate}.
Roughly speaking, it serves as an estimate of $A M Q_{\theta}$ on the given transition.\footnote{The precise relationship between the two quantities can be understood from the analysis of Q-learning~\citep{dayan1992q}.}

DQN~\citep{mnih2015human} implements the Q-learning update with a deep neural network estimator for $Q_{\theta}$, and in addition, an estimator $Q_{\overline{\theta}}$ with \emph{target parameters} $\overline{\theta}$ on the right-hand side of \Cref{eq:q-learning-update}.
The target parameters slowly track $\theta$, and the DQN value update only modifies $\theta$.
The updates to $\theta$ are performed through regression, similar to fitted Q-iteration~\citep{ernst2005tree} with a Huber loss, and with the \emph{prediction targets}
\[
    r_{t+1} + \gamma(M Q_{\overline{\theta}})(s_{t+1}),
\]
which, as before, are meant to serve as an estimate of $A M Q_{\overline{\theta}}$ on the given transition.

The implementation of D$\eta$N can be thought of as applying the adaptations above to distributional DP with an expected-utility objective $\Uf$.
This is a \stock{}-augmented setting, so note the use of the augmented state $(s, \xc) \in \states \times \cs$, in contrast to the use of the plain states $s \in \states$ for classic DP, Q-learning, DQN and QR-DQN.
The \emph{\stock{}-augmented distributional Bellman lookahead} operator is defined as
\[
    (A\eta)(s, \xc, a) \doteq (T_{\pi_a}\eta)(s, \xc), 
\]
where, as before, $\pi_a$ selects $a$ with probability one at all $(s, \xc) \in \states \times \cs$.
The distributional analogue of action-value functions are action-dependent return distribution functions.
From a return distribution $\eta$, the distributional Bellman lookahead gives the corresponding action-dependent return distribution function $\xi = A \eta$.

The analogue of the max operator $M$ for optimizing $\Uf$ must take $f$ into account, so we denote it by $M_f$ to highlight this dependence, and we define it so that:
\[
    \Uf(M_f\xi)(s, \xc) = \sup_{p \in \Delta(\actions)} \E f(\xc + G(s, \xc, A)).
    \tag{$A \sim p$, $G(s, \xc, a) \sim \xi(s, \xc, a)$}
\]
$M_f$ may not be unique because $\Uf$ may allow multiple policies to realize the supremum on the right-hand side, but any valid $M_f$ is acceptable.
Because the right-hand side above is linear in $\pi$, we can write $M_f$ via a simple maximization over actions:
\[
    \Uf(M_f\xi)(s, \xc) = \max_{a} \E f(\xc + G(s, \xc, a)).
    \tag{$G(s, \xc, a) \sim \xi(s, \xc, a)$}
\]

As in the classic case, we can carry out
distributional value iteration on action-dependent return distribution function iterates:
\[
    \xi_{n+1} = AM_f\xi_n.
\]

D$\eta$N adapts distributional value iteration similarly to how QR-DQN adapts classic value iteration.
QR-DQN replaces DQN's action-value function estimator with a return distribution estimator (see the middle diagram in \cref{fig:din-architecture}), and employs quantile regression to fit it, rather than ordinary scalar regression with a Huber loss.
The return distribution estimator used by D$\eta$N is $\xi_{\theta} : \states \times \cs \times \actions \rightarrow \crvs$
and the distributional prediction target can be written as
\begin{equation}
    \df\left(r_{t+1} + \gamma(M_f\xi_{\overline{\theta}})(s_{t+1}, \xc_{t+1})\right),
    \label{eq:din-quantile-regression-target}
\end{equation}
and QR-DQN is analogous, but without the \stock{} augmentation.
In analogy to DQN, the distributional prediction target in \cref{eq:din-quantile-regression-target} is meant to serve as an estimate of $AM_f\xi_{\overline{\theta}}$ on the observed data.

In QR-DQN, $f$ is the identity function and $\Uf$ is the standard RL objective, so
\[
    \E(M_f\xi_{\overline{\theta}})(s_{t+1}) = \max_{a} \E\left(G(s_{t+1}, a)\right).
    \tag{$G(s, a) \sim \xi_{\overline{\theta}}(s, a)$}
\]
This is an equation over action-values, and it naturally resembles the action choice used in the Q-learning update and DQN's prediction targets.
Similar to how the greedy action for Q-learning and DQN is a maximizing action,
D$\eta$N's greedy action at $(s_t, \xc_t)$ maximizes $\Uf$:
\begin{equation}
    \E f(\xc_t + G(s_t, \xc_t, a_t)) = \max_{a} \E f(\xc_t + G(s_t, \xc_t, a)).
    \label{eq:din-action}
\end{equation}
with $G(s, \xc, a) \sim \xi_{\overline{\theta}}(s, \xc, a)$.

In summary, D$\eta$N is similar to QR-DQN in many ways, with two notable differences: 
The neural network supports \stock{} augmentation (\cref{fig:din-architecture}), and the \stock{} and the utility factor into the action selection, both for the quantile regression targets (\cref{eq:din-quantile-regression-target}) and for the agent's interaction with the environment (\cref{eq:din-action}).

\section{Gridworld Experiments}
\label{sec:gridworld-experiments}

In this section we present experiments to illustrate how D$\eta$N solves different toy instances of \stock{}-augmented \problem{},
corresponding to some of the applications discussed in \cref{sec:applications}.
These experiments are also interesting because they reveal practical challenges of training \stock{}-augmented \problem{} agents.

The environments are $4 \times 4$ gridworlds~\citep{sutton2018reinforcement}.
The agent's actions are up, down, left, right, and no-op.
If the agent takes a no-op action or attempts to go outside the grid, it stays in the same cell.
The starting cell is always the top-left corner of the grid, which we denote by $s_0 = \sinit$, and the starting \stock{} $\xc_0$ is set per experiment.
For a transition $(s, \xc), a, r', (s', \xc')$,
if $s$ is terminal, then $\xc' = \xc$, $s' = s$ and $r' = 0$.
Otherwise, $\xc' = \gamma^{-1}(\xc + r')$ (as in \cref{eq:b-def}).
Some cells are terminating; if the agent enters a terminating cell, then $s'$ is terminal (and absorbing).
Some cells are rewarding: If $s$ is non-terminal and $s'$ is rewarding, then the agent receives $r'$ associated with $s'$.
The reward may be deterministic, or it may be $r' \cdot B$ where $B \sim \mathrm{Bernoulli}\left(\frac{1}{2}\right)$ (independently for each transition).
A cell may be both rewarding and terminal, in which case the agent receives the reward for the cell upon entering it, but not afterwards.
\Cref{fig:gridworld-example} gives an example gridworld with the notation we use.
\begin{figure}[tb]
    \begin{center}
    \begin{tikzpicture}
        \draw[step=1cm,black,thin] (0,0) grid (4,4);
        \draw[draw=black,fill=blue!25] (0, 3) rectangle (1,4);
        \draw[align=center] (0.5,3.5) node{$\sinit$};
        \draw[draw=black,fill=black!25] (3,0) rectangle (4,1);
        \draw[align=center] (3.5,0.5) node{T};
        \draw[draw=black,fill=black!25] (2,1) rectangle (3,2);
        \draw[text width=0.5cm, align=center] (2.5,1.5) node{$3B$ T};
        \draw[draw=black,fill=red!25] (0,0) rectangle (1,1);
        \draw[align=center] (0.5,0.5) node{$1$};
        \draw[draw=black,fill=yellow!25] (3,3) rectangle (4,4);
        \draw[align=center] (3.5,3.5) node{$-2B$};
    \end{tikzpicture}
    \caption{Example gridworld (with cells indexed as matrix entries). 
    The starting cell $\sinit$ is the upper-left corner cell $(1, 1)$. 
    The bottom-left corner (red, $(4, 1)$) has a deterministic reward of $1$.
    The upper-right corner (yellow, $(1, 4)$) has a stochastic reward $-2B$, where $B \sim \mathrm{Bernoulli}\left(\frac{1}{2}\right)$ (sampled independently each time step).
    The bottom-right corner (gray, $(4, 4)$) is terminal.
    The cell $(3, 3)$ (gray) is terminal and has a stochastic reward of $3B$.
    \label{fig:gridworld-example}}
    \end{center}
\end{figure}
At an augmented state $(s, \xc)$,
besides the \stock{} $\xc$, the
input to D$\eta$N's vision network (see \cref{fig:din-architecture}) is a one-channel $4 \times 4$ frame with $1$ in the cell corresponding to $s$ and zero otherwise.

During training, it was essential to randomize the starting $\xc_0$, by sampling values uniformly from a range (implementation details are given in \cref{app:gridworld:implementation-details}).
This was meant to introduce diversity in the training data and ensure that the agent could solve problems for a variety of $\xc_0$.

\subsection{Generating Desired Returns}
\label{sec:gridworld:generating-returns}

Our two first experiments illustrate how D$\eta$N with $\cs = \reals$ and $f(x) = -|x|$ can generate desired outcomes in a deterministic environment (see the application discussed in \cref{sec:generating-returns}).
In this setting the trained D$\eta$N agent displays different behaviors depending on $\xc_0$.

We first consider generating specific returns in the gridworld given in \cref{fig:abs-combining-rewards-gridworld}.
\begin{figure}[tb]
    \begin{center}
    \begin{tikzpicture}
        \draw[step=1cm,black,thin] (0,0) grid (4,4);
        \draw[draw=black,fill=blue!25] (0, 3) rectangle (1,4);
        \draw[align=center] (0.5,3.5) node{$\sinit$};
        \draw[draw=black,fill=black!25] (3,0) rectangle (4,1);
        \draw[align=center] (3.5,0.5) node{T};
        \draw[draw=black,fill=red!25] (0,0) rectangle (1,1);
        \draw[align=center] (0.5,0.5) node{$-1$};
        \draw[draw=black,fill=red!25] (3,3) rectangle (4,4);
        \draw[align=center] (3.5,3.5) node{$2$};
    \end{tikzpicture}
    \caption{Gridworld for the first experiment for generating returns.
    \label{fig:abs-combining-rewards-gridworld}}
    \end{center}
\end{figure}
Because this gridworld is deterministic, we can set $\xc_0$ to different values to generate different desired discounted returns, and the agent must do so by combining the rewards of $2$ on the top-right corner and the rewards of $-1$ on the bottom-left corner.

Because in practice DQN-like agents tend not to cope well with $\gamma = 1$, we set $\gamma = 0.997$ and assessed whether the agent can approximately generate the values of $\xc_0$ provided.
\Cref{tab:abs-combining-rewards-results} shows the agent's average return for different choices of $\xc_0$, with confidence interval bounds in parentheses.
In each independent run, we trained the agent and then measured its average discounted return (over $200$ episodes) for each of the values of $\xc_0$ considered.
We then computed $95\%$-confidence intervals based on the $30$ independent averages using bias-corrected and accelerated bootstrap~\citep{james2013introduction,virtanen2020scipy}.
Each row of \cref{tab:abs-combining-rewards-results} shows the ``desired'' return ($-\xc_0$), the average discounted return obtained by the agent ($\E G(s_0, \xc_0)$) and the ``error'' $\E |\xc_0 + G(s_0, \xc_0)|$, the negative of the objective.
\begin{table}[tb]
    \centering
    \begin{tabular}{ccc}
        \toprule
        Desired discounted return & Discounted return & Error \\
        $-\xc_0$ & $\E G(s_0, \xc_0)$ & $\E |\xc_0 + G(s_0, \xc_0)| $  \\
        \midrule
        $7.00$ & $6.95 \  (6.95, 6.95)$ & $0.05 \  (0.05, 0.05)$ \\
        $5.00$ & $4.98 \  (4.98, 4.98)$ & $0.02 \  (0.02, 0.02)$ \\
        $3.00$ & $3.00 \  (3.00, 3.00)$ & $0.00 \  (0.00, 0.00)$ \\
        $1.00$ & $1.01 \  (1.01, 1.01)$ & $0.01 \  (0.01, 0.01)$ \\
        $-2.00$ & $-1.85 \  (-1.99, -1.59)$ & $0.15 \  (0.01, 0.41)$ \\
        $-4.00$ & $-3.96 \  (-3.96, -3.96)$ & $0.04 \  (0.04, 0.04)$ \\
        $-6.00$ & $-5.92 \  (-5.92, -5.92)$ & $0.08 \  (0.08, 0.08)$ \\
        $-8.00$ & $-7.87 \  (-7.87, -7.87)$ & $0.13 \  (0.13, 0.13)$ \\
        \bottomrule
    \end{tabular}
    \caption{Evaluation results for D$\eta$N optimizing $\Uf$ with $f(x) = -|x|$ in the gridworld from \cref{fig:abs-combining-rewards-gridworld}, and $\gamma = 0.997$. 
    Entries are averages with bootstrap confidence intervals in the format  ``average (low, high)'' where low and high are the interval bounds.
    \label{tab:abs-combining-rewards-results}}
\end{table}
We can see that, as intended, the trained D$\eta$N agent can approximately produce the desired discounted returns.

The mismatch between $-\xc_0$ and average discounted returns is likely due to the function approximation and discounting, which makes the exact $\xc_0$ challenging to realize for arbitrary $\xc_0$.
However, the agent should generate returns equal to $-\xc_0$ when it corresponds to a realizable discounted return. 
To test this hypothesis, we carried out a follow-up evaluation where, for each trained agent, each choice of $\xc_0$, and each evaluation episode generated with discounted return $G(s_0, \xc_0)$, we ran that agent starting from $(s_0, \xc'_0)$ with $\xc'_0 = -G(s_0, \xc_0)$, and measured the discounted return $G(s_0, \xc'_0)$ obtained. 
The observed values for $|\xc'_0 + G(s_0, \xc'_0)|$ were less than $3.02 \cdot 10^{-2}$ uniformly for \emph{all runs} (across all independent runs, $\xc_0$ and episodes).
Thus D$\eta$N can closely reproduce realizable discounted returns, and the mismatches in \cref{tab:abs-combining-rewards-results} are likely related to $\gamma$ and function approximation.

This first experiment is an illustration of the ability of methods like D$\eta$N to control deterministic environments and generate desired outcomes, which is a desirable capability for artificial agents.
Besides combining different rewards, another means to control the returns is to use the discounting. Intuitively, in this case, instead of collecting a unit of reward as soon as possible, the agent may choose to ``wait'' for a few time steps until the discounted reward (from the starting state) achieves the desired value. To illustrate this point, in our second experiment we removed the negative reward from the gridworld in the first experiment, and set $\gamma = \frac{1}{2}$.
The gridworld diagram is given in \cref{fig:abs-using-discount-gridworld}.
\begin{figure}[tb]
    \begin{center}
    \begin{tikzpicture}
        \draw[step=1cm,black,thin] (0,0) grid (4,4);
        \draw[draw=black,fill=blue!25] (0, 3) rectangle (1,4);
        \draw[align=center] (0.5,3.5) node{$\sinit$};
        \draw[draw=black,fill=black!25] (3,0) rectangle (4,1);
        \draw[align=center] (3.5,0.5) node{T};
        \draw[draw=black,fill=red!25] (3,3) rectangle (4,4);
        \draw[align=center] (3.5,3.5) node{$2$};
    \end{tikzpicture}
    \caption{Gridworld for the second experiment.
    \label{fig:abs-using-discount-gridworld}}
    \end{center}
\end{figure}

The results are in \cref{tab:abs-using-discount-results}, and the agent successfully generates the desired discounted returns.
From an observer's point of view, the perceived behavior of the agent  is that it ``correctly times'' the rewarding transitions; in reality, the agent uses the \stock{} to decide whether or not to collect a reward at a particular augmented state.
\begin{table}[tb]
    \centering
    \begin{tabular}{ccc}
        \toprule
        Desired discounted return & Discounted return & Error \\
        $-\xc_0$ & $\E G(s_0, \xc_0)$ & $\E |\xc_0 + G(s_0, \xc_0)| $  \\
        \midrule
        $1.00$ & $1.00 \  (1.00, 1.00)$ & $0.00 \  (0.00, 0.00)$ \\
        $0.50$ & $0.50 \  (0.50, 0.50)$ & $0.00 \  (0.00, 0.00)$ \\
        $0.25$ & $0.25 \  (0.25, 0.25)$ & $0.00 \  (0.00, 0.00)$ \\
        $0.12$ & $0.12 \  (0.12, 0.12)$ & $0.00 \  (0.00, 0.00)$ \\
        $0.06$ & $0.06 \  (0.06, 0.06)$ & $0.00 \  (0.00, 0.00)$ \\
        \bottomrule
    \end{tabular}
    \caption{Evaluation results for D$\eta$N optimizing $\Uf$ with $f(x) = -|x|$ in the gridworld from \cref{fig:abs-using-discount-gridworld} and $\gamma = \frac{1}{2}$. Entries are averages with bootstrap confidence intervals in the format  ``average (low, high)'' where low and high are the interval bounds.
    \label{tab:abs-using-discount-results}}
\end{table}

\subsection{Maximizing the \texorpdfstring{$\tau$}{tau}-CVaR}
\label{sec:gridworld:risk-averse}

We can use D$\eta$N to optimize $\tau$-CVaR of the return, the risk-averse RL setup outlined in \cref{sec:cvar}.
The $1$-CVaR is risk-neutral (\stock{}-augmented RL), and as $\tau$ goes to zero optimizing the $\tau$-CVaR requires more risk aversion.
In this setting, D$\eta$N displays behaviors with different risk profiles in response to changing $\tau$.

The objective functional is $\Uf$ with $f(x) = x_-$, but we do not specify $\xc_0$ directly.
Instead, given a desired $\tau$, we compute $\xc^*_0$ according to \cref{thm:cvar-c-star} and start the agent in the augmented state $(s_0, \xc^*_0)$.
The gridworld for this experiment is given in \cref{fig:risk-averse-trap-door-gridworld}.
\begin{figure}[tb]
    \begin{center}
    \begin{tikzpicture}
        \draw[step=1cm,black,thin] (0,0) grid (4,4);
        \draw[draw=black,fill=blue!25] (0, 3) rectangle (1,4);
        \draw[align=center] (0.5,3.5) node{$\sinit$};
        \draw[draw=black,fill=black!25] (3,3) rectangle (4,4);
        \draw[align=center,text width=0.5cm] (3.5,3.5) node{$3$ T};
        \draw[draw=black,fill=yellow!25] (2, 3) rectangle (3,4);
        \draw[align=center] (2.5,3.5) node{$-2B$};
        \draw[draw=black,fill=yellow!25] (3, 2) rectangle (4,3);
        \draw[align=center] (3.5,2.5) node{$-2B$};
        \draw[draw=black,fill=black!25] (0, 0) rectangle (1,1);
        \draw[align=center,text width=0.5cm] (0.5,0.5) node{$1$ T};
    \end{tikzpicture}
    \caption{Gridworld for the first risk-averse RL experiment.
    \label{fig:risk-averse-trap-door-gridworld}}
    \end{center}
\end{figure}
It has a ``safe'' terminating cell in the bottom-left corner, and a ``high-risk'' terminating cell in the upper-right corner. This cell has high risk because it is surrounded by cells that give $-2$ reward with probability $\frac{1}{2}$ (and zero otherwise).
With $\gamma = 0.997$ the high-risk cell is better in expectation, so an optimal risk-neutral agent ($\tau = 1$) would go there.
However, an optimal risk-averse agent (with respect to the $\tau$-CVaR and for small enough $\tau$) will avoid the high-risk cell and go to the safe cell in the bottom-left corner.

D$\eta$N's performance is consistent with these behaviors, as we see in \cref{fig:risk-averse-trap-door-histograms}, which shows the histograms of the returns obtained by D$\eta$N over several runs.
\begin{figure}
    \centering
    \includegraphics[width=\textwidth]{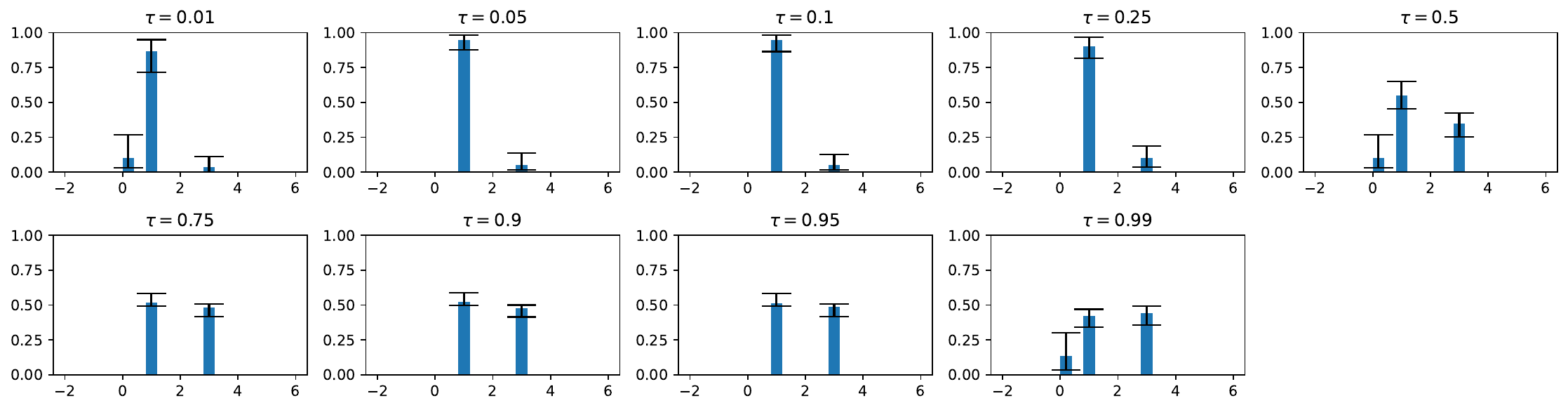}
    \caption{Discounted return histogram for different values of $\tau$, obtained by a trained D$\eta$N agent.
    Error bars correspond to bootstrap confidence intervals.
    \label{fig:risk-averse-trap-door-histograms}}
\end{figure}
As before, we trained the D$\eta$N agent in $30$ independent training runs.
After training the agent in each of the runs, we ran the agent with different values of $\tau$ for $200$ episodes. It is worth emphasizing that we run the \emph{same} trained agent with different values of $\tau$, as discussed in \cref{sec:generating-returns}.
We binned the observed returns and computed their frequencies for each independent run, and we report the average frequencies per bin with $95\%$ bootstrap confidence intervals.
For smaller $\tau$, the agent goes to the safe terminating cell.
As $\tau$ increases, the frequency of returns corresponding to the high-risk cell also increases.

D$\eta$N generated zero returns in some instances, which are suboptimal behaviors regardless of $\tau$.
The selection of $\xc^*_0$ uses grid search and approximate return estimates from $\xi_{\overline{\theta}}$, and estimation errors may cause $\E(\xc^*_0 + \xi_{\overline{\theta}}(s_0, \xc^*_0, a))_-$ to be zero for all actions, even for the down action.
When this is the case, D$\eta$N selects actions uniformly at random (because all actions are greedy).
The \stock{}, which starts often at a negative value, inflates due to the $\gamma^{-1}$ factor and becomes more negative.
Eventually it is so large in magnitude that the future discounted return can never exceed the \stock{}, and the result is degenerate behavior.

\subsection{Maximizing the Optimistic \texorpdfstring{$\tau$}{tau}-CVaR}
\label{sec:gridworld:risk-seeking}

Similar to how we can use D$\eta$N to produce risk-averse behavior, we can also use it to produce risk-seeking behavior, by following the outline in \cref{sec:ocvar}.
In this case we also observe D$\eta$N display behaviors with different risk profiles:
When the agent is risk-seeking, it tries to maximize its best-case expected performance, and as it becomes more risk neutral its performance resembles that of an RL agent maximizing value.

The objective functional is $\Uf$ with $f(x) = x_+$ and as before we do not specify $\xc_0$ directly.
Instead, given $\tau$, we compute $\xc^*_0$ according to \cref{thm:ocvar-c-star}, and run the agent from $(s_0, \xc^*_0)$.
The optimistic $1$-CVaR is risk-neutral, and as $\tau$ goes to zero the optimistic $\tau$-CVaR demands more risk-seeking behavior.
The gridworld for this experiment is given in \cref{fig:risk-seeking-gridworld}.
\begin{figure}[tb]
    \begin{center}
    \begin{tikzpicture}
        \draw[step=1cm,black,thin] (0,0) grid (4,4);
        \draw[draw=black,fill=blue!25] (0, 3) rectangle (1,4);
        \draw[align=center] (0.5,3.5) node{$\sinit$};
        \draw[draw=black,fill=red!25] (1, 3) rectangle (2,4);
        \draw[align=center] (1.5,3.5) node{$1$};
        \draw[draw=black,fill=red!25] (2, 3) rectangle (3,4);
        \draw[align=center] (2.5,3.5) node{$1$};
        \draw[draw=black,fill=black!25,text width=0.5cm] (3, 3) rectangle (4,4);
        \draw[align=center] (3.5,3.5) node{$1$ T};
        \draw[draw=black,fill=red!25] (0, 2) rectangle (1,3);
        \draw[align=center] (0.5,2.5) node{$\frac{3}{2}B$};
        \draw[draw=black,fill=red!25] (1, 2) rectangle (2,3);
        \draw[align=center] (1.5,2.5) node{$\frac{3}{2}B$};
        \draw[draw=black,fill=black!25,text width=0.5cm] (2, 2) rectangle (3,3);
        \draw[align=center] (2.5,2.5) node{$1$ T};
        \draw[draw=black,fill=red!25] (0, 1) rectangle (1,2);
        \draw[align=center] (0.5,1.5) node{$\frac{3}{2}B$};
        \draw[draw=black,fill=black!25,text width=0.5cm] (1, 1) rectangle (2,2);
        \draw[align=center] (1.5,1.5) node{$1$ T};
        \draw[draw=black,fill=black!25,text width=0.5cm] (0, 0) rectangle (1,1);
        \draw[align=center,text width=0.5cm] (0.5,0.5) node{$\frac{3}{2}B$ T};
    \end{tikzpicture}
    \caption{Gridworld for the risk-seeking RL experiment. The only allowed actions are down and right.
    \label{fig:risk-seeking-gridworld}}
    \end{center}
\end{figure}
The only allowed actions are down and right, and $\gamma = 0.997$. In this environment, the higher the risk, the higher the best-case return, but the lower the expected return.
A risk-neutral agent will go right twice and then either right or down, terminating with a discounted return of $1 + \gamma + \gamma^2$.
These are the low-risk paths.
In any given cell and whatever the \stock{}, moving to a cell with Bernoulli rewards increases the risk relative to choosing a cell with deterministic reward.
Going down three times is the path with highest risk, 
with expected discounted return $\frac{3}{4}(1 + \gamma + \gamma^2)$, but twice that amount with probability $\frac{1}{8}$ (the best case).

D$\eta$N's performance is consistent with the risk profile given by $\tau$, as we see in \cref{fig:risk-seeking-histograms}, which shows the histograms of the returns obtained by D$\eta$N over several runs.
\begin{figure}
    \centering
    \includegraphics[width=\textwidth]{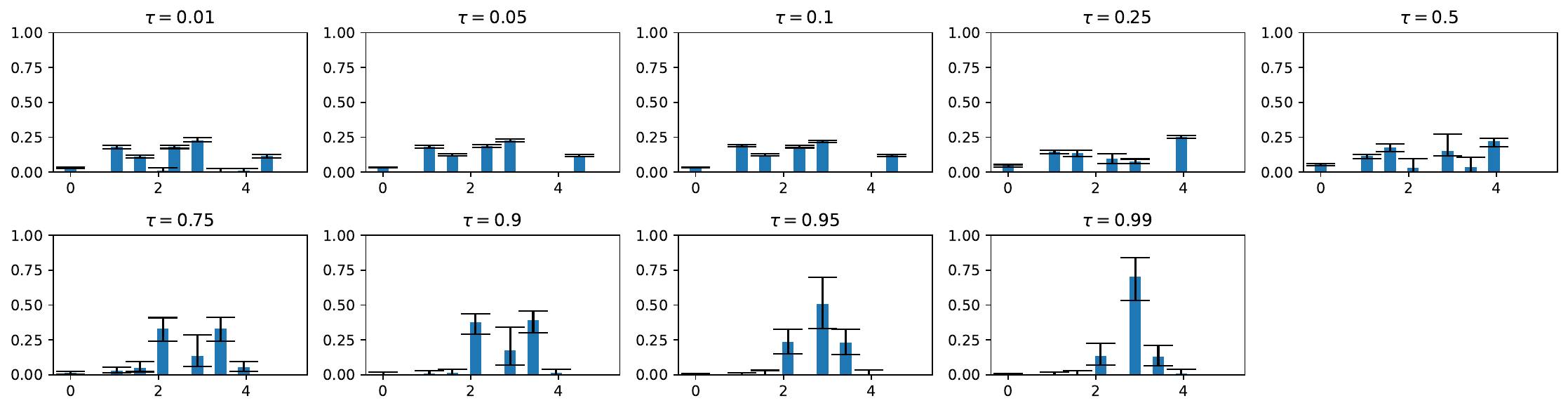}
    \caption{Discounted return histogram for different values of $\tau$, obtained by a trained D$\eta$N agent.
    Error bars correspond to bootstrap confidence intervals.
    \label{fig:risk-seeking-histograms}}
\end{figure}
We trained the D$\eta$N agent and computed histograms in the same way as in \cref{fig:risk-averse-trap-door-histograms}.

For $\tau \leq 0.1$ we see that the agent is maximally risk-seeking, as the support of the distribution includes the maximum possible return (approximately $4.5$) with probability around $\frac{1}{8}$.
As $\tau$ increases, the agent becomes less risk-seeking, and eventually ($\tau = 0.25$) the agent stops going for the riskiest path and visits cells with deterministic rewards more often.
At $\tau \approx 1$ the agent is nearly risk-neutral, with a mean discounted return of $2.6 \pm 0.485$.
The optimal risk-neutral expected discounted return is approximately $2.99$, and we believe the mismatch is due to approximation errors on the choice of the starting $\xc^*_0$.

To highlight the agent's ability to adapt to different stochastic outcomes, notice how the frequency of zero returns is quite low, even for the highly risk-seeking behavior ($\tau = 0.01$).
This may seem counter-intuitive if we consider that the highest-risk path has the same probability of a best discounted return ($4.48$ with probability $\frac{1}{8}$) as of a worst discounted return (zero).
Yet D$\eta$N with $\tau = 0.01$ observes a discounted return of $4.48$ with probability around $\frac{1}{8}$, and worst-case returns with probability around $0.03 \pm 0.02$.
This happens because D$\eta$N adapts its behaviors to the observed returns, through \stock{} augmentation.
If we look back at \cref{fig:risk-seeking-gridworld}, we can see that there is always a path such that, if the agent observes a zero reward at one of the non-terminal cells with Bernoulli rewards, it can go right and avoid a return of zero.
For example, for a low enough $\tau$, the agent's starting \stock{} will be $\xc^*_0 \leq -4$.
If the agent goes down on its first action and observes a reward of zero, it is no longer able to generate a discounted return above $4$.
Because $f(x) = x_+$, all actions will have expected utility zero (modulo estimation errors), and because D$\eta$N breaks ties by uniform sampling, the agent will follow a uniformly random policy.
So the probability of observing a zero discounted return is $\Prob(R_1 = 0, A_1 = \mathrm{down}, R_2 = 0, A_2 = \mathrm{down}, R_3 = 0)$. 
Since there are only two actions, this probability is $\left(\frac{1}{2}\right)^5 = 0.03125$, which is consistent with our data.

\subsection{Trading Off Minimizing Constraint Violation and Maximizing Expected Return}
\label{sec:gridworld:vector-rewards}

In this section, we consider the application outlined at the end of \cref{sec:constraint-satisfaction}: To obtain a certain amount of reward in as few steps as possible.
This application requires D$\eta$N to optimize an objective functional with vector-valued rewards.

In this setting, we have $\cs = \reals^2$.
The first coordinate of the reward is always $-1$, and corresponds to the ``time-to-termination'' penalty to be minimized.
The values observed in the second coordinate of the reward vector are given in \cref{fig:regularized-mdp-gridworld}.
\begin{figure}[tb]
    \begin{center}
    \begin{tikzpicture}
        \draw[step=1cm,black,thin] (0,0) grid (4,4);
        \draw[draw=black,fill=blue!25] (0, 3) rectangle (1,4);
        \draw[align=center] (0.5,3.5) node{$\sinit$};
        \draw[draw=black,fill=red!25] (0, 0) rectangle (1,1);
        \draw[align=center] (0.5,0.5) node{$1$};
        \draw[draw=black,fill=red!25] (1, 3) rectangle (2,4);
        \draw[align=center] (1.5,3.5) node{$-2$};
        \draw[draw=black,fill=red!25] (2, 3) rectangle (3,4);
        \draw[align=center] (2.5,3.5) node{$-2$};
        \draw[draw=black,fill=black!25] (3, 3) rectangle (4,4);
        \draw[align=center,text width=0.5cm] (3.5,3.5) node{$-2$ T};
        \draw[draw=black,fill=red!25] (2, 2) rectangle (3,3);
        \draw[align=center] (2.5,2.5) node{$-2$};
        \draw[draw=black,fill=red!25] (3, 2) rectangle (4,3);
        \draw[align=center] (3.5,2.5) node{$-2$};
        \draw[draw=black,fill=black!25] (3, 1) rectangle (4,2);
        \draw[align=center] (3.5,1.5) node{T};
    \end{tikzpicture}
    \caption{Gridworld for the experiment with trading off minimizing constraint violation and maximizing expected return.
    \label{fig:regularized-mdp-gridworld}}
    \end{center}
\end{figure}
The objective functional is $\Uf$ with $f(x) = -x_1 + \alpha \cdot (x_2)_-$.
We set $\alpha = 50$ to encourage prioritizing the term on the second coordinate of the reward vector,
so the semantics of the objective functional is to get to termination as fast as possible, keeping $G(s, \xc)_2 \geq -(\xc_0)_2$, but allowing for small violations to be traded off for faster termination.
For this experiment, we estimate the marginal distributions (per coordinate) of the vector-valued returns.
This simplifies the prediction in D$\eta$N, and is sufficient for the expected utility being optimized.\footnote{When $f(x)$ does not decouple as $\sum_i f_i(x_i)$ for some choice of $f_i$ (for example, $f(x) = -\|x\|_2$), the distribution of the quantile vectors is needed. For those cases, one may consider building on results for multivariate distributional RL~\citep{zhang2021distributional,wiltzer2024foundations}.}

An optimal policy with respect to $\Uf$ will display different behaviors depending on the choice of $(\xc_0)_2$.
If $-(\xc_0)_2 \leq -2(\gamma^2 + \gamma^1 + \gamma^0) \approx -5.98$, the policy will go straight from $\sinit$ to terminate at the top-right corner.
This is the shortest possible path to termination, but it is ``costly'' in terms of the cell rewards.
With $-2(\gamma^2 + \gamma^1 + \gamma^0) < -(\xc_0)_2 \leq 0$, the policy goes to the ``lower'' terminating cell ($(3, 4)$) in $5$ steps and with $G(s_0, \xc_0)_2 = 0$.
For $-(\xc_0)_2 > 0$, the policy must stay at the cell in the lower-left corner for multiple steps before going to the ``lower'' terminating cell ($(3, 4)$).
The number of steps it stays will depend on $\alpha$ and $-(\xc_0)_2$: As $\alpha \rightarrow \infty$ the policy will stay longer to make $G(s_0, \xc_0)_2$ closer to $-(\xc_0)_2$ (either larger or slightly smaller).
For example, it would take the optimal policy at most $8$ steps to reach termination with $-(\xc_0)_2 = 1$, $9$ steps with $-(\xc_0)_2 = 2$ and $10$ steps with $-(\xc_0)_2 = 3$.

The results for D$\eta$N are in \cref{tab:regularized-mdp-results}.
\begin{table}[tb]
    \centering
    \begin{tabular}{cccc}
        \toprule
        Lower-bound & Discounted Return & Penalty term & Episode duration \\
        $-(\xc_0)_2$ & $\E G(s_0, \xc_0)_2$ & $\E \left((\xc_0)_2 + G(s_0, \xc_0)_2\right)_- $  \\
        \midrule
        $3.00$ & $3.62 \  (3.39, 3.88)$ & $-0.05 \  (-0.18, -0.01)$ & $10.83 \  (10.23, 11.70)$ \\
        $2.00$ & $2.47 \  (2.14, 2.77)$ & $-0.14 \  (-0.41, -0.04)$ & $11.00 \  (10.00, 12.20)$ \\
        $1.00$ & $1.41 \  (1.08, 1.77)$ & $-0.20 \  (-0.37, -0.10)$ & $11.57 \  (10.20, 12.97)$ \\
        \midrule
        $0.00$ & $0.20 \  (0.07, 0.55)$ & $0.00 \  (0.00, 0.00)$ & $5.87 \  (5.37, 7.20)$ \\
        $-1.00$ & $0.06 \  (0.00, 0.39)$ & $0.00 \  (0.00, 0.00)$ & $5.37 \  (5.00, 6.83)$ \\
        $-2.00$ & $0.03 \  (0.00, 0.16)$ & $0.00 \  (0.00, 0.00)$ & $5.37 \  (5.00, 6.83)$ \\
        \midrule
        $-6.00$ & $-0.40 \  (-1.20, 0.00)$ & $0.00 \  (0.00, 0.00)$ & $4.93 \  (4.67, 5.00)$ \\
        $-7.00$ & $-4.79 \  (-5.58, -3.79)$ & $0.00 \  (0.00, 0.00)$ & $3.40 \  (3.13, 3.73)$ \\
        \bottomrule
    \end{tabular}
    \caption{Performance of D$\eta$N trading off minimizing constraint violation and maximizing expected return.
    The weight of the second term is $\alpha = 50$.
    Entries are averages with bootstrap confidence intervals in the format ``average (low, high)'' where low and high are the interval bounds.
    \label{tab:regularized-mdp-results}}
\end{table}
D$\eta$N did not produce optimal behaviors, but aligned with them.
In the first three settings (upper rows of the table), visiting the bottom-left corner was required by $\Uf$.
The agent did that (albeit overstaying) and then went to the lower terminating cell. 
In the second three settings (middle rows of the table), visiting the bottom-left corner was not required by $\Uf$; the agent went to the lower terminating cell.
In the last two settings (bottom rows of the table), $\Uf$ allowed the agent to suffer the $-2$ rewards on the path to the upper terminating cell, in exchange for a shorter time to termination.
An optimal agent would go in a straight line to the right and terminate in three steps, but D$\eta$N behaved suboptimally most of the time.
For $\xc_0 = 7$ (last row), we see that the agent often took the path to the upper terminating cell, however, for $\xc_0 = 6$ (second to last line) the agent rarely did so, often going for the lower terminating cell. 

Why did D$\eta$N overshoot the second coordinate of the discounted return on the first three settings, and why did it rarely go for the upper terminating cell when $\xc_0 = 6$?
We hypothesize that the cause was inaccuracy in the return distribution estimates.
A small underestimation of $\E \left((\xc_t)_2 + G(s_t, \xc_t)_2\right)_-$ will be amplified by $\alpha = 50$ and may cause the agent to become ``conservative'' in optimizing for this term of the objective, relative to term on the first coordinate of the discounted return.
To test this hypothesis, 
we ran a second version of our experiment with $\alpha = 500$.
The choice of $\alpha \in \{ 50, 500 \}$ should have little impact on an optimal agent's behavior with the values of $\xc_0$ we considered, however, larger $\alpha$ should make an agent with imperfect return estimates seem more conservative.
The results are in \cref{tab:regularized-mdp-results-larger-alpha}.
\begin{table}[tb]
    \centering
    \begin{tabular}{cccc}
        \toprule
        Lower-bound & Discounted Return & Penalty term & Episode duration \\
        $-(\xc_0)_2$ & $\E G(s_0, \xc_0)_2$ & $\E \left((\xc_0)_2 + G(s_0, \xc_0)_2\right)_- $  \\
        \midrule
        $3.00$ & $5.83 \  (5.09, 7.06)$ & $-0.00 \  (-0.00, 0.00)$ & $12.97 \  (12.17, 13.83)$ \\
        $2.00$ & $4.75 \  (3.89, 5.92)$ & $-0.04 \  (-0.20, 0.00)$ & $12.47 \  (11.43, 13.53)$ \\
        $1.00$ & $3.38 \  (2.73, 4.40)$ & $-0.00 \  (-0.01, 0.00)$ & $11.83 \  (10.73, 13.03)$ \\
        $0.00$ & $1.73 \  (1.24, 2.44)$ & $0.00 \  (0.00, 0.00)$ & $12.07 \  (10.77, 13.30)$ \\
        \midrule
        $-1.00$ & $0.36 \  (0.13, 0.84)$ & $0.00 \  (0.00, 0.00)$ & $6.80 \  (5.80, 8.47)$ \\
        $-2.00$ & $0.19 \  (-0.07, 0.63)$ & $0.00 \  (0.00, 0.00)$ & $6.77 \  (5.67, 8.47)$ \\
        $-6.00$ & $-0.27 \  (-1.14, -0.01)$ & $0.00 \  (0.00, 0.00)$ & $6.50 \  (5.43, 8.30)$ \\
        $-7.00$ & $-0.74 \  (-1.74, -0.07)$ & $0.00 \  (0.00, 0.00)$ & $5.97 \  (5.03, 7.67)$ \\
        \bottomrule
    \end{tabular}
    \caption{Performance of D$\eta$N trading off minimizing constraint violation and maximizing expected return.
    The weight of the second term is $\alpha = 500$.
    Entries are averages with bootstrap confidence intervals in the format  ``average (low, high)'' where low and high are the interval bounds.
    \label{tab:regularized-mdp-results-larger-alpha}}
\end{table}
Consistent with our hypothesis, we observe that D$\eta$N with $\alpha = 500$ appears more conservative, with longer episodes than with $\alpha = 50$, especially for $\xc_0 = 0$ and $\xc_0 = 7$.
For $\xc_0 = 0$, the agent did not take the zero-reward path to the lower terminating cell, but first visited the rewarding cell in the bottom-left corner, and for $\xc_0 = 7$ the agent did not go to the upper terminating cell.

\section{Atari Experiment}
\label{sec:atari}

Atari 2600~\citep{bellemare2013arcade} is a popular RL benchmark where several deep RL agents have been evaluated, including DQN~\citep{mnih2015human} and QR-DQN~\citep{dabney2018distributional}.
It provides us with a more challenging setting for deep RL agents than gridworld instances, since agents must overcome multiple learning challenges---to name a few: perception, exploration and control over longer timescales.

Atari 2600 is very much an RL benchmark, with games framed as RL problems in which the goal is to maximize the score. However, we can use the game of Pong to create an interesting setting for generating returns---an Atari analogue of the gridworld experiments in \cref{sec:gridworld:generating-returns}.
In Pong, the agent plays against an opponent controlled by the environment.
The goal of the game is for each player to get the ball to cross the edge of the opponent's side of the screen.
Each time this happens, the player gets a point.
Each player controls a paddle that can be used for hitting back the ball, preventing the opponent from scoring a point and sending the ball toward the opponent in a straight trajectory.

In a typical RL setting, we train agents to maximize the score (the difference between the player's and the opponent's scores), but in this section we are interested in using D$\eta$N to achieve different scores, which entails both scoring against the opponent, and being scored upon.
We trained D$\eta$N and evaluated the trained agent with different values of $\xc_0$, corresponding to different desired discounted returns, $\gamma = 0.997$, and reduced episode duration from thirty minutes to twenty-five seconds (implementation details are given in \cref{app:atari:implementation-details}).
This dramatic reduction is related to the interaction between $\gamma$ and the objective functional.
The goal is to control the distribution of the discounted return from the start of the episode.
A reward at time step $t + 1$ offsets this discounted return by $\gamma^t R_{t+1}$.
The rewards in Pong are $\pm 1$ and the agent acts at $15\mathrm{Hz}$, so after $25\mathrm{s}$ an observed reward only offsets the discounted return by approximately $\pm 0.32$.
As the episode advances, the effect of the agent's actions on the value of the objective decreases, and at a minute this effect has reduced to $\pm 0.07$.
The agent's behavior after that is unlikely to make any meaningful difference to the return and collected data may be less useful for training.
For these experiments, we have sidestepped the issue by reducing the episode duration, but the interaction between the timescale and $\gamma$ for \stock{}-augmented \problem{} is an important practical consideration that deserves a systematic study in future work.

\Cref{tab:atari-results} shows the performance of D$\eta$N.
Similar to the setting in \cref{tab:abs-combining-rewards-results}, we trained the agent and, for evaluation, conditioned its policy on different values of $\xc_0$ corresponding to the negative of the desired discounted return.
We measured the agent's average discounted return ($\E G(s_0, \xc_0)$) and the ``error'' $\E |\xc_0 + G(s_0, \xc_0)|$.
The confidence intervals correspond to $95\%$-confidence bootstrap intervals over $12$ independent repetitions of training and evaluation (differently from the $30$ independent runs in the gridworld setting).
\begin{table}[tb]
    \centering
    \begin{tabular}{ccc}
        \toprule
        Desired discounted return & Discounted return & Error \\
        $-\xc_0$ & $\E G(s_0, \xc_0)$ & $\E |\xc_0 + G(s_0, \xc_0)| $  \\
        \midrule
        $4.00$ & $2.26 \  (2.22, 2.28)$ & $1.74 \  (1.72, 1.78)$ \\
        $2.00$ & $1.90 \  (1.88, 1.92)$ & $0.15 \  (0.13, 0.18)$ \\
        $1.00$ & $0.88 \  (0.82, 0.95)$ & $0.23 \  (0.21, 0.27)$ \\
        $0.00$ & $-0.23 \  (-0.33, -0.15)$ & $0.29 \  (0.22, 0.37)$ \\
        $-1.00$ & $-1.03 \  (-1.09, -0.95)$ & $0.19 \  (0.16, 0.21)$ \\
        $-2.00$ & $-2.06 \  (-2.11, -1.96)$ & $0.18 \  (0.16, 0.22)$ \\
        $-4.00$ & $-3.97 \  (-4.01, -3.94)$ & $0.14 \  (0.11, 0.16)$ \\
        \bottomrule
    \end{tabular}
    \caption{Evaluation results generating discounted returns with D$\eta$N in Pong and $\gamma = 0.997$.
    Entries are averages with bootstrap confidence intervals in the format  ``average (low, high)'' where low and high are the interval bounds.
    \label{tab:atari-results}}
\end{table}
D$\eta$N approximately and reliably generated the desired discounted returns for various choices of $\xc_0$, with the exception of discounted returns to approximate $-\xc_0 = 4$ (first row).
We believe that the agent's training regime explains the successes, as well as the failure for $-\xc_0 = 4$.

We used D$\eta$N's policy for data collection during training, which required us to select $\xc_0$ during training.
At the beginning of each episode, we sampled a value for $\xc_0$ uniformly at random from $[-9, 9)$.
This was the strategy used in the gridworld experiments (albeit with a different interval) and it was meant to increase data diversity.
Because the episodes in Atari were much longer than in the gridworld experiment ($375$ versus $16$ steps), this strategy likely yielded little diversity in the \stock{}s observed later in the episode.
Diversity is important because we need to train the \stock{}-augmented agent to optimize the objective for a variety of augmented states.
Similar to how certain RL problems may pose exploration challenges in the state space $\states$, \stock{}-augmented problems may suffer from exploration challenges in the augmented-state space ($\states \times \cs$).

Fortunately, we can reintroduce diversity across \stock{}s after generating data, based on the following observation:
When the state dynamics are independent of the \stock{}, from a single transition $(S_t, \xC_t), A_t, R_{t+1}, (S_{t+1}, \xC_{t+1})$, it is possible to generate counterfactual transitions with the correct distribution for the whole spectrum of \stock{}s $\xc \in \cs$, that is, the following transitions:
\[
    \left\{ (S_t, \xc), A_t, R_{t+1}, (S_{t+1}, \gamma^{-1}(\xc + R_{t+1})) : \xc \in \cs \right\}.
\]
We refer to this change on $\xC_t$ and $\xC_{t+1}$ as \emph{\stock{} editing}.
D$\eta$N updates parameters using a minibatch of trajectories with subsequent transitions.
In this setting, before performing each update, we edited the \stock{}s in the minibatch as follows:
We sampled a value of $\xC'_0$ uniformly at random from $[-9, 9)$ for the first step of each trajectory, and edited the whole trajectory to create new transitions $(S_{t+k}, \xC'_k), A_{t+k}, R_{t+k+1}, (S_{t+k+1}, \xC'_{k+1})$
with, for $k \geq 0$,
\[
    \xC'_{k+1} = \gamma^{-k}\left(\xC'_0 + \sum_{i=0}^k \gamma^i R_{t+i+1}\right).
\]
\Stock{} editing was essential for our results, and we were unable to reproduce the outcomes in \cref{tab:atari-results} without it.

We believe that the failure for $-\xc_0 = 4$ happened because there was not enough data for learning to generate discounted returns of approximately $4$.
As $-\xc_0$ increases, the behaviors generated for the diverse \stock{}s through \stock{} editing are likely not as useful for solving the problem at $\xc_0$.
In other words, we conjecture that the data was diverse but imbalanced, and we pose this issue of data balance as a question for future work.

\section{Conclusion}
\label{sec:conclusion}

While standard RL has been successfully employed to solve various practical problems, its formulation as maximizing expected return limits its use in the design of intelligent agents.
The problem of \problem{} aims to address this limitation by posing the optimization of a statistical functional of the return distribution.
While this is a more general problem, the additional flexibility cannot be exploited by DP, as distributional DP can only solve the instances that classic DP can solve~\citep{marthe2024beyond}.
We showed that this limitation can be addressed by augmenting the state of the MDP with \emph{\stock{}} (\cref{eq:b-def}), a statistic originally introduced by \citet{bauerle2011markov} for optimizing the $\tau$-CVaR with classic DP, and recurrent within the risk-sensitive RL literature~\citep{lim2022distributional,moghimi2025beyond}, but not beyond.
It is through the combination of distributional RL, \stock{} augmentation and optimizing statistical functionals of the return distribution that distributional DP can tackle a broader class of \problem{} problems than what is possible when any of the components are missing.

We introduced distributional value iteration and distributional policy iteration as principled distributional DP methods for \stock{}-augmented \problem{}, that is, optimizing various objective functionals $F_K$ of the return distribution.
These methods enjoy performance bounds that resemble the classic DP bounds, and they can be applied to various RL-like problems that have been the subject of interest in previous work, including instances of risk-sensitive RL~\citep{bauerle2011markov,chow2014algorithms,noorani2022risk,moghimi2025beyond}, homeostatic regulation~\citep{keramati2011reinforcement} and constraint satisfaction.

Distributional DP offers a clear path for developing practical \problem{} methods based on existing deep RL agents, as exemplified by our empirical results.
We adapted QR-DQN~\citep{dabney2018distributional} to incorporate the principles of distributional DP into a novel agent called D$\eta$N (Deep $\eta$-Networks, pronounced \emph{din}), and illustrated that it works as intended in different simple scenarios for \problem{} in gridworld and Atari.

We believe there are a number of interesting directions for future work in \stock{}-augmented \problem{}.
Besides open theoretical questions, there are various practical challenges to be studied systematically on the path to developing strong practical methods for \problem{}.
Because \problem{} formalizes a wide range of problems, these solution methods can have broad applicability in practice.

\subsection{Open Theoretical Questions}

\emph{Does an optimal return distribution exist when $K$ is indifferent to $\gamma$, indifferent to mixtures and Lipschitz?}
If this is the case, the proofs of \cref{thm:value-iteration,thm:policy-iteration} can be simplified and the bounds can be tightened to depend on the optimal return distribution, similar to how the classic DP error bounds depend on the optimal value function.

\emph{What is needed for DP to optimize an objective functional in the infinite-horizon discounted case?}
We conjecture some form of uniform continuity may be necessary (see \cref{app:lipschitzness}, where we show a failure case with $\Uf$ and $f(x) = \Ind(x > 0)$).
We also conjecture that Lipschitz continuity is needed for uniform bounds to be possible.

\emph{Can we develop distributional DP methods to solve constrained problems?}
We have come close to constrained problems in \cref{sec:constraint-satisfaction}, and it would be interesting to develop a theory of \stock{}-augmented constrained \problem{}, somewhat like constrained MDPs~\citep{altman1999constrained} are related to RL.

\subsection{Addressing \texorpdfstring{D$\eta$N}{DIN}'s Limitations}
D$\eta$N is a proof-of-concept \stock{}-augmented agent that we used for illustrating how the principles underlying distributional value/policy iteration can be incorporated into a deep reinforcement learning agent.
Below, we list some limitations of the method that we believe should be addressed on the path to developing full-fledged \stock{}-augmented agents for optimizing return distributions in challenging environments.

\emph{How to embed the \stock{}?}
We have employed a simple embedding strategy for the \stock{} in D$\eta$N's network, which relies on inputting the \stock{} to an MLP and adding out result to the output of the agent's vision network (see \cref{fig:din-architecture}).
This was sufficient for our experiments, however improved scalar embedding should be considered in the future \citep[for example,][]{springenberg2024offline}, as it may improve the agent's data efficiency and performance, especially in more challenging environments.

\emph{How to go beyond expected utilities?}
The fact that D$\eta$N can only optimize expected utilities is also a limitation worth addressing.
D$\eta$N relies on the existence of greedy actions, which holds for expected utilities, but not for other objective functionals.
That is, other \stock{}-augmented \problem{} problems may only admit optimal stochastic policies.
Perhaps an approach based on policy gradient~\citep{sutton2018reinforcement,espeholt2018impala} or policy optimization~\citep{schulman2017proximal,abdolmaleki2018maximum} may be therefore more suited for going beyond expected utilities.

\emph{How to estimate distributions of vector-valued returns?}
D$\eta$N maintains estimates of the marginal distributions (per coordinate) of the vector-valued returns (see \cref{app:din:implementation-details}).
This was enough for our experiments, but our simplification highlights an important consideration:
We want practical methods that can estimate the distributions of vector-valued returns.
This capability is needed, for example, to tackle the formulation of homeostatic regulation proposed by \citet{keramati2011reinforcement}.
\citet{zhang2021distributional,wiltzer2024foundations} have studied learning distributional estimates with vector-valued returns, so their results can inform the design of distributional estimators for vector-valued returns.

\subsection{Practical Challenges}
Our experimental results revealed a number of interesting challenges in \stock{}-augmented \problem{} that we believe should be addressed in order to develop effective agents for practical settings.

In our experiments we mitigated these issues with simple ideas, and we were helped by the simplicity of the experimental settings, but stronger solutions may be required in more challenging environments.
We typically need to apply interventions to the \stock{} during training, in order to generate diverse data (\cref{sec:gridworld-experiments,sec:atari}).
The interaction of objective functional, $\xc_0$ and approximate return distribution estimates may result in degenerate behavior (\cref{sec:gridworld:risk-averse,sec:gridworld:risk-seeking}) and this can be worsened when $\xc_0$ is selected through a procedure like grid-search to optimize an approximate objective (as in the case of $\tau$-CVaR, both risk-averse and risk-seeking).
Depending on the objective functional, near-optimal decision making may require substantially accurate return estimates (\cref{sec:gridworld:vector-rewards}).
Over long timescales, the discount factor may limit the agent's ability to influence the returns (\cref{sec:atari}).
In more complex environments, we need to ensure the training data is not only diverse across the \stock{} spectrum, but also balanced, lest the learned policies underperform for certain choices of $\xc_0$.

\acks{We thank Csaba Szepesv\'ari for reviewing our draft of this work.
We thank Kalesha Bullard, No\'emi \'Eltet\H{o}, Andr\'as Gy\"orgy, Lucia Cipolina Kun, Dale Schuurmans, and Yunhao Tang for helpful discussions.
We thank Yang Peng for identifying issues on a previous version of this paper and proposing fixes, along with technical feedback and discussions.
We also thank the anonymous JMLR Reviewers, for their technical review and the thoughtful suggestions for improvement, and Martha White for her work as Action Editor to this work.
Our experimental infrastructure was built using Python 3, Flax~\citep{flax2020github}, Haiku~\citep{haiku2020github}, JAX~\citep{jax2018github}, and NumPy~\citep{harris2020array}.
We have used  Matplotlib~\citep{hunter2007matplotlib}, NumPy~\citep{harris2020array}, pandas~\citep{mckinney2010data,reback2020pandas} and SciPy~\citep{virtanen2020scipy} for analyzing and plotting our experimental data.}

\appendix

\section{Additional Theoretical Results}
\label{app:additional-theoretical-results}

\subsection{Complete Spaces}

\begin{lemma}
\label{lem:complete-spaces}
The spaces $(\crvs, \wass)$ and $(\crvs^{\states \times \cs}, \overline{\wass})$ are complete.
\end{lemma}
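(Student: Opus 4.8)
The plan is to handle the two spaces separately, reducing the second to the first. For $(\crvs, \wass)$, I would appeal to the classical fact that the $1$-Wasserstein space over a Polish space is complete. Concretely, $\crvs = \Delta(\reals^m)$ is the set of all Borel probability measures on $\reals^m$, and $(\crvs, \wass) = \{\nu \in \crvs : \wass(\nu) < \infty\}$ is exactly the set of measures with finite first moment, i.e.\ the space usually denoted $\mathcal{P}_1(\reals^m)$; this is complete (indeed Polish) by, e.g., Theorem~6.18 of \citet{villani2009optimal}, the reference the paper already uses for the definition of $\wass$, applied with the Polish metric space $(\reals^m, \|\cdot\|_1)$. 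If a self-contained argument were preferred, I would instead show that a $\wass$-Cauchy sequence $(\nu_n)$ is $\wass$-bounded, hence has uniformly integrable first moments, hence is tight; extract a weakly convergent subsequence via Prokhorov's theorem; verify the weak limit has finite first moment; and use the equivalence of $\wass$-convergence with weak convergence plus convergence of first moments to upgrade subsequential convergence to convergence of the whole (Cauchy) sequence.

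For $(\crvs^{\states \times \cs}, \overline{\wass})$, the plan is the standard argument that a uniform metric over a complete space is complete, using the first part as input. First I would record that $\wass$ is finite-valued on $(\crvs, \wass)$: by the triangle inequality, $\wass(\nu, \nu') \le \wass(\nu) + \wass(\nu') < \infty$. Given a $\overline{\wass}$-Cauchy sequence $(\eta_n)$, for each fixed $(s, \xc)$ the bound $\wass(\eta_n(s,\xc), \eta_m(s,\xc)) \le \overline{\wass}(\eta_n, \eta_m)$ shows $(\eta_n(s,\xc))_n$ is $\wass$-Cauchy, so by the first part it converges to some $\eta_\infty(s,\xc) \in (\crvs, \wass)$; this defines a candidate limit $\eta_\infty : \states \times \cs \to \crvs$. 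To see the convergence is uniform, fix $\varepsilon > 0$, pick $N$ with $\overline{\wass}(\eta_n, \eta_m) \le \varepsilon$ for all $n, m \ge N$, then for fixed $(s, \xc)$ and $n \ge N$ let $m \to \infty$ and use continuity of $\wass$ in its arguments to conclude $\wass(\eta_n(s,\xc), \eta_\infty(s,\xc)) \le \varepsilon$; taking the supremum over $(s, \xc)$ gives $\overline{\wass}(\eta_n, \eta_\infty) \le \varepsilon$ for all $n \ge N$. Finally I would check membership: for any $n \ge N$, $\overline{\wass}(\eta_\infty) \le \overline{\wass}(\eta_\infty, \eta_n) + \overline{\wass}(\eta_n) \le \varepsilon + \overline{\wass}(\eta_n) < \infty$, so $\eta_\infty \in (\crvs^{\states \times \cs}, \overline{\wass})$.

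The main obstacle is concentrated in the first part. If one simply cites completeness of $\mathcal{P}_1$, the rest is routine, with the only delicate points being the finiteness of $\wass$ on $(\crvs, \wass)$ (needed for the uniform-metric machinery to even make sense) and the limit passage $m \to \infty$ inside $\wass(\eta_n(s,\xc), \cdot)$. If instead a from-scratch proof of the first part is wanted, the genuine content is the tightness-via-uniform-integrability step together with the equivalence between $\wass$-convergence and weak-convergence-plus-first-moment-convergence; both are standard, but stating them carefully is more work than citing \citet{villani2009optimal}, which I would therefore do.
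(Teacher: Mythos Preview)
Your proposal is correct and follows essentially the same approach as the paper: cite Villani's Theorem~6.18 for completeness of $(\crvs,\wass)$, then use the standard uniform-metric argument (pointwise limits exist by completeness of the fibre, uniform convergence via the Cauchy condition plus a limit passage, membership via the triangle inequality) for $(\crvs^{\states\times\cs},\overline{\wass})$. Your writeup is, if anything, slightly more careful than the paper's in the limit-passage step.
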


\begin{proof}
We know that $(\crvs, \wass)$ is complete \citep[Theorem 6.18, p.~116;][]{villani2009optimal}, so it remains to show that $(\crvs^{\states \times \cs}, \overline{\wass})$ is complete.
Let $\eta_1, \eta_2, \ldots$ be a Cauchy sequence in $(\crvs^{\states \times \cs}, \overline{\wass})$.
For each $(s, \xc)$, the sequence $\eta_1(s, \xc), \eta_2(s, \xc), \ldots$ is Cauchy in $(\crvs, \wass)$ and by completeness it has a limit
$\eta_\infty(s, \xc)$.

We claim that $\eta_\infty$ is the limit of $\eta_1, \eta_2, \ldots$ in $(\crvs^{\states \times \cs}, \overline{\wass})$.
Given $\varepsilon > 0$, we can take $n$ such that $\sup_{n' \geq n} \overline{\wass}(\eta_{n'}, \eta_n) < \varepsilon$,
which means 
\begin{align*}
    \varepsilon 
    &> \sup_{n' \geq n} \overline{\wass}(\eta_{n'}, \eta_n) \\
    &= \sup_{n' \geq n}\sup_{s, \xc} \wass(\eta_{n'}(s, \xc), \eta_n(s, \xc)) \\
    &\geq \sup_{n' \geq n}\sup_{s, \xc} \wass(\eta_{n'}(s, \xc), \eta_\infty(s, \xc)) \\
    &=  \sup_{n' \geq n}\overline{\wass}(\eta_{n'}, \eta_\infty),
\end{align*}
and since this holds for all $\varepsilon > 0$ we have that
$\limsup_{n \rightarrow \infty} \overline{\wass}(\eta_n, \eta_\infty) = 0$.
Combining the above with the fact that $\overline{\wass}$ is a norm gives
\[
    0 \leq \liminf_{n \rightarrow \infty} \overline{\wass}(\eta_n, \eta_\infty) \leq \limsup_{n \rightarrow \infty} \overline{\wass}(\eta_n, \eta_\infty) = 0,
\]
so, indeed, $\eta_\infty$ is the limit of $\eta_1, \eta_2, \ldots$.

It remains to show that $\eta_\infty \in (\crvs^{\states \times \cs}, \overline{\wass})$, that is, that $\overline{\wass}(\eta_\infty) < \infty$.
Fix $\varepsilon > 0$ and $n$ such that $\overline{\wass}(\eta_n, \eta_\infty) < \varepsilon$.
We have $\overline{\wass}(\eta_n) < \infty$ since $\eta_n \in (\crvs^{\states \times \cs}, \overline{\wass})$, and, by the triangle inequality,
$\overline{\wass}(\eta_n, \eta_\infty) \geq \overline{\wass}(\eta_\infty) - \overline{\wass}(\eta_n)$,
so $\overline{\wass}(\eta_\infty) \leq \overline{\wass}(\eta_n) + \varepsilon < \infty$.
\end{proof}

\section{Analysis of Distributional Dynamic Programming}
\label{app:ddp}

\subsection{History-based policies}
\label{app:history-based-policies}

We start by reducing the \stock{}-augmented \problem{} problem to an optimization over \emph{Markov policies}.
\begin{proposition}
\label{prop:markov-policies-are-sufficient}
If \cref{ass:bounded-first-moment-rewards} holds
and
$K : (\crvs, \wass) \rightarrow \reals$ is indifferent to mixtures and indifferent to $\gamma$,
and if: i) the MDP has finite horizon; or ii) $\gamma < 1$ and $K$ is Lipschitz,
then
\[
    \sup_{\pi \in \historybased} F_K \eta^\pi = \sup_{\pi \in \markov} F_K \eta^\pi = \sup_{\pi \in \stationary} F_K \eta^\pi.
\]
\end{proposition}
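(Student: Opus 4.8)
The plan is to establish the two nontrivial inequalities $\sup_{\pi \in \historybased} F_K\eta^\pi \le \sup_{\pi \in \markov} F_K\eta^\pi$ and $\sup_{\pi \in \markov} F_K\eta^\pi \le \sup_{\pi \in \stationary} F_K\eta^\pi$ (both pointwise on $\states \times \cs$); the reverse inequalities are immediate from $\stationary \subseteq \markov \subseteq \historybased$. For the first I will show that for every history-based $\pi$ and every fixed initial augmented state $(s_0,\xc_0)$ there is a \emph{Markov} policy $\pi'$ with $\eta^{\pi'}(s_0,\xc_0) = \eta^\pi(s_0,\xc_0)$, so that $(F_K\eta^\pi)(s_0,\xc_0)$ is already attained within $\markov$. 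For the second I will show by induction on the number of ``nonstationary stages'' of an eventually-stationary Markov policy that every Markov policy is dominated pointwise by a stationary one, first reducing the infinite-horizon discounted case to the eventually-stationary case by truncation.

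\emph{History $\Rightarrow$ Markov.} Fix $(s_0,\xc_0)$ and $\pi \in \historybased$, and let $\pi'$ be the Markov policy whose $t$-th component $\pi'_t(\,\cdot \mid s,\xc)$ is a regular version of the conditional law of $A_t$ given $(S_t,\xC_t)=(s,\xc)$ under $\pi$ started at $(s_0,\xc_0)$ (arbitrary on null sets). A routine induction on $t$ — exactly the classical reduction to Markov policies — shows that the joint law of $\big((S_t,\xC_t),A_t\big)$ agrees under $\pi$ and $\pi'$ for all $t$: the base case holds because the time-$0$ history is just $(s_0,\xc_0)$, and the inductive step uses that $R_{t+1},S_{t+1}$ are produced from $(S_t,\xC_t,A_t)$ by the same (augmented) kernel and $\xC_{t+1}=\gamma^{-1}(\xC_t+R_{t+1})$, together with the defining property of $\pi'_{t+1}$. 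In particular $\df(\xC_t)$ is the same under $\pi$ and $\pi'$ for every $t$. Unrolling \eqref{eq:balance-non-stationary-invariant} gives
\[
    \gamma^t \xC_t = \xc_0 + \sum_{i=0}^{t-1}\gamma^i R_{i+1},
\]
and by \cref{ass:bounded-first-moment-rewards} together with $\gamma<1$ (or a finite horizon) the partial sums converge absolutely almost surely, so $\gamma^t\xC_t \to \xc_0+G^\pi(s_0,\xc_0)$ a.s., hence in distribution, and likewise for $\pi'$. Since $\df(\gamma^t\xC_t)$ coincides for the two policies at every $t$ and weak limits are unique, $\df(\xc_0+G^\pi(s_0,\xc_0)) = \df(\xc_0+G^{\pi'}(s_0,\xc_0))$, whence $(F_K\eta^\pi)(s_0,\xc_0) = (F_K\eta^{\pi'})(s_0,\xc_0)$. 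This step uses neither indifference nor Lipschitz continuity, since it equates distributions exactly.

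\emph{Markov $\Rightarrow$ stationary.} Call a Markov policy \emph{$k$-stage} if it has the form $\pi_0,\dots,\pi_{k-1},\sigma_0,\sigma_0,\dots$ with $\sigma_0\in\stationary$. In a finite-horizon MDP of horizon $n$ every Markov policy has the same return distribution function as some $n$-stage policy; and in the discounted case, given $\varepsilon>0$, coupling a Markov $\pi$ with the $N$-stage policy agreeing with it for the first $N$ steps and using that $K$ is $L$-Lipschitz, that $T_\sigma$ is a contraction, and the uniform bound $\sup_\pi \overline{\wass}(\eta^\pi)<\infty$ yields $F_K\eta^\pi \le F_K\eta^{(N\text{-stage})} + \varepsilon$ pointwise for $N$ large. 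So it suffices to show by induction on $k$ that every $k$-stage policy is dominated pointwise by a stationary policy; the case $k=0$ is trivial. For the step, write a $k$-stage policy as $\pi_0$ followed by a $(k-1)$-stage policy $\rho$, so $\eta^{(k\text{-stage})} = T_{\pi_0}\eta^{\rho}$; by the inductive hypothesis pick $\sigma_1\in\stationary$ with $F_K\eta^{\sigma_1}\ge F_K\eta^{\rho}$ pointwise, and let $\sigma^*\in\stationary$ be greedy for $\eta^{\sigma_1}$, i.e.\ $F_K T_{\sigma^*}\eta^{\sigma_1}=\sup_{\sigma\in\stationary}F_K T_\sigma\eta^{\sigma_1}$. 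Then $F_K\eta^{(k\text{-stage})} = F_K T_{\pi_0}\eta^{\rho} \le F_K T_{\pi_0}\eta^{\sigma_1} \le F_K T_{\sigma^*}\eta^{\sigma_1}$ by \cref{lem:monotonicity} and greediness. Since $T_{\sigma_1}\eta^{\sigma_1}=\eta^{\sigma_1}$ we get $F_K T_{\sigma^*}\eta^{\sigma_1}\ge F_K\eta^{\sigma_1}$, so by \cref{lem:monotonicity} the sequence $k\mapsto F_K T_{\sigma^*}^k\eta^{\sigma_1}$ is nondecreasing; in the finite-horizon case it equals $F_K\eta^{\sigma^*}$ from $k=n$ on, and in the discounted case $T_{\sigma^*}^k\eta^{\sigma_1}\to\eta^{\sigma^*}$ with $F_K$ Lipschitz, so in either case $F_K\eta^{\sigma^*}\ge F_K T_{\sigma^*}\eta^{\sigma_1}$ (equivalently, apply \cref{lem:policy-improvement} with $\eta=T_{\sigma^*}\eta^{\sigma_1}$). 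Chaining gives $F_K\eta^{\sigma^*}\ge F_K\eta^{(k\text{-stage})}$ pointwise. In the discounted case, taking $\varepsilon\downarrow 0$ and then the supremum over Markov $\pi$ completes Part 2.

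I expect the bookkeeping in Part 2 to be the main obstacle: the naive ``replace the first action by a greedy stationary one'' reduction is circular, and what rescues it is the monotone-chain/\cref{lem:policy-improvement} argument showing $F_K\eta^{\sigma^*}\ge F_K T_{\sigma^*}\eta^{\sigma_1}$, which crucially needs monotonicity (hence indifference to mixtures and to $\gamma$) and, in the discounted case, Lipschitz continuity. Two secondary technical points: the greedy selection $\sigma^*$ must be measurable when $\states$ is an arbitrary measurable space and $\cs=\reals^m$, which I would handle with a standard measurable-selection theorem leaning on finiteness of $\actions$ (and, to avoid any pathology with discontinuous $K$ in the finite-horizon undiscounted case, one can run the whole argument with $\delta$-greedy $\sigma^*$ and let $\delta\downarrow 0$); and the truncation bound in the discounted case must be uniform over states, which is exactly where \cref{ass:bounded-first-moment-rewards} and the uniform boundedness of all return distributions are used.
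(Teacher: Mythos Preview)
Your proof is correct, but it is organised differently from the paper's. The paper proves $\sup_{\pi\in\historybased}F_K\eta^\pi\le\sup_{\pi\in\stationary}F_K\eta^\pi$ in a single induction on the classes $\Pi_{\mathrm{H},n}$ of history-based policies that are \emph{stationary after time $n$}: given $\rho\in\Pi_{\mathrm{H},n+1}$ with stationary tail $\pi$, it replaces the tail from time $n$ onward by a greedy $\pibar$ (with respect to $\eta^\pi$), producing $\rhobar\in\Pi_{\mathrm{H},n}$, and shows $F_K\eta^{\rhobar}\ge F_K\eta^\rho$ by invoking indifference to mixtures and indifference to $\gamma$ directly on the time-$n$ decomposition of $\xC_0+G_0$. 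The discounted case is closed by a Lipschitz truncation bound, exactly as you do. Your proof instead factors through $\markov$: Part 1 uses the classical ``same marginals'' reduction to Markov policies, which equates return distributions at the fixed starting state and therefore needs no hypothesis on $K$; Part 2 then runs the eventually-stationary induction purely within Markov policies, calling \cref{lem:monotonicity} and (the strengthening of) \cref{lem:policy-improvement} as black boxes. Your route buys modularity and sidesteps a subtlety in the paper's step---the paper's comparison at time $n$ implicitly needs indifference to mixtures over \emph{histories} $h_n$ rather than over augmented states $(s,\xc)$, which is morally the same property but not literally \cref{def:indifference-to-mixtures}. The paper's route buys directness: one induction instead of two stages, and no need to invoke the classical Markov-reduction theorem. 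Your flagged measurable-selection point for $\sigma^*$ is real but equally present (and equally glossed over) in the paper's use of greedy policies throughout; the $\delta$-greedy workaround you sketch is adequate.
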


\begin{proof}
We write $F = F_K$.
First note that
\[
    \sup_{\pi \in \stationary} F \eta^\pi \leq \sup_{\pi \in \markov} F \eta^\pi \leq \sup_{\pi \in \historybased} F \eta^\pi,
\]
so it suffices to show that
\[
    \sup_{\pi \in \historybased} F \eta^\pi \leq \sup_{\pi \in \stationary} F \eta^\pi.
\]

We will first consider history-based policies that are eventually stationary Markov.
Recall the definition of a history from \cref{sec:preliminaries}:
\[
  h_t \doteq (s_0, \xc_0), a_0, r_1, (s_1, \xc_1), \ldots, r_t, (s_t, \xc_t)
\]
with $h_0 \doteq (s_0, \xc_0)$.
Let $\Pi_{\mathrm{H}, n}$ be the set of all history-based policies $\rho \in \historybased$ for which there exists a stationary $\pi \in \stationary$ such that, for all $n' \geq n$ and every history $h_{n'}$, we have $\rho(h_{n'}) = \pi(s_{n'}, \xc_{n'})$.
In particular, $\Pi_{\mathrm{H}, 0} = \stationary$.

Assume, by means of induction, that for some $n \in \naturals_0$ we have
\[
    \sup_{\rho \in \Pi_{\mathrm{H}, n}} F \eta^\rho \leq \sup_{\pi \in \stationary} F \eta^\pi.
\]
Given a $\rho \in \Pi_{\mathrm{H}, n+1}$ and its corresponding stationary policy $\pi \in \stationary$, let $\pibar$ satisfy
\[
    F T_{\pibar}\eta^\pi = \sup_{\pi' \in \stationary} FT_{\pi'}\eta^\pi.
\]
By \cref{lem:policy-improvement}, we have $F \eta^{\pibar} \geq F \eta^\pi$.
Now, define the policy $\rho'$ by
\[
    \rhobar(h_t) \doteq 
    \begin{cases}
        \rho(h_t) & \mbox{$t < n$}, \\
        \pibar(s_t, \xc_t) & \mbox{$t \geq n$}.
    \end{cases}
\]
We have that $\rhobar \in \Pi_{\mathrm{H}, n}$, and we now show that $F\eta^{\rhobar} \geq F\eta^\rho$.

Define, for all $(s, \xc) \in \states \times \cs$, $G^\pi(s, \xc) \sim \eta^\pi(s, \xc)$ (and independent from all other random variables) and $G^{\pibar}(s, \xc) \sim \eta^{\pibar}(s, \xc)$ (and independent from all other random variables).
Fix $(S_0, \xC_0) = (s_0, \xc_0)$ (with probability one) and let
$H_n$ be the (random) history and $G^\rho_0$ the return generated by following $\rho$ from $(S_0, \xC_0)$.
Similarly, define the respective $\overline{H}_n$ and $G^{\rhobar}_0$ corresponding to $\rhobar$.

\Cref{eq:balance-non-stationary-invariant} and the definitions above give
\[
    \xC_0 + G^\rho_0 \distequiv \gamma^{-n}( \xC_n + R_{n+1} + \gamma G^\pi(S_{n+1}, \xC_{n+1}) )
\]
and
\[
    \xC_0 + G^{\rhobar}_0 \distequiv \gamma^{-n}( \xC_n + G^{\pibar}(S_n, \xC_n) ).
\]
The choice of $\pibar$ and the fact that $K$ is indifferent to mixtures means that
\[
    K(\xC_n + G^{\pibar}(S_n, \xC_n)) \geq K(\xC_n + R_{n+1} + \gamma G^\pi(S_{n+1}, \xC_{n+1}))
\]
with probability one.
$K$ is also indifferent to $\gamma$, so
\[
    K(\gamma^{-k}(\xC_n + G^{\pibar}(S_n, \xC_n))) \geq K(\gamma^{-k}(\xC_n + R_{n+1} + \gamma G^\pi(S_{n+1}, \xC_{n+1}))),
\]
which implies that $K(\xC_0 + G^{\rhobar}_0) \geq K(\xC_0 + G^\rho_0)$ 
and this holds for every choice of $(s_0, \xc_0)$, so $F\eta^{\rhobar} \geq F\eta^\rho$.
Thus, by induction, we have that for all $n \in \naturals_0$
\begin{equation}
    \sup_{\rho \in \Pi_{\mathrm{H}, n}} F \eta^\rho \leq \sup_{\pi \in \stationary} F \eta^\pi.
    \label{eq:n-step-history-policy-approximation}
\end{equation}

\Cref{eq:n-step-history-policy-approximation} is sufficient for the finite-horizon case, since we can take $n$ large enough so that
\[
    \sup_{\rho \in \Pi_{\mathrm{H}, n}} F \eta^\rho = \sup_{\pi \in \historybased} F \eta^\pi.
\]

For the infinite-horizon discounted case, we proceed as follows.
Fix $n \in \naturals_0$, and fix $\pi \in \historybased$ and $\rho \in \Pi_{\mathrm{H}, n}$ such that $\pi$ and $\rho$ are identical for all histories of size strictly less than $n$.
For $t \in \naturals_0$, let $G^\pi_t(s, \xc)$ denote the return from time step $t$ onward generated by following $\pi$ from starting augmented state $(s, \xc)$.
Note that the arguments $(s, \xc)$ are the initial state of the history, not the augmented state at time step $t$.
Similarly, define the corresponding $G^{\rho}_t(s, \xc)$ for $\rho$.
Because $F$ is Lipschitz, we have, for all $(s, \xc) \in \states \times \cs$
\[
    |F\eta^\pi(s, \xc) - F\eta^\rho(s, \xc)| \leq \gamma^n \wass\left(\df(G^\pi_n(s, \xc)), \df(G^{\rho}_n(s, \xc))\right).
\]
By \cref{ass:bounded-first-moment-rewards}, there exists a constant $\kappa$ such that
\[
    \sup_{s \in \states, \xc \in \cs}\wass\left(\df(G^\pi_n(s, \xc)), \df(G^{\rho}_n(s, \xc))\right) \leq \kappa
\]
uniformly for all $\pi$, $\rho$ and $n$.
Thus, for all $n \in \naturals_0$,
\begin{equation}
    \sup_{\pi \in \historybased}\inf_{\rho \in \Pi_{\mathrm{H}, n}}\sup_{s \in \states, \xc \in \cs}|F\eta^\pi(s, \xc) - F\eta^\rho(s, \xc)| \leq \gamma^n \kappa,
    \label{eq:n-step-history-policy-bound}
\end{equation}
and
\begin{align*}
    \sup_{\pi \in \historybased} F\eta^\pi 
    &\leq \sup_{\rho \in \Pi_{\mathrm{H}, n+1}} F\eta^\rho + \gamma^n \kappa 
    \tag{\cref{eq:n-step-history-policy-bound}} \\
    &= \sup_{\pi \in \stationary} F\eta^\pi + \gamma^n \kappa 
    \tag{\cref{eq:n-step-history-policy-approximation}}
\end{align*}
Taking the limit of $n \rightarrow \infty$ gives the result.
\end{proof}

\Cref{prop:markov-policies-are-sufficient} implies that under the conditions on $F_K$,
for every history-based policy $\pi \in \historybased$ we can find a Markov policy $\pibar \in \markov$ that is no worse than $\pi$ simultaneously for all $(s, \xc)$.
In this sense, the quantity $\sup_{\pi \in \markov} F_K \eta^\pi$
is well-defined, even though it is a supremum of a vector-valued quantity.

\subsection{Distributional Policy Evaluation}

For our analysis, we also employ existing distributional RL theory for policy evaluation:
\begin{theorem}[from Proposition 4.15, p.~88, \citealp{bellemare2023distributional}]
\label{thm:T-pi-contractive}
For every stationary policy $\pi \in \stationary$,
the distributional Bellman operator
$T_\pi$ is a non-expansion in the supremum $1$-Wasserstein distance.
If $\gamma < 1$, then $T_\pi$ is a $\gamma$-contraction in the supremum $1$-Wasserstein distance.
\end{theorem}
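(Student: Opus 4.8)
The plan is to prove the bound $\overline{\wass}(T_\pi \eta, T_\pi \eta') \le \gamma\,\overline{\wass}(\eta, \eta')$ directly from the definition of $T_\pi$ via a coupling argument, mirroring the scalar case of \citet{bellemare2023distributional} but carrying the $\ell_1$-Wasserstein distance through the vector-valued rewards. Fix $\eta, \eta' \in (\crvs^{\states \times \cs}, \overline{\wass})$ and an augmented state $(s, \xc)$. If $s$ is terminal then $(T_\pi \eta)(s,\xc) = (T_\pi \eta')(s,\xc) = \delta_0$, so the per-state distance vanishes; assume $s$ is non-terminal. By definition, $(T_\pi \eta)(s,\xc)$ is the law of $R_{t+1} + \gamma G(S_{t+1}, \xC_{t+1})$ where $A_t \sim \pi(s,\xc)$, $(R_{t+1}, S_{t+1}, \xC_{t+1})$ follows the augmented transition kernel, and $G(s',\xc') \sim \eta(s',\xc')$ independently of everything else; $(T_\pi \eta')(s,\xc)$ is defined analogously with $G'(s',\xc') \sim \eta'(s',\xc')$.

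First I would build a single probability space carrying $A_t$, $(R_{t+1}, S_{t+1}, \xC_{t+1})$ and, conditionally on $(S_{t+1}, \xC_{t+1}) = (s',\xc')$, a pair $(G, G')$ with marginals $\eta(s',\xc')$ and $\eta'(s',\xc')$ that is within $\varepsilon$ of an optimal coupling, i.e.\ $\E[\|G - G'\|_1 \mid S_{t+1} = s', \xC_{t+1} = \xc'] \le \wass(\eta(s',\xc'), \eta'(s',\xc')) + \varepsilon$. Then $\big(R_{t+1} + \gamma G,\ R_{t+1} + \gamma G'\big)$ is a coupling of $(T_\pi \eta)(s,\xc)$ and $(T_\pi \eta')(s,\xc)$, so
\begin{align*}
    \wass\big((T_\pi \eta)(s,\xc),\, (T_\pi \eta')(s,\xc)\big)
    &\le \E\big\| (R_{t+1} + \gamma G) - (R_{t+1} + \gamma G') \big\|_1 \\
    &= \gamma\, \E\|G - G'\|_1 \\
    &= \gamma\, \E\big[\E[\|G - G'\|_1 \mid S_{t+1}, \xC_{t+1}]\big] \\
    &\le \gamma\big(\overline{\wass}(\eta, \eta') + \varepsilon\big).
\end{align*}
The reward term cancels exactly because it is shared between the two coupled variables, which is what yields the clean factor $\gamma$ with no additive overhead. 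Letting $\varepsilon \downarrow 0$ and taking the supremum over $(s,\xc)$ gives $\overline{\wass}(T_\pi \eta, T_\pi \eta') \le \gamma\,\overline{\wass}(\eta, \eta')$: a non-expansion when $\gamma = 1$, and a $\gamma$-contraction when $\gamma < 1$. In the latter case, combined with completeness of $(\crvs^{\states \times \cs}, \overline{\wass})$ (\cref{lem:complete-spaces}), Banach's fixed point theorem yields that $\eta^\pi$ is the unique fixed point and $T_\pi^n\eta \to \eta^\pi$ for every starting $\eta$.

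The main obstacle is measurability of the coupling: one must select, for each $(s',\xc')$, an approximately optimal coupling of $\eta(s',\xc')$ and $\eta'(s',\xc')$ depending measurably on $(s',\xc')$, so that the joint law above is well defined and the tower property applies. For countable $\states$ this is immediate; in general it requires a measurable selection argument for the Kantorovich problem, or --- more simply --- one fixes a measurable $\varepsilon$-suboptimal selection (which exists since the Wasserstein cost is measurable in $(s',\xc')$) and only proves the bound up to $\varepsilon$, which suffices as $\varepsilon$ is arbitrary. A secondary point is checking that $T_\pi$ maps $(\crvs^{\states \times \cs}, \overline{\wass})$ into itself, i.e.\ $\overline{\wass}(T_\pi\eta) < \infty$ whenever $\overline{\wass}(\eta) < \infty$; coupling $(T_\pi\eta)(s,\xc)$ with $\delta_0$ through $R_{t+1} + \gamma G$ versus $0$ gives $\overline{\wass}(T_\pi\eta) \le \sup_{s,a}\E[\|R_{t+1}\|_1 \mid S_t=s,A_t=a] + \gamma\,\overline{\wass}(\eta)$, which is finite by \cref{ass:bounded-first-moment-rewards}.
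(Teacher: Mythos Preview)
Your proposal is correct and follows essentially the same coupling argument that underlies Proposition~4.15 of \citet{bellemare2023distributional}, which the paper simply cites rather than reproduces. The one minor difference is in the handling of the vector-valued case $\cs = \reals^m$ with $m > 1$: the paper obtains it by applying the scalar result to each coordinate separately, whereas you work directly with the $\ell_1$-Wasserstein distance on $\reals^m$; your route is arguably cleaner since the shared-reward cancellation and the $\|\cdot\|_1$ norm make the scalar argument go through verbatim without any coordinate decomposition.
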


\begin{proof}
The proof is as presented by \citet{bellemare2023distributional}, with the caveat that to obtain the result for $\cs = \reals^m$ with $m > 1$ we apply Proposition 4.15 to each coordinate of the vector-valued rewards individually.
\end{proof}

The following lemma uses \cref{thm:T-pi-contractive} to give us a policy evaluation result for the infinite-horizon case.
\begin{lemma}[Distributional Policy Evaluation]
\label{lem:distributional-policy-evaluation}
If $\gamma < 1$ or the MDP has finite horizon,
for any $\eta, \eta' \in (\crvs^{\states \times \cs}, \overline{\wass})$ 
and $\pi \in \markov$ we have
\[
    \lim_{n \rightarrow \infty} \overline{\wass}(T_{\pi_1} \cdots T_{\pi_n}\eta, T_{\pi_1} \cdots T_{\pi_n}\eta') = 0.
\]
\end{lemma}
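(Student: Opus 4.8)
The plan is to split on the hypothesis ``$\gamma < 1$ or finite horizon'' and dispatch the two regimes with the tools already in hand.

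For the discounted case, I would invoke \cref{thm:T-pi-contractive}: each $T_{\pi_i}$ is a $\gamma$-contraction in $\overline{\wass}$, so the composition $T_{\pi_1}\cdots T_{\pi_n}$ is $\gamma^n$-Lipschitz in $\overline{\wass}$. Since $\eta,\eta'\in(\crvs^{\states\times\cs},\overline{\wass})$, the triangle inequality gives $\overline{\wass}(\eta,\eta')\le\overline{\wass}(\eta)+\overline{\wass}(\eta')<\infty$, and hence
\[
    \overline{\wass}(T_{\pi_1}\cdots T_{\pi_n}\eta,\,T_{\pi_1}\cdots T_{\pi_n}\eta')\le\gamma^n\,\overline{\wass}(\eta,\eta')\xrightarrow[n\to\infty]{}0 .
\]
(That the intermediate iterates stay in $(\crvs^{\states\times\cs},\overline{\wass})$ is exactly the stated mapping property of $T_\pi$.)

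For the finite-horizon case, let $h$ be the horizon. I claim the distance is in fact identically $0$ for every $n\ge h+1$. Fix such an $n$ and set $\tilde\eta\doteq T_{\pi_{h+1}}\cdots T_{\pi_n}\eta$; since this applies at least one distributional Bellman operator, and $(T_\sigma\mu)(s,\xc)=\delta_0$ whenever $s$ is terminal, we get $\tilde\eta(s,\xc)=\delta_0$ for every terminal $s$ and every $\xc$. Unrolling $T_{\pi_1}\cdots T_{\pi_h}\tilde\eta$ at an arbitrary $(s_0,\xc_0)$ expresses it as the law of $\sum_{t=1}^h\gamma^{t-1}R_t+\gamma^h G(S_h,\xC_h)$, where $R_t,S_t,\xC_t$ are generated by following $\pi_1,\ldots,\pi_h$ from $(s_0,\xc_0)$ and $G(s,\xc)\sim\tilde\eta(s,\xc)$. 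Because the MDP has horizon $h$, $S_h$ is terminal with probability one, so $G(S_h,\xC_h)=0$ almost surely, and the law reduces to that of the truncated return $\sum_{t=1}^h\gamma^{t-1}R_t$, which does not involve $\eta$. Running the same computation with $\eta'$ yields the same distribution at every $(s_0,\xc_0)$, so $T_{\pi_1}\cdots T_{\pi_n}\eta=T_{\pi_1}\cdots T_{\pi_n}\eta'$ and $\overline{\wass}$ between them is $0$; the limit follows.

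The one subtlety worth flagging is that $n\ge h$ is \emph{not} enough in the finite-horizon case: although $S_h$ is already a.s. terminal, $\eta$ itself is arbitrary at terminal states, so the bootstrap term $\gamma^h G(S_h,\xC_h)$ need not vanish. Applying one extra operator first replaces the bootstrap by something that is $\delta_0$ on all terminal states, which is precisely what makes that term drop out. Apart from this index bookkeeping, both cases are routine given \cref{thm:T-pi-contractive} and the definition of $T_\pi$, so I do not expect any real obstacle.
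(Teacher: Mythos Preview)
Your proposal is correct and follows essentially the same approach as the paper: split on the two hypotheses, use the $\gamma$-contraction of each $T_{\pi_i}$ (via \cref{thm:T-pi-contractive}) together with $\overline{\wass}(\eta,\eta')<\infty$ in the discounted case, and use eventual equality of the iterates in the finite-horizon case. Your flagged subtlety is well taken---the paper asserts equality already for $n\ge h$, but as you observe, an arbitrary $\eta$ need not be $\delta_0$ at terminal states, so one extra application is needed before the bootstrap term vanishes; this off-by-one is immaterial for the limit, so both arguments are fine, but yours is the more careful of the two.
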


\begin{proof}
\emph{Discounted Case}.
In this case, $\gamma < 1$ and $T_\pi$ is a $\gamma$-contraction by \cref{thm:T-pi-contractive}.
Letting $\eta_n \doteq T_{\pi_1} \cdots T_{\pi_n}\eta$ and $\eta'_n \doteq T_{\pi_1} \cdots T_{\pi_n}\eta'$ for $n \geq 1$,
for every $n \geq 1$, we have
\[
    \overline{\wass}(\eta_n, \eta'_n) \leq \gamma^n \overline{\wass}(\eta, \eta'),
\]
and 
\[
    \overline{\wass}(\eta, \eta') \leq \overline{\wass}(\eta) + \overline{\wass}(\eta') < \infty.
\]
so $\limsup_{n \rightarrow \infty} \overline{\wass}(\eta_n, \eta'_n) = 0$, which implies the result.

\emph{Finite-horizon Case.}
In finite-horizon MDPs, if $n$ is greater or equal to the horizon,
then
\[
    T_{\pi_1} \cdots T_{\pi_n} \eta = T_{\pi_1} \cdots T_{\pi_n} \eta',
\]
for all $\eta, \eta' \in (\crvs^{\states \times \cs}, \overline{\wass})$, so
\[
    \overline{\wass}(T_{\pi_1} \cdots T_{\pi_n} \eta) = \overline{\wass}(T_{\pi_1} \cdots T_{\pi_n} \eta'),
\]
and we must show is that $T_{\pi_1} \cdots T_{\pi_n} \eta \in (\crvs^{\states \times \cs}, \overline{\wass})$.
When the MDP has finite horizon, $T_\pi$ is a non-expansion (by \cref{thm:T-pi-contractive}),
which implies that $\sup_{\pi \in \stationary}\overline{\wass}(T_{\pi} \eta) < \infty$ and $\overline{\wass}(T_{\pi_1} \cdots T_{\pi_n} \eta) \leq \overline{\wass}(\eta) < \infty$ for all $n \geq 1$.
\end{proof}

We refer to \cref{lem:distributional-policy-evaluation} as the distributional policy evaluation result because it implies that for a stationary policy $\pi \in \stationary$ the sequence discounted return functions given by $\eta_n \doteq T^n_{\pi}\eta$ converges in $1$-Wasserstein distance to $\eta^{\pi}$, the distribution of discounted returns obtained by $\pi$.
Moreover, the sequence of returns $G_n \sim \eta_n$ (which are distributed independently from each other) converges almost surely to a $G^\pi \distequiv \sum_{t=0}^\infty \gamma_t R_{t+1}$ \citep[Skorokhod's Theorem, p.~114;][]{shorack2017probability}

\subsection{Local Policy Improvement}

Informally, DP builds a globally optimal policy by ``chaining'' locally optimal decisions at each time step.
A ``distributional max operator'' gives a return distribution where the first decision is locally optimal:
\begin{definition}[Distributional Max Operator]
\label{def:max-operator}
Given $F : (\crvs^{\states \times \cs}, \overline{\wass}) \rightarrow \reals^{\states \times \cs}$,
an operator $T_* : (\crvs^{\states \times \cs}, \overline{\wass}) \rightarrow (\crvs^{\states \times \cs}, \overline{\wass})$
is a \emph{distributional max operator} if it satisfies, for all $\eta \in (\crvs^{\states \times \cs}, \overline{\wass})$,
\[
    F T_* \eta = \sup_{\pi \in \stationary} F T_\pi \eta.
\]
\end{definition}

The mechanism for locally optimal decision-making is the greedy policy, which is a policy that realizes a distributional max operator:
\begin{definition}[Greedy Policy]
\label{def:greedy-policy}
Given $F : (\crvs^{\states \times \cs}, \overline{\wass}) \rightarrow \reals^{\states \times \cs}$,
a policy $\pi \in \stationary$ is \emph{greedy} with respect to $\eta \in (\crvs^{\states \times \cs}, \overline{\wass})$ if
\[
    F T_{\pi} \eta = F T_* \eta.
\]
\end{definition}

Given $F_K$, it is possible that $K$ is such that for some $\nu \in (\crvs, \wass)$ we have $K\nu$ degenerate and ``infinite'' (for example, the expected utility $\Uf$ with $f(x) = x^{-1}$).
In this case, we interpret $K$ as encoding a preference where if $\nu_1, \nu_2, \ldots (\crvs, \wass)$ converges to $\nu_\infty$ and $K\nu_n < \infty$ for all $n$, but $\liminf_{n \rightarrow \infty} K\nu_n = \infty$, so there is no $\nu \in (\crvs, \wass)$ that is strictly preferred over $\nu_\infty$.
In this sense, we write $K\nu_\infty \geq \sup_{\nu \in (\crvs, \wass)} K\nu$.
Similarly, for $F_K$ and $\pibar$ greedy with respect to $\eta$, we write
\[
    F_K T_*\eta = F_K T_{\pibar} \eta \geq \sup_{\pi \in \stationary} F_K T_\pi \eta
\]
even if the right-hand side is infinite for some $\eta \in (\crvs^{\states \times \cs}, \overline{\wass})$ and $(s, \xc) \in \states \times \cs$.

\subsection{Monotonicity}
\label{app:monotonicity}

The following intermediate result will be useful for proving monotonicity, and it highlights a phenomenon in \stock{}-augmented problems where the rewards are absorbed into the augmented state:
\begin{lemma}[Reward absorption]
\label{lem:absorption}
For every stationary policy $\pi \in \stationary$, $\eta \in (\crvs^{\states \times \cs}, \overline{\wass})$ and $(s, \xc) \in \states \times \cs$,
if $(S_t, \xC_t) = (s, \xc)$, $A_t \sim \pi(S_t, \xC_t)$, $\lookaheadG(s, \xc) \sim (T_\pi\eta)(s, \xc)$ and $G(s, \xc) \sim \eta(s, \xc)$,
then
\[
    \xC_t + \lookaheadG(S_t, \xC_t) \distequiv \gamma \left( \xC_{t+1} + G(S_{t+1}, \xC_{t+1}) \right).
\]
\end{lemma}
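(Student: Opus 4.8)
The plan is to establish the claimed equality in distribution by unfolding both sides one step and substituting the \stock{} recursion. First I would fix $\pi \in \stationary$, $\eta \in (\crvs^{\states \times \cs}, \overline{\wass})$ and $(s, \xc) \in \states \times \cs$, and condition throughout on $(S_t, \xC_t) = (s, \xc)$, with $A_t \sim \pi(s, \xc)$, the pair $(R_{t+1}, S_{t+1})$ drawn from the MDP's reward and transition kernels at $(s, A_t)$, $\xC_{t+1}$ then determined by \Cref{eq:b-def}, and an independent family $\{G(s', \xc')\}_{(s', \xc') \in \states \times \cs}$ with $G(s', \xc') \sim \eta(s', \xc')$. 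Under exactly this specification the definition of the \stock{}-augmented distributional Bellman operator (\cref{sec:preliminaries}) says that $(T_\pi\eta)(s, \xc)$ is the law of $R_{t+1} + \gamma\, G(S_{t+1}, \xC_{t+1})$; since $\lookaheadG(s, \xc) \sim (T_\pi\eta)(s, \xc)$, this gives the distributional identity $\lookaheadG(s, \xc) \distequiv R_{t+1} + \gamma\, G(S_{t+1}, \xC_{t+1})$.

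The one algebraic ingredient is the \stock{} update \Cref{eq:b-def}, which yields $\gamma \xC_{t+1} = \xC_t + R_{t+1} = \xc + R_{t+1}$. Multiplying the right-hand side of the lemma by $\gamma$ and substituting,
\[
    \gamma\bigl( \xC_{t+1} + G(S_{t+1}, \xC_{t+1}) \bigr) = \xc + R_{t+1} + \gamma\, G(S_{t+1}, \xC_{t+1}).
\]
On the other hand, because $\xC_t = \xc$ is deterministic, adding the constant $\xc$ to both sides of the distributional identity from the first paragraph gives
\[
    \xC_t + \lookaheadG(S_t, \xC_t) = \xc + \lookaheadG(s, \xc) \distequiv \xc + R_{t+1} + \gamma\, G(S_{t+1}, \xC_{t+1}).
\]
Chaining the two displays yields $\xC_t + \lookaheadG(S_t, \xC_t) \distequiv \gamma\bigl(\xC_{t+1} + G(S_{t+1}, \xC_{t+1})\bigr)$, which is the claim. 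The degenerate case of terminal $s$ I would handle separately and briefly: there $(T_\pi\eta)(s, \xc) = \delta_0$, so the left-hand side is $\delta_\xc$, and the right-hand side is likewise $\delta_\xc$ by the terminal-state conventions ($S_{t+1} = s$, $R_{t+1} = 0$, and the return at a terminal state being $0$).

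The proof is short, so the step I would treat as the ``main obstacle'' is purely the probability-space bookkeeping: one must verify that the random tuple $(A_t, R_{t+1}, S_{t+1}, \xC_{t+1})$ entering the definition of $(T_\pi\eta)(s, \xc)$ has the same joint law as the one appearing on the right-hand side of the lemma, and that the fresh draws $G(\cdot, \cdot) \sim \eta(\cdot, \cdot)$ are coupled in the same way (independently of that tuple) in both places, so that the single distributional identity $\lookaheadG(s, \xc) \distequiv R_{t+1} + \gamma\, G(S_{t+1}, \xC_{t+1})$ is legitimate. Once that coupling is pinned down, the lemma is exactly the observation that multiplying through by $\gamma$ converts the form ``$\xc$ plus a sample of $T_\pi\eta$'' into the form ``$\gamma$ times ($\xC_{t+1}$ plus a sample of $\eta$)'', i.e.\ the one-step reward $R_{t+1}$ has been absorbed into the rescaled \stock{} $\gamma\xC_{t+1} = \xC_t + R_{t+1}$.
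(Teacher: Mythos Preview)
Your proof is correct and follows essentially the same approach as the paper's: unfold $\lookaheadG$ via the definition of $T_\pi$, substitute the \stock{} recursion $\xC_t + R_{t+1} = \gamma \xC_{t+1}$, and factor out $\gamma$. Your additional remarks on the coupling and the terminal case go beyond what the paper writes out, but the core three-step chain is identical.
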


\begin{proof}
We have that
\begin{align*}
    \xC_t + \lookaheadG(S_t, \xC_t)
    &\distequiv \xC_t + R_{t+1} + \gamma G(S_{t+1}, \xC_{t+1})
    & \tag{Definition of $T_{\pi}$} \\
    &\distequiv \gamma \xC_{t+1} + \gamma G(S_{t+1}, \xC_{t+1})
    & \tag{\cref{eq:b-def}} \\
    &\distequiv \gamma \left( \xC_{t+1} + G(S_{t+1}, \xC_{t+1}) \right). 
    \tag*{\jmlrqedhere}
\end{align*}
\end{proof}

\monotonicitylemma*

\begin{proof}
Fix a stationary policy $\pi \in \stationary$ and $\eta, \eta' \in (\crvs^{\states \times \cs}, \overline{\wass})$ satisfying $F_K\eta \geq F_K\eta'$.
Fix also $(s, \xc) \in \states \times \cs$, and let $(S_t, \xC_t) = (s, \xc)$, $A_t \sim \pi(S_t, \xC_t)$, $G(s, \xc) \sim \eta(s, \xc)$, $G'(s, \xc) \sim \eta'(s, \xc)$, $\lookaheadG(s, \xc) \sim (T_{\pi}\eta)(s, \xc)$ and $\lookaheadG'(s, \xc) \sim (T_{\pi}\eta')(s, \xc)$

By assumption, we have $K(\xc + G(s, \xc)) \geq K(\xc + G'(s, \xc))$ for all $(s, \xc)$.
Combining the above with indifference to mixtures, we get
\[
    K(\xC_{t+1} + G(S_{t+1}, \xC_{t+1})) \geq K(\xC_{t+1} + G'(S_{t+1}, \xC_{t+1})),
\]
and, thanks to indifference to $\gamma$,
\[
    K(\gamma \left(\xC_{t+1} + G(S_{t+1}, \xC_{t+1})\right)) \geq K( \gamma \left(\xC_{t+1} + G'(S_{t+1}, \xC_{t+1})\right)).
\]
From \cref{lem:absorption} we have that
\begin{align*}
    \xC_t + \lookaheadG(S_t, \xC_t) &\distequiv \gamma \left( \xC_{t+1} + G(S_{t+1}, \xC_{t+1}) \right), \\
    \xC_t + \lookaheadG'(S_t, \xC_t) &\distequiv \gamma \left( \xC_{t+1} + G'(S_{t+1}, \xC_{t+1}) \right),
\end{align*}
so it follows that
\[
    K(\xC_t + \lookaheadG(S_t, \xC_t)) \geq  K(\xC_t + \lookaheadG'(S_t, \xC_t)). 
    \tag*{\jmlrqedhere}
\]
\end{proof}

\subsection{Convergence}

\begin{definition}[Lipschitz Continuity for Objective Functionals]
\label{def:lipschitz-F}
The objective functional
$F : (\crvs^{\states \times \cs}, \overline{\wass}) \rightarrow \reals^{\states \times \cs}$ is \emph{$L$-Lipschitz} (or \emph{Lipschitz}, for simplicity) if there exists $L \in \reals$ such that
\[
     \sup_{\substack{\eta, \eta':\\ \overline{\wass}(\eta) < \infty \\ \overline{\wass}(\eta') < \infty \\ \overline{\wass}(\eta, \eta') > 0}} \frac{\|F\eta  - F\eta'\|_\infty}{\overline{\wass}(\eta, \eta')} \leq L.
\]
$L$ is the \emph{Lipschitz constant} of $F$.
\end{definition}

\begin{proposition}
\label{prop:K-Lipschitz-F-Lipschitz}
Given $K : (\crvs, \wass) \rightarrow \reals$, $F_K$ is $L$-Lipschitz iff $K$ is $L$-Lipschitz.
\end{proposition}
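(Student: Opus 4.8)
The plan is to prove the two directions separately, both being essentially direct unpackings of the definitions of $\wass$, $\overline{\wass}$ and the respective Lipschitz conditions (\cref{def:lipschitz,def:lipschitz-F}). The key link is that $(F_K\eta)(s,\xc)$ is obtained by applying $K$ to the distribution of $\xc + G(s,\xc)$ where $G(s,\xc)\sim\eta(s,\xc)$, and that translating a random variable by a constant $\xc$ preserves $1$-Wasserstein distances: $\wass(\df(\xc+G),\df(\xc+G')) = \wass(\df G,\df G')$, since any coupling of $G,G'$ yields a coupling of $\xc+G,\xc+G'$ with the same $\ell_1$-transport cost. I would state this translation-invariance as the first observation.

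For the ``$\Leftarrow$'' direction (assume $K$ is $L$-Lipschitz, show $F_K$ is $L$-Lipschitz): fix $\eta,\eta'\in(\crvs^{\states\times\cs},\overline{\wass})$ and a point $(s,\xc)$. Write $\nu \doteq \df(\xc+G(s,\xc))$ and $\nu'\doteq\df(\xc+G'(s,\xc))$. These lie in $(\crvs,\wass)$ because $\overline{\wass}(\eta),\overline{\wass}(\eta')<\infty$ and translation by $\xc$ only shifts the first moment by a finite amount. Then
\[
    |(F_K\eta)(s,\xc) - (F_K\eta')(s,\xc)| = |K\nu - K\nu'| \leq L\,\wass(\nu,\nu') = L\,\wass(\eta(s,\xc),\eta'(s,\xc)) \leq L\,\overline{\wass}(\eta,\eta'),
\]
using translation-invariance for the middle equality and the definition of $\overline{\wass}$ (\cref{eq:sup-wass}) for the last inequality. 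Taking the supremum over $(s,\xc)$ gives $\|F_K\eta - F_K\eta'\|_\infty \leq L\,\overline{\wass}(\eta,\eta')$, which is exactly \cref{def:lipschitz-F}.

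For the ``$\Rightarrow$'' direction (assume $F_K$ is $L$-Lipschitz, show $K$ is $L$-Lipschitz): given arbitrary $\nu,\nu'\in(\crvs,\wass)$, I would construct a trivial MDP / augmented-state setup — or rather, simply work with the fixed augmented state $(s,0)$ for some $s$ — and define $\eta,\eta'$ by $\eta(s',\xc')\doteq\nu$, $\eta'(s',\xc')\doteq\nu'$ for all $(s',\xc')$. Then $\overline{\wass}(\eta,\eta')=\wass(\nu,\nu')$, both are in $(\crvs^{\states\times\cs},\overline{\wass})$, and $(F_K\eta)(s,0) = K\df(0+G(s,0)) = K\nu$ (and likewise $K\nu'$ for $\eta'$), so $|K\nu-K\nu'| = |(F_K\eta)(s,0)-(F_K\eta')(s,0)| \leq \|F_K\eta - F_K\eta'\|_\infty \leq L\,\overline{\wass}(\eta,\eta') = L\,\wass(\nu,\nu')$. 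Dividing by $\wass(\nu,\nu')$ when it is positive and taking the supremum yields \cref{def:lipschitz}. The only mild subtlety — and the one point to be careful about — is the handling of the offset by $\xc$: using $\xc=0$ in the backward direction sidesteps it entirely, while in the forward direction one must check that translating by an arbitrary $\xc$ keeps the distributions in $(\crvs,\wass)$ and does not change Wasserstein distances; both are immediate from the coupling characterization of $\wass$. There is no real obstacle here, only bookkeeping.
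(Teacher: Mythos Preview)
Your proposal is correct and follows essentially the same approach as the paper: both directions use the constant-$\eta$ reduction at $\xc=0$ for $F_K$-Lipschitz $\Rightarrow$ $K$-Lipschitz, and the pointwise bound via translation-invariance of $\wass$ for the converse. You make the translation-invariance step explicit where the paper leaves it implicit, but otherwise the arguments coincide.
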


\begin{proof}
If $F_K$ is $L$-Lipschitz, then
\begin{align*}
    L 
    &\geq \sup_{\substack{\eta, \eta':\\ \overline{\wass}(\eta) < \infty \\ \overline{\wass}(\eta') < \infty \\ \overline{\wass}(\eta, \eta') > 0}} \frac{\|F_K\eta  - F_K\eta'\|_\infty}{\overline{\wass}(\eta, \eta')} \\
    &\geq \sup_{\xc \in \cs}\sup_{\substack{\nu, \nu':\\ \wass(\nu) < \infty \\ \wass(\nu') < \infty \\ \wass(\nu, \nu') > 0}} \frac{|K(\xc + G) - K(\xc + G')|}{\wass(\df(\xc + G), \df(\xc + G'))}
    \tag{$G \sim \nu$, $G' \sim \nu'$} \\
    &\geq \sup_{\substack{\nu, \nu':\\ \wass(\nu) < \infty \\ \wass(\nu') < \infty \\ \wass(\nu, \nu') > 0}} \frac{|K\nu - K\nu'|}{\wass(\nu, \nu')}
    \tag{$\xc = 0$},
\end{align*}
so $K$ is $L$-Lipschitz. If, on the other hand, $K$ is $L$-Lipschitz, then, for all $\eta, \eta' \in (\crvs^{\states \times \cs}, \overline{\wass})$,
\begin{align*}
    \|F_K\eta  - F_K\eta'\|_\infty
    &=\sup_{(s, \xc) \in \states \times \cs} |K(\xc + G(s, \xc)) - K(\xc + G'(s, \xc))|
    \tag{$G(s, \xc) \sim \eta(s, \xc)$, $G'(s, \xc) \sim \eta'(s, \xc)$} \\
    &\leq \sup_{(s, \xc) \in \states \times \cs} L \cdot \wass(\eta(s, \xc), \eta'(s, \xc)) \\
    &= L \cdot \overline{\wass}(\eta, \eta'),
\end{align*}
so $F_K$ is $L$-Lipschitz.
\end{proof}

\begin{proposition}
\label{prop:F-convergence}
If $F : (\crvs^{\states \times \cs}, \overline{\wass}) \rightarrow \reals^{\states \times \cs}$ is Lipschitz and the sequence $\eta_1, \eta_2, \ldots \in (\crvs^{\states \times \cs}, \overline{\wass})$ converges in $\overline{\wass}$ to some $\eta_\infty$, then $F \eta_1, F \eta_2, \ldots \in \reals^{\states \times \reals}$ converges in supremum norm to $F \eta_\infty$.
\end{proposition}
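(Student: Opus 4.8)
The plan is to use the Lipschitz bound directly, combined with the hypothesized convergence $\overline{\wass}(\eta_n, \eta_\infty) \to 0$. First I would confirm that the Lipschitz inequality of \cref{def:lipschitz-F} is actually applicable with arguments $\eta_n$ and $\eta_\infty$: each $\eta_n$ lies in $(\crvs^{\states \times \cs}, \overline{\wass})$ by hypothesis, and since $\eta_n \to \eta_\infty$ in $\overline{\wass}$ there is an $n$ with $\overline{\wass}(\eta_n, \eta_\infty) < 1$, so the triangle inequality gives $\overline{\wass}(\eta_\infty) \leq \overline{\wass}(\eta_n) + 1 < \infty$. Hence $\eta_\infty \in (\crvs^{\states \times \cs}, \overline{\wass})$ as well.

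Next, for each $n$ I would establish the pointwise-uniform inequality $\|F\eta_n - F\eta_\infty\|_\infty \leq L \cdot \overline{\wass}(\eta_n, \eta_\infty)$ by splitting into two cases. If $\overline{\wass}(\eta_n, \eta_\infty) = 0$, then $\wass(\eta_n(s,\xc), \eta_\infty(s,\xc)) = 0$ for every $(s,\xc) \in \states \times \cs$, which forces $\eta_n(s,\xc) = \eta_\infty(s,\xc)$ (zero $1$-Wasserstein distance implies equality of distributions), so $F\eta_n = F\eta_\infty$ and the inequality holds trivially. If $\overline{\wass}(\eta_n, \eta_\infty) > 0$, then $\eta_n, \eta_\infty$ fall in the index set of the supremum in \cref{def:lipschitz-F}, and that definition gives the inequality directly.

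Finally, taking $n \to \infty$ in $\|F\eta_n - F\eta_\infty\|_\infty \leq L \cdot \overline{\wass}(\eta_n, \eta_\infty)$ and using the assumed convergence $\overline{\wass}(\eta_n, \eta_\infty) \to 0$ yields $\|F\eta_n - F\eta_\infty\|_\infty \to 0$, i.e.\ convergence in supremum norm, which is the claim. There is no genuinely hard step here; the only points needing care are the degenerate case $\overline{\wass}(\eta_n, \eta_\infty) = 0$, which is excluded from the supremum in \cref{def:lipschitz-F}, and the verification that $\eta_\infty$ belongs to the space so the Lipschitz bound is legitimately invoked.
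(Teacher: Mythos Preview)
Your proposal is correct and follows essentially the same approach as the paper: apply the Lipschitz inequality $\|F\eta_n - F\eta_\infty\|_\infty \leq L \cdot \overline{\wass}(\eta_n, \eta_\infty)$ and take the limit. The paper's proof is a one-liner using $\limsup$ and omits the two technical checks you supply (that $\eta_\infty$ lies in the space and that the degenerate case $\overline{\wass}(\eta_n,\eta_\infty)=0$ is covered), so your write-up is simply a more careful version of the same argument.
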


\begin{proof}
If $\eta_1, \eta_2, \ldots \in (\crvs^{\states \times \reals}, \overline{\wass})$ converges in $\overline{\wass}$ to some $\eta_\infty$ and $F$ is $L$-Lipschitz,  then
\[
    \limsup_{n \rightarrow \infty} \| F \eta_n - F \eta_\infty \|_\infty \leq L \cdot \limsup_{n \rightarrow \infty} \overline{\wass}(\eta_n, \eta_\infty) = 0,
\]
which gives the result.
\end{proof}

The convergence highlighted in \cref{prop:F-convergence} is somewhat surprising:
If we consider $K\nu = \E(G)$ ($G \sim \nu)$, we have
\[
    \| F_K\delta_0 \|_{\infty} = \sup_{\xc \in \cs}|K\df(\xc + 0)| = \sup_{\xc \in \cs}|\xc| = \infty,
\]
so these objective functionals may have unbounded supremum norm.
However, the difference of the objective functionals for $\eta, \eta' \in (\crvs^{\states \times \reals}, \overline{\wass})$ (namely, $F \eta - F \eta'$) does have bounded supremum norm when $F$ is Lipschitz, and we can show convergence of $F\eta_n$ to $F\eta_\infty$.

\begin{lemma}
\label{lem:optimal-dp-objective}
If $K : (\crvs, \wass) \rightarrow \reals$ is indifferent to mixtures and indifferent to $\gamma$,
and if: i) the MDP has finite horizon; or ii) $\gamma < 1$ and $K$ is Lipschitz,
then for all $\eta \in (\crvs^{\states \times \cs}, \overline{\wass})$
\begin{equation}
    \sup_{\pi \in \markov}F_K\eta^\pi = \lim_{n \rightarrow \infty} \sup_{\pi_1,\ldots,\pi_n \in \stationary} F_K T_{\pi_n} \cdots T_{\pi_1} \eta.
    \label{eq:optimality-limit}
\end{equation}
If $\gamma < 1$ and $K$ is $L$-Lipschitz, then for all $n \geq 0$,
\begin{equation}
    \sup_{\pi \in \markov}F_K\eta^\pi \leq \sup_{\pi_1,\ldots,\pi_n \in \stationary} F_K T_{\pi_n} \cdots T_{\pi_1} \eta + L\gamma^n \cdot \sup_{\pi' \in \markov}\overline{\wass}(\eta, \eta^{\pi'}).
    \label{eq:optimality-bound}
\end{equation}
\end{lemma}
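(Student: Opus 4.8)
The plan is to prove both displayed statements together, by relating the length-$n$ operator compositions $F_K T_{\pi_n}\cdots T_{\pi_1}\eta$ to the objective values $F_K\eta^\pi$ of Markov policies through an explicit correspondence. Throughout, fix $\eta$ and write $\Delta \doteq \sup_{\pi' \in \markov}\overline{\wass}(\eta, \eta^{\pi'})$; this is finite since $\overline{\wass}(\eta, \eta^{\pi'}) \leq \overline{\wass}(\eta) + \overline{\wass}(\eta^{\pi'})$ and, by \cref{ass:bounded-first-moment-rewards} together with $\gamma < 1$ or a finite horizon, $\sup_{\pi'}\overline{\wass}(\eta^{\pi'}) < \infty$. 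The correspondence is the following: given $\pi_1,\dots,\pi_n \in \stationary$ and any stationary tail $\sigma$, the Markov policy $\rho$ that plays $\pi_n, \pi_{n-1}, \dots, \pi_1$ at times $0,1,\dots,n-1$ and then $\sigma$ forever satisfies $\eta^\rho = T_{\pi_n}\cdots T_{\pi_1}\eta^\sigma$, obtained by iterating the Bellman identity $\eta^{\rho} = T_{\rho_0}\eta^{\rho^{(1)}}$ ($n$ times, so no convergence is needed); conversely, any Markov policy $\pi = (\pi_0, \pi_1, \dots)$ satisfies $\eta^\pi = T_{\pi_0}\cdots T_{\pi_{n-1}}\eta^{\pi^{(n)}}$ with $\pi^{(n)} = (\pi_n, \pi_{n+1}, \dots)$, which, after reindexing $\sigma_i \doteq \pi_{n-i}$, is $T_{\sigma_n}\cdots T_{\sigma_1}$ applied to $\eta^{\pi^{(n)}}$. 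All inequalities below, and the supremum $\sup_{\pi\in\markov}F_K\eta^\pi$, are read pointwise on $\states\times\cs$.

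For the finite-horizon case, fix $n$ at least the horizon. By the proof of \cref{lem:distributional-policy-evaluation}, $T_{\pi_n}\cdots T_{\pi_1}\mu = T_{\pi_n}\cdots T_{\pi_1}\mu'$ for all $\mu, \mu' \in (\crvs^{\states\times\cs},\overline{\wass})$, so the correspondence gives $T_{\pi_n}\cdots T_{\pi_1}\eta = \eta^\rho$ with $\rho \in \markov$ (irrespective of the tail $\sigma$); hence $F_K T_{\pi_n}\cdots T_{\pi_1}\eta \leq \sup_{\pi\in\markov}F_K\eta^\pi$, and the supremum over all tuples gives ``$\leq$'' in \cref{eq:optimality-limit}. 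Conversely, each $F_K\eta^\pi$ equals $F_K T_{\sigma_n}\cdots T_{\sigma_1}\eta$ for a suitable tuple, giving ``$\geq$''. Thus the two sides of \cref{eq:optimality-limit} coincide for every $n$ at least the horizon, which proves \cref{eq:optimality-limit} here.

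For the discounted case, $\gamma < 1$ makes each $T_\pi$ a $\gamma$-contraction in $\overline{\wass}$ (\cref{thm:T-pi-contractive}), hence $T_{\pi_n}\cdots T_{\pi_1}$ a $\gamma^n$-contraction, and $K$ being $L$-Lipschitz makes $F_K$ $L$-Lipschitz (\cref{prop:K-Lipschitz-F-Lipschitz}). For any tuple and tail, $\|F_K T_{\pi_n}\cdots T_{\pi_1}\eta - F_K\eta^\rho\|_\infty \leq L\gamma^n\overline{\wass}(\eta, \eta^\sigma) \leq L\gamma^n\Delta$, so $F_K T_{\pi_n}\cdots T_{\pi_1}\eta \leq \sup_{\pi\in\markov}F_K\eta^\pi + L\gamma^n\Delta$; the supremum over tuples gives one inequality. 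Symmetrically, for any Markov $\pi$, $\|F_K\eta^\pi - F_K T_{\pi_0}\cdots T_{\pi_{n-1}}\eta\|_\infty \leq L\gamma^n\overline{\wass}(\eta^{\pi^{(n)}}, \eta) \leq L\gamma^n\Delta$, so $F_K\eta^\pi \leq \sup_{\sigma_1,\dots,\sigma_n}F_K T_{\sigma_n}\cdots T_{\sigma_1}\eta + L\gamma^n\Delta$; the supremum over $\pi\in\markov$ is exactly \cref{eq:optimality-bound}. Together the two inequalities trap $\sup_{\sigma_1,\dots,\sigma_n}F_K T_{\sigma_n}\cdots T_{\sigma_1}\eta$ within $L\gamma^n\Delta$ of $\sup_{\pi\in\markov}F_K\eta^\pi$ in supremum norm, and letting $n\to\infty$ yields \cref{eq:optimality-limit}.

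The main care point will be bookkeeping: pinning down the order of the operator composition and the time-indexing of the constructed Markov policies, arranging that the arbitrary tail $\sigma$ enters only through the bound $\overline{\wass}(\eta, \eta^\sigma) \leq \Delta$ (so it may be chosen freely), and keeping every comparison pointwise since $\sup_{\pi\in\markov}F_K\eta^\pi$ is a supremum of a vector-valued object rather than a scalar — the limit in \cref{eq:optimality-limit} is then a limit in supremum norm, hence also pointwise.
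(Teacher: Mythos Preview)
Your proposal is correct and follows essentially the same approach as the paper: decompose a Markov policy as an $n$-step prefix acting on a tail return distribution, then use that $F_K$ is $L$-Lipschitz and each $T_\pi$ is a $\gamma$-contraction to bound the discrepancy between $F_K T_{\pi_n}\cdots T_{\pi_1}\eta$ and $F_K\eta^\rho$ by $L\gamma^n\Delta$. The paper packages the two directions via an intermediate quantity $h(n) \doteq \sup_{\pi_1,\ldots,\pi_n}\sup_{\pi'}\|F T_{\pi_1}\cdots T_{\pi_n}\eta - F T_{\pi_1}\cdots T_{\pi_n}\eta^{\pi'}\|_\infty$ and a $\sup$/$\inf$ manipulation, whereas you construct the corresponding Markov policy $\rho$ (respectively, extract the $n$-tuple from a given $\pi$) explicitly; but the underlying argument, the tools invoked, and the resulting bound are identical.
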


\begin{proof}
We write $F = F_K$ for the rest of the proof.

If the MDP has finite horizon, then for all $\eta \in (\crvs^{\states \times \cs}, \overline{\wass})$
\[
    \sup_{\pi \in \markov}F\eta^\pi = \sup_{\pi_1,\ldots,\pi_n \in \stationary} F T_{\pi_n} \cdots T_{\pi_1} \eta,
\]
where $n$ is the horizon of the MDP.

Otherwise, assume that $\gamma < 1$ and assume that $K$ is $L$-Lipschitz.
Then $F$ is also $L$-Lipschitz, by \cref{prop:K-Lipschitz-F-Lipschitz}.
By the triangle inequality, the fact that $\eta \in (\crvs^{\states \times \cs}, \overline{\wass})$ and \cref{ass:bounded-first-moment-rewards} we have
\[
    \sup_{\pi' \in \markov}\overline{\wass}(\eta, \eta^{\pi'}) \leq \overline{\wass}(\eta) + \sup_{\pi' \in \markov}\overline{\wass}(\eta^{\pi'}) < \infty,
\]
so \Cref{eq:optimality-bound} implies \Cref{eq:optimality-limit} in limit $n \rightarrow \infty$.

It remains to prove \Cref{eq:optimality-bound}.
Let
\[
    g_{s, \xc}(n) \doteq \sup_{\pi_1,\ldots,\pi_n \in \stationary} (F T_{\pi_n} \cdots T_{\pi_1} \eta)(s, \xc) - \sup_{\pi \in \markov}(F\eta^\pi)(s, \xc)
\]
and
\[
    h(n) \doteq \sup_{\pi_1,\ldots,\pi_n \in \stationary}\sup_{\pi' \in \markov}\| FT_{\pi_1}\cdots T_{\pi_n}\eta - FT_{\pi_1}\cdots T_{\pi_n}\eta^{\pi'} \|_\infty.
\]
We will show that, for all $n \geq 0$ and $(s, \xc) \in \states \times \cs$, we have
\[
    | g_{s, \xc}(n) | \leq h(n) \leq L\gamma^n \cdot \sup_{\pi' \in \markov}\overline{\wass}(\eta, \eta^{\pi'}).
\]

For all $n \geq 0$ and $(s, \xc) \in \states \times \cs$, we have
\begin{align*}
    g_{s, \xc}(n)
    &= \sup_{\pi' \in \markov}(F\eta^{\pi'})(s, \xc) - \sup_{\pi_1,\ldots,\pi_n \in \stationary} (F T_{\pi_n} \cdots T_{\pi_1} \eta)(s, \xc) \\
    &= \sup_{\pi' \in \markov}\inf_{\pi_1,\ldots,\pi_n \in \stationary}\left((F\eta^{\pi'})(s, \xc) - (F T_{\pi_n} \cdots T_{\pi_1} \eta)(s, \xc) \right) \\
    &= \sup_{\pi'_1,\ldots,\pi'_n \in \stationary}\sup_{\pi' \in \markov}\inf_{\pi_1,\ldots,\pi_n \in \stationary}\left((FT_{\pi'_1}\cdots T_{\pi'_n}\eta^{\pi'})(s, \xc) - (F T_{\pi_n} \cdots T_{\pi_1} \eta)(s, \xc)\right)
    \tag{$\pi'$ is non-stationary} \\
    &\leq \sup_{\pi_1,\ldots,\pi_n \in \stationary}\sup_{\pi' \in \markov} \left( (FT_{\pi_1}\cdots T_{\pi_n}\eta^{\pi'})(s, \xc) - (FT_{\pi_1}\cdots T_{\pi_n}\eta)(s, \xc) \right) \\
    &\leq \sup_{\pi_1,\ldots,\pi_n \in \stationary}\sup_{\pi' \in \markov} \left| (FT_{\pi_1}\cdots T_{\pi_n}\eta^{\pi'})(s, \xc) - (FT_{\pi_1}\cdots T_{\pi_n}\eta)(s, \xc) \right| \\
    &= \sup_{\pi_1,\ldots,\pi_n \in \stationary}\sup_{\pi' \in \markov} \| FT_{\pi_1}\cdots T_{\pi_n}\eta^{\pi'} - FT_{\pi_1}\cdots T_{\pi_n}\eta \|_\infty \\
    &= h(n).
\end{align*}
and
\begin{align*}
    -g_{s, \xc}(n)
    &= \sup_{\pi_1,\ldots,\pi_n \in \stationary} (F T_{\pi_n} \cdots T_{\pi_1} \eta)(s, \xc) - \sup_{\pi' \in \markov}(F\eta^{\pi'})(s, \xc) \\
    &= \sup_{\pi_1,\ldots,\pi_n \in \stationary}\inf_{\pi' \in \markov} \left( (F T_{\pi_n} \cdots T_{\pi_1} \eta)(s, \xc) - (F\eta^{\pi'})(s, \xc) \right) \\
    &= \sup_{\pi_1,\ldots,\pi_n \in \stationary}\inf_{\pi'_1,\ldots,\pi'_n \in \stationary}\inf_{\pi' \in \markov} \left( (F T_{\pi_n} \cdots T_{\pi_1} \eta)(s, \xc) - (FT_{\pi'_1}\cdots T_{\pi'_n}\eta^{\pi'})(s, \xc) \right)
    \tag{$\pi'$ is non-stationary} \\
    \\
    &\leq \sup_{\pi_1,\ldots,\pi_n \in \stationary}\sup_{\pi' \in \markov} \left( (FT_{\pi_1}\cdots T_{\pi_n}\eta)(s, \xc) - (FT_{\pi_1}\cdots T_{\pi_n}\eta^{\pi'})(s, \xc) \right) \\
    &\leq \sup_{\pi_1,\ldots,\pi_n \in \stationary}\sup_{\pi' \in \markov} \left| (FT_{\pi_1}\cdots T_{\pi_n}\eta)(s, \xc) - FT_{\pi_1}\cdots T_{\pi_n}\eta^{\pi'})(s, \xc) \right| \\
    &\leq \sup_{\pi_1,\ldots,\pi_n \in \stationary}\sup_{\pi' \in \markov} \|FT_{\pi_1}\cdots T_{\pi_n}\eta - FT_{\pi_1}\cdots T_{\pi_n}\eta^{\pi'} \|_\infty \\
    &= h(n)
\end{align*}
Thus, $-h(n) \leq g_{s, \xc}(n) \leq h(n)$, which implies $| g_{s, \xc}(n) | \leq h(n).$

Finally, for all $n \geq 0$, we have
\begin{align*}
    h(n)
    &= \sup_{\pi_1,\ldots,\pi_n \in \stationary}\sup_{\pi' \in \markov}\| FT_{\pi_1}\cdots T_{\pi_n}\eta - FT_{\pi_1}\cdots T_{\pi_n}\eta^\pi \|_\infty \\
    &\leq L \cdot \sup_{\pi_1,\ldots,\pi_n \in \stationary}\sup_{\pi' \in \markov}\overline{\wass}(T_{\pi_1}\cdots T_{\pi_n}\eta, T_{\pi_1}\cdots T_{\pi_n}\eta^\pi)
    \tag{$F$ is $L$-Lipschitz} \\
    &\leq L \gamma^n \cdot \sup_{\pi' \in \markov}\overline{\wass}(\eta, \eta^\pi).
    \tag*{($\gamma$-contraction)\quad\quad\jmlrqedhere}
\end{align*}
\end{proof}

\subsection{Distributional Dynamic Programming}
\label{app:ddp-main-results-proofs}

\distributionalvalueiterationtheorem*

\begin{proof}
We use $F = F_K$ and note that if $K$ $L$-Lipschitz then $F$ is also $L$-Lipschitz (by \cref{prop:K-Lipschitz-F-Lipschitz}.
Fix $\eta_0 \in (\crvs^{\states \times \cs}, \overline{\wass})$
and let $\eta_n \doteq T^n_*\eta_0$ for $n \geq 1$.

The sequence $\pibar_0, \pibar_1, \pibar_2, \ldots$ satisfies $F\eta_{n+1} = FT_{\pibar_n}\eta_n = FT_*\eta_n$ for all $n \geq 0$.
The definition of a distributional max operator (\cref{def:max-operator}) gives us
\[
    F T_* \eta = \sup_{\pi \in \stationary} F T_{\pi} \eta,
\]
and, by monotonicity (\cref{lem:monotonicity}) and induction, we have for every $n \geq 1$
\begin{equation}
    F T^{n+1}_* \eta_0 = FT_{\pibar_n}\cdots T_{\pibar_0}\eta_0 = \sup_{\pi_0,\ldots,\pi_n \in \stationary} F T_{\pi_n} \cdots T_{\pi_0} \eta_0.
\end{equation}
Then \Cref{eq:finite-horizon-value-iterates,eq:discounted-value-iterates} follow from \cref{lem:optimal-dp-objective} combined with \cref{prop:markov-policies-are-sufficient}, which ensures that
\[
    \sup_{\pi \in \historybased} F \eta^\pi = \sup_{\pi \in \markov} F \eta^\pi
\]
(the conditions of \cref{lem:optimal-dp-objective,prop:markov-policies-are-sufficient} are satisfied).

\Cref{eq:finite-horizon-value-greedy} follows from \Cref{eq:finite-horizon-value-iterates} combined with distributional policy improvement (\cref{lem:policy-improvement}).
To see that \cref{lem:policy-improvement} applies, note that, since the MDP has horizon $n$,
\[
    F T_{\pibar_n} T_{\pibar_{n-1}} \cdots T_{\pibar_0} \eta_0 = F T_{\pibar_{n-1}} \cdots T_{\pibar_0} \eta_0,
\]
which satisfies \Cref{eq:one-step-improvement} (with $\eta = T_{\pibar_{n-1}} \cdots T_{\pibar_0} \eta_0$).
Then \cref{lem:policy-improvement} gives
\[
    F \eta^{\pibar_n} \geq F T_{\pibar_{n-1}} \cdots T_{\pibar_0} \eta_0 = \sup_{\pi_0,\ldots,\pi_{n-1} \in \stationary} F T_{\pi_{n-1}} \cdots T_{\pi_0} \eta_0.
\]

It remains to prove \Cref{eq:discounted-value-greedy}.
We start by bounding the following quantity, for $n, k \geq 0$:
\[
    \| FT^k_{\pibar_n}\eta_{n+1} - FT^k_{\pibar_n}\eta_n \|_{\infty}.
\]
For all $n, k \geq 0$ and $(s, \xc) \in \states \times \cs$, we have
\begin{align*}
    &(FT^k_{\pibar_n}\eta_{n+1})(s, \xc) - (FT^k_{\pibar_n}\eta_n)(s, \xc) \\
    &= (FT^k_{\pibar_n}T^n_*\eta_1)(s, \xc) - (FT^k_{\pibar_n}T^n_*\eta_0)(s, \xc) \\
    &= \sup_{\pi_1,\ldots,\pi_n}(FT^k_{\pibar_n}T_{\pi_1}\cdots T_{\pi_n}\eta_1)(s, \xc) - \sup_{\pi'_1,\ldots,\pi'_n}(FT^k_{\pibar_n}T_{\pi'_1}\cdots T_{\pi'_n}\eta_0)(s, \xc) \\
    &\leq \sup_{\pi_1,\ldots,\pi_n}\left((FT^k_{\pibar_n}T_{\pi_1}\cdots T_{\pi_n}\eta_1)(s, \xc) - (FT^k_{\pibar_n}T_{\pi_1}\cdots T_{\pi_n}\eta_0)(s, \xc) \right) \\
    &\leq \sup_{\pi_1,\ldots,\pi_n}\left\|FT^k_{\pibar_n}T_{\pi_1}\cdots T_{\pi_n}\eta_1 - FT^k_{\pibar_n}T_{\pi_1}\cdots T_{\pi_n}\eta_0 \right\|_\infty \\
    &\leq L \cdot \sup_{\pi_1,\ldots,\pi_n}\overline{\wass}(T^k_{\pibar_n}T_{\pi_1}\cdots T_{\pi_n}\eta_1, T^k_{\pibar_n}T_{\pi_1}\cdots T_{\pi_n}\eta_0) 
    \tag{$F$ $L$-Lipschitz} \\
    &\leq L \gamma^{n+k}\overline{\wass}(\eta_1, \eta_0) 
    \tag{$\gamma$-contraction}
\end{align*}
and by a symmetric argument it also holds that for all $n, k \geq 0$ and $(s, \xc) \in \states \times \cs$
\[
    (FT^k_{\pibar_n}\eta_{n+1})(s, \xc) - (FT^k_{\pibar_n}\eta_n)(s, \xc) \geq - L \gamma^{n+k}\overline{\wass}(\eta_1, \eta_0).
\]
so
\begin{align}
    \| FT^k_{\pibar_n}\eta_{n+1} - FT^k_{\pibar_n}\eta_n \|_{\infty} 
    &\leq L \gamma^{n+k}\overline{\wass}(\eta_1, \eta_0) 
    \notag \\
    &\leq L \gamma^{n+k}\sup_{\pi \in \stationary}\overline{\wass}(T_\pi\eta_0, \eta_0)    
    \label{eq:vi-telescoping-term-bound}
\end{align}

Recall that $\pibar_n$ realizes $T_*\eta_n$, so $T_{\pibar_n}\eta_n = \eta_{n+1}$.
Then, for all $n \geq 0$, we have
\begin{align*}
    \| F\eta^{\pibar_n} - F\eta_n \|_\infty
    &\leq \sum_{k=0}^\infty \| FT^{k+1}_{\pibar_n}\eta_n - FT^k_{\pibar_n}\eta_n \|_{\infty}
    \tag{Telescoping and triangle inequality} \\
    &= \sum_{k=0}^\infty \| FT^k_{\pibar_n}\eta_{n+1} - FT^k_{\pibar_n}\eta_n \|_{\infty}
    \tag{$T_{\pibar_n}\eta_n = \eta_{n+1}$} \\
    &\leq \sum_{k=0}^\infty L \gamma^{n+k}\sup_{\pi \in \stationary}\overline{\wass}(T_\pi\eta_0, \eta_0) 
    \tag{\cref{eq:vi-telescoping-term-bound}} \\
    &= \frac{L\gamma^n}{1 - \gamma}\sup_{\pi \in \stationary}\overline{\wass}(T_\pi\eta_0, \eta_0).
\end{align*}
We have already established (in \cref{eq:discounted-value-iterates}) that
\[
    \sup_{\pi \in \historybased}F\eta^\pi - F\eta_n \leq L\gamma^n \sup_{\pi \in \markov}\overline{\wass}(\eta_0, \eta^{\pi}),
\]
so
\begin{align*}
    \sup_{\pi \in \historybased}F\eta^\pi - F\eta^{\pibar_n}
    &= \sup_{\pi \in \historybased}F\eta^\pi -F\eta_n + F\eta_n - F\eta^{\pibar_n} \\
    &\leq L\gamma^n \sup_{\pi \in \markov}\overline{\wass}(\eta, \eta^{\pi}) + \frac{L\gamma^n}{1 - \gamma}\sup_{\pi \in \stationary}\overline{\wass}(T_\pi\eta_0, \eta_0),
    \tag*{\jmlrqedhere}
\end{align*}
\end{proof}

A surprising technical detail about \cref{thm:value-iteration} is that distributional value iteration ``works'' (and $F\eta_n$ converges) under the given conditions, even though:
\begin{itemize}[label=-]
    \item $T_*$ may not be a $\gamma$-contraction when $\gamma < 1$,
    \item $T_*$ may not have a unique fixed point (for example, when multiple policies realize $T_*$),
    \item $\eta_n$ may not converge (depending how ties are broken when realizing $T_*$),
    \item an optimal return distribution may not exist, that is, $\eta^*$ such that $F\eta^* = \sup_{\pi \in \historybased}F\eta^\pi$.
\end{itemize}

We can use the basic ideas from \cref{thm:value-iteration} so that distributional policy iteration also works under the same conditions as distributional value iteration.
While distributional value iteration can start from any return distribution iterate $\eta \in (\crvs^{\states \times \cs}, \overline{\wass})$, for policy iteration we require the initial iterate to be a stationary policy $\pi_0 \in \Pi$, so that distributional policy improvement is guaranteed to work (see the discussion of \cref{lem:policy-improvement}).

\distributionalpolicyiterationtheorem*

\begin{proof}
We use $F = F_K$.
For any $n \geq 0$, we have that
\begin{align*}
    F \eta^{\pi_{n+1}}
    &= F T_{\pi_{n+1}}\eta^{\pi_{n+1}}
    \tag{Distributional Bellman equation} \\
    &\geq F T_{\pi_{n+1}}\eta^{\pi_n}
    \tag{\cref{lem:monotonicity,lem:policy-improvement}} \\
    &= F T_*\eta^{\pi_n}
    \tag{Definition of $\pi_{n+1}$ and \cref{def:max-operator}} \\
    &\geq F T^{n+1}_*\eta^{\pi_0}
    \tag{Induction} \\
    &= \sup_{\pi_1,\ldots,\pi_{n+1} \in \stationary} F T_{\pi_1} \cdots T_{\pi_{n+1}} \eta^{\pi_0}.
    \tag{\cref{def:max-operator}}
\end{align*}
Then both \Cref{eq:finite-horizon-policy-iterates,eq:discounted-policy-iterates} follow by combining the above with \cref{lem:optimal-dp-objective} and \cref{prop:markov-policies-are-sufficient}, which ensures that
\[
    \sup_{\pi \in \historybased} F \eta^\pi = \sup_{\pi \in \markov} F \eta^\pi,
\]
(the conditions of \cref{lem:optimal-dp-objective,prop:markov-policies-are-sufficient} are satisfied).
\end{proof}

\optimalitytheorem*

\begin{proof}
We write $F = F_K$.
We first prove the second statement, for which we assume that $\eta^*$ as described exists.
For every policy $\pi \in \markov$, we have $F \eta^* \geq F \eta^\pi$,
so by monotonicity (\cref{lem:monotonicity}), we also have, for all $\pibar \in \stationary$ and $\pi \in \markov$,
\[
    F T_{\pibar}\eta^* \geq F T_{\pibar}\eta^\pi,
\]
so, for all $\pibar \in \stationary$,
\[
    F T_{\pibar} \eta^* \geq \sup_{\pi \in \markov} F T_{\pibar} \eta^\pi.
\]
and thus
\[
    F T_*\eta^* = \sup_{\pibar \in \stationary} F T_{\pibar}\eta^* \geq \sup_{\pibar \in \stationary}\sup_{\pi \in \markov} F T_{\pibar}\eta^\pi = \sup_{\pi \in \markov} F \eta^\pi.
\]
Now, let $\pi^*$ be greedy with respect to $\eta^*$.
Then
\[
    F T_{\pi^*}\eta^* = F T_* \eta^* = F \eta^*,
\]
so \cref{lem:policy-improvement} implies that $F \eta^{\pi^*} \geq F \eta^*$.
The result then follows by using
\cref{prop:markov-policies-are-sufficient}, for which the conditions are satisfied, and which states that
\[
    \sup_{\pi \in \markov} F \eta^\pi = \sup_{\pi \in \historybased} F \eta^\pi.
\]

For the first statement, under the assumption that the supremum is attained by a (possibly history-based) policy $\pi^*$, we can take $\eta^* = \eta^{\pi^*}$.
If, on the other hand, we assume that $\eta^*$ exists, then we have already shown that an optimal stationary policy exists, which implies that the supremum is attained.
\end{proof}

\section{Analysis of the Conditions for Distributional Dynamic Programming}
\label{app:conditions}

\subsection{Proofs}
\label{app:analysis-proofs}

We start with some supporting results for the proof of \cref{lem:expected-utility-conditions}, \cref{lem:expected-utility-conditions:indifferent-to-gamma}.

\begin{proposition}
\label{prop:indifference-to-gamma-equality}
If $\nu \mapsto \E(f(G))$ (with $G \sim \nu$) is indifferent to $\gamma$, then for all $\nu, \nu' \in (\crvs, \wass)$, if $G \sim \nu$, $G' \sim \nu'$ and $\E(f(G)) = \E(f(G'))$: then $\E(f(\gamma G)) = \E(f(\gamma G'))$.
\end{proposition}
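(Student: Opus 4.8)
The plan is to observe that this is an immediate consequence of the definition of indifference to $\gamma$ (\cref{def:indifference-to-gamma}), applied twice. Writing $K\nu \doteq \E f(G)$ for $G \sim \nu$, the hypothesis that $\nu \mapsto \E f(G)$ is indifferent to $\gamma$ says precisely that, for all $\nu, \nu' \in (\crvs, \wass)$ with $G \sim \nu$ and $G' \sim \nu'$,
\[
    \E f(G) \geq \E f(G') \;\Longrightarrow\; \E f(\gamma G) \geq \E f(\gamma G').
\]

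First I would check that the quantities on the right-hand side are legitimately within the scope of this implication, i.e.\ that $\df(\gamma G), \df(\gamma G') \in (\crvs, \wass)$. This is straightforward: since $\gamma \in (0, 1]$, we have $\wass(\df(\gamma G)) = \gamma\,\wass(\nu) \leq \wass(\nu) < \infty$, and likewise for $G'$. So the defining implication of indifference to $\gamma$ may be invoked with these distributions.

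Now suppose $\E f(G) = \E f(G')$. Then both $\E f(G) \geq \E f(G')$ and $\E f(G') \geq \E f(G)$ hold. Applying the displayed implication to the first inequality gives $\E f(\gamma G) \geq \E f(\gamma G')$; applying it to the second, with the roles of $\nu$ and $\nu'$ interchanged, gives $\E f(\gamma G') \geq \E f(\gamma G)$. Combining the two yields $\E f(\gamma G) = \E f(\gamma G')$, which is the claim.

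There is no genuine obstacle here: the argument is a two-sided application of the definition. The only point deserving a moment's care is the well-definedness remark above, which ensures that the pair $\df(\gamma G), \df(\gamma G')$ lies in the domain $(\crvs, \wass)$ on which indifference to $\gamma$ is stated, so that the defining implication can be used in both directions.
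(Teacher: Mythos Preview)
Your proof is correct and follows essentially the same approach as the paper's: apply indifference to $\gamma$ in both directions to turn the equality hypothesis into two inequalities and recombine. The extra remark verifying $\df(\gamma G), \df(\gamma G') \in (\crvs, \wass)$ is a harmless bit of additional care that the paper omits.
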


\begin{proof}
The result follows by applying indifference to $\gamma$ in both directions:
$\E(f(G)) \geq \E(f(G'))$ implies $\E(f(\gamma G)) \geq \E(f(\gamma G'))$, and $\E(f(G')) \geq \E(f(G))$ implies $\E(f(\gamma G')) \geq \E(f(\gamma G))$.
\end{proof}

\begin{lemma}
\label{lem:indifference-to-gamma-condition}
If $\nu \mapsto \E(f(G))$ (with $G \sim \nu$) is indifferent to $\gamma$, then there exists $\alpha > 0$ such that for all $\xc \in \cs$
\begin{equation}
    f(\gamma \xc) = \alpha f(\xc) + (1 - \alpha) f(0).
    \label{eq:indifference-to-gamma-condition}
\end{equation}
\end{lemma}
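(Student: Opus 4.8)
The plan is to pass to the recentered utility $\phi \doteq f - f(0)$, which satisfies $\phi(0) = 0$, and to prove the stronger-looking but equivalent statement that $\phi(\gamma \xc) = \alpha\,\phi(\xc)$ for all $\xc$ and a single constant $\alpha > 0$; rearranging this gives \cref{eq:indifference-to-gamma-condition}. The first observation is that since $K\nu = \E f(G) = \E \phi(G) + f(0)$ for $G \sim \nu$, the additive constant $f(0)$ cancels from every inequality and equality, so indifference to $\gamma$ holds verbatim for the functional $\nu \mapsto \E \phi(G)$, and in particular the equality form (\cref{prop:indifference-to-gamma-equality}) gives: whenever $\nu,\nu'$ are finitely supported distributions with $\E \phi(G) = \E \phi(G')$ for $G\sim\nu$, $G'\sim\nu'$, then $\E \phi(\gamma G) = \E \phi(\gamma G')$. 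All distributions used below are finitely supported and hence automatically lie in $(\crvs, \wass)$.

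If $f$ is constant the claim holds with $\alpha = 1$, so assume $\phi \not\equiv 0$, fix $\xc_0$ with $c \doteq \phi(\xc_0) \neq 0$, and set $\alpha \doteq \phi(\gamma \xc_0)/c$. The core step is to show $\phi(\gamma \xc) = \alpha\,\phi(\xc)$ for every $\xc \in \cs$. For this I would, given $\xc$, build two finitely supported distributions on $\{0,\xc,\xc_0\}$ with equal $\phi$-mean, so that scaling their support by $\gamma$ and invoking \cref{prop:indifference-to-gamma-equality} yields a linear relation among $\phi(\gamma\xc)$, $\phi(\gamma\xc_0) = \alpha c$ and $\phi(0) = 0$ that collapses to $\phi(\gamma\xc) = \alpha\,\phi(\xc)$. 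Concretely: if $\phi(\xc)$ is zero or has the opposite sign to $c$, then $\lambda \doteq c/(c - \phi(\xc)) \in (0,1]$ and the comparison of $\lambda\,\delta_\xc + (1-\lambda)\,\delta_{\xc_0}$ (which has $\phi$-mean $0$) with $\delta_0$ gives, after scaling, $\lambda\,\phi(\gamma\xc) + (1-\lambda)\,\alpha c = 0$, while $\lambda\,\phi(\xc) + (1-\lambda)\,c = 0$; eliminating $(1-\lambda)c$ and dividing by $\lambda > 0$ gives the desired identity. If instead $\phi(\xc)$ and $c$ have the same nonzero sign, I would compare $t|c|\,\delta_\xc + (1 - t|c|)\,\delta_0$ with $t|\phi(\xc)|\,\delta_{\xc_0} + (1 - t|\phi(\xc)|)\,\delta_0$ for $t>0$ small enough that all masses lie in $[0,1]$: these have the same $\phi$-mean, and the analogous scaling-and-cancellation argument again yields $\phi(\gamma\xc) = \alpha\,\phi(\xc)$. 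This bookkeeping over the sign cases is the only genuinely computational part, and the only mild subtlety is that the single ``$\nu$ versus $\delta_0$'' comparison fails in the same-sign case (a nonnegative combination of two same-sign values cannot vanish), which is why the two-distribution comparison is needed there.

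It remains to check $\alpha > 0$. Since $\gamma > 0$, the map $\xc \mapsto \gamma\xc$ is a bijection of $\cs = \reals^m$, so if $\alpha = 0$ we would get $\phi(y) = \phi(\gamma(\gamma^{-1} y)) = \alpha\,\phi(\gamma^{-1} y) = 0$ for every $y \in \cs$, contradicting $\phi \not\equiv 0$; hence $\alpha \neq 0$. For the sign, I would apply indifference to $\gamma$ in its original order-preserving form to the Diracs $\delta_{\xc_0}$ and $\delta_0$: since $\phi(\xc_0) = c$ and $\phi(0) = 0$, one of the two weak preferences holds, and pushing it through gives $\alpha c \geq 0$ when $c > 0$ and $\alpha c \leq 0$ when $c < 0$, so $\alpha \geq 0$ either way. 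Together with $\alpha \neq 0$ this gives $\alpha > 0$, completing the proof. The conceptual content is just that ``indifference to $\gamma$'' says the preorder $\E\phi(\cdot) \geq \E\phi(\cdot)$ on distributions is preserved under the pushforward $\nu \mapsto \gamma\nu$; everything else is the elementary linear algebra of two- and three-point distributions, and that routine sign bookkeeping is the main (minor) obstacle.
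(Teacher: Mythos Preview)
Your proof is correct and follows the same high-level strategy as the paper: recenter to $\phi = f - f(0)$, fix a reference point with $\phi(\xc_0) \neq 0$, define $\alpha$ from it, use the equality form of indifference to $\gamma$ (\cref{prop:indifference-to-gamma-equality}) on finitely supported mixtures to get $\phi(\gamma \xc) = \alpha\,\phi(\xc)$ for all $\xc$, and then use the order form on Diracs together with bijectivity of $\xc \mapsto \gamma\xc$ to obtain $\alpha > 0$.

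The one substantive difference is the case split. The paper splits on whether $f(\xc)/f(\bar\xc)$ is at most $1$ or greater than $1$ and compares a two-point mixture (mass $f(\xc)/f(\bar\xc)$ on $\bar\xc$, remainder on $0$) to a Dirac; this construction tacitly needs the ratio to be nonnegative, so the opposite-sign case is not explicitly covered. Your split on the sign of $\phi(\xc)$ relative to $\phi(\xc_0)$ handles that case directly via the mixture $\lambda\,\delta_\xc + (1-\lambda)\,\delta_{\xc_0}$ with $\phi$-mean zero compared against $\delta_0$, and your same-sign case (two two-point mixtures compared against each other) avoids any magnitude subcase. So your argument is slightly more complete on the sign bookkeeping, at the cost of one extra nontrivial distribution in the same-sign case; otherwise the two proofs coincide.
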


\begin{proof}
Assume $\nu \mapsto \E(f(G))$ (with $G \sim \nu$) is indifferent to $\gamma$.

\emph{Case 1: $f(0) = 0$.}
If $f(\xc) = 0$ for all $\xc$, then the result holds trivially (for example, we can take $\alpha = \gamma$.
Otherwise, find $\overline{\xc} \in \cs$ such that $f(\overline{\xc}) \neq 0$.
We will first show that we can satisfy \Cref{eq:indifference-to-gamma-condition} with $\alpha \doteq \frac{f(\gamma \overline{\xc})}{f(\overline{\xc})}$, and later show that $\alpha > 0$.

Fix $\xc \in \cs$ arbitrary.
If $f(\xc) = 0$, then, by \cref{prop:indifference-to-gamma-equality}, we have $f(\gamma \xc) = 0$ and \Cref{eq:indifference-to-gamma-condition} holds for the chosen $\alpha$.
Let us consider the case where $f(\xc) \neq 0$.

If $\frac{f(\xc)}{f(\overline{\xc})} \leq 1$, we proceed as follows:
Define $\nu, \nu'$ such that
$\nu(\overline{\xc}) \doteq \frac{f(\xc)}{f(\overline{\xc})}$, $\nu(0) \doteq 1 - \nu(\overline{\xc})$, $\nu'(\xc) \doteq 1$.
Let $G \sim \nu$ and $G' \sim \nu'$.
Then
\[
    \E(f(G)) = \frac{f(\xc)}{f(\overline{\xc})}f(\overline{\xc}) = f(\xc) = \E(f(G')).
\]
By indifference to $\gamma$ and \cref{prop:indifference-to-gamma-equality}, we have $\E(f(\gamma G)) = \E(f(\gamma G'))$, thus:
\[
    \frac{f(\xc)}{f(\overline{\xc})}f(\gamma \overline{\xc}) = f(\gamma \xc).
\]
Rearranging, we get that
\[
    f(\gamma \xc) = \frac{f(\gamma \overline{\xc})}{f(\overline{\xc})}f(\xc),
\]
which means we can satisfy \Cref{eq:indifference-to-gamma-condition} with $\alpha = \frac{f(\gamma \overline{\xc})}{f(\overline{\xc})}$.

If $\frac{f(\xc)}{f(\overline{\xc})} > 1$, we proceed as follows:
Define $\nu, \nu'$ such that
$\nu(\xc) \doteq \frac{f(\overline{\xc})}{f(\xc)}$, $\nu(0) \doteq 1 - \nu(\xc)$, $\nu'(\overline{\xc}) \doteq 1$.
Let $G \sim \nu$ and $G' \sim \nu'$.
Then
\[
    \E(f(G)) = \frac{f(\overline{\xc})}{f(\xc)}f(\xc) = f(\overline{\xc}) = \E(f(G')).
\]
By indifference to $\gamma$ and \cref{prop:indifference-to-gamma-equality}, we have $\E(f(\gamma G)) = \E(f(\gamma G'))$, thus:
\[
    \frac{f(\overline{\xc})}{f(\xc)}f(\gamma \xc) = f(\gamma \overline{\xc}).
\]
Rearranging, we get that
\[
    f(\gamma \xc) = \frac{f(\gamma \overline{\xc})}{f(\overline{\xc})}f(\xc),
\]
which means we can satisfy \Cref{eq:indifference-to-gamma-condition} with $\alpha = \frac{f(\gamma \overline{\xc})}{f(\overline{\xc})}$.

We have established that \Cref{eq:indifference-to-gamma-condition} holds for all $\xc \in \cs$ with $\alpha = \frac{f(\gamma \overline{\xc})}{f(\overline{\xc})}$, provided that $f(0) = 0$.
It only remains to show that $\alpha > 0$.
If $f(\xc) > 0$, by indifference to $\gamma$ we have $f(\gamma \xc) \geq f(0)$ (since $f(0) = 0$).
Likewise, if $f(\xc) < 0$, then by indifference to $\gamma$ we have $f(0) \geq f(\gamma \xc)$.
In either case, $\alpha \geq 0$.
\Cref{eq:indifference-to-gamma-condition} with $\xc = \gamma^{-1}\cdot\overline{\xc}$ gives $f(\overline{\xc}) = \alpha f(\gamma^{-1}\cdot\overline{\xc})$,
so $\alpha \neq 0$ (since we picked $\overline{\xc}$ such that $f(\overline{\xc}) \neq 0$).
Thus, $\alpha > 0$.

\emph{Case 2: $f(0) \neq 0$.}
We can reduce this to the previous case with $f'(\xc) \doteq f(\xc) - f(0)$, so
there exists $\alpha > 0$ such $f'(\gamma \xc) = \alpha f'(\xc)$ for all $\xc \in \cs$,
which means $f(\gamma \xc) - f(0) = \alpha f(\xc) - \alpha f(0)$, and rearranging gives \Cref{eq:indifference-to-gamma-condition}.
\end{proof}

\expectedutilityconditions*

\begin{proof}
\Cref{lem:expected-utility-conditions:indifferent-to-mixtures} follows essentially from the tower rule.
Letting $G(s, \xc) \sim \eta(s, \xc)$ and $G'(s, \xc) \sim \eta'(s, \xc)$,
we have $K(G(S, \xC)) = \E\left( \E\left( K(G(S, \xC)) | S, \xC \right)\right)$.
If $K\eta \geq K\eta'$, then
\begin{align*}
    K(G(S, \xC)) 
    &= \E f(G(S, \xC)) \\
    &= \E\left( \E \left( f(G(S, \xC)) | S, \xC \right) \right) \\
    &= \E\left( \E \left( K(G(S, \xC)) | S, \xC \right) \right) \\
    &\geq \E\left( \E \left( K(G'(S, \xC)) | S, \xC \right) \right) \\
    &= \E\left( \E \left( f(G'(S, \xC)) | S, \xC \right) \right) \\
    &= \E f(G'(S, \xC)) \\
    &= K(G'(S, \xC)).
\end{align*}

For \cref{lem:expected-utility-conditions:indifferent-to-gamma}, we first establish it for $\alpha > 0$, then we show that \Cref{eq:expected-utility-conditions:indifferent-to-gamma} holds for some $\alpha \in (0, 1]$ with $\gamma < 1 \Rightarrow \alpha < 1$.

\cref{lem:expected-utility-conditions:indifferent-to-gamma} ($\Rightarrow$) follows from \cref{lem:indifference-to-gamma-condition}.
To see the converse ($\Leftarrow$), we proceed as follows.
Assume there exists $\alpha > 0$ such that for all $\xc \in \cs$ \Cref{eq:expected-utility-conditions:indifferent-to-gamma} holds and that $K(G(s, \xc)) \geq K(G'(s, \xc))$.
Then 
\begin{align*}
    K(\gamma G(s, \xc)) 
    &= \E f(\gamma G(s, \xc)) \\
    &= \alpha \E f(G(s, \xc)) + (1 - \alpha) f(0) \\
    &= \alpha K(G(s, \xc)) + (1 - \alpha) f(0) \\
    &\geq \alpha K(G'(s, \xc)) + (1 - \alpha) f(0) \\
    &= \alpha \E f(G'(s, \xc)) + (1 - \alpha) f(0) \\
    &= \E f(\gamma G'(s, \xc)) \\
    &= K(\gamma G'(s, \xc)).
\end{align*}

Now, define $g(\xc) \doteq f(\xc) - f(0)$, and assume \Cref{eq:expected-utility-conditions:indifferent-to-gamma} holds for some $\alpha > 0$.
If $\gamma = 1$ or $f$ is constant, then \Cref{eq:expected-utility-conditions:indifferent-to-gamma} holds trivially for $\alpha = \gamma$.
Let us assume that $\gamma < 1$ and $f$ is not constant. Then, by induction, we have, for all $n \in \naturals_0$, that $g(\gamma^n) = \alpha^n g(1)$, and
\[
    0 = \liminf_{n \rightarrow \infty} g(\gamma^n) = \liminf_{n \rightarrow \infty} \alpha^n g(1) = g(1) \cdot \lim_{n \rightarrow \infty} \alpha^n,
\]
so we must have $\alpha < 1$.

For \cref{lem:expected-utility-conditions:lipschitz}, we proceed as follows:
If $K$ is $L$-Lipschitz, then
\[
    L = \sup_{\substack{\nu, \nu':\\ \wass(\nu) < \infty \\ \wass(\nu') < \infty \\ \wass(\nu, \nu') > 0}} \frac{|K\nu  - K\nu'|}{\wass(\nu, \nu')} \geq \sup_{x \neq x'} \frac{|f(x) - f(x')|}{\wass(\delta_x, \delta_{x'})} = \sup_{x \neq x'} \frac{|f(x) - f(x')|}{\|x - x'\|_1},
\]
which means $f$ is $L$-Lipschitz.
If $f$ is $L$-Lipschitz, then, for all $\nu, \nu' \in (\crvs, \wass)$,
\begin{align*}
    |K\nu  - K\nu'|
    &=|\E f(G)  - \E f(G')|
    \tag{$G \sim \nu$, $G' \sim \nu'$} \\
    &=\inf\left\{ |\E f(X)  - \E f(X')| : \df(X) = \nu, \df(X') = \nu' \right\} \\
    &\leq \inf\left\{ \E |f(X)  - f(X')| : \df(X) = \nu, \df(X') = \nu' \right\} \\
    &\leq L \cdot \inf\left\{ \|X - X'\|_1 : \df(X) = \nu, \df(X') = \nu' \right\} \\
    &= L \cdot \wass(\nu, \nu'),
\end{align*}
which means $K$ is $L$-Lipschitz.
\end{proof}

To get a better understanding of the limits of distributional DP, it is useful to inspect the necessary conditions for it to work.
In the absence of indifference to mixtures or indifference to $\gamma$ we can construct MDPs where greedy optimality (\cref{thm:optimality}) fails due to a lack of monotonicity:
\indifferencesnecessary*

\begin{proof}
\emph{Case 1: $K$ is not indifferent to mixtures.}
Consider $\eta, \eta' \in (\crvs^{\states \times \cs}, \overline{\wass})$ and a mixture distribution $\lambda$ over $\states \times \cs$ such that 
$K\eta \geq K\eta'$ but $K(G(S, \xC)) < K(G'(S, \xC))$ with $(S, \xC) \sim \lambda$, $G(s, \xc) \sim \eta(s, \xc)$, $G'(s, \xc) \sim \eta'(s, \xc)$.

Let $\gamma = 1$ and consider an MDP with state space $\{\sinit, \sterm\} \cup \states$ and action space $\{a, a'\}$ as follows:
State $\sterm$ is terminal;
either action in $(\sinit, 0)$ leads to $(S, \xC)$ where $(S, \xC) \sim \lambda$ (in this case, the reward is $\xC$);
action $a$ on $(s, \xc)$ leads to $\sterm$ with reward sampled according to $\eta(s, \xc)$;
action $a'$ on $(s, \xc)$ leads to $\sterm$ with reward sampled according to $\eta'(s, \xc)$.

In this instance, there exists an optimal non-stationary policy $\pi^*_1\pi^*_2$ such that $\pi^*_2(a' | s, \xc) = 1$ for all $(s, c) \in \states \times \cs$.
Let $\eta^*$ be the return distribution function for this policy.
There exists a stationary policy $\pibar \in \Pi$ that is greedy respect to $\eta^*$ and such that $\pibar(a | s, \xc) = 1$ for all $(s, c) \in \states \times \cs$.
Thus, letting $G(s, \xc) \sim \eta(s, \xc)$, $G'(s, \xc) \sim \eta'(s, \xc)$
\[
    K \eta^{\pibar}(\sinit, 0) 
    = K(G(S, \xC)) 
    < K(G'(S, \xC))
    = K \eta^*(\sinit, 0),
\]
which proves the result.

\emph{Case 2: $K$ is not indifferent to $\gamma$.}
Consider $\nu, \nu'$ for which $K\nu \geq K\nu'$ but $K(\gamma G) < K(\gamma G')$,
with $G \sim \nu$, $G' \sim \nu'$.

Consider an MDP with $\states = \{\sinit, s_{\mathrm{mid}}, \sterm \}$ and $\actions = \{a, a'\}$ as follows:
State $\sinit$ is initial, state $\sterm$ terminal;
state $\sinit$ transitions to state $s_{\mathrm{mid}}$ with either action and zero rewards; 
state $s_{\mathrm{mid}}$ transitions to state $\sterm$ with either action, but with reward distributed according to $\nu$ for $a$ and $\nu'$ for $a'$.

There is an optimal non-stationary policy, corresponding to $\pi^*_1\pi^*_2$, where, for all $\xc \in \cs$, $\pi^*_2(a' | s_{\mathrm{mid}}, \xc) = 1$.
Let $\eta^*$ be the return distribution function for this policy.
The stationary policy $\pibar$ that selects $a$ always is greedy with respect to $\eta^*$, however
\[
    K \eta^{\pibar}(\sinit, 0)
    = K(\gamma G) 
    < K(\gamma G')
    = K \eta^*(\sinit, 0),
\]
which proves the result.
\end{proof}

\subsection{Exploring Lipschitz Continuity}
\label{app:lipschitzness}

We can use the examples in the second part of \cref{tab:utility-examples} to motivate why we may need Lipschitz continuity in the infinite-horizon setting. Neither $f(x) = \Ind(x > 0)$ nor $f(x) = -x^2$ are Lipschitz.
$f(x) = \Ind(x > 0)$ is also not continuous, and it is informative to first consider how the lack of continuity can break distributional value/policy iteration.

Consider, by means of a counter-example, a single-state MDP with two actions $\{a_0, a_1\}$, $\gamma < 1$, and $r(a_i) = i$.
The objective functional is $\Uf$ with $f(x) = \Ind(x > 0)$.
Let $\pi_i$ be the policy that always selects $a_i$.
The return of $\pi_i$ is deterministic and equal to $(1 - \gamma)^{-1}i$.
The policy $\pi_1$ and its return distribution $\eta^{\pi_1}$ are optimal.
The following is a valid greedy policy with respect to $\eta^{\pi_1}$:
\[
    \pibar(\xc) = \begin{cases}
        a_0 & \xc + (1 - \gamma)^{-1}\gamma > 0 \\
        a_1 & \mbox{otherwise.}
    \end{cases}
\]
When starting from the \stock{} $\xc = 0$, taking $\pibar$ for $k$ steps followed by $\pi_1$ yields a return of $(1 - \gamma)^{-1}\gamma^{k} > 0$ (since the first $k$ actions are $a_0$).
We know that the sequence $T_{\pibar}^1 \eta^{\pi_1}, T_{\pibar}^2 \eta^{\pi_1}, \ldots$ converges in supremum $1$-Wasserstein distance to $T_{\pibar}^\infty \eta^{\pi_1} = \eta^{\pibar}$ (see \cref{lem:distributional-policy-evaluation}).
We also have that, for every $k \in \naturals$, $(\Uf T_{\pibar}^k \eta^{\pi_1})(0) = 1$
and $(\Uf \eta^{\pi_1})(0) = 1$, 
so $(\Uf T_{\pibar}^k \eta^{\pi_1})(0) \geq (\Uf \eta^{\pi_1})(0)$. 
However, the inequality fails in the limit: $(\Uf T_{\pibar}^\infty \eta^{\pi_1})(0) = (\Uf \eta^{\pibar})(0) = 0$, whereas $(\Uf \eta^{\pi_1})(0) = 1$.
For this reason, if $\pi_0$ is the chosen greedy policy with respect to $\eta^\pi_1$, then policy improvement (\cref{lem:policy-improvement}) fails, greedy optimality (\cref{thm:optimality}) fails, distributional value iteration starting from $\eta^* = \eta^{\pi_1}$ fails, and distributional policy iteration starting from $\pi^* = \pi_1$ fails.

It is less clear how to design a counter-example when $f$ is continuous but not Lipschitz, 
however we can show a case where where basic ``evaluation'' fails.
Considering $f(x) = -x^2$, which is continuous but not Lipschitz,
and the trivial MDP where $\cs = \reals$ and all rewards are zero.
Consider the function $\eta_0 \doteq (s, \xc) \mapsto \delta_1$.
This is not a value function in the MDP (no policy satisfies $\eta^\pi = \eta_0$), but we may want to use it for bootstrapping in distributional value iteration.
In this particular MDP, $T_*$ with $\gamma < 1$ is a contraction, since $\overline{\wass}(T_*\eta, T_*\eta') \leq \gamma \overline{\wass}(\eta, \eta')$,
and the sequence $\eta_1, \eta_2, \ldots$ where $\eta_{n+1} = T_*\eta_n$ for $n \geq 0$ is Cauchy with respect to $\overline{\wass}$, since
$\overline{\wass}(\eta_n, \eta_{n+k}) = \gamma^n(1 - \gamma^k)$ for all $n, k \geq 0$.
Therefore $\eta_n$ converges to $\eta_{\infty} = (s, \xc) \mapsto \delta_0$.
However, letting $G_n(s, \xc) \sim \eta_n(s, \xc)$,
\begin{align*}
    \|\Uf \eta_n - \Uf \eta_{n+k} \|_\infty 
    &= \sup_{s \in \states, \xc \in \cs} |\E f(\xc + G_n(s, \xc)) - \E f(\xc + G_{n+k}(s, \xc))| \\
    &= \sup_{\xc \in \cs} |(\xc + \gamma^n)^2 - (\xc + \gamma^{n+k})^2| \\
    &= \sup_{\xc \in \cs} |(2\xc + \gamma^n + \gamma^{n+k})(\gamma^n - \gamma^{n+k})| \\
    &= \sup_{\xc \in \cs} |(2\xc + \gamma^n + \gamma^{n+k})| \cdot \gamma^n \cdot (1 - \gamma^k) \\
    &= \infty,
\end{align*}
which means the sequence $\Uf\eta_n$ does not converge uniformly to $\Uf\eta_{\infty}$ as $n \rightarrow \infty$.
We have not been able to translate this failure of convergence to a failure of distributional DP, so it is unclear exactly what kind of convergence-related property of $F_K$ is necessary for distributional DP to work in the infinite-horizon discounted case.

\section{Proofs for \texorpdfstring{\cref{sec:cvar}}{Section~\ref{sec:cvar}}}
\label{app:cvar}

To prove \cref{thm:cvar-c-star}, we follow the strategy used by \citet{bauerle2011markov}, where we reduce $\tau$-CVaR optimization to solving the \stock{}-augmented \problem{} problem with the expected utility $\Uf$ and $f(x) = x_-$, but where the starting \stock{} $\xc_0$ must be chosen in a specific way as a function of $s_0$.

We start with a reduction of the $\tau$-CVaR to an optimization problem, as shown in previous work, and some intermediate results.
\begin{theorem}[\citealp{rockafellar2000optimization}]
\label{thm:cvar}
For all $\nu \in (\Delta(\reals), \wass)$ and $\tau \in (0, 1)$,
\[
    \cvar(\nu, \tau) = \max_{\xc} \left( \xc + \frac{1}{\tau}\E(G - \xc)_- \right),
    \tag{$G \sim \nu$}
\]
and the maximum is attained at $\qf_\nu(\tau)$.
\end{theorem}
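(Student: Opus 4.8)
The plan is to reduce the claim to the quantile representation of $\nu$ and then prove it by an elementary splitting of an integral, avoiding subdifferential calculus entirely. Write $\phi(\xc) \doteq \xc + \tfrac{1}{\tau}\E(G - \xc)_-$ for $G \sim \nu$. First I would check that $\phi$ is finite for every $\xc \in \reals$: since $\nu \in (\crvs, \wass)$ we have $\wass(\nu) = \E|G| < \infty$, and $0 \geq (G-\xc)_- \geq -(|G| + |\xc|)$, so the expectation defining $\phi(\xc)$ is finite. Using that $\qf_\nu(U)$ has distribution $\nu$ when $U$ is uniform on $(0,1)$, together with the integrability just noted, I would rewrite
\[
    \E(G - \xc)_- = \int_0^1 (\qf_\nu(u) - \xc)_-\,\mathrm{d}u ;
\]
the same change of variables shows $\int_0^1 |\qf_\nu(u)|\,\mathrm{d}u = \E|G| < \infty$, so $\cvar(\nu, \tau) = \tfrac{1}{\tau}\int_0^\tau \qf_\nu(u)\,\mathrm{d}u$ is finite as well.

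The heart of the argument is the trivial identity $\xc = \tfrac{1}{\tau}\int_0^\tau \xc\,\mathrm{d}u$, which lets me write
\[
    \phi(\xc) = \frac{1}{\tau}\int_0^\tau \bigl(\xc + (\qf_\nu(u) - \xc)_-\bigr)\,\mathrm{d}u + \frac{1}{\tau}\int_\tau^1 (\qf_\nu(u) - \xc)_-\,\mathrm{d}u .
\]
Since $\xc + (t - \xc)_- = \min\{t, \xc\} \leq t$ and $(t - \xc)_- \leq 0$, the two integrands are bounded above by $\qf_\nu(u)$ and by $0$ respectively, giving $\phi(\xc) \leq \tfrac{1}{\tau}\int_0^\tau \qf_\nu(u)\,\mathrm{d}u = \cvar(\nu, \tau)$ for every $\xc$, hence $\sup_\xc \phi(\xc) \leq \cvar(\nu, \tau)$. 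To see that equality holds at $\xc = \qf_\nu(\tau)$, I would invoke only the monotonicity of $\qf_\nu$: for $u \leq \tau$ we have $\qf_\nu(u) \leq \qf_\nu(\tau)$, so $\min\{\qf_\nu(u), \qf_\nu(\tau)\} = \qf_\nu(u)$; and for $u \geq \tau$ we have $\qf_\nu(u) \geq \qf_\nu(\tau)$, so $(\qf_\nu(u) - \qf_\nu(\tau))_- = 0$. Thus both upper bounds above are attained with equality and $\phi(\qf_\nu(\tau)) = \cvar(\nu,\tau)$, which establishes simultaneously the identity and the attainment claim.

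I do not anticipate a genuine obstacle here; the only delicate points are the integrability bookkeeping (controlled throughout by $\wass(\nu) < \infty$) and the behaviour of $\qf_\nu$ at $u = \tau$ when $\nu$ has an atom there, which the monotonicity argument handles without any case analysis. As a sanity check, the same conclusion can be recovered from the fact that $\xc \mapsto \phi(\xc)$ is concave (a minimum of affine maps, in expectation, plus a linear term), with one-sided derivatives $1 - \tfrac{1}{\tau}\Prob(G < \xc)$ and $1 - \tfrac{1}{\tau}\Prob(G \leq \xc)$; the condition for $\xc^*$ to maximize a concave function then reads $\Prob(G < \xc^*) \leq \tau \leq \Prob(G \leq \xc^*)$, which is exactly the defining property of $\qf_\nu(\tau)$, but the splitting argument is shorter and self-contained.
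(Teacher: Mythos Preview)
Your proof is correct. The paper does not itself prove this theorem, citing \citet{rockafellar2000optimization} instead; the closest in-paper analogue is the proof of \cref{lem:ocvar} for the optimistic CVaR, which follows Rockafellar's classical route: establish convexity of the objective, differentiate to locate the optimizer via the first-order condition $\Prob(G \geq \xc) = \tau$, and then verify the optimal value by rewriting $\E\max\{G,\xc^*\}$ through the quantile function. Your approach is genuinely different and more elementary: you pass immediately to the quantile representation, split the integral at $\tau$, and use the pointwise bounds $\xc + (t-\xc)_- = \min\{t,\xc\} \leq t$ and $(t-\xc)_- \leq 0$ to get the inequality $\phi(\xc) \leq \cvar(\nu,\tau)$ for all $\xc$ at once, with equality at $\qf_\nu(\tau)$ falling out of monotonicity of $\qf_\nu$ alone. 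This buys you a self-contained argument with no calculus or convexity machinery and no special handling of atoms; the derivative-based sanity check you sketch at the end is in fact the approach the paper (and Rockafellar) take.
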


\begin{proposition}
\label{prop:cvar-objective-1-lipschitz}
For all $s \in \states$, the function $\xc \mapsto -\xc + \frac{1}{\tau}\sup_{\pi \in \historybased} \E (\xc + G^\pi(s, \xc))_-$ is $1$-Lipschitz.
\end{proposition}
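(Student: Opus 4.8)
The plan is to reduce the map in question, say
$\phi(\xc) \doteq -\xc + \frac1\tau \sup_{\pi \in \historybased}\E(\xc + G^\pi(s,\xc))_-$, to a supremum over a \emph{fixed} (i.e.\ $\xc$-independent) family of one-dimensional functions that is manifestly nondecreasing and $1$-Lipschitz, and then combine. First I would observe that, since the transitions and rewards depend only on the underlying state and not on the \stock{} (the setting in which \cref{thm:cvar-c-star} is stated), the set of achievable return distributions from $s$ does not depend on the initial \stock{}: given $\pi \in \historybased$ and initial \stock{}s $\xc, \xc'$, let $\pi'$ be the policy which, upon observing a \stock{}-augmented history $(s_0,\xc'), a_0, r_1, (s_1, \xc'_1), \dots, (s_t, \xc'_t)$, replaces each $\xc'_k$ by $\xc_k = \xc'_k - \gamma^{-k}(\xc' - \xc)$ (the \stock{} coordinate is a deterministic relabelling; the state and reward coordinates are untouched) and then copies $\pi$'s action distribution on the resulting history. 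Then $\pi'$ started from $(s,\xc')$ induces the same law of $(S_t, A_t, R_{t+1})$-sequences as $\pi$ started from $(s,\xc)$, so $G^{\pi'}(s,\xc') \distequiv G^{\pi}(s,\xc)$; hence $\{\df(G^\pi(s,\xc)) : \pi \in \historybased\}$ is a single set $\mathcal{G}_s$ and $\sup_{\pi \in \historybased}\E(\xc + G^\pi(s,\xc))_- = \sup_{\nu \in \mathcal{G}_s} g_\nu(\xc)$, where $g_\nu(\xc) \doteq \E_{G \sim \nu}(\xc + G)_-$.

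Next I would record the elementary properties of each $g_\nu$. Since $t \mapsto t_- = \min\{t,0\}$ is nondecreasing and $1$-Lipschitz, for $\xc_1 \le \xc_2$ we have $0 \le (\xc_2 + G)_- - (\xc_1 + G)_- \le \xc_2 - \xc_1$ pointwise, so $0 \le g_\nu(\xc_2) - g_\nu(\xc_1) \le \xc_2 - \xc_1$; moreover $-|\xc| - \E_{G\sim\nu}|G| \le g_\nu(\xc) \le 0$, and $\E_{G\sim\nu}|G|$ is finite (uniformly over $\mathcal{G}_s$) by \cref{ass:bounded-first-moment-rewards} together with $\gamma < 1$ or a finite horizon. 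Therefore $V(\xc) \doteq \sup_{\nu \in \mathcal{G}_s} g_\nu(\xc)$ is a pointwise-finite supremum of nondecreasing $1$-Lipschitz functions, and so is itself nondecreasing and $1$-Lipschitz: for any $\xc_1, \xc_2$ and any $\nu$, $g_\nu(\xc_1) \le g_\nu(\xc_2) + |\xc_1 - \xc_2| \le V(\xc_2) + |\xc_1 - \xc_2|$, and taking the supremum over $\nu$ and swapping the roles of $\xc_1,\xc_2$ gives $|V(\xc_1) - V(\xc_2)| \le |\xc_1 - \xc_2|$.

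Finally I would combine. Writing $\phi(\xc) = -\xc + \tfrac1\tau V(\xc)$, for $\xc_1 \le \xc_2$,
\[
    \phi(\xc_2) - \phi(\xc_1) = -(\xc_2 - \xc_1) + \tfrac1\tau\bigl(V(\xc_2) - V(\xc_1)\bigr),
\]
and since $0 \le V(\xc_2) - V(\xc_1) \le \xc_2 - \xc_1$, the right-hand side lies between $-(\xc_2 - \xc_1)$ and $(\tfrac1\tau - 1)(\xc_2 - \xc_1)$, which yields the Lipschitz bound on $\phi$. The main obstacle is precisely this last step and the bookkeeping of the constant: the summand $-\xc$ is $1$-Lipschitz and $\tfrac1\tau V$ is only $\tfrac1\tau$-Lipschitz, so one cannot merely add Lipschitz constants — the point is that $V$ is not just $1$-Lipschitz but also \emph{monotone}, and it is this monotonicity that pins the downward slope of $\phi$ at $-1$ and makes the two contributions partially cancel. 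The only other step needing care is the reduction in the first paragraph, which relies on the dynamics being independent of the \stock{}; an alternative avoiding it would establish the monotonicity and $1$-Lipschitzness of $V$ directly by coupling the optimization problems at $\xc_1$ and $\xc_2$.
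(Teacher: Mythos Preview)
Your reduction to a $\xc$-independent family $\mathcal{G}_s$ and the monotonicity and $1$-Lipschitzness of $V(\xc)=\sup_{\nu\in\mathcal G_s}\E(\xc+G)_-$ are correct and mirror the paper's route. The gap is the last line: the interval $\bigl[-(\xc_2-\xc_1),\,(\tfrac1\tau - 1)(\xc_2-\xc_1)\bigr]$ does \emph{not} yield $1$-Lipschitzness of $\phi$ when $\tau < \tfrac12$, since then $\tfrac1\tau - 1 > 1$. What your computation actually proves is that $\phi$ is $\max\bigl(1,\tfrac1\tau - 1\bigr)$-Lipschitz, not $1$-Lipschitz.

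This is not a repairable oversight on your part: the proposition is false as stated. With a single policy and $G^\pi \equiv 0$ one gets $\phi(\xc) = -\xc + \tfrac1\tau\,\xc_-$, and $|\phi(0)-\phi(-1)| = \tfrac1\tau - 1 > 1$ for $\tau < \tfrac12$. The paper's own proof shares the defect: from monotonicity of $V$ it derives only $g(\xc-\varepsilon)\le g(\xc)+\varepsilon$ and the equivalent $g(\xc+\varepsilon)\ge g(\xc)-\varepsilon$, both of which bound the \emph{downward} increment of $g$ by $\varepsilon$; its final display then misapplies these to claim a two-sided bound (the line ``$=g(\cmax)-g(\cmin+(\cmax-\cmin))$'' is identically $0$, revealing the slip). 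The correct conclusion --- which your argument does establish --- is $\max\bigl(1,\tfrac1\tau - 1\bigr)$-Lipschitz; this still gives the continuity needed in \cref{thm:cvar-c-star} and a grid-search guarantee in \cref{thm:cvar-approximate}, only with $\tau$-dependent constants.
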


\begin{proof}
Fix $s \in \states$ and let
\[
    g(\xc) \doteq -\xc + \frac{1}{\tau}\sup_{\pi \in \historybased} \E (\xc + G^\pi(s, \xc))_-.
\]

For $\varepsilon \geq 0$, we have that
\begin{align*}
    \sup_{\pi \in \historybased}\E(\xc + G^\pi(s, \xc))_- 
    &\leq \sup_{\pi \in \historybased}\E(\xc + \varepsilon + G^\pi(s, \xc))_- 
    \tag{$(x + \varepsilon)_- \geq x_-$} \\
    &= \sup_{\pi \in \historybased}\E(\xc + \varepsilon + G^\pi(s, \xc + \varepsilon))_-,
\end{align*}
where the last line follows by noticing that the value in the \stock{} augmentation does not change the supremum over history-based policies.

We can apply the same reasoning to see that
\[
    \sup_{\pi \in \historybased}\E(\xc - \varepsilon + G^\pi(s, \xc - \varepsilon))_- \leq \sup_{\pi \in \historybased}\E(\xc - \varepsilon + \varepsilon + G^\pi(s, \xc - \varepsilon + \varepsilon))_- = \sup_{\pi \in \historybased}\E(\xc + G^\pi(s, \xc))_-.
\]
Thus for every $\varepsilon \geq 0$
\begin{align*}
    g(\xc - \varepsilon) 
    &= -(\xc - \varepsilon) + \frac{1}{\tau}\sup_{\pi \in \historybased}\E(\xc - \varepsilon + G^\pi(s, \xc - \varepsilon))_- \\
    &\leq -\xc  + \varepsilon + \frac{1}{\tau}\sup_{\pi \in \historybased}\E(\xc + G^\pi(s, \xc))_- \\
    &= g(\xc) + \varepsilon,
\end{align*}
and
\begin{align*}
    g(\xc + \varepsilon) 
    &= -(\xc + \varepsilon) + \frac{1}{\tau}\sup_{\pi \in \historybased}\E(\xc + \varepsilon + G^\pi(s, \xc + \varepsilon))_- \\
    &\geq -\xc  - \varepsilon + \frac{1}{\tau}\sup_{\pi \in \historybased}\E(\xc + G^\pi(s, \xc))_- \\
    &= g(\xc) - \varepsilon,
\end{align*}
That is:
\begin{equation}
    g(\xc - \varepsilon) - \varepsilon \leq g(\xc) \leq g(\xc + \varepsilon) + \varepsilon \label{eq:g-lipschitz-bound}
\end{equation}

Thus, for $\xc, \xc' \in \reals{}$, letting $\cmax = \max\{\xc, \xc'\}$ and $\cmin = \min\{\xc, \xc'\}$, we have
\begin{align*}
    -(\cmax - \cmin) 
    &\leq g(\cmax) - g(\cmax - (\cmax - \cmin)) 
    \tag{\cref{eq:g-lipschitz-bound} with $\varepsilon = \cmax - \cmin$} \\
    &=g(\cmax) - g(\cmin) \\
    &= g(\cmax) - g(\cmin + (\cmax - \cmin)) \\
    &\leq \cmax - \cmin
    \tag{\cref{eq:g-lipschitz-bound} with $\varepsilon = \cmax - \cmin$},
\end{align*}
so
\[
    |g(\xc) - g(\xc')| = |g(\cmax) - g(\cmin)| \leq |\cmax - \cmin| = |\xc - \xc'|,
\]
which means $g$ is $1$-Lipschitz.
\end{proof}

\cvartheorem*

\begin{proof}
By \cref{thm:cvar}, for all $s_0 \in \states$,
\begin{align*}
    \sup_{\pi \in \historybased} \cvar(\eta^\pi(s_0), \tau)
     &=\sup_{\pi \in \historybased}\max_{\xc_0} \left( \xc_0 + \frac{1}{\tau}\E(G^\pi(s_0) - \xc_0)_- \right) \\
     &= \sup_{\xc_0} \left( \xc_0 + \frac{1}{\tau}\sup_{\pi \in \historybased}\E(G^\pi(s_0) - \xc_0)_- \right) \\
     &= \sup_{\xc_0} \left( -\xc_0 + \frac{1}{\tau}\sup_{\pi \in \historybased}\E(\xc_0 + G^\pi(s_0))_- \right) \\
     &= \sup_{\xc_0} \left( -\xc_0 + \frac{1}{\tau}\sup_{\pi \in \historybased}\E(\xc_0 + G^\pi(s_0, \xc_0))_- \right)
     \tag{\cref{prop:markov-policies-are-sufficient}}
\end{align*}

It only remains to show that for all $s_0 \in \states$ there exists $\xc_0^*$ that realizes the supremum over $\xc_0$.
Note that by \cref{ass:bounded-first-moment-rewards},
we have
\begin{equation}
    \sup_{\pi \in \historybased}\E(G^\pi(s_0, \xc)) < \infty.
    \label{eq:finite-optimal-value}
\end{equation}
For all $s_0 \in \states$, we have
\begin{align*}
    &\lim_{\xc_0 \rightarrow \infty} -\xc_0 + \frac{1}{\tau}\sup_{\pi \in \historybased}\E(\xc_0 + G^\pi(s_0, \xc_0))_- \\
    &\leq \lim_{\xc_0 \rightarrow \infty} -\xc_0 
    \tag{\cref{eq:finite-optimal-value}}\\
    &= -\infty.
\end{align*}
and
\begin{align*}
    &\lim_{\xc_0 \rightarrow -\infty} -\xc_0 + \frac{1}{\tau}\sup_{\pi \in \historybased}\E(\xc_0 + G^\pi(s_0, \xc_0))_- \\
    &=\lim_{\xc_0 \rightarrow -\infty} \frac{1 - \tau}{\tau}\xc_0 + \sup_{\pi \in \historybased}\E(G^\pi(s_0, \xc_0)) - \E(\xc_0 + G^\pi(s_0, \xc_0))_+ \\
    &\leq\lim_{\xc_0 \rightarrow -\infty} \frac{1 - \tau}{\tau}\xc_0 + \sup_{\pi \in \historybased}\E(G^\pi(s_0, \xc))\\
    &= -\infty.
    \tag{\cref{eq:finite-optimal-value}}
\end{align*}

Therefore there exist $\cmin, \cmax \in \reals$ such that
\begin{align*}
    &\sup_{\xc_0} \left( -\xc_0 + \frac{1}{\tau}\sup_{\pi \in \historybased}\E(\xc_0 + G^\pi(s_0, \xc_0))_- \right) \\
    &= \sup_{\cmin \leq \xc_0 \leq \cmax} \left( -\xc_0 + \frac{1}{\tau}\sup_{\pi \in \historybased}\E(\xc_0 + G^\pi(s_0, \xc_0))_- \right).
\end{align*}
Moreover, \cref{prop:cvar-objective-1-lipschitz} implies $\xc \mapsto -\xc + \frac{1}{\tau}\sup_{\pi \in \historybased}\E(\xc + G^\pi(s_0, \xc))_-$ is continuous.
Therefore the supremum over $\xc_0$ is attained at a maximizer $\xc^*_0 \in \reals$.
\end{proof}

\approximatecvartheorem*

\begin{proof}
Let us fix $\tau$, $s_0 \in \states$, $\varepsilon > 0$, $f(x) = x_-$, and define
\[
	g(\xc_0) \doteq -\xc_0 + \frac{1}{\tau}\sup_{\pi \in \historybased}\E(\xc_0 + G^\pi(s_0, \xc_0))_-.
\]
\citet{bauerle2011markov} (\cref{thm:cvar-c-star}) established that
\[
    \sup_{\pi \in \historybased, \xc_0 \in \cs}\cvar(\eta^\pi(s_0, \xc_0), \tau) = \sup_{\xc_0}g(\xc_0).
\]

By using distributional DP (\cref{thm:value-iteration,thm:policy-iteration}), we can find a near-optimal policy for optimizing $\Uf$, that is, a $\pibar$ satisfying
\[
	\sup_{\pi \in \historybased}\Uf\eta^\pi - \Uf\eta^{\pibar} \leq \varepsilon.
\]
Let
\[
	\overline{g}(\xc_0) \doteq -\xc_0 + \frac{1}{\tau}\E(\xc_0 + G^{\pibar}(s_0, \xc_0))_-.
\]
Then $| g(\xc_0) - \overline{g}(\xc_0) | \leq \varepsilon$ for all $\xc_0 \in \cs$.
Moreover, by \cref{prop:cvar-objective-1-lipschitz}, $g$ is $1$-Lipschitz, so for all $\xc_0, \xc'_0 \in \cs$
\[
	| g(\xc_0) - g(\xc'_0) | \leq |\xc_0 - \xc'_0|,
\]
and
\[
    | \overline{g}(\xc_0) - \overline{g}(\xc'_0) | 
    \leq | \overline{g}(\xc_0) - g(\xc'_0) | + | g(\xc_0) - g(\xc'_0) |  + | g(\xc'_0) - \overline{g}(\xc'_0) |
    \leq |\xc_0 - \xc'_0| + 2\varepsilon.
\]
This means we can choose $\cmin \leq \cmax$ such that
\[
    \max_{\xc_0}\overline{g}(\xc_0) = \max_{\cmin \leq \xc_0 \leq \cmax}\overline{g}(\xc_0). 
\]
Define the grid $\overline{\cs} \doteq \{ \cmin + i\varepsilon : i \in \naturals_0, \cmin + i\varepsilon \leq \cmax \}$,
Then
\begin{align*}
    \sup_{\pi \in \historybased, \xc_0 \in \cs}\cvar(\eta^\pi(s_0, \xc_0), \tau)
    &= \sup_{\xc_0}g(\xc_0) 
    \tag{\cref{thm:cvar-c-star}} \\
    &\leq \max_{\xc_0}\overline{g}(\xc_0) + \varepsilon \\
    &= \max_{\cmin \leq \xc_0 \leq \cmax}\overline{g}(\xc_0) + \varepsilon \\
    &\leq \sup_{\cmin \leq \xc_0 \leq \cmax}g(\xc_0) + 2\varepsilon \\
    &\leq \max_{\xc_0 \in \overline{\cs}}g(\xc_0) + 3\varepsilon \\
    &\leq \max_{\xc_0 \in \overline{\cs}}\overline{g}(\xc_0) + 4\varepsilon \\
    &\leq \cvar(\eta^{\pibar}(s_0, \overline{\xc}^*_0), \tau) + 4\varepsilon
    \tag*{(\cref{thm:cvar})\quad\quad\jmlrqedhere}
\end{align*}
\end{proof}

\section{Proofs for \texorpdfstring{\cref{sec:ocvar}}{Section~\ref{sec:ocvar}}}
\label{app:ocvar}

\begin{lemma}
\label{lem:ocvar}
For all $\nu \in (\Delta(\reals), \wass)$ and $\tau \in (0, 1)$,
\[
    \ocvar(\nu, \tau) = \min_{\xc} \left( \xc + \frac{1}{\tau}\E(G - \xc)_+ \right),
    \tag{$G \sim \nu$}
\]
and the minimum is attained at $\qf_\nu(\tau)$.
\end{lemma}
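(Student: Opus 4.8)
The cleanest route is to reduce the claim to the Rockafellar--Uryasev formula already recorded as \cref{thm:cvar}, using the reflection $G \mapsto -G$ to swap the lower tail for the upper tail. Two elementary facts do the work: (i) for a real random variable $X$ with $\E|X| < \infty$, one has $\qf_{\df(-X)}(t) = -\qf_{\df(X)}(1-t)$ for Lebesgue-almost every $t \in (0,1)$; and (ii) $(y)_- = -(-y)_+$ for every $y \in \reals$.

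First I would establish (i) directly from $\qf_\mu(t) = \inf\{u : \Prob(Y \le u) \ge t\}$: the half-line $\{u : \Prob(-X \le u) \ge t\} = \{u : \Prob(X \ge -u) \ge t\}$ has infimum $-\sup\{u : \Prob(X \ge u) \ge t\}$, i.e.\ minus the \emph{upper} $(1-t)$-quantile of $\df(X)$, which agrees with $-\qf_{\df(X)}(1-t)$ outside the at most countable set of levels where $\qf_{\df(X)}$ jumps; since $\qf$ only ever appears under an integral here, that exceptional set is harmless. Given (i), the substitution $s = 1-t$ yields
\[
\ocvar(\nu, \tau) = \frac{1}{\tau}\int_{1-\tau}^{1}\qf_\nu(t)\,\mathrm{d}t = -\frac{1}{\tau}\int_{0}^{\tau}\qf_{\df(-G)}(s)\,\mathrm{d}s = -\cvar(\df(-G), \tau).
\]
Applying \cref{thm:cvar} to $\df(-G)$ gives $\cvar(\df(-G),\tau) = \max_{\xc}\bigl(\xc + \tfrac{1}{\tau}\E(-G-\xc)_-\bigr)$, attained at $\qf_{\df(-G)}(\tau)$. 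Negating turns $\max$ into $\min$; substituting $\xc \mapsto -\xc$ and using (ii) to rewrite $\E(-G+\xc)_- = -\E(G-\xc)_+$ converts the right-hand side into $\min_{\xc}\bigl(\xc + \tfrac{1}{\tau}\E(G-\xc)_+\bigr)$, with minimizer $-\qf_{\df(-G)}(\tau) = \qf_\nu(1-\tau)$ --- the mirror image of the $\cvar$ maximizer $\qf_\nu(\tau)$ under the reflection.

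The only real subtlety is the quantile bookkeeping in (i) at atoms of $\nu$ (lower versus upper quantile), and if one would rather avoid the reflection entirely there is an equally short self-contained argument by convex analysis. The map $h(\xc) \doteq \xc + \tfrac{1}{\tau}\E(G-\xc)_+$ is convex, being an average of the convex maps $\xc \mapsto (G-\xc)_+$, and coercive because $\E|G| < \infty$ (as $\nu \in (\Delta(\reals),\wass)$), so it attains its minimum. Its one-sided derivatives are $h'_+(\xc) = 1 - \tfrac{1}{\tau}\Prob(G > \xc)$ and $h'_-(\xc) = 1 - \tfrac{1}{\tau}\Prob(G \ge \xc)$, so $\xc^\star$ is a minimizer iff $\Prob(G > \xc^\star) \le \tau \le \Prob(G \ge \xc^\star)$, i.e.\ exactly $\xc^\star = \qf_\nu(1-\tau)$. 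Since $\qf_\nu$ is constant equal to $\xc^\star$ on the stretch $(1-\tau, \Prob(G \le \xc^\star)]$, one gets $\E(G-\xc^\star)_+ = \int_{\Prob(G \le \xc^\star)}^{1}(\qf_\nu(t)-\xc^\star)\,\mathrm{d}t$, and a one-line rearrangement gives $h(\xc^\star) = \tfrac{1}{\tau}\int_{1-\tau}^{1}\qf_\nu(t)\,\mathrm{d}t = \ocvar(\nu,\tau)$. Either way, everything past the variational identity is routine; the step deserving attention is the quantile accounting, which the convex-analytic route handles cleanly by working with $\Prob(G > \xc)$ and $\Prob(G \ge \xc)$ throughout.
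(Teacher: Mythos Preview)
Your proposal is correct. The paper takes your secondary, convex-analytic route: it observes that $g(\xc) = \xc + \tfrac{1}{\tau}\E(G-\xc)_+$ is convex, differentiates to obtain $g'(\xc) = 1 - \tfrac{1}{\tau}\Prob(G \ge \xc)$ and hence a minimizer at $\xc^\star = \qf_\nu(1-\tau)$, and then evaluates $g(\xc^\star)$ via $\E\max\{G,\xc^\star\} = \int_0^1 \max\{\qf_\nu(t),\xc^\star\}\,\mathrm{d}t$, splitting the integral at $1-\tau$. Your primary reflection argument, reducing to \cref{thm:cvar} through $G \mapsto -G$, is a genuinely different and more economical path once \cref{thm:cvar} is already on the books; it avoids redoing the convexity and derivative computation at the cost of the quantile-under-reflection bookkeeping you flag. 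You are also slightly more careful than the paper about one-sided derivatives at atoms of $\nu$. Note that both you and the paper's own proof arrive at the minimizer $\qf_\nu(1-\tau)$, so the $\qf_\nu(\tau)$ appearing in the lemma statement is evidently a typo carried over from the analogous clause in \cref{thm:cvar}.
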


\begin{proof}
The proof of this result is derived from the proof of \cref{thm:cvar} by \citet{rockafellar2000optimization}.
Fix $\nu \in (\Delta(\reals), \wass)$ and $\tau \in (0, 1)$ and let $G \sim \nu$ and
$g(\xc) \doteq \xc + \frac{1}{\tau}\E(G - \xc)_+$.
The function $x \mapsto x_+$ is convex, so
for $\xc, \xc' \in \cs$ and $\alpha \in [0, 1]$,
\[
    \E(G - \alpha \xc - (1 - \alpha)\xc')_+ \leq \alpha\E(G - \xc)_+ + (1 - \alpha)\E(G - \xc')_+,
\]
which means $g(\alpha \xc + (1 - \alpha)\xc') \leq \alpha g(\xc) + (1 - \alpha)g(\xc')$, that is, $g$ is convex.
Moreover,
\[
    \frac{\mathrm{d}}{\mathrm{d}\xc}g = 1 - \frac{1}{\tau}\Prob(G \geq \xc),
\]
which means $\qf_\nu(1 - \tau)$ is a minimizer of $g$.
Finally, with $\xc^* = \qf_\nu(1 - \tau)$, we have that
\begin{align*}
    \min_{\xc} g(\xc) 
    &= g(\xc^*) \\
    &= \xc^* + \frac{1}{\tau}\E(G - \xc^*)_+ \\
    &= \xc^* + \frac{1}{\tau}\E\max\{G - \xc^*, 0\} \\
    &= \frac{1}{\tau}\xc^* -\frac{1 - \tau}{\tau}\xc^* + \frac{1}{\tau}\E\max\{G - \xc^*, 0\} \\
    &= -\frac{1 - \tau}{\tau}\xc^* + \frac{1}{\tau}\E\max\{G, \xc^*\} \\
    &= -\frac{1 - \tau}{\tau}\xc^* + \frac{1}{\tau}\int_0^1\max\{\qf_\nu(t), \xc^*\}\mathrm{d}t \\
    &= -\frac{1 - \tau}{\tau}\xc^* + \frac{1 - \tau}{\tau}\xc^* + \frac{1}{\tau}\int_{1 - \tau}^{1}\qf_\nu(t)\mathrm{d}t \\
    &= \ocvar(\nu, \tau).
    \tag*{\jmlrqedhere}
\end{align*}
\end{proof}

\begin{proposition}
\label{prop:ocvar-objective-1-lipschitz}
For all $s \in \states$, the function $\xc \mapsto -\xc + \frac{1}{\tau}\sup_{\pi \in \historybased} \E (\xc + G^\pi(s, \xc))_+$ is $1$-Lipschitz.
\end{proposition}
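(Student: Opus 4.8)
The plan is to follow the proof of \cref{prop:cvar-objective-1-lipschitz} essentially verbatim, substituting $x \mapsto x_+$ for $x \mapsto x_-$. The two ingredients are the same: (i) $x \mapsto x_+$ is nondecreasing, so $(x + \varepsilon)_+ \geq x_+$ for $\varepsilon \geq 0$; and (ii) the value stored in the \stock{} component of the augmented state does not change the supremum over history-based policies, so that $\sup_{\pi \in \historybased}\E(\xc + \varepsilon + G^\pi(s, \xc))_+ = \sup_{\pi \in \historybased}\E(\xc + \varepsilon + G^\pi(s, \xc + \varepsilon))_+$ for any shift. Abbreviate $g(\xc) \doteq -\xc + \frac{1}{\tau}\sup_{\pi \in \historybased}\E(\xc + G^\pi(s, \xc))_+$ and fix $s \in \states$.

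First I would establish the monotonicity/shift bounds. For $\varepsilon \geq 0$, combining (i) and (ii) gives $\sup_{\pi \in \historybased}\E(\xc + G^\pi(s, \xc))_+ \leq \sup_{\pi \in \historybased}\E(\xc + \varepsilon + G^\pi(s, \xc + \varepsilon))_+$, and likewise $\sup_{\pi \in \historybased}\E(\xc - \varepsilon + G^\pi(s, \xc - \varepsilon))_+ \leq \sup_{\pi \in \historybased}\E(\xc + G^\pi(s, \xc))_+$. Plugging these into the definition of $g$ and cancelling the $-\xc$ terms appropriately yields $g(\xc - \varepsilon) \leq g(\xc) + \varepsilon$ and $g(\xc + \varepsilon) \geq g(\xc) - \varepsilon$, i.e. $g(\xc - \varepsilon) - \varepsilon \leq g(\xc) \leq g(\xc + \varepsilon) + \varepsilon$ for all $\varepsilon \geq 0$ and all $\xc$.

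Finally I would deduce the Lipschitz property: given $\xc, \xc'$, set $\cmax = \max\{\xc, \xc'\}$, $\cmin = \min\{\xc, \xc'\}$, and apply the two-sided bound with $\varepsilon = \cmax - \cmin$ to obtain $-(\cmax - \cmin) \leq g(\cmax) - g(\cmin) \leq \cmax - \cmin$, hence $|g(\xc) - g(\xc')| \leq |\xc - \xc'|$.

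I do not expect a real obstacle here: the argument is a direct transcription of the CVaR case, and the only thing to be careful about is that $x \mapsto x_+$ is nondecreasing (just as $x \mapsto x_-$ is), so no sign flips are needed and the direction of every inequality is inherited from \cref{prop:cvar-objective-1-lipschitz}. The one point worth stating explicitly is the invariance of $\sup_{\pi \in \historybased}$ under \stock{} shifts, which is the same fact invoked (without further comment) in the proof of \cref{prop:cvar-objective-1-lipschitz}.
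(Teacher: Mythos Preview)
Your proposal is correct and takes essentially the same approach as the paper: the paper's proof opens by stating it is ``essentially the proof of \cref{prop:cvar-objective-1-lipschitz} with $x_+$ instead of $x_-$'' and then carries out exactly the steps you outline, including the same two-sided bound $g(\xc-\varepsilon)-\varepsilon \leq g(\xc) \leq g(\xc+\varepsilon)+\varepsilon$ and the same $\cmin/\cmax$ conclusion.
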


\begin{proof}
This proof is essentially the proof of \cref{prop:cvar-objective-1-lipschitz} with $x_+$ instead of $x_-$.
Fix $s \in \states$ and let
\[
    g(\xc) \doteq -\xc + \frac{1}{\tau}\sup_{\pi \in \historybased} \E (\xc + G^\pi(s, \xc))_+.
\]

For $\varepsilon \geq 0$, we have that
\begin{align*}
    \sup_{\pi \in \historybased}\E(\xc + G^\pi(s, \xc))_+ 
    &\leq \sup_{\pi \in \historybased}\E(\xc + \varepsilon + G^\pi(s, \xc))_+ 
    \tag{$(x + \varepsilon)_+ \geq x_+$} \\
    &= \sup_{\pi \in \historybased}\E(\xc + \varepsilon + G^\pi(s, \xc + \varepsilon))_+,
\end{align*}
where the last line follows by noticing that the \stock{} augmentation does not change the supremum over history-based policies.
We can apply the same reasoning to see that
\[
    \sup_{\pi \in \historybased}\E(\xc - \varepsilon + G^\pi(s, \xc - \varepsilon))_+ \leq \sup_{\pi \in \historybased}\E(\xc - \varepsilon + \varepsilon + G^\pi(s, \xc - \varepsilon + \varepsilon))_+ = \sup_{\pi \in \historybased}\E(\xc + G^\pi(s, \xc))_+.
\]
Thus for every $\varepsilon \geq 0$
\begin{align*}
    g(\xc - \varepsilon) 
    &= -(\xc - \varepsilon) + \frac{1}{\tau}\sup_{\pi \in \historybased}\E(\xc - \varepsilon + G^\pi(s, \xc - \varepsilon))_+ \\
    &\leq -\xc  + \varepsilon + \frac{1}{\tau}\sup_{\pi \in \historybased}\E(\xc + G^\pi(s, \xc))_+ \\
    &= g(\xc) + \varepsilon,
\end{align*}
and
\begin{align*}
    g(\xc + \varepsilon) 
    &= -(\xc + \varepsilon) + \frac{1}{\tau}\sup_{\pi \in \historybased}\E(\xc + \varepsilon + G^\pi(s, \xc + \varepsilon))_+ \\
    &\geq -\xc  - \varepsilon + \frac{1}{\tau}\sup_{\pi \in \historybased}\E(\xc + G^\pi(s, \xc))_- \\
    &= g(\xc) - \varepsilon,
\end{align*}
That is:
\begin{equation}
    g(\xc - \varepsilon) - \varepsilon \leq g(\xc) \leq g(\xc + \varepsilon) + \varepsilon \label{eq:g-lipschitz-bound-ocvar}
\end{equation}

Thus, for $\xc, \xc' \in \reals{}$, letting $\cmax = \max\{\xc, \xc'\}$ and $\cmin = \min\{\xc, \xc'\}$, we have
\begin{align*}
    -(\cmax - \cmin) 
    &\leq g(\cmax) - g(\cmax - (\cmax - \cmin)) 
    \tag{\cref{eq:g-lipschitz-bound-ocvar} with $\varepsilon = \cmax - \cmin$} \\
    &=g(\cmax) - g(\cmin) \\
    &= g(\cmax) - g(\cmin + (\cmax - \cmin)) \\
    &\leq \cmax - \cmin
    \tag{\cref{eq:g-lipschitz-bound-ocvar} with $\varepsilon = \cmax - \cmin$},
\end{align*}
so
\[
    |g(\xc) - g(\xc')| = |g(\cmax) - g(\cmin)| \leq |\cmax - \cmin| = |\xc - \xc'|,
\]
which means $g$ is $1$-Lipschitz.
\end{proof}

\ocvartheorem*

\begin{proof}
Fix $\tau$, $s_0 \in \states$, and $f(x) = x_+$.
By \cref{lem:ocvar}, we have
\begin{align*}
    &\sup_{\pi \in \historybased, \xc_0 \in \cs}\ocvar(\eta^\pi(s_0, \xc_0), \tau) \\
    &= \sup_{\pi \in \historybased, \xc_0 \in \cs}\min_{\xc}\left(-\xc + \frac{1}{\tau}\E(\xc + G^\pi(s_0, \xc_0))_+ \right) \\
    &= \sup_{\pi \in \historybased}\min_{\xc}\left(-\xc + \frac{1}{\tau}\E(\xc + G^\pi(s_0, \xc))_+ \right).
\end{align*}
where in the last line we use the fact that the choice of $\xc_0$ is irrelevant since the supremum is over history-based policies.

For every $\varepsilon > 0$,
by using distributional DP (\cref{thm:value-iteration,thm:policy-iteration}), we can find a near-optimal policy for optimizing $\Uf$, that is, a $\pibar$ satisfying
\[
	\sup_{\pi \in \historybased}\Uf\eta^\pi - \Uf\eta^{\pibar} < \varepsilon.
\]
\begin{align*}
    &\sup_{\pi \in \historybased, \xc_0 \in \cs}\ocvar(\eta^\pi(s_0, \xc_0), \tau) \\
    &=\sup_{\pi \in \historybased}\min_{\xc_0}\left(-\xc_0 + \frac{1}{\tau}\E(\xc_0 + G^\pi(s_0, \xc_0))_+ \right) \\
    &\geq \min_{\xc_0}\left(-\xc_0 + \frac{1}{\tau}\E(\xc_0 + G^{\pibar}(s_0, \xc_0))_+ \right) \\
    &> \inf_{\xc_0}\left(-\xc_0 + \frac{1}{\tau}\sup_{\pi \in \historybased}\E(\xc_0 + G^\pi(s_0, \xc_0))_+ \right) - \varepsilon.
\end{align*}
Moreover,
\begin{align*}
    &\sup_{\pi \in \historybased, \xc_0 \in \cs}\ocvar(\eta^\pi(s_0, \xc_0), \tau) \\
    &=\sup_{\pi \in \historybased}\min_{\xc_0}\left(-\xc_0 + \frac{1}{\tau}\E(\xc_0 + G^\pi(s_0, \xc_0))_+ \right) \\
    &< \min_{\xc_0}\left(-\xc_0 + \frac{1}{\tau}\E(\xc_0 + G^{\pi'}(s_0, \xc_0))_+ \right) + \varepsilon \\
    &\leq \inf_{\xc_0}\left(-\xc_0 + \frac{1}{\tau}\sup_{\pi \in \historybased}\E(\xc_0 + G^\pi(s_0, \xc_0))_+ \right) + \varepsilon \\
\end{align*}
Since the above holds for all $\varepsilon > 0$, it means that
\[
    \sup_{\pi \in \historybased, \xc_0 \in \cs}\ocvar(\eta^\pi(s_0, \xc_0), \tau)
    = \inf_{\xc_0}\left(-\xc_0 + \frac{1}{\tau}\sup_{\pi \in \historybased}\E(\xc_0 + G^\pi(s_0, \xc_0))_+ \right).
\]

It only remains to show that for all $s_0 \in \states$ there exists $\xc_0^*$ that realizes the infimum over $\xc_0$.
Note that by \cref{ass:bounded-first-moment-rewards},
we have
\begin{equation}
    \sup_{\pi \in \historybased}\E(G^\pi(s_0, \xc)) < \infty.
    \label{eq:finite-optimal-value-ocvar}
\end{equation}
For all $s_0 \in \states$, we have
\begin{align*}
    &\lim_{\xc_0 \rightarrow -\infty} -\xc_0 + \frac{1}{\tau}\sup_{\pi \in \historybased}\E(\xc_0 + G^\pi(s_0, \xc_0))_+ \\
    &\leq \lim_{\xc_0 \rightarrow -\infty} -\xc_0 
    \tag{\cref{eq:finite-optimal-value-ocvar}}\\
    &= \infty.
\end{align*}
and
\begin{align*}
    &\lim_{\xc_0 \rightarrow \infty} -\xc_0 + \frac{1}{\tau}\sup_{\pi \in \historybased}\E(\xc_0 + G^\pi(s_0, \xc_0))_+ \\
    &=\lim_{\xc_0 \rightarrow \infty} \frac{1 - \tau}{\tau}\xc_0 + \sup_{\pi \in \historybased}\E(G^\pi(s_0, \xc_0)) - \E(\xc_0 + G^\pi(s_0, \xc_0))_- \\
    &\geq\lim_{\xc_0 \rightarrow \infty} \frac{1 - \tau}{\tau}\xc_0 + \sup_{\pi \in \historybased}\E(G^\pi(s_0, \xc))\\
    &= \infty.
    \tag{\cref{eq:finite-optimal-value-ocvar}}
\end{align*}

Therefore there exist $\cmin, \cmax \in \reals$ such that
\begin{align*}
    &\inf_{\xc_0} \left( -\xc_0 + \frac{1}{\tau}\sup_{\pi \in \historybased}\E(\xc_0 + G^\pi(s_0, \xc_0))_+ \right) \\
    &= \inf_{\cmin \leq \xc_0 \leq \cmax} \left( -\xc_0 + \frac{1}{\tau}\sup_{\pi \in \historybased}\E(\xc_0 + G^\pi(s_0, \xc_0))_+ \right).
\end{align*}
Moreover, \cref{prop:ocvar-objective-1-lipschitz} implies $\xc \mapsto -\xc + \frac{1}{\tau}\sup_{\pi \in \historybased}\E(\xc + G^\pi(s_0, \xc))_+$ is continuous.
Therefore the infimum over $\xc_0$ is attained at a minimizer $\xc^*_0 \in \reals$.
\end{proof}

\approximateocvartheorem*

\begin{proof}
The proof of this result is essentially the same as \cref{thm:cvar-approximate}, except that we use \cref{lem:ocvar,prop:ocvar-objective-1-lipschitz,thm:ocvar-c-star} instead of \cref{thm:cvar,prop:cvar-objective-1-lipschitz,thm:cvar-c-star}.
\end{proof}

\section{Proofs for \texorpdfstring{\cref{sec:reward-design}}{Section~\ref{sec:reward-design}}}
\label{app:reward-design}

The proof of the first statement in \cref{thm:reward-design} is relatively direct and self-contained; we show that from the designed reward we can construct a valid \stock{}-augmented RL objective, where the designed rewards $\Rtilde_{t+1}$ satisfy a bounded first moment condition similar to \cref{ass:bounded-first-moment-rewards}, and with a designed discount $\alpha < 1$ in the infinite-horizon discounted ($\gamma < 1$) case.

However, the second statement---that only expected utilities that are indifferent to $\gamma$ admit a reduction to a \stock{}-augmented RL objective---requires multiple supporting results from the theory of optimizing expected utilities.
This is because, when reducing a \stock{}-augmented RL objective to a \stock{}-augmented \problem{} objective, we need to make a statement about all the objectives $F_K$ whose optimization is equivalent to a \stock{}-augmented RL objective.
In our case, this is possible, thanks to the von-Neumann-Morgenstern theorem \citep{von2007theory} and the results from \citet{bowling2023settling}.

Without \stock{} augmentation, for each state $s \in \states$, the preference over policies induced by value can be mapped to a relation $\succeq$ on $(\crvs, \wass)$.
The von-Neumann-Morgenstern theorem (see \cref{thm:vnm} below) states that $\succeq$ is equivalent to an expected utility function iff $\succeq$ satisfies the von-Neumann-Morgenstern axioms (\cref{ax:completeness,ax:transitivity,ax:independence,ax:continuity} below).
Furthermore, any such expected utility function will be unique up to affine transformations.
This uniqueness is powerful, because it implies that an objective cannot be simultaneously equivalent to an expected utility and a non-expected utility.

\begin{axiom}[Completeness, adapted from \citealp{bowling2023settling}]
\label{ax:completeness}
For all $\nu, \nu' \in (\crvs, \wass)$, $\nu \succeq \nu'$ or $\nu' \succeq \nu$ (or both, if $\nu \simeq \nu'$).
\end{axiom}

\begin{axiom}[Transitivity, adapted from \citealp{bowling2023settling}]
\label{ax:transitivity}
For all $\nu, \nu', \nu'' \in (\crvs, \wass)$, if $\nu \succeq \nu'$ and $\nu' \succeq \nu''$, then $\nu \succeq \nu''$.
\end{axiom}

\begin{axiom}[Independence, adapted from \citealp{bowling2023settling}]
\label{ax:independence}
For all $\nu, \nu', \overline{\nu} \in (\crvs, \wass)$,
$\nu \succeq \nu'$ iff for all $p \in (0, 1)$ $p \nu + (1 - p)\overline{\nu} \succeq p \nu' + (1 - p)\overline{\nu}$.
\end{axiom}

\begin{axiom}[Continuity, adapted from \citealp{bowling2023settling}]
\label{ax:continuity}
For all $\nu, \nu', \overline{\nu} \in (\crvs, \wass)$,
if $\nu \succeq \overline{\nu} \succeq \nu'$
then there exists $p \in [0, 1]$ such that
$p\nu + (1 - p)\nu' \simeq \overline{\nu}$.
\end{axiom}

\begin{theorem}[von Neumann-Morgenstern Expected Utility Theorem]
\label{thm:vnm}
A preference relation $\succeq$ on $(\crvs, \wass)$ satisfies \cref{ax:completeness,ax:transitivity,ax:independence,ax:continuity} if and only if there exists an expected utility function $u : (\crvs, \wass) \rightarrow \reals$ such that
\begin{enumerate}
    \item for all $\nu, \nu' \in (\crvs, \wass)$, $\nu \succeq \nu' \iff u(\nu) \geq u(\nu')$, \label{item:vnm-equivalence}
    \item for all $\nu \in (\crvs, \wass)$, $u(\nu) = \E \left( u(\delta_G) \right)$ ($G \sim \nu)$. \label{item:vnm-linearity}
\end{enumerate}
Such $u$ is unique up to positive affine transformations.
\end{theorem}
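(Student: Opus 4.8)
The plan is to follow the classical von Neumann--Morgenstern argument, adapted to the space $(\crvs, \wass)$ of (possibly vector-valued) distributions with finite first moment, as done for the scalar reinforcement-learning case by \citet{bowling2023settling}. The ``if'' direction (an expected utility implies the axioms) is routine: given $u$ with $u(\nu) = \E(u(\delta_G))$ for $G \sim \nu$, the relation $\nu \succeq \nu' \iff u(\nu) \geq u(\nu')$ inherits completeness and transitivity from the total order on $\reals$; independence holds because $u(p\nu + (1-p)\overline{\nu}) = p\,u(\nu) + (1-p)u(\overline{\nu})$, so a weak inequality between $u(\nu)$ and $u(\nu')$ survives mixing with a common $\overline{\nu}$; and continuity follows by solving the affine equation $p\,u(\nu) + (1-p)u(\nu') = u(\overline{\nu})$ for $p \in [0,1]$ whenever $u(\nu') \leq u(\overline{\nu}) \leq u(\nu)$.

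For the ``only if'' direction I would first extract the standard consequences of the axioms: mixture monotonicity (if $\nu \succ \nu'$ and $0 \leq q < p \leq 1$ then $p\nu + (1-p)\nu' \succ q\nu + (1-q)\nu'$) and uniqueness of the calibrating probability in \cref{ax:continuity} when $\nu \succ \overline{\nu} \succ \nu'$, both from independence together with transitivity. If all distributions are equivalent, take $u \equiv 0$. Otherwise fix references $\underline{\nu} \prec \overline{\nu}$; for $\nu$ with $\underline{\nu} \preceq \nu \preceq \overline{\nu}$ define $u(\nu)$ as the unique $p \in [0,1]$ with $\nu \simeq p\overline{\nu} + (1-p)\underline{\nu}$, and for $\nu$ outside this interval use the Archimedean content of the axioms to mix $\nu$ with a reference until it falls in $[\underline{\nu}, \overline{\nu}]$ and extend $u$ by the forced affine relation. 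I would then check that the extension is independent of the auxiliary choices, represents $\succeq$, and is affine in mixtures, i.e. $u(p\nu + (1-p)\nu') = p\,u(\nu) + (1-p)u(\nu')$ --- a direct manipulation from independence and the definition of $u$ via calibrating probabilities. Uniqueness up to positive affine transformations then follows by matching two representing affine functionals at two non-equivalent points and propagating the match by affinity.

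The remaining and, I expect, hardest step is upgrading mixture-affinity to full linearity, $u(\nu) = \E(u(\delta_G))$ for $G \sim \nu$ and every $\nu \in (\crvs, \wass)$. For finitely supported $\nu$ this is immediate by iterating mixture-affinity, but extending it to distributions with unbounded (though integrable) support is delicate, because the continuity axiom here is Archimedean rather than topological, so $\wass$-convergence of finitely supported approximations $\nu_n \to \nu$ does not automatically give $u(\nu_n) \to u(\nu)$. I would bridge this by first showing $g \mapsto u(\delta_g)$ grows at most linearly --- squeezing $\delta_g$ between two-point lotteries with controlled values, so that $\E(u(\delta_G))$ is well-defined whenever the first moment is finite --- and then controlling $|u(\nu) - u(\nu_n)|$ through mixtures of $\nu$ and $\nu_n$ with a common reference together with the monotonicity and continuity axioms. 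Treating the vector-valued case $\cs = \reals^m$ uniformly (rather than coordinate-by-coordinate, since the associated $f$ need not decouple across coordinates) is an extra wrinkle here but does not change the structure of the argument.
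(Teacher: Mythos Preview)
The paper does not prove this theorem: it is stated as the classical von Neumann--Morgenstern result (citing \citealp{von2007theory}, with the RL-specific adaptation taken from \citealp{bowling2023settling}) and used as a black box in the proofs of \cref{thm:vnm-ccrl}, \cref{prop:positively-homogeneous}, and \cref{thm:lipschitz-characterization}. So there is no ``paper's own proof'' to compare against; your proposal goes well beyond what the paper attempts.

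That said, your outline is the standard vNM construction and is correct through mixture-affinity and uniqueness. You are also right to flag the extension from finitely supported lotteries to all of $(\crvs, \wass)$ as the genuinely delicate step: the continuity axiom here is Archimedean (mixture) continuity, not topological continuity in $\wass$, and the classical vNM argument only delivers the linear representation on simple lotteries. Your proposed bridge --- showing $g \mapsto u(\delta_g)$ has at most linear growth by ``squeezing $\delta_g$ between two-point lotteries with controlled values'' --- is where I would push back: the four axioms alone do not force any growth rate on $u(\delta_g)$, and without that, $\E(u(\delta_G))$ need not be finite for every $\nu$ with finite first moment. Extensions of vNM to unbounded outcome spaces typically require either a bounded utility, a topological continuity axiom, or an explicit integrability/monotone-continuity condition. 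The paper sidesteps this entirely by citing the result; if you want a self-contained proof at the stated generality, you would need to either import such an auxiliary condition or restrict the conclusion to the closure (in an appropriate sense) of finitely supported measures.
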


The main result introduced by \citet{bowling2023settling} establishes that every Markovian reward function induces a value function that is equivalent to a preference $\succeq$ satisfying \cref{ax:completeness,ax:transitivity,ax:independence,ax:continuity} plus a fifth axiom called \emph{Temporal Discount Indifference}.
Their temporal discount indifference axiom allows the discount to be transition-dependent, but we are interested in making statements about RL objectives with a fixed discount, so we introduce an adaptation to this special case, which we refer to as \emph{Fixed Discount Indifference}.

\begin{axiom}[Fixed Discount Indifference]
\label{ax:discount-indifference}
There exists $\alpha \in (0, 1]$ such that
for all $\nu, \nu' \in (\crvs, \wass)$, with $G \sim \nu$ and $G' \sim \nu'$,
\[
    \frac{1}{1 + \alpha}\df(\gamma G) + 
    \frac{\alpha}{1 + \alpha}\nu' 
    \simeq 
    \frac{1}{1 + \alpha}\df(\gamma G') + 
    \frac{\alpha}{1 + \alpha}\nu.
\]
\end{axiom}

Surprisingly, for relations $\succeq$ that satisfy \cref{ax:completeness,ax:transitivity,ax:independence,ax:continuity} 
(and thus admit an equivalent expected utility $u$) we can show that  $\succeq$ satisfies \cref{ax:discount-indifference} iff $u$ is indifferent to $\gamma$ (cf.~\cref{def:indifference-to-gamma}).
We can prove this correspondence between the two properties (\cref{ax:discount-indifference,def:indifference-to-gamma}) by combining \cref{lem:expected-utility-conditions} \cref{lem:expected-utility-conditions:indifferent-to-gamma} and the following novel result.\footnote{Note how \Cref{eq:u-positively-homogeneous} in \cref{prop:positively-homogeneous} is the same condition as \Cref{eq:expected-utility-conditions:indifferent-to-gamma} in \cref{lem:expected-utility-conditions:indifferent-to-gamma} of \cref{lem:expected-utility-conditions}.}
\begin{proposition}
\label{prop:positively-homogeneous}
Let $\succeq$ be a relation over $(\crvs, \wass)$,
and let $u : (\crvs, \wass) \rightarrow \reals$ be an expected utility function satisfying \cref{thm:vnm} \cref{item:vnm-equivalence,item:vnm-linearity}.
\Cref{ax:discount-indifference} holds iff for all $\xc \in \cs$
\begin{equation}
    \alpha \cdot (u(\delta_{\xc}) - u(\delta_{0})) = u(\delta_{\gamma \xc}) - u(\delta_{0}). \label{eq:u-positively-homogeneous}
\end{equation}
\end{proposition}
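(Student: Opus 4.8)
The plan is to translate \cref{ax:discount-indifference}, which is phrased in terms of the preference $\simeq$, into a statement purely about the expected-utility function $u$, exploiting that $u$ both represents $\succeq$ and is mixture-affine. First I would record two consequences of the hypotheses. Since $u$ represents $\succeq$ by \cref{thm:vnm}~\cref{item:vnm-equivalence}, for any $\mu, \mu' \in (\crvs, \wass)$ we have $\mu \simeq \mu'$ if and only if $u(\mu) = u(\mu')$ (both $\mu \succeq \mu'$ and $\mu' \succeq \mu$ translate to the two inequalities $u(\mu) \ge u(\mu')$ and $u(\mu') \ge u(\mu)$, and conversely). And the linearity clause \cref{thm:vnm}~\cref{item:vnm-linearity}, $u(\nu) = \E(u(\delta_G))$ for $G \sim \nu$, implies $u$ is affine on mixtures, $u(p\mu_1 + (1-p)\mu_2) = p\,u(\mu_1) + (1-p)\,u(\mu_2)$, and also that $u(\df(\gamma G)) = \E_{G \sim \nu}(u(\delta_{\gamma G}))$. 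Writing $h(\xc) \doteq u(\delta_\xc)$, this gives $u(\nu) = \E_{G \sim \nu} h(G)$ and $u(\df(\gamma G)) = \E_{G \sim \nu} h(\gamma G)$; every such quantity is a finite real number since $u$ is real-valued, so there is no integrability concern, and the mixtures appearing in the axiom lie in $(\crvs, \wass)$ because $\E\|\gamma G\|_1 = \gamma \E\|G\|_1 < \infty$.

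Next I would apply $u$ to both sides of the $\simeq$ in \cref{ax:discount-indifference}. Using the equivalence $\simeq \Leftrightarrow$ ``$u$-values agree'' together with mixture-affinity, for a fixed $\alpha \in (0,1]$ the axiom is equivalent to: for all $\nu, \nu' \in (\crvs, \wass)$, with $G \sim \nu$ and $G' \sim \nu'$,
\[
    \frac{1}{1+\alpha}\E_\nu h(\gamma G) + \frac{\alpha}{1+\alpha}\E_{\nu'} h(G') = \frac{1}{1+\alpha}\E_{\nu'} h(\gamma G') + \frac{\alpha}{1+\alpha}\E_\nu h(G).
\]
Multiplying by $1+\alpha$ and rearranging, this asserts precisely that the quantity $\phi(\nu) \doteq \E_{G \sim \nu}\big(h(\gamma G) - \alpha h(G)\big)$ is independent of $\nu$.

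Finally I would close the equivalence by specialization. For the forward direction, constancy of $\phi$ evaluated at $\nu = \delta_0$ gives $\phi(\delta_0) = h(0) - \alpha h(0) = (1-\alpha)h(0)$, while at $\nu = \delta_\xc$ it gives $\phi(\delta_\xc) = h(\gamma\xc) - \alpha h(\xc)$; equating the two and rearranging yields $h(\gamma\xc) - h(0) = \alpha\big(h(\xc) - h(0)\big)$, which is exactly \Cref{eq:u-positively-homogeneous}. For the converse, if \Cref{eq:u-positively-homogeneous} holds for every $\xc$, then $h(\gamma\xc) - \alpha h(\xc) = (1-\alpha)h(0)$ for all $\xc$, so for any $\nu$ we get $\phi(\nu) = \E_\nu\big((1-\alpha)h(0)\big) = (1-\alpha)h(0)$, a constant; running the algebra of step two in reverse and invoking the $\simeq \Leftrightarrow$ ``$u$-values agree'' equivalence recovers \cref{ax:discount-indifference} with witness $\alpha$. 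The only real obstacle is getting the bookkeeping of step two exactly right --- that $\simeq$ passes to equality of $u$-values in both directions, that the vNM linearity clause is strong enough to make $u$ mixture-affine and to pull $\gamma$ inside the expectation, and that the stated mixture coefficients $\tfrac{1}{1+\alpha}, \tfrac{\alpha}{1+\alpha}$ are handled consistently --- after which the argument is elementary algebra.
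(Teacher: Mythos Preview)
Your proposal is correct and follows essentially the same route as the paper: translate the $\simeq$ in \cref{ax:discount-indifference} to equality of $u$-values via \cref{item:vnm-equivalence}, use linearity (\cref{item:vnm-linearity}) to reduce the axiom to constancy of $\nu \mapsto \E_\nu[h(\gamma G) - \alpha h(G)]$, and then specialize to Diracs for one direction and take expectations for the other. The only cosmetic difference is that the paper first treats the case $u(\delta_0)=0$ and then reduces the general case to it, whereas you carry $h(0)$ through directly; your version is slightly more streamlined but the argument is the same.
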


\begin{proof}
Since $u$ is linear, for $\xc \in \cs$ we write $u(\xc) = u(\delta_{\xc})$.
We first prove the result under the assumption that $u(\delta_{0}) = 0$, in which case we want to show that $\alpha \cdot u(\delta_{\xc}) = u(\delta_{\gamma \xc})$.
\Cref{ax:discount-indifference} states that there exists $\alpha \in (0, 1]$ such that for all $\nu, \nu' \in (\crvs, \wass)$,
with $G \sim \nu$ and $G' \sim \nu'$,
\[
    \frac{1}{1 + \alpha}\df(\gamma G) + 
    \frac{\alpha}{1 + \alpha}\nu' 
    \simeq 
    \frac{1}{1 + \alpha}\df(\gamma G') + 
    \frac{\alpha}{1 + \alpha}\nu.
\]
Since $u$ is equivalent to the preference and linear, the above is equivalent to
\[
    \frac{1}{1 + \alpha}\E u(\gamma G) + 
    \frac{\alpha}{1 + \alpha}u(\nu') 
    = 
    \frac{1}{1 + \alpha}\E u(\gamma G') + 
    \frac{\alpha}{1 + \alpha}u(\nu).
\]
Thus, by rearranging the above, \cref{ax:discount-indifference} holds iff there exists $\alpha \in (0, 1]$ such that, for all $\nu, \nu' \in (\crvs, \wass)$,
\begin{equation}
    \E u(\gamma G) - \alpha \cdot u(\nu)  
    = 
    \E u(\gamma G') - \alpha \cdot u(\nu').
    \label{eq:discount-indifference-equivalent}
\end{equation}

\emph{\Cref{ax:discount-indifference} implies \Cref{eq:u-positively-homogeneous}.}
Using \Cref{eq:discount-indifference-equivalent} with $\nu = \delta_{\xc}$ and $\nu' = \delta_{0}$ gives
\[
    u(\gamma \xc) - \alpha \cdot u(\xc)  
    = 
    u(\delta_0) - \alpha \cdot u(\delta_0) = 0,
\]
which gives the result.

\emph{\Cref{eq:u-positively-homogeneous} implies \cref{ax:discount-indifference}.}
We have that for all $\xc , \xc' \in \cs$
\[
    u(\gamma \xc) - \alpha \cdot u(\xc)  
    = 0 = u(\gamma \xc') - \alpha \cdot u(\xc'),
\]
and since this holds ``pointwise'', it also holds in expectation (with random $\xC, \xC'$), so \Cref{eq:discount-indifference-equivalent} follows.

Let us now prove the general case, $u(\delta_{0}) \in \reals$.
Let $u'(\nu) \doteq u(\nu) - u(\delta_0)$.
We have already established that \cref{ax:discount-indifference} holds iff $\alpha \cdot u'(\delta_{\xc}) = u'(\delta_{\gamma \xc})$ for all $\xc \in \cs$, and expanding $u'$ in terms of $u$ gives \Cref{eq:u-positively-homogeneous}.
\end{proof}

We can now combine \cref{ax:completeness,ax:transitivity,ax:independence,ax:continuity,ax:discount-indifference,thm:vnm} into the core result for characterizing what objectives \stock{}-augmented RL can optimize---an analogue of the main result of \citet{bowling2023settling} (their Theorem 4.1).

As discussed earlier, in the standard case we use $\succeq$ to compare return distributions directly, so we can connect optimizing $\succeq$ to the RL problem by comparing return distributions of policies $\pi, \pi' \in \historybased$ at states $s \in \states$ as $\eta^\pi(s) \succeq \eta^{\pi'}(s)$.
Therefore, expected utilities that are equivalent to $\succeq$  are naturally $(\crvs, \wass) \rightarrow \reals$ functions.

With \stock{} augmentation, whether a return distribution $\nu$ is preferable to another $\nu'$ depends on the \stock{} $\xc$, and we compare distributions of policies $\pi, \pi' \in \historybased$ at \stock{}-augmented states $(s, \xc) \in \states \times \cs$ as $\df(\xc + G^\pi(s, \xc)) \succeq \df(\xc + G^{\pi'}(s, \xc))$.
We can see this as there being a different preference relation for each $\xc$, so we define one expected utility $(\crvs, \wass) \rightarrow \reals$ for each $\xc$, as is the case in \cref{thm:vnm-ccrl} below.

The main contribution of \cref{thm:vnm-ccrl} is that we can look at properties of a \stock{}-indexed expected utility, and make statements about the corresponding relation per \stock{} $\xc$, as we will discuss after presenting and proving \cref{thm:vnm-ccrl}.

\begin{theorem}
\label{thm:vnm-ccrl}
A preference relation $\succeq$ on $(\crvs, \wass)$ satisfies \cref{ax:completeness,ax:transitivity,ax:independence,ax:continuity,ax:discount-indifference} iff there exist \stock{}-indexed functions $\utilde_\xc : (\crvs, \wass) \rightarrow \reals$ (for all $\xc \in \cs$), a \stock{}-augmented reward function $\rtilde : \cs \times \cs \rightarrow \reals$ and $\alpha \in (0, 1]$ such that:
\begin{enumerate}
    \item for all $\xc, r', g \in \cs$: $\utilde_\xc(\delta_{r' + \gamma g}) = \rtilde(\xc, r') + \alpha \cdot \utilde_{\gamma^{-1}(\xc + r')}(\delta_g)$, \label{item:vnm-ccrl-bellman}
    \item for all $\xc \in \cs$ and $\nu, \nu' \in (\crvs, \wass)$: $\df(\xc + G) \succeq \df(\xc + G')$ ($G \sim \nu$, $G' \sim \nu'$) iff $\utilde_\xc(\nu) \geq \utilde_\xc(\nu')$, \label{item:vnm-ccrl-equivalent-value-function}
    \item for all $\xc \in \cs$ and $\nu \in (\crvs, \wass)$: $\utilde_\xc(\nu) =  \E \left( \utilde_\xc(\delta_{G}) \right)$ ($G \sim \nu$). \label{item:vnm-ccrl-linearity}
\end{enumerate}
\end{theorem}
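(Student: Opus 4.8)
The plan is to prove the two implications separately, reusing the von Neumann--Morgenstern representation (\cref{thm:vnm}) and the positive-homogeneity characterization of \cref{ax:discount-indifference} (\cref{prop:positively-homogeneous}); most of the technical content is already front-loaded into those results, so what remains is mainly a careful translation between the single relation $\succeq$ and the \stock{}-indexed family $\{\utilde_\xc\}_{\xc \in \cs}$, together with one well-chosen change of variables.

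For the reverse implication, suppose $\utilde_\xc$, $\rtilde$ and $\alpha \in (0,1]$ satisfy properties~1--3. First I would observe that $u \doteq \utilde_0$ is an expected-utility representation of $\succeq$: taking $\xc = 0$ in property~2 gives $\nu \succeq \nu' \iff u(\nu) \geq u(\nu')$ for all $\nu, \nu' \in (\crvs, \wass)$, and taking $\xc = 0$ in property~3 gives $u(\nu) = \E\,u(\delta_G)$ for $G \sim \nu$, i.e.\ $u$ is linear. The ``if'' direction of \cref{thm:vnm} then delivers \cref{ax:completeness,ax:transitivity,ax:independence,ax:continuity} at once. For \cref{ax:discount-indifference}, the trick is to instantiate property~1 at $\xc = r' = 0$: since $\gamma^{-1}(\xc + r') = 0$ there, it reads $u(\delta_{\gamma g}) = \rtilde(0,0) + \alpha u(\delta_g)$ for all $g \in \cs$, so $g \mapsto u(\delta_{\gamma g}) - \alpha u(\delta_g)$ is constant. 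By linearity of $u$, for any $\nu, \nu' \in (\crvs, \wass)$ with $G \sim \nu$ and $G' \sim \nu'$ we get $u(\df(\gamma G)) - \alpha u(\nu) = \rtilde(0,0) = u(\df(\gamma G')) - \alpha u(\nu')$; rearranging, dividing by $1+\alpha$, and using linearity once more turns this into $u\bigl(\tfrac{1}{1+\alpha}\df(\gamma G) + \tfrac{\alpha}{1+\alpha}\nu'\bigr) = u\bigl(\tfrac{1}{1+\alpha}\df(\gamma G') + \tfrac{\alpha}{1+\alpha}\nu\bigr)$, which is exactly the indifference in \cref{ax:discount-indifference} for this $\alpha \in (0,1]$. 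Note this direction does not even invoke \cref{prop:positively-homogeneous}.

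For the forward implication, suppose $\succeq$ satisfies all five axioms. By \cref{thm:vnm} there is a linear $u : (\crvs, \wass) \to \reals$ with $\nu \succeq \nu' \iff u(\nu) \geq u(\nu')$ and $u(\nu) = \E\,u(\delta_G)$. I would take $\alpha \in (0,1]$ to be the constant furnished by \cref{ax:discount-indifference} and define $\utilde_\xc(\nu) \doteq u(\df(\xc + G))$ for $G \sim \nu$ (well-defined because $\wass(\df(\xc+G)) \leq \|\xc\|_1 + \wass(\nu) < \infty$). Property~2 is then immediate from ``$u$ represents $\succeq$''. Property~3 holds because translation by $\xc$ commutes with mixtures, so $\utilde_\xc(\nu) = \E_{G \sim \nu}\, u(\delta_{\xc + G}) = \E_{G \sim \nu}\, \utilde_\xc(\delta_G)$. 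For property~1, \cref{prop:positively-homogeneous} applied to this $u$ gives $u(\delta_{\gamma w}) = \alpha u(\delta_w) + (1-\alpha)u(\delta_0)$ for all $w \in \cs$; setting $w \doteq \gamma^{-1}(\xc + r') + g$, so that $\gamma w = \xc + r' + \gamma g$ and $u(\delta_w) = \utilde_{\gamma^{-1}(\xc+r')}(\delta_g)$, this becomes $\utilde_\xc(\delta_{r'+\gamma g}) = \alpha\, \utilde_{\gamma^{-1}(\xc+r')}(\delta_g) + (1-\alpha)u(\delta_0)$, so property~1 holds with the (constant) reward $\rtilde(\xc,r') \doteq (1-\alpha)u(\delta_0)$. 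I would finish with a remark that each $\utilde_\xc$ may be replaced by a positive affine reshaping without affecting properties~2--3, and that a suitable reshaping recovers the potential-style reward $\rtilde(\xc,r') = \alpha f(\gamma^{-1}(\xc+r')) - f(\xc) + (1-\alpha)f(0)$ of \cref{thm:reward-design}.

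The main obstacle is conceptual rather than computational: one has to see that a single relation $\succeq$ forces the entire family $\{\utilde_\xc\}$ to arise from one vNM utility via the shift map $\nu \mapsto \df(\xc + G)$ (and, conversely, that property~2 at $\xc = 0$ collapses the family back to the single representation $\utilde_0$), and one has to spot the substitution $w = \gamma^{-1}(\xc+r') + g$ that lets \cref{prop:positively-homogeneous} discharge the Bellman-type property~1. Beyond that, linearity of $u$ and the already-proved \cref{thm:vnm} and \cref{prop:positively-homogeneous} do all the heavy lifting.
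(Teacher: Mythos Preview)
Your proposal is correct and follows essentially the same approach as the paper: both directions hinge on \cref{thm:vnm} together with the positive-homogeneity relation from \cref{prop:positively-homogeneous} (or its direct unwinding at $\xc=r'=0$), with $\utilde_0$ playing the role of the vNM utility in the reverse direction. The only cosmetic difference is that the paper normalizes $\utilde_\xc(\nu) \doteq u(\df(\xc+G)) - u(\delta_\xc)$, which produces the potential-style reward $\rtilde(\xc,r') = \alpha f(\gamma^{-1}(\xc+r')) - f(\xc) + (1-\alpha)f(0)$ directly, whereas your un-normalized choice yields the constant reward $(1-\alpha)u(\delta_0)$; as you already remark, these are related by an affine reshaping of each $\utilde_\xc$.
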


\begin{proof}
This proof retraces the steps of the proof of Theorem 4.1 by \citet{bowling2023settling}.

\emph{\Cref{ax:completeness,ax:transitivity,ax:independence,ax:continuity,ax:discount-indifference} imply \cref{item:vnm-ccrl-bellman,item:vnm-ccrl-equivalent-value-function,item:vnm-ccrl-linearity}.}

From the von Neumann-Morgenstern theorem (\cref{thm:vnm}), we know that \cref{ax:completeness,ax:transitivity,ax:independence,ax:continuity}
imply the existence of a utility function $u : (\crvs, \wass) \rightarrow \reals$ that is equivalent to the preference (\cref{thm:vnm} \cref{item:vnm-equivalence}), linear (\cref{thm:vnm} \cref{item:vnm-linearity}) and unique up to positive affine transformations (\cref{thm:vnm}).

We define, for $\xc \in \cs$ and $\nu \in (\crvs, \wass)$, with $G \sim \nu$,
\begin{equation}
    \utilde_{\xc}(\nu) \doteq u(\df(\xc + G)) - u(\delta_{\xc}) \label{eq:utilde}
\end{equation}
and we will show that \cref{item:vnm-ccrl-bellman,item:vnm-ccrl-equivalent-value-function,item:vnm-ccrl-linearity} hold.
We also define the shorthand $f(\xc) \doteq u(\delta_{\xc})$, and note that, for all $\xc, g \in \cs$ we have $\utilde_{\xc}(\delta_{g}) = f(\xc + g) - f(\xc)$.

For \cref{item:vnm-ccrl-bellman}, we define the reward function:
\begin{equation}
    \rtilde(\xc, r') \doteq \alpha f\left(\gamma^{-1}(\xc + r')\right) - f(\xc) + (1 - \alpha) f(0). \label{eq:rtilde-proof}
\end{equation}
From \cref{prop:positively-homogeneous}, we get that for all $\xc, r', g \in \cs$
\[
    \alpha \left(f\left(\gamma^{-1}(\xc + r' + \gamma g)\right) - f(0)\right) = f(\xc + r + \gamma g) - f(0),
\]
which we can rearrange as
\begin{equation}
    f(\xc + r + \gamma g) = 
    \alpha f\left(\gamma^{-1}(\xc + r' + \gamma g)\right) + (1 - \alpha) f(0).
    \label{eq:homogeneity-for-item-vnm-ccrl-bellman}
\end{equation}
Thus, for all $\xc, r', g \in \cs$,
\begin{align*}
    \utilde_\xc(r' + \gamma g) 
    &= f(\xc + r' + \gamma g) - f(\xc) \\ 
    &= \alpha f\left(\gamma^{-1}(\xc + r' + \gamma g)\right) + (1 - \alpha) f(0) - f(\xc) 
    \tag{\cref{eq:homogeneity-for-item-vnm-ccrl-bellman}} \\ 
    &= \alpha \cdot \utilde_{\gamma^{-1}(\xc + r')}(g) + \alpha f\left(\gamma^{-1}(\xc + r')\right) + (1 - \alpha) f(0) - f(\xc) \\
    &= \alpha f\left(\gamma^{-1}(\xc + r')\right) - f(\xc) + (1 - \alpha) f(0) + \alpha \cdot \utilde_{\gamma^{-1}(\xc + r')}(g) 
    \tag{Rearranging}\\
    &= \rtilde(\xc, r') + \alpha \cdot \utilde_{\gamma^{-1}(\xc + r')}(g),
    \tag{\cref{eq:rtilde-proof}}
\end{align*}
which proves \cref{item:vnm-ccrl-bellman}.

\Cref{item:vnm-ccrl-equivalent-value-function} follows from the fact that the preference induced by $u$ is equivalent to $\succeq$, and $\utilde_\xc(\nu) = u(\df(\xc + G)) - u(\delta_\xc)$, so for all $\xc \in \cs$ and $\nu, \nu' \in (\crvs, \wass)$, we have 
\[
    \utilde_\xc(\nu) \geq \utilde_\xc(\nu') \iff u(\df(\xc + G)) \geq u(\df(\xc + G')) \iff \df(\xc + G) \succeq \df(\xc + G').
\]

For \cref{item:vnm-ccrl-linearity}, we proceed as follows.
For all $\xc \in \cs$ and $\nu \in (\crvs, \wass)$ (with $G \sim \nu$)
\begin{align*}
    \utilde_{\xc}(\nu) 
    &= u(\df(\xc + G)) - u(\delta_{\xc}) 
    \tag{\cref{eq:utilde}} \\
    &= \E\left(u(\delta_{\xc + G})\right) - u(\delta_{\xc})
    \tag{\cref{thm:vnm} \cref{item:vnm-linearity}} \\
    &= \E\left(\utilde_{\xc}(\delta_{G}) \right),
    \tag{\cref{eq:utilde}}
\end{align*}
which proves \cref{item:vnm-ccrl-linearity}.

\emph{\Cref{ax:completeness,ax:transitivity,ax:independence,ax:continuity,ax:discount-indifference} follow from \cref{item:vnm-ccrl-bellman,item:vnm-ccrl-equivalent-value-function,item:vnm-ccrl-linearity}.}
\Cref{item:vnm-equivalence,item:vnm-ccrl-linearity} with $\xc = 0$ imply \cref{item:vnm-equivalence,item:vnm-linearity} of \cref{thm:vnm} with $u = \utilde_0$, which means $\succeq$ satisfies \cref{ax:completeness,ax:transitivity,ax:independence,ax:continuity}.

It remains only to show that $\succeq$ satisfies \cref{ax:discount-indifference}.
By rearranging \cref{item:vnm-ccrl-bellman}, we get that, for all $g \in \cs$,
\begin{align*}
    \rtilde(0, 0) 
    &= \utilde_0(\delta_{\gamma g}) - \alpha \cdot \utilde_0(\delta_g).
\end{align*}
In particular, by taking $g = 0$, we get that $\rtilde(0, 0) = (1 - \alpha) \utilde_0(\delta_0)$.
Thus, for any $g \in \cs$, we have
\[
    \utilde_0(\delta_{\gamma \xc}) - \alpha \cdot \utilde_0(\delta_\xc) = (1 - \alpha) \cdot \utilde_0(\delta_0),
\]
and, by rearranging,
\[
    \alpha \cdot (\utilde_0(\delta_\xc) - \utilde_0(\delta_0)) = \utilde_0(\delta_{\gamma \xc}) - \utilde_0(\delta_0),
\]
so we can satisfy \Cref{eq:u-positively-homogeneous} with $u = \utilde_0$, and, by \cref{prop:positively-homogeneous}, $\succeq$ satisfies \cref{ax:discount-indifference}.
\end{proof}

This is how we will use \cref{thm:vnm-ccrl} to prove the second statement in \cref{thm:reward-design}:
We will show that value in \stock{}-augmented RL is, in effect, a \stock{}-indexed expected utility, so the \stock{}-indexed corresponding relations satisfy \cref{ax:completeness,ax:transitivity,ax:independence,ax:continuity,ax:discount-indifference}.
If this \stock{}-augmented RL objective is equivalent to a \stock{}-augmented \problem{} objective $F_K$, then (we show) $K$ must be equivalent to the \stock{}-indexed utility corresponding to value.
Then we combine \cref{thm:vnm-ccrl,prop:positively-homogeneous,lem:expected-utility-conditions} to show that $K$ must be both an expected utility and indifferent to $\gamma$.

We are now ready to present the proof of \cref{thm:reward-design}.

\rewarddesigntheorem*

\begin{proof}
\emph{Reduction from a \stock{}-augmented \problem{} objective to a \stock{}-augmented RL objective.}
The \stock{}-augmented RL objective we want to reduce to is an expected return where the (designed) rewards have bounded first moment ($\Rtilde_{t+1}$ satisfying \cref{eq:rtilde-bounded-first-moment}), the discount is $\alpha \in (0, 1]$ (where $\gamma < 1 \Rightarrow \alpha < 1$), and policies $\pi \in \historybased$ have value function
\[
    \Vtilde^\pi(s, \xc) \doteq \E\left( \sum_{t=0}^\infty \alpha^t \Rtilde_{t+1} \right).
\]
We will show that, under the given conditions, for all $\pi \in \historybased$ and $(s, \xc) \in \states \times \cs$,
\begin{equation}
    \Vtilde^\pi(s, \xc) = (\Uf\eta^\pi)(s, \xc) - f(\xc) = \E f(\xc + G^\pi(s, \xc)) - f(\xc),
    \label{eq:v-pi-to-uf}
\end{equation}
with $G(s, \xc) \sim \eta^\pi(s, \xc)$.
If this is the case, then both \stock{}-augmented objectives induce the same preference over policies.

Let us first establish that, under the given conditions, the designed rewards have bounded first moment.
In the finite-horizon case we have imposed \Cref{eq:rtilde-bounded-first-moment} as a condition directly.
In the discounted case, $f$ is assumed to be $L$-Lipschitz for some $L$, so:
\begin{align*}
    |\Rtilde_{t+1}| 
    &= |\alpha f(\xC_{t+1}) - f(\xC_t) + (1 - \alpha)f(0)| \\
    &= |f(\gamma \xC_{t+1}) - f(\xC_t)|
    \tag{\cref{eq:reward-design-condition}} \\
    &= |f(\xC_t + R_{t+1}) - f(\xC_t)|
    \tag{$\xC_{t+1} = \gamma^{-1}(\xC_t + R_{t+1})$} \\
    &= L \cdot \| R_{t+1} \|_1,
    \tag{$f$ $L$-Lipschitz}
\end{align*}
and, by \cref{ass:bounded-first-moment-rewards},
\[
    \sup_{s, \xc, a \in \states \times \cs \times \actions}\E \left( \left. | \Rtilde_{t+1} | \, \right| S_t = s, \xC_t = \xc, A_t = a \right) \leq \sup_{s, a \in \states \times \actions}\E \left( \left. \| R_{t+1} \|_1 \, \right| S_t = s, A_t = a \right)  < \infty.
\]

Next, we establish that $\gamma < 1 \Rightarrow \alpha < 1$ (that is, the $\alpha$-discounting is valid for the infinite-horizon discounted case).
By induction on \Cref{eq:reward-design-condition}, we get that, for all $n \in \naturals_0$ and $\xc \in \cs$, that $f(\gamma^n \xc) - f(0) = \alpha^n (f(\xc) - f(0))$, which we can rearrange as
\begin{equation}
    f(\gamma^n \xc) = \alpha^n f(\xc) + (1 - \alpha^n) f(0).
    \label{eq:reward-design-condition-induction}
\end{equation}
In particular, for all $\xc \in \cs$,
\[
    \liminf_{n \rightarrow \infty} f(\gamma^n \xc) = \liminf_{n \rightarrow \infty}  \alpha^n f(\xc) + (1 - \alpha^n) f(0).
\]
If $\gamma < 1$, the left-hand side is zero, so the right-hand side must be zero, thus $\alpha < 1$.

Finally, we prove \cref{eq:v-pi-to-uf}.
For all $\pi \in \historybased$ and $(s, \xc) \in \states \times \cs$, with $(S_0, \xC_0) = (s, \xc)$ with probability one, we have
\begin{align*}
    \Vtilde^\pi(s, \xc) 
    &= \E\left( \left. \sum_{t=0}^\infty \alpha^t \Rtilde_{t+1} \right| \xC_0 = \xc \right) \\
    &= \lim_{n \rightarrow \infty} \E\left( \left. \sum_{t=0}^{n-1} \alpha^t \Rtilde_{t+1} \right| \xC_0 = \xc \right) \\
    &= \lim_{n \rightarrow \infty} \E\left( \left. \sum_{t=0}^{n-1} \alpha^{t+1} f(\xC_{t+1}) - \alpha^t f(\xC_t) + \alpha^t (1 - \alpha)f(0) \right| \xC_0 = \xc \right) 
    \tag{\cref{eq:rtilde-reward-design}}\\
    &= \lim_{n \rightarrow \infty} \E\left( \left. \sum_{t=0}^{n-1} \alpha^{t+1} f(\xC_{t+1}) - \alpha^t f(\xC_t) \right| \xC_0 = \xc \right) + (1 - \alpha^n)f(0) \\
    &= \lim_{n \rightarrow \infty} \E\left( \left. \alpha^n f(\xC_n) \right| \xC_0 = \xc \right) - f(\xc) + (1 - \alpha^n)f(0) 
    \tag{Telescoping, $\xC_0 = \xc$} \\
    &= \lim_{n \rightarrow \infty} \E\left( \left. \alpha^n f(\xC_n) + (1 - \alpha^n)f(0) \right| \xC_0 = \xc \right) - f(\xc) \\
    &= \lim_{n \rightarrow \infty} \E\left( \left. f\left( \gamma^n \xC_n \right) \right| \xC_0 = \xc \right) - f(\xc)
    \tag{\cref{eq:reward-design-condition-induction}} \\
    &= \lim_{n \rightarrow \infty} \E\left( \left. f\left( \xC_0 + \sum_{t=0}^{n-1} \gamma^t R_{t+1} \right) \right| \xC_0 = \xc \right) - f(\xc) \\
    &= \E\left( \left. f\left( \xC_0 + \sum_{t=0}^{\infty} \gamma^t R_{t+1} \right) \right| \xC_0 = \xc \right) - f(\xc) 
    \tag{$f$ Lipschitz or finite horizon} \\
    &= (\Uf \eta^\pi)(s, \xc) - f(\xc),
\end{align*}
which proves \cref{eq:v-pi-to-uf} and concludes the proof of the first statement.

\emph{Impossible reduction via reward design when the objective $F_K$ is not an expected utility or not indifferent to $\gamma$.}
We will show the contrapositive: If the reduction is possible, then $F_K$ is an expected utility and indifferent to $\gamma$.

Assume we can reduce it to an equivalent \stock{}-augmented RL objective with a suitably designed reward function.
It is important to stress that the reduction must be valid regardless of the underlying MDP transition or reward kernels, as long as \cref{ass:bounded-first-moment-rewards} is satisfied.

Let us define the ``\stock{}-indexed value functional'' $\vtilde_\xc : (\crvs, \wass) \rightarrow \reals$ as follows:
For a Markov chain $\xC_0 \rightarrow R_1 \rightarrow R_2 \rightarrow \ldots$ all taking values in $\cs$ (and satisfying \cref{ass:bounded-first-moment-rewards}) with $G_0 \doteq \sum_{t=0}^\infty \gamma^t R_{t+1}$ and $\xC_0 = \xc$,
we let
\[
    \vtilde_{\xc}(\df(G_0)) \doteq \E\left( \sum_{t=0}^\infty \alpha^t \cdot \rtilde(\xC_t, R_{t+1}) \right),
\]
where $\rtilde: \cs \times \cs \rightarrow \reals$ is the designed (Markov) reward.

The value functional and the designed reward function do not directly depend on states and actions.
This is natural, as trajectories with the same $\xC_0 \rightarrow R_1 \rightarrow R_2 \rightarrow \ldots$, regardless of the underlying $S_0, A_0, S_1, \ldots$, must be equivalent in terms of the objective (either \problem{} or RL).

The reduction requires the same augmented state space $\states \times \cs$ to be used for both \problem{} and RL objectives, so, for all $(\xc, \nu), (\xc', \nu') \in \cs \times (\crvs, \wass)$ we have $K\df(\xc + G) \geq K\df(\xc' + G') \iff \vtilde_{\xc}(\nu) \geq \vtilde_{\xc'}(\nu')$, with $G \sim \nu$ and $G' \sim \nu'$.

We can now apply \cref{thm:vnm-ccrl} to conclude that the relation induced by $K$ on $(\crvs, \wass)$ must satisfy \cref{ax:completeness,ax:transitivity,ax:independence,ax:continuity,ax:discount-indifference}.
To do so, we must prove that \cref{item:vnm-ccrl-bellman,item:vnm-ccrl-equivalent-value-function,item:vnm-ccrl-linearity} hold for $\vtilde_\xc$ and $\rtilde$.

For \cref{item:vnm-ccrl-bellman}, consider $\xC_0 = \xc$, $R_1 = r_1$ and so forth, with probability one, such that $g_1 \doteq \sum_{t=1}^\infty \gamma^t r_{t+2} < \infty$.
Then
\begin{align*}
    \vtilde_\xc(\delta_{r_1 + \gamma g_1})
    &= \vtilde_\xc(\df(G_0)) \\
    &= \E\left( \sum_{t=0}^\infty \alpha^t \cdot \rtilde(\xC_t, R_{t+1}) \right) \\
    &= \E\left( \rtilde(\xC_0, R_1) + \alpha \cdot \E\left( \left. \sum_{t=0}^\infty \alpha^t \cdot \rtilde(\xC_{t+1}, R_{t+2}) \right| \xC_1 \right) \right) \\
    &= \E\left( \rtilde(\xC_0, R_1) + \alpha \cdot \vtilde_{\xC_1}(\df(G_1)) \right) \\
    &= \rtilde(\xc, r_1) + \alpha \cdot \vtilde_{\gamma^{-1}(\xc + r_1)}(\delta_{g_1}),
\end{align*}
which gives us \cref{item:vnm-ccrl-bellman}.

\Cref{item:vnm-ccrl-equivalent-value-function} follows by assumption that the reduction is possible.

\Cref{item:vnm-ccrl-linearity} can be proved as follows: For all $\xc \in \cs$, with $\xC_0 = \xc$ and $\xC_0 \rightarrow R_1 \rightarrow R_2 \rightarrow \ldots$ satisfying \cref{ass:bounded-first-moment-rewards}:
\begin{align*}
    \vtilde_{\xc}(\df(G_0)) 
    &= \E\left( \sum_{t=0}^\infty \alpha^t \cdot \rtilde(\xC_t, R_{t+1}) \right) \\
    &= \E\left( \E\left( \left. \sum_{t=0}^\infty \alpha^t \cdot \rtilde(\xC_t, R_{t+1}) \right| \xC_0, R_1, R_2, \ldots \right) \right) \\
    &= \E \left( \vtilde_\xc(\delta_{G_0}) \right).
\end{align*}

Hence, by \cref{thm:vnm-ccrl}, the relation induced by $K$ on $(\crvs, \wass)$ satisfies \cref{ax:completeness,ax:transitivity,ax:independence,ax:continuity,ax:discount-indifference}, which implies that $K$ is an expected utility.

We know from \cref{thm:vnm} that there exist $a > 0$ and $b \in \reals$ such that, for all $\xc, g \in \cs$, we have $a K\delta_{\xc + g} + b = \vtilde_\xc(\delta_g)$.
So define $f(\xc) \doteq a K\delta_\xc + b$.
Then, for all $\xc \in \cs$,
\begin{align*}
    a \cdot K\delta_{\gamma \xc} + b
    &= f(\gamma \xc) \\
    &= \vtilde_0(\delta_{\gamma \xc}) \\
    &= \rtilde(0, 0) + \alpha \cdot \vtilde_0(\delta_\xc) \\
    &= \rtilde(0, 0) + \alpha f(\xc). 
\end{align*}
In particular, for $g = 0$, the above implies that $\rtilde(0, 0) = (1 - \alpha) f(0)$,
so, for all $\xc \in \cs$,
\[
    f(\gamma \xc) = \alpha f(\xc) + (1 - \alpha) f(0).
\]
The assumption that the reduction is possible ensures that $\alpha \in (0, 1]$ and $\gamma < 1 \Rightarrow \alpha < 1$, so, by \cref{lem:expected-utility-conditions} \cref{lem:expected-utility-conditions:indifferent-to-gamma}, $K$ is indifferent to $\gamma$.
\end{proof}

\section{Proofs for \texorpdfstring{\cref{sec:beyond-expected-utilities}}{Section~\ref{sec:beyond-expected-utilities}}}
\label{app:beyond-expected-utilities}

Our characterization builds on and extends the results by \citet{marthe2024beyond}, which characterized objective functionals that distributional DP can optimize in the finite-horizon undiscounted setting, \emph{without} \stock{} augmentation.
Our proof strategy is to connect indifference to mixtures, indifference to $\gamma$ and Lipschitz continuity to the von Neumann-Morgenstern axioms (from \cref{app:reward-design}), so that we can apply the powerful von Neumann-Morgenstern theorem (or show that it cannot apply, in the case of the non-expected-utility objective functional that distributional DP can optimize).

The following results connect Lipschitz continuity and indifference to mixtures to the von Neumann-Morgenstern independence axiom (\cref{ax:independence}).

\begin{proposition}[If $K$ Lipschitz then \cref{ax:independence}'s $\Leftarrow$ is satisfied.]
\label{prop:lipschitz-implies-independence-if}
If $K$ is Lipschitz, the following holds:
For every $\nu, \nu', \overline{\nu} \in (\crvs, \wass)$ if for all
$p \in (0, 1)$ we have
\[
    K((1 - p)\nu + p\overline{\nu}) \geq K((1 - p)\nu' + p\overline{\nu}),
\]
then
\[
    K\nu \geq K\nu'.
\]
\end{proposition}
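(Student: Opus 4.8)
The plan is to pass to the limit $p \to 0^+$ in the hypothesised family of inequalities, using the fact that a Lipschitz $K$ is in particular $\wass$-continuous. The only substantive step is to show that the mixture map $p \mapsto (1-p)\nu + p\overline{\nu}$ is continuous at $p = 0$ in $1$-Wasserstein distance, with an explicit rate; everything else is then immediate.

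First I would record the convergence estimate: for every $p \in [0,1]$,
\[
    \wass\big((1-p)\nu + p\overline{\nu},\, \nu\big) \leq p\big(\wass(\nu) + \wass(\overline{\nu})\big).
\]
To prove this, I would exhibit a coupling. Let $B \sim \mathrm{Bernoulli}(p)$, $X \sim \nu$, $Y \sim \overline{\nu}$ be mutually independent, and set $Z \doteq (1 - B)X + BY$; then $\df(Z) = (1-p)\nu + p\overline{\nu}$, while $\df(X) = \nu$, so $(Z, X)$ is a valid coupling. Hence
\[
    \wass\big((1-p)\nu + p\overline{\nu},\, \nu\big) \leq \E\|Z - X\|_1 = \E\big[B\,\|Y - X\|_1\big] = p\,\E\|Y - X\|_1 \leq p\big(\E\|X\|_1 + \E\|Y\|_1\big),
\]
and, since coupling with a Dirac is forced, $\E\|X\|_1 = \wass(\nu,\delta_0) = \wass(\nu)$ and $\E\|Y\|_1 = \wass(\overline{\nu})$, both finite because $\nu,\overline{\nu} \in (\crvs,\wass)$. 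The same bound holds with $\nu'$ in place of $\nu$.

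Next, using that $K$ is $L$-Lipschitz (\cref{def:lipschitz}), I would combine the two estimates to get, for all $p \in (0,1)$,
\[
    \big|K((1-p)\nu + p\overline{\nu}) - K\nu\big| \leq L p\big(\wass(\nu) + \wass(\overline{\nu})\big) \quad\text{and}\quad \big|K((1-p)\nu' + p\overline{\nu}) - K\nu'\big| \leq L p\big(\wass(\nu') + \wass(\overline{\nu})\big),
\]
so that $K((1-p)\nu + p\overline{\nu}) \to K\nu$ and $K((1-p)\nu' + p\overline{\nu}) \to K\nu'$ as $p \to 0^+$. Since by hypothesis $K((1-p)\nu + p\overline{\nu}) \geq K((1-p)\nu' + p\overline{\nu})$ for every $p \in (0,1)$, taking $p \to 0^+$ on both sides yields $K\nu \geq K\nu'$, which is the claim.

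The argument is essentially routine; the only place requiring care is the Wasserstein-continuity estimate for the mixture, which is exactly what the explicit Bernoulli coupling above delivers, and the observation that finiteness of the first moments of $\nu$, $\nu'$, $\overline{\nu}$ is guaranteed by membership in $(\crvs,\wass)$ so that the bounds are not vacuous.
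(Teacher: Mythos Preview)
Your proof is correct and follows essentially the same approach as the paper: both take the limit $p \to 0$ in the hypothesised inequalities, using Lipschitz continuity of $K$ together with convergence of the mixtures $(1-p)\nu + p\overline{\nu} \to \nu$ in $\wass$. The paper simply asserts this convergence along the sequence $p = 1/n$, whereas you supply an explicit Bernoulli coupling and rate; otherwise the arguments are identical.
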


\begin{proof}
Fix $\nu, \nu', \overline{\nu} \in (\crvs, \wass)$ and assume that for all $p \in (0, 1)$ we have
\[
    K((1 - p)\nu + p\overline{\nu}) \geq K((1 - p)\nu' + p\overline{\nu}).
\]
Define the sequences of distributions 
\begin{align*}
    \nu_n &\doteq \frac{1}{n}\overline{\nu} + \left(1 - \frac{1}{n}\right)\nu \\
    \nu'_n &\doteq \frac{1}{n}\overline{\nu} + \left(1 - \frac{1}{n}\right)\nu'.
\end{align*}
We have that $\nu_n$ converges to $\nu$ in $\wass$ as $n\rightarrow \infty$ (and $\nu'_n$ to $\nu'$).
Because $K$ is Lipschitz, and by assumption $K\nu_n - K\nu'_n \geq 0$ for all $n \in \naturals$, we get
\[
    K\nu - K\nu' = 
    \lim_{n \rightarrow \infty} K\nu_n - K\nu'_n \geq 0.
    \tag*{\jmlrqedhere}
\]
\end{proof}

\begin{proposition}[If $K$ is indifferent to mixtures, then \cref{ax:independence}'s $\Rightarrow$ is satisfied.]
\label{prop:indifferent-to-mixtures-implies-independence-only-if}
If $K$ is indifferent to mixtures, then the following holds:
For every $\nu, \nu', \overline{\nu} \in (\crvs, \wass)$
if
\[
     K\nu \geq K\nu',
\]
then for all $p \in (0, 1)$ we have
\[
    K((1 - p)\nu + p\overline{\nu}) \geq K((1 - p)\nu' + p\overline{\nu}).
\]
\end{proposition}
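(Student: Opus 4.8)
The plan is to deduce this directly from \cref{def:indifference-to-mixtures} by instantiating it with a cleverly chosen pair of return distribution functions and a two-point mixing distribution over initial augmented states. First I would fix $\nu, \nu', \overline{\nu} \in (\crvs, \wass)$ with $K\nu \geq K\nu'$, and fix $p \in (0,1)$. Since $\cs = \reals^m$ is infinite (and $\states$ is nonempty), I can choose two distinct augmented states $z_1, z_2 \in \states \times \cs$.

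Next I would define $\eta, \eta' \in \crvs^{\states \times \cs}$ by $\eta(z_1) \doteq \nu$, $\eta'(z_1) \doteq \nu'$, and $\eta(z) \doteq \eta'(z) \doteq \overline{\nu}$ for every $z \neq z_1$. Both functions lie in $(\crvs^{\states \times \cs}, \overline{\wass})$, since $\overline{\wass}(\eta), \overline{\wass}(\eta') \leq \max\{\wass(\nu), \wass(\nu'), \wass(\overline{\nu})\} < \infty$. The pointwise hypothesis of \cref{def:indifference-to-mixtures} is then satisfied: at $z_1$ it reduces to $K\nu \geq K\nu'$, which holds by assumption, and at every other augmented state $\eta$ and $\eta'$ coincide, so $K\eta(z) = K\eta'(z)$ there.

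I would then apply indifference to mixtures with the random variable $(S, \xC)$ supported on $\{z_1, z_2\}$ with $\Prob((S,\xC) = z_1) = 1-p$ and $\Prob((S,\xC) = z_2) = p$. With $G(s,\xc) \sim \eta(s,\xc)$ and $G'(s,\xc) \sim \eta'(s,\xc)$, the distribution of $G(S,\xC)$ is the mixture $(1-p)\nu + p\overline{\nu}$ and that of $G'(S,\xC)$ is $(1-p)\nu' + p\overline{\nu}$; both belong to $(\crvs, \wass)$ because $\nu \mapsto \wass(\nu, \delta_0) = \E_{X \sim \nu}\|X\|_1$ is linear in $\nu$, so $K$ is defined on them. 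The conclusion of \cref{def:indifference-to-mixtures} then gives $K((1-p)\nu + p\overline{\nu}) = K(G(S,\xC)) \geq K(G'(S,\xC)) = K((1-p)\nu' + p\overline{\nu})$, which is exactly the claim.

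I do not expect a genuine obstacle here: the whole argument is just a careful bookkeeping exercise in matching the ``for all random variables $(S,\xC)$'' quantifier of \cref{def:indifference-to-mixtures} to the specific two-point distribution that produces the desired convex combinations. The only points that need (minor) care are verifying that the constructed $\eta, \eta'$ lie in $(\crvs^{\states \times \cs}, \overline{\wass})$ and that the resulting mixtures lie in $(\crvs, \wass)$, so that every application of $K$ above is legitimate.
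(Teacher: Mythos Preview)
Your proposal is correct and follows essentially the same approach as the paper: instantiate \cref{def:indifference-to-mixtures} with $\eta,\eta'$ that agree everywhere except at one augmented state (where they take the values $\nu$ and $\nu'$ respectively) and mix against a second augmented state carrying $\overline{\nu}$ with weight $p$. The paper's proof compresses this into a single sentence, whereas you spell out the construction and verify the membership conditions $\eta,\eta' \in (\crvs^{\states\times\cs},\overline{\wass})$ and $(1-p)\nu + p\overline{\nu} \in (\crvs,\wass)$, but the underlying idea is identical.
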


\begin{proof}
\Cref{def:indifference-to-mixtures} with $\nu_1, \nu_2, \nu'_1, \nu'_2$ such that $K\nu_1 \geq K\nu'_1$ and $\nu'_2 = \nu_2$, gives us that for all $p \in (0, 1)$
\[
    K\nu \geq K\nu' \Rightarrow K((1 - p)\nu + p\overline{\nu}) \geq K((1 - p)\nu' + p\overline{\nu}).
    \tag*{\jmlrqedhere}
\]
\end{proof}

Next, we apply the von Neumann-Morgenstern theorem to characterize objective functionals that distributional DP can optimize in the infinite-horizon discounted case.

\thmlipschitzcharacterization*

\begin{proof}
Consider the relation $\succeq$ over $(\crvs, \wass)$ defined by
$\nu \succeq \nu' \iff K\nu \geq K\nu'$.
It is easy to see that $\succeq$ satisfies completeness and transitivity (\cref{ax:completeness,ax:transitivity} in \cref{app:reward-design}).
$K$ Lipschitz implies that $\succeq$ also satisfies continuity (\cref{ax:continuity}).
$K$ Lipschitz and $K$ indifferent to mixtures implies that $K$ satisfies \cref{ax:independence}   (\cref{prop:lipschitz-implies-independence-if,prop:indifferent-to-mixtures-implies-independence-only-if}).

Then by the von Neumann-Morgenstern theorem (\cref{thm:vnm}) there exists an expected utility function $u : (\crvs, \wass) \rightarrow \reals$ satisfying \cref{item:vnm-equivalence,item:vnm-linearity}, and it is unique up to affine transformations.
By \cref{item:vnm-equivalence}, for all $\nu, \nu' \in (\crvs, \wass)$,
with $G \sim \nu$ and $G' \sim \nu'$, we have $\nu \succeq \nu' \iff u(\nu) \geq u(\nu)$, and thus $K\nu \geq K\nu' \iff u(\nu) \geq u(\nu)$.
Moreover, by \cref{thm:vnm}, we know $u$ is unique up to positive affine transformations, so there exist $a > 0$ and $b \in \reals$ such that $K\nu = a \cdot u(\nu) + b$ for all $\nu \in (\cs, \wass)$.
Without loss of generality we can consider $u$ in the rest of this proof such that $a = 1$ and $b = 0$.
Since $u$ is linear, we know there exists $f : \cs \rightarrow \reals$ such that $u(\nu) = \E f(G)$ ($G \sim \nu$) for all $\nu \in (\cs, \wass)$.
The statement that $f$ is Lipschitz follows from \cref{lem:expected-utility-conditions}.
\end{proof}

\propnonexpectedutility*

\begin{proof}
\emph{$K$ is indifferent to mixtures.}
Consider $\eta, \eta' \in (\crvs^{\states \times \cs}, \overline{\wass})$ such that,
for all $(s, \xc) \in \states \times \cs$,
\begin{equation}
    K\eta(s, \xc) \geq K\eta'(s, \xc),
    \label{eq:non-expected-utility-assumption}
\end{equation}
and let $(S, \xC)$ be a random variable taking values in $\states \times \cs$, $\nu \doteq \df(G(S, \xC))$ and $\nu' \doteq \df(G'(S, \xC))$.

\Cref{eq:non-expected-utility-assumption} implies that $\{ \nu'(S, \xC)([0, \infty)) = 1 \} \subseteq \{ \nu(S, \xC)([0, \infty)) = 1 \}$, which in turn implies that
\[
    \Ind( \nu'(S, \xC)([0, \infty)) = 1 ) \leq \Ind( \nu(S, \xC)([0, \infty)) = 1 ),
\]
which proves the result.

\emph{$K$ is indifferent to $\gamma$.}
Given $\nu, \nu' \in (\crvs, \wass)$ and letting $G \sim \nu$ and $G' \sim \nu'$, note that $\nu([0, \infty)) = 1 \iff \df(\gamma G)([0, \infty)) = 1$ (and similarly for $\nu'$ and $G'$), so $K(\gamma G) = K\nu$ and $K(\gamma G') = K\nu'$, which means $K\nu \geq K\nu'$ implies $K(\gamma G) \geq K(\gamma G')$.

\emph{$F_K$ is not an expected utility.}
It suffices to show that $K$ violates at least one of the von Neumann-Morgenstern axioms, otherwise \cref{thm:vnm} applies and $F_K$ is an expected utility.
$K$ invariably satisfies completeness and transitivity (\cref{ax:completeness,ax:transitivity}), however it violates independence and continuity (\cref{ax:continuity,ax:independence}; cf.~\citealp{juan2020neumann}, p.~15).
\end{proof}

\section{Implementation details}
\label{app:implementation-details}

\subsection{\texorpdfstring{D$\eta$N}{din}}
\label{app:din:implementation-details}

The architecture diagram for D$\eta$N's \stock{}-augmented return distribution estimator is given in \cref{fig:din-architecture}.
The training and network parameters were set per domain (see \cref{app:gridworld:implementation-details,app:atari:implementation-details}).
The target parameters $\overline{\theta}$ were updated via exponential moving average updates, as done by \citet{schwarzer2023bigger}, and differently from the periodic updates used by \citet{mnih2015human}.
Our intent was to have smoother quantile regression targets, rather than sudden changes introduced by the periodic update.
The target network is updated as an exponential moving average with step size $\alpha$ as $\overline{\theta} \leftarrow (1 - \alpha)\overline{\theta} + \alpha \theta$.
D$\eta$N uses the target network parameters $\overline{\theta}$ for both training and evaluation~\citep[similar to][]{abdolmaleki2018maximum}.
Our intent was to slower-changing behavior and quantile regression targets.

As in DQN~\citep{mnih2015human} and QR-DQN~\citep{dabney2018distributional}, the action selection used by D$\eta$N during data collection is $\varepsilon$-greedy.
For greedy policy selection during both data generation (\cref{eq:din-action}) and learning (\cref{eq:din-quantile-regression-target}),
given a return distribution function $\xi : \states \times \cs \times \actions \rightarrow \crvs$,
D$\eta$N selects the greedy policy $\pibar \in \stationary$ satisfying
\[
    \Uf(M_f\xi)(s, \xc) = \E f(\xc + G(s, \xc, A))
    \tag{$A \sim \pibar(s, \xc)$, $G(s, \xc, a) \sim \xi(s, \xc, a)$}
\]
and, for all $(s, \xc) \in \states \times \cs$ and $a \in \actions$,
\[
    \pibar(a | s, \xc) > 0 \Rightarrow \pibar(a | s, \xc) = \max_{a'} \pibar(a' | s, \xc).
\]
We chose this because ties may happen often in \problem{}.
This is not the case in standard deep RL with DQN, and we rarely need to resort to tie-breaking, because action-value estimates are often noisy.
However, the choice of $\Uf$ may introduce ties in practice. 
For example, when maximizing the risk-averse $\tau$-CVaR, we have $f(x) = x_-$, which can introduce ties among maximizing actions.

With vector-valued returns, D$\eta$N maintains estimates of the quantiles each individual return coordinate, rather than an estimate of the joint distribution of the vector-valued return.
This means we cannot optimize all expected utilities over vector-valued returns, but only the ones with the form:
\[
    f(x) = \sum_i f_i(x_i).
\]
We believe this is acceptable for a proof-of-concept algorithm, and that future work will address this limitation based on results for multivariate distributional RL \citep{zhang2021distributional,wiltzer2024foundations}.

For the quantile regression loss, the greedy policy $\pibar$ breaks ties via uniform random action selection, but to avoid having to sample multiple actions from $\pibar$ we use the policy directly.
For a transition $(s, \xc), a, r', (s', \xc')$, the loss estimate is:
\[
    \frac{1}{n^2}\sum_{i, j \in \{1, \ldots, n\}}\sum_{a' \in \actions} \pibar(a' | s, \xc) \ell(r' + \gamma \xi_{\overline{\theta}}(s', a', \xc')_j - \xi_{\theta}(s, a, \xc)_i, \tau_i),
\]
where $\ell$ is the quantile regression loss~\citep{dabney2018distributional}
\[
    \ell(x, \tau) \doteq \left|\Ind(x > 0) - \tau\right| \cdot |x|,
\]
and the quantiles are the bin centers of an $n$-bin discretization of $[0, 1]$, that is, for $i \in \{1, \ldots, n\}$ we have $\tau_i \doteq \frac{2i-1}{2n}$.
As in DQN~\citep{mnih2015human} and QR-DQN~\citep{dabney2018distributional}, 
we explicitly use $\delta_0$ as the return distribution of the terminal state.

\subsection{Gridworld}
\label{app:gridworld:implementation-details}

In these experiments we trained D$\eta$N on an Nvidia V100 GPU.
For simplicity, D$\eta$N did not use a replay in these experiments.
Instead, it alternated generating a minibatch of transitions by having the agent interact with the environment, and then updating the network with the generated minibatch (the ``learner update'').
The transitions were generated in episodic fashion, with the agent starting at $\sinit$ and acting in the environment until the end of the episode.
The episode ended when the agent reached a terminating cell, or when it was interrupted on the $16$-th step.
Upon interruption, $s'$ was not treated as terminal.
Each minibatch consisted of $64$ trajectories of length $16$, and each transition had the form $(s_k, \xc_k), a_k, r'_k, (s'_k, \xc'_k)$.
If a termination or interruption happened at the $k$-th step in a trajectory, the next transition would start from the initial state, in which case $s'_k \neq s_{k+1}$ ($s'_k = s_{k+1}$ held otherwise).

\Cref{tab:gridworld:training,tab:gridworld:model} contain additional implementation details.
For training, we have used the Adam optimizer~\citep{kingma2014adam} with defaults from the Optax library~\citep{deepmind2020jax} unless otherwise stated.
\begin{table}[tb]
    \centering
    \begin{tabularx}{\textwidth}{Xc}
    \toprule
    Parameter & Value \\
    \midrule
    Batch size & $64$ \\
    Trajectory length & $16$ \\
    Training duration (environment steps) & $\approx 2M$ \\
    Training duration (learner updates) & $2K$ \\
    Adam optimizer learning rate & $10^{-4}$ \\
    Target network exponential moving average step size ($\alpha$) & $10^{-2}$ \\
    Discount ($\gamma$) & $0.997$ \\
    $\varepsilon$-greedy parameter & $0.1$ \\
    Interval for sampling $\xc_0$ & $[-10, 10)$ \\
    \bottomrule
    \end{tabularx}
    \caption{Training parameters for D$\eta$N in the gridworld experiments.
    \label{tab:gridworld:training}}
\end{table}
\begin{table}[tb]
    \centering
    \begin{tabularx}{\textwidth}{lXc}
    \toprule
    Component & Parameter & Value \\
    \midrule
    Vision (ConvNet) & & \\
    \midrule
        & Output channels (per layer) & $(32, 64, 64)$ \\
        & Kernel sizes (per layer) & $((8, 8), (4, 4), (3, 3))$ \\
        & Strides (all layers) & $(1, 1)$ \\
        & Padding & \texttt{SAME} \\
    \midrule
    Linear & & \\
    \midrule
    & Output size & $512$ \\
    \midrule
    MLP & & \\
    \midrule
        & Number of quantiles (per action) & $128$ \\
        & Hidden layer size & $512$ \\
    \bottomrule
    \end{tabularx}
    \caption{Neural network parameters for D$\eta$N's return distribution estimator $\xi_{\theta}$ in the gridworld experiments.
    See \cref{fig:din-architecture} for reference.
    \label{tab:gridworld:model}}
\end{table}

During evaluation, D$\eta$N followed greedy policies ($\varepsilon = 0$ for the $\varepsilon$-greedy exploration).
For the $\tau$-CVaR experiments (\cref{sec:gridworld:risk-averse,sec:gridworld:risk-seeking}), we selected $\xc^*_0$ based on \cref{thm:cvar-approximate,thm:ocvar-approximate}, with a grid search of $256$ equally spaced points on the interval $[-10, 10]$ (with points on the interval limits).

The vision network in the gridworld experiments is a ConvNet~\citep{lecun2015deep} following the implementation used by \citet{mnih2015human}.
Convolutional layers used ReLU activations~\citep{nair2010rectified}, as did the MLP hidden layer.
The ``Linear'' components in \cref{fig:din-architecture} did not use an activation function on the outputs (with the exception of the explicit ReLU activation shown in the diagrams).
The outputs of the ConvNet were flattened before being input to the ``Linear'' component.

\subsection{Atari}
\label{app:atari:implementation-details}

In these experiments we trained D$\eta$N in a distributed actor-learner setup~\citep{horgan2018distributed} using TPUv3 actors and learners.
The data was generated in episodic fashion (with multiple asynchronous actors).
The episode duration was set to $25\mathrm{s}$, at $15\mathrm{Hz}$ and $4$ frames per environment step due to action repeats~\citep{mnih2015human}.
The Atari benchmark typically has sticky actions~\citep{machado2018revisiting}, but we disabled them for these experiments, to have deterministic returns.
D$\eta$N, similar to DQN~\citep{mnih2015human} and QR-DQN~\citep{dabney2018distributional}, observes $84 \times 84$ grayscale Atari frames with frame stacking of $4$.

D$\eta$N was trained with a $3:7$ mixture of online and replay data in each learner update.
Each minibatch consisted of $144$ sampled trajectories (sequences of subsequent transitions) of length $19$ (the minibatch was distributed across multiple learners, and updates were combined before being applied).
The data generated in the actors was added simultaneously to a queue (for the online data stream) and to the replay (for the replay data stream).
The replay was not prioritized, and we edited the \stock{}s in each minibatch as explained in \cref{sec:atari}.

\Cref{tab:atari:training,tab:atari:model} contain additional implementation details.
For training, we have used the Adam optimizer~\citep{kingma2014adam} with defaults from the Optax library~\citep{deepmind2020jax} unless otherwise stated, as well as gradient norm clipping and weight decay.
\begin{table}[tb]
    \centering
    \begin{tabularx}{\textwidth}{Xc}
    \toprule
    Parameter & Value \\
    \midrule
    Batch size (global, across $6$ learners) & $144$ \\
    Trajectory length & $19$ \\
    Training duration (environment steps) & $75M$ \\
    Training duration (learner updates) & $\approx 3.44K$ \\
    Adam optimizer learning rate & $10^{-4}$ \\
    Weight decay & $10^{-2}$ \\
    Gradient norm clipping & $10$ \\
    Target network exponential moving average step size $\alpha$ & $10^{-2}$ \\
    Discount ($\gamma$) & $0.997$ \\
    Interval for sampling $\xc_0$ & $[-9, 9)$ \\
    \bottomrule
    \end{tabularx}
    \caption{Training parameters for D$\eta$N in the Atari experiments.
    \label{tab:atari:training}}
\end{table}
\begin{table}[tb]
    \centering
    \begin{tabularx}{\textwidth}{lXc}
    \toprule
    Component & Parameter & Value \\
    \midrule
    Vision (ResNet) & & \\
    \midrule
        & Output channels (per for Conv2D and residual layers per section) & $(64, 128, 128)$ \\
        & Kernel sizes (all Conv2D and residual layers) & $(3, 3)$ \\
        & Strides (all Conv2D and residual layers) & $(1, 1)$ \\
        & Padding & \texttt{SAME} \\
        & Pool sizes (all sections) & $(3, 3)$ \\
        & Pool strides (all sections) & $(3, 3)$ \\
        & Residual blocks (per section) & $(2, 2, 2)$ \\
    \midrule
    Linear & & \\
    \midrule
    & Output size & $512$ \\
    \midrule
    Quantile MLP & & \\
    \midrule
        & Number of quantiles (per action) & $100$ \\
        & Hidden layer size & $512$ \\
    \bottomrule
    \end{tabularx}
    \caption{Neural network parameters for D$\eta$N's return distribution estimator $\xi_{\theta}$ in the Atari experiments.
    See \cref{fig:din-architecture} for reference.
    \label{tab:atari:model}}
\end{table}

Similar to DQN, we annealed the $\varepsilon$-greedy parameter linearly from $1.0$ at the start to $0.1$ at the end of training, and used $10^{-2}$-greedy policies for evaluation.

The convolutional network in the Atari experiments is a ResNet \citep{he2016deep} as used by \citet{espeholt2018impala}.
Convolutional layers and residual blocks used ReLU activations~\citep{nair2010rectified}, as did the MLP hidden layer.
The ``Linear'' components in \cref{fig:din-architecture} did not use an activation function on the outputs (note that the explicit ReLU activation in the diagrams is used).
The outputs of the ResNet were flattened before being input to the ``Linear'' component.

\section{Summary of Guarantees}
\label{app:summary-of-guarantees}

\Cref{tab:conditions-overview} provides a summary of the necessary and sufficient conditions for the objective $F_K$ to be optimizable by DP in the different scenarios considered in this work.
\begin{table}[tb]
    \centering
    \begin{tabularx}{\textwidth}{
            >{\raggedright\arraybackslash}p{0.1\textwidth}
            >{\raggedright\arraybackslash}p{0.2\textwidth}
            >{\raggedright\arraybackslash}p{0.15\textwidth}
            X
        }
        \toprule
        Setting & DP & Case & Conditions on the Objective (and references) \\
        \midrule
        Standard & Classic or distributional & Finite horizon ($\gamma = 1$) & \emph{Necessary and sufficient}: Expected utility $\Uf$ with (up to affine transformations) $f(\xc) = e^{\lambda \xc}$ for $\lambda \in \reals$ or $f$ identity \citep{marthe2024beyond}. \\
        \cmidrule(l){3-4}
        & & Infinite horizon ($\gamma < 1$) & \emph{Necessary and sufficient}: Expected utility $\Uf$ with $f$ (up to affine transformations) positively homogeneous~\citep[see \cref{prop:positively-homogeneous,thm:value-iteration,thm:policy-iteration} and ][]{bowling2023settling}. \\
        \midrule
        \Stock{}-augmented & Classic & Finite horizon ($\gamma = 1$) & \emph{Necessary and sufficient}: Expected utility, RL rewards with bounded first moment (\cref{thm:reward-design}). \\
        \cmidrule(l){3-4}
        & & Infinite horizon ($\gamma < 1$) & \emph{Necessary}: Expected utility $\Uf$ with $f$ (up to affine transformations) positively homogeneous (\cref{thm:reward-design,lem:expected-utility-conditions}). \emph{Sufficient}: Expected utility $\Uf$ with $f$ Lipschitz and $f$ (up to affine transformations) positively homogeneous (\cref{thm:reward-design}). \\
        \midrule
        \Stock{}-augmented & Distributional & Finite horizon ($\gamma = 1$) & \emph{Necessary and sufficient}: Indifferent to mixtures (\cref{thm:value-iteration,thm:policy-iteration,prop:indifferences-necessary}). \\
        \cmidrule(l){3-4}
        & & Infinite horizon ($\gamma < 1$) & \emph{Necessary}: Indifferent to mixtures and indifferent to $\gamma$ (\cref{thm:value-iteration,thm:policy-iteration,prop:indifferences-necessary}). \emph{Sufficient}: Lipschitz, indifferent to mixtures and indifferent to $\gamma$ (\cref{thm:value-iteration,thm:policy-iteration,prop:indifferences-necessary}). \\
        \bottomrule
    \end{tabularx}
    \caption{Summary of necessary and sufficient conditions on $F_K$ for classic and distributional DP in various scenarios, including references. Previous work only considered the scalar case ($\cs = \reals$); our results also apply to the vector-valued case ($\cs = \reals^m$). All instances of positive homogeneity mentioned on this table have the following condition: $(1 - \alpha)(f(\xc) - f(0)) = f(\gamma \xc) - f(0)$ with $\alpha \in (0, 1]$ and $\gamma < 1 \Rightarrow \alpha < 1$ (see \cref{eq:indifference-to-gamma-condition}).
    \label{tab:conditions-overview}}
\end{table}
The table includes references to specific results in this work and in previous work, as applicable.
For a more detailed discussion on DP guarantees from previous work, see \cref{sec:ddp:existing-results}.
For a comparison between classic and distributional DP bounds (value iteration and policy iteration) refer to \cref{sec:ddp:value-iteration,sec:ddp:policy-iteration}. 

\emph{Counter-examples.}
In the standard setting, without \stock{} augmentation, classic and distributional DP can solve the same set of problems (see \cref{tab:conditions-overview}).
With \stock{} augmentation in the finite-horizon undiscounted case, see \cref{prop:non-expected-utility} for a functional that distributional DP can optimize, but classic DP cannot.
In the \stock{}-augmented infinite-horizon setting, we are not aware of any functionals that can only be optimized by either classic or distributional DP (cf.~\cref{thm:value-iteration,thm:policy-iteration,thm:reward-design}).
In the finite-horizon undiscounted setting with \stock{} augmentation, there exist functionals that distributional DP can optimize but classic DP cannot (see \cref{prop:non-expected-utility}). In the \stock{}-augmented infinite-horizon setting, we are not aware of any functionals that can only be optimized by either classic or distributional DP (cf.~\cref{thm:value-iteration,thm:policy-iteration,thm:reward-design}).
If a counter-example exists, it must fall in one of the following two cases:
i) an expected utility with $f$ non-Lipschitz but $\xc \mapsto f(\xc) - f(0)$ positively homogeneous; ii) a non-Lipschitz non-expected-utility that is indifferent to mixtures and indifferent to $\gamma$ (classic DP cannot optimize this; see \cref{prop:indifferences-necessary,thm:lipschitz-characterization}).

\vskip 0.2in
\bibliography{main}

\end{document}